\documentclass[11pt]{article}
\usepackage[letterpaper, left=1in, right=1in, top=1in,bottom=1in]{geometry}
\usepackage{amsfonts}       % blackboard math symbols
\usepackage{nicefrac}       % compact symbols for 1/2, etc.
\setlength{\parskip}{0.5em}
\usepackage{bbm}
\usepackage[ruled,noend]{algorithm2e}
\usepackage{bm}
\usepackage{tikz}
\usetikzlibrary{positioning,chains,fit,shapes,calc,arrows}
\usetikzlibrary{arrows.meta}
\usepackage{graphicx}
\usepackage{pgf}
\usepackage{mathtools}
\usepackage{mathrsfs}
\usepackage{amsmath,amsthm,amssymb}
\usepackage{comment}
\usepackage{xfrac}
\usepackage{accents}

\usepackage{auxiliary}

\usepackage{float}

\usepackage{url}      

\usepackage[pagebackref=true]{hyperref} \PassOptionsToPackage{pagebackref=true}{hyperref}   

\usepackage{cleveref}
% \usepackage{color-edits}
% %\usepackage[suppress]{color-edits}
% \addauthor{tl}{cyan}
% \addauthor{ww}{orange}
% \addauthor{df}{red}
% \addauthor{rev}{blue}

\title{The Transient Cost of Learning in Queueing Systems}

\author{
Daniel Freund\thanks{Massachusetts Institute of Technology, \texttt{dfreund@mit.edu}}
\and Thodoris Lykouris\thanks{Massachusetts Institute of Technology, \texttt{lykouris@mit.edu}} 
\and Wentao Weng\thanks{Massachusetts Institute of Technology, \texttt{wweng@mit.edu}}}

\date{First version: August 2023\\
Current version: April 2025\footnote{A condensed preliminary version of this work, titled \emph{Quantifying the Cost of Learning in Queueing Systems}, was accepted for presentation at the Conference on Neural Information Processing Systems (NeurIPS 2023).}}

\begin{document}

\maketitle

\begin{abstract}
% !TEX root = main.tex
Queueing systems are widely applicable stochastic models with use cases in communication networks, healthcare, service systems, etc. Although their optimal control has been extensively studied, most existing approaches assume perfect knowledge of the system parameters. This assumption rarely holds in practice where there is parameter uncertainty, thus motivating a recent line of work on bandit learning for queueing systems. This nascent stream of research focuses on the asymptotic performance of the proposed algorithms but does not provide insight on the transient performance in the early stages of the learning process.

In this paper, we propose the \emph{Transient Cost of Learning in Queueing ({$\colq$})}, a new metric that quantifies the maximum increase in time-averaged queue length caused by parameter uncertainty. We characterize the $\colq$ of a single-queue multi-server system, and then extend these results to multi-queue multi-server systems and networks of queues. In establishing our results, we propose a unified analysis framework for $\colq$ that bridges Lyapunov and bandit analysis, provides guarantees for a wide range of algorithms, and could be of independent interest.
\end{abstract}

\section{Introduction}\label{sec:intro}
% !TEX root = main.tex
Queueing systems are widely used stochastic models that capture congestion when services are limited. These models have two main components: jobs and servers. Jobs wait in queues and have different types. Servers differ in capabilities and speed. For example, in content moderation of online platforms~\cite{Makhijani21}, jobs are user posts with types defined by contents, languages and suspected violation types; servers are human reviewers who decide whether a post is benign or harmful. Moreover, job 
types can change over time upon receiving service. For instance, in a hospital, patients and doctors can be modeled as jobs and servers. A patient in the queue for emergent care can become a patient in the queue for surgery after seeing a doctor at the emergency department~\cite{armony2015patient}. That is, queues can form a network due to jobs transitioning in types. Queueing systems also find applications in other domains: call centers~\cite{GansKM03}, communication networks~\cite{Srikant_Ying_2014} and computer systems~\cite{harchol2013performance}.

The single-queue multi-server model is a simple example to illustrate the dynamics and decisions in queueing systems. In this model, there is one queue and $K$ servers operating in discrete periods. In each period, a new job arrives with probability $\lambda$. Servers have different service rates $\mu_1,\ldots,\mu_K$. The decision maker (DM) selects a server to serve the first job in the queue if there is any. If server~$j$ is selected, the first job in the queue leaves with probability $\mu_j$. The DM's goal is to maintain a low wait for the jobs, which is equivalent to keeping the queue length short. The optimal policy thus selects the server with the highest service rate $\mu^\star=\max_j \mu_j$; the usual regime of interest is one where the system is \emph{stabilizable}, i.e., $\mu^\star>\lambda$, which ensures that the queue length does not grow to infinity under an optimal policy. Of course, this policy requires perfect knowledge of the service rates. Under parameter uncertainty, the DM must balance the trade-off between exploring a server with an uncertain rate or exploiting a server with the highest observed service rate. 

A recent stream of work studies efficient learning algorithms for queueing systems. First proposed by~\cite{Walton14}, and later used by~\cite{KrishnasamySJS21, StahlbuhkSM21}, \emph{queueing regret} is a common metric to evaluate learning efficiency in queueing systems. In the single-queue multi-server model, let $Q(T,\pi)$ and $Q^{\star}(T)$ be the number of jobs in period $T$ under a policy $\pi$ and under the optimal policy respectively. Queueing regret is defined as {either the last-iterate difference in expected queue length, i.e., }$\expect{Q(T,\pi)-Q^\star(T)}$\cite{Walton14,KrishnasamySJS21}, {or the time-average version $\frac{1}{T}\expect{\sum_{t=1}^T Q(t,\pi)-Q^\star(t)}$}\cite{StahlbuhkSM21}. In the stabilizable case ($\lambda<\mu^\star$), the goal is usually to bound its scaling relative to $T$; scaling examples include $o(T)$ \cite{Walton14}, $\tilde{O}(1/T)$ \cite{KrishnasamySJS21}, and $O(1/T)$ \cite{StahlbuhkSM21}.

In this paper, we propose a transient alternative to queueing regret as an asymptotic  per-period queue length metric. Our metric captures the maximum increase in time-averaged wait time and thus focuses on the  initial periods. This contrasts with traditional queueing regret, which is mostly focused on the later periods (see Figure \ref{fig:col-motivation}).
As a result, depending on the specific application and context, our transient metric can be viewed as either a complement or a substitute for queueing regret. For example, when the time horizon is sufficiently long and there is a holding cost, the performance of algorithms should be evaluated based on queueing regret. Even then, our metric provides an important additional perspective to compare different algorithms. In contrast, when the horizon is short relative to the slack capacity (see \emph{traffic slackness} in \Cref{sec:model-single}), and large queue-lengths are disproportionately costlier, e.g., due to service-level agreements, reputational damages, or fairness concerns, then our transient metric may fully replace traditional queueing regret as the most relevant measurement.
Through the definition of our metric, and the accompanying near-tight bounds we develop, our work answers the two main questions that motivate this paper:

\emph{1. How can one measure the transient performance of learning algorithms in queueing systems?}

\emph{2. What learning algorithms have strong transient performance in general queueing systems?}

\begin{figure}[htbp]
  \centering  \includegraphics[width=.9\textwidth]{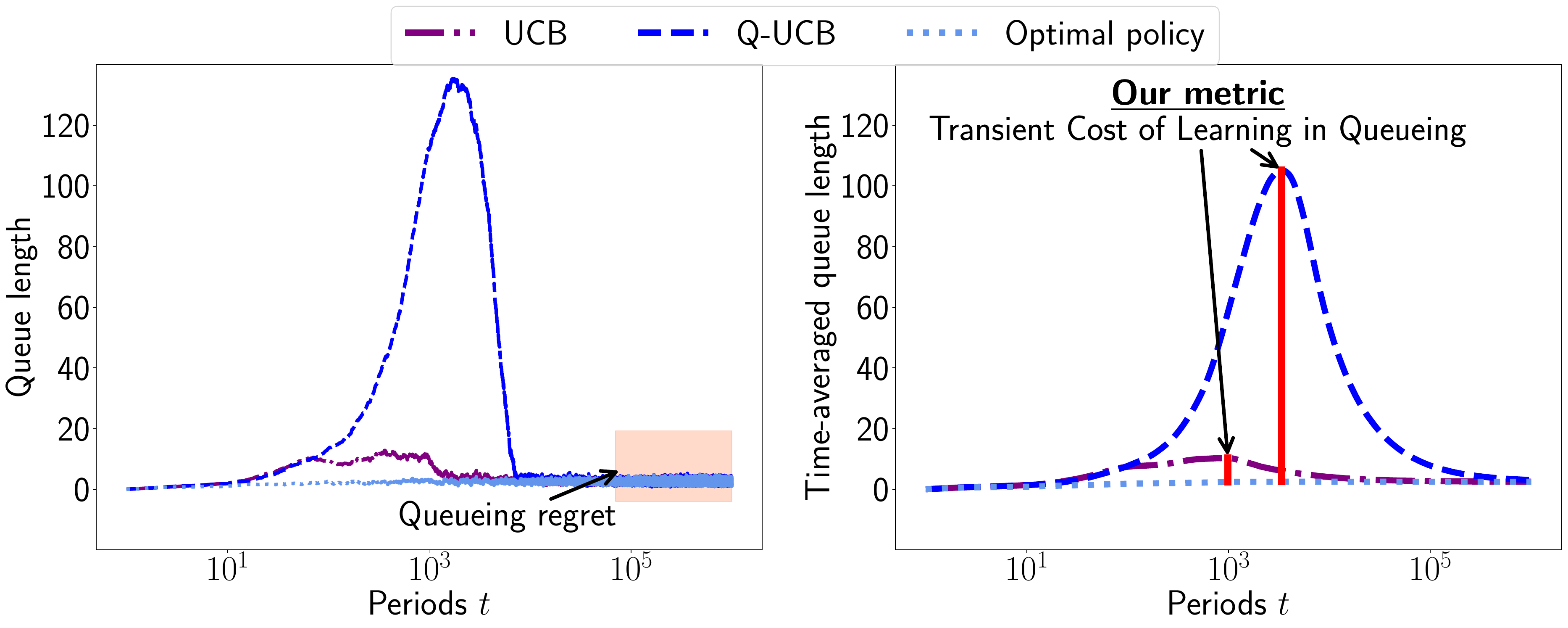}
  \\
  \caption{ Expected per-period and time-averaged queue lengths of UCB and Q-UCB \cite{KrishnasamySJS21} in a single-queue setting with $K=5,\lambda=0.45,\bolds{\mu} = (0.045,0.35,0.35,0.35,0.55)$; results are averaged over 30 runs. The difference between both algorithms' queue lengths is indistinguishable asymptotically (left figure) though they clearly differ in their learning efficiency for early periods as illustrated by \emph{Transient Cost of Learning in Queueing}, the metric that our work introduces (right figure).
  \label{fig:col-motivation} }
\end{figure}

\subsection{Our contribution}

\paragraph{Transient Cost of learning in queueing.} 
Tackling the first question, we propose the \emph{Transient Cost of Learning in Queueing} ($\colq$) as a way to measure the transient efficiency of a learning algorithm in queueing systems. The $\colq$
of a learning policy is defined as the maximum difference of its time-averaged queue length and that of any other policy (with knowledge of parameters) over the entire horizon (see Fig \ref{fig:col-motivation} on the right). In contrast to queueing regret, $\colq$ is 1) a finite-time metric that captures the learning efficiency in early periods and 2) focused on time-averaged queue length instead of per-period queue length. For any period $1,\ldots,T$, the time-averaged queue length is related to the job-average time in system by Little’s Law (see Appendix~\ref{app:clq-clw}); therefore, $\colq$ 
 can be equivalently viewed as a measurement of the maximum expected increase in wait time that any job experiences due to the DM's parameter uncertainty.
The formal definition of $\colq$ can be found in Section~\ref{sec:def-colq}.

\paragraph{Lower bound of $\colq$ (Theorem~\ref{thm:colq-lowerbound}).} To lower bound the transient cost of learning in queueing we rely on the simplest non-trivial stabilizable setting which involves one queue and $K$ servers. It is known that the expected queue length in steady-state scales as $O(1/\varepsilon)$ under the optimal policy, where $\varepsilon=\mu^\star-\lambda$ is the traffic slackness of the system. Fixing $\varepsilon$ and the number of servers $K$, we establish a worst-case lower bound $\Omega(\frac{K}{\varepsilon})$ of $\colq$. That is, for any $\varepsilon, K$ and a fixed policy, there exists a setting of arrival and service rates, such that the $\colq$ of this policy is at least $\Omega(\frac{K}{\varepsilon})$. Combined with the $O(1/\varepsilon)$ optimal queue length, this lower bound result shows that the effect of learning dominates the performance of queueing systems when $K$ is large (as it may increase the maximum time-averaged queue length by a factor of $K$). This is shown in Figure \ref{fig:col-motivation} (right) where the peak of time-averaged queue lengths of the optimal policy with knowledge of parameters is much lower than that of the other two policies: \textsc{Upper Confidence Bound} (\textsc{UCB}) \cite{AuerCF02} and and \textsc{Q-UCB} \cite{KrishnasamySJS21}).

\paragraph{An efficient algorithm for single-queue multi-server systems (Theorem \ref{thm:colq-ucb-single}).} Given the above lower bound, we show that the \textsc{UCB} algorithm attains an optimal $\colq$ up to a logarithmic factor in the single-queue multi-server setting, which was the main focus in \cite{KrishnasamySJS21} and \cite{StahlbuhkSM21}. Our analysis is motivated by Fig.~1 (right) where the time-averaged queue length initially increases and then decreases. Based on this pattern, we divide the horizon into an initial \emph{learning} stage and a later \emph{regenerate} stage. 

In the learning stage, the time-averaged queue length increases similar to the multi-armed bandit regret. We formalize this observation by coupling the queue under a policy $\pi$ with a nearly-optimal queue and show that their difference is captured by the \emph{satisficing regret} of policy $\pi$~\cite{russo2022satisficing}. Satisficing regret resembles the classical multi-armed bandit regret but disregards the loss of choosing a nearly optimal server; as a near-optimal server suffices  to obtain a queue length of $O(1 / \varepsilon)$ (see Eq.~\eqref{def:satis-regret}), a small satisficing regret would guarantee the desired queue length bound. Nevertheless, our learning stage bound is not sufficient as the satisficing regret eventually goes to infinity. 

In the regenerate stage, queue lengths decrease as the policy has learned the parameters sufficiently well; the queue then behaves similarly as under 
the optimal policy and stabilizes 
the system. To capture this observation, we use Lyapunov analysis and show that the time-averaged queue length for the initial $T$ periods scales as the optimal queue length, but with an additional term depending on the second moment of satisficing regret divided by $T$. Hence, as $T$ increases, the impact of learning gradually disappears. Combining the results in the learning and regenerate stages, we obtain a tight $\colq$ bound of UCB for the single-queue multi-server setting.

\paragraph{Efficient algorithms for multi-queue systems and queueing networks (Theorems \ref{thm:mw-ucb},\ref{thm:bp-ucb-col}).}
We next generalize the above result to multi-queue multi-server systems. In contrast to the single queue case, even with known rates, the optimal policy for a multi-queue multi-server system is non-trivial (and usually difficult) to find. A natural greedy policy, such as choosing a set of servers that maximizes instantaneous services as in the single-queue system, is known to have potentially unbounded queue length for a stabilizable system \cite{krishnasamy2018learning}. Designing an efficient learning policy for multi-queue systems is thus not straightforward. 

Instead, we build on the celebrated \textsc{MaxWeight} policy that stabilizes a multi-queue multi-server system with knowledge of system parameters \cite{tassiulas1992stability}. We design \textsc{MaxWeight-UCB} as a new algorithm to transform \textsc{MaxWeight} into a learning algorithm with appropriate estimates for system parameters and show that its $\colq$ scales near-optimally as $\tilde{\bigO}(1/\varepsilon)$ with respect to the traffic slackness $\varepsilon$ (Definition \ref{def:multi-slackness}). This result improves previous bounds for settings beyond a single-queue system. The best prior bound is $\tilde{\bigO}(1/\varepsilon^3)$ for the special case of bipartite queueing systems \cite{FreundLW22,YangSrikantYing} (or $\tilde{\bigO}(1/\varepsilon^2)$ with structural assumptions on service rates \cite{KrishnasamySJS21}). We extend our analysis for single-queue settings through a 
coupling approach that reduces the loss incurred by learning in a high-dimensional queue-length vector to a scalar-valued potential function and builds upon recent work of \cite{gupta2022greedy,wei2023constant}. 

Finally, we consider queueing networks that include multiple queues, multiple servers, and transitions of served jobs from servers to queues. For this setting, we  propose \textsc{BackPressure-UCB}, which incorporates online learning into the \textsc{BackPressure} algorithm \cite{tassiulas1992stability}. We prove that its $\colq$ also scales near-optimally as $\tilde{\bigO}(1/\varepsilon)$. To the best of our knowledge, this is the first efficient learning algorithm for general queueing networks (see related work for a discussion).

\subsection{Related work}
% !TEX root = intro.tex

\begin{table}[]
\centering
\caption{Queueing regret, (implied) $\colq$ bounds, and settings of key related papers. The big-$O$ notation only includes scaling with respect to the time horizon $T$ for queueing regret and only includes scaling with respect the traffic slackness $\varepsilon$ for $\colq$.}
\label{tab:survey}
\begin{tabular}{|c|c|c|c|c|}
\hline
paper                     & queueing regret & TCLQ lower bound        & TCLQ upper bound             & Generality               \\ \hline
\cite{Walton14}           & $o(T)$          &                         &                              & Single queue             \\ \hline
\cite{KrishnasamySJS21}   & $O(\ln(T)/T)$   & $\Omega(\ln(1/\varepsilon))$             & $\tilde{O}(1/\varepsilon^2)$ & Single queue             \\ \hline
\cite{StahlbuhkSM21}      & $O(1/T)$        &                         & $\tilde{O}(1/\varepsilon^4)$ & Single queue             \\ \hline
\cite{YangSrikantYing}    &                 &                         & $\tilde{O}(1/\varepsilon^3)$ & Multi-queue \\ \hline
\cite{FreundLW22}         &                 &                         & $\tilde{O}(1/\varepsilon^3)$ & Multi-queue \\ \hline
\cite{nguyen2023learning} &                 &                         & $\tilde{O}(1/\varepsilon^3)$ & queueing network         \\ \hline
\textbf{This work}                 &                 & $\Omega(1/\varepsilon)$ & $\tilde{O}(1/\varepsilon)$   & queueing network         \\ \hline
\end{tabular}
\end{table}

A recent line of work studies online learning in queueing systems \cite{walton2021learning} and Table~\ref{tab:survey} summarizes the guarantees of papers that are most related to this work. To capture uncertainty in services, \cite{Walton14} studies a single-queue setting in which the DM selects a mode of service in each period and the job service time varies between modes (the dependence is a priori unknown and revealed to the DM after the service). The metric of interest is the \emph{queueing regret}, i.e., the difference of queue length between an algorithm and the optimal policy, for which the authors show a sublinear bound. \cite{KrishnasamySJS21} consider the same single-queue multi-server setting as ours and show that a forced exploration algorithm achieves a queueing regret scaling of $\tilde{O}(1/T)$ (under strong structural assumptions this result extends to multiple queues). \cite{StahlbuhkSM21} show that by probing servers when the queue is idle, it is possible to give an algorithm with queueing regret converging as $O(1/T)$. However, with respect to the traffic slackness $\varepsilon \to 0^+$, both bounds yield suboptimal $\colq$: \cite{KrishnasamySJS21} give at least $O(1/\varepsilon^2)$ and \cite{StahlbuhkSM21} give at least $O(1/\varepsilon^4)$ (see Appendices~\ref{app:qucb},~\ref{app:ssm}). In the analysis of \cite{KrishnasamySJS21}, forced exploration is used for low adaptive regret, i.e., regret over any interval \cite{HazanS09}; no such guarantee is known for adaptive exploration. But as noted by our Figure~\ref{fig:col-motivation} and~\cite[Figure 2]{KrishnasamySJS21}, an adaptive exploration algorithm like UCB has a better early-stage performance than Q-UCB. Using our framework in Section~\ref{sec:optimal-single}, we show that UCB indeed has a near-optimal $\colq$ that scales as $O(\frac{K}{\varepsilon}\ln\frac{K}{\varepsilon})$. Our framework also allows us to show that Q-UCB enjoys a $\colq$ scaling as $O(\frac{K}{\varepsilon}\ln\frac{K}{\varepsilon}+\ln^3\frac{K}{\varepsilon})$ (Appendix~\ref{app:qucb}). Though this improves the guarantee implied by \cite{KrishnasamySJS21} and shows that Q-UCB has both strong transient ($\colq$) and asymptotic (queueing regret) performance, the additional $O(\ln^3\frac{K}{\varepsilon})$ term  suggests an inefficiency in the transient regime that arises from forced exploration.

Focusing on the scaling of queueing regret, \cite{krishnasamy2018learning} and \cite{zhong2022learning} study scheduling in multi-queue settings (\cite{zhong2022learning} also consider job abandonment), \cite{choudhury2021job,fu2022optimal} study learning for a load balancing model, \cite{chen2023onlinelearning} study pricing and capacity sizing for a single-queue single-server model with unknown parameters. For more general settings, \cite{adler2023bayesian} design a Bayesian learning algorithm for Markov Decision Processes with countable state spaces (queueing systems are a special case) where parameters are sampled from a known prior over a restricted parameter space; in contrast, our paper does not assume any prior of the unknown parameters. The main difference between all of these works and ours is that we focus on how the maximum time-averaged queue lengths scale with respect to system parameters (traffic slackness and number of queues and servers), not on how the queue lengths scale as time grows. Apart from
the stochastic learning setting we focus on, there are also studies that tackle adversarial learning in queueing systems \cite{huang2023queue,liang2018minimizing}; these require different algorithms and analyses.

Beyond queueing regret, some papers focus on finite-time queue length guarantees. In a multi-queue multi-server setting, the MaxWeight algorithm has a polynomial queue length for stabilizable systems but requires knowledge of system parameters.  For a joint scheduling and utility maximization problem, \cite{NeelyRP12} combine MaxWeight with forced exploration to handle parameter uncertainty. By selecting a suitable window for sample collection, their guarantee corresponds to a $\colq$ bound of at least $O(K^4/\varepsilon^3)$ for our single-queue setting (see Appendix~\ref{app:nrp}). \cite{StahlbuhkSM19} study a multi-queue multi-server setting and propose a frame-based learning algorithm based on MaxWeight. They focus on a greedy approximation which has polynomial queue lengths when the system is stabilizable with twice of the arrival rates.  \cite{YangSrikantYing} consider a non-stationary setting and show that combining MaxWeight with discounted UCB estimation leads to stability and polynomial queue length that scales as $\tilde{O}(1/\varepsilon^3)$ (Appendix~\ref{app:ysy}). There is also a line of work studying decentralized learning in multi-queue multi-server settings. \cite{gaitonde2023price} assume queues are selfish and derive conditions under which a no-regret learning algorithm is stable; this is generalized to queueing networks in which queues and servers form a directed acyclic graph by \cite{FuHL22}. \cite{sentenac2021decentralized} allow collaborative agents and give an algorithm with maximum stability, although the queue length scales exponentially in the number of servers. \cite{FreundLW22} design a decentralized learning version of MaxWeight and show that the algorithm always stabilizes the system with polynomial queue lengths $\tilde{O}(1/\varepsilon^3)$ (Appendix~\ref{app:flw}). 
In contrast, our work shows for the centralized setting that MaxWeight with UCB achieves the near-optimal time-averaged queue length guarantee of $\tilde{O}(1/\varepsilon)$.

Our paper extends the ability of online learning to general single-class queueing networks \cite{BramsonDW21}. The literature considers different complications that arise in these settings, including jobs of different classes and servers that give service simultaneously to different jobs \cite{tassiulas1992stability,DBLP:journals/ior/DaiL05,BramsonDW21}. For the class of networks we consider, \textsc{BackPressure} can stabilize the system with knowledge of system parameters \cite{tassiulas1992stability}. As noted in \cite{BramsonDW21}, a potential drawback of \textsc{BackPressure} is its need of full knowledge of job transition probabilities. Our paper contributes to the literature by proposing the first \textsc{BackPressure}-based algorithm that stabilizes queueing networks without knowledge of system parameters.

Beyond our focus on uncertainties in services, an orthogonal line of work studies uncertainties in job types. \cite{alizamir2013diagnostic} consider a single server setting where an arriving job belongs to one of two types; but the true type is unknown and learned by services. They devise a policy that optimizes a linear function of the number of correctly identified jobs and the waiting time. \cite{bimpikis2019learning} study a similar setting with two types of servers where jobs can route from one server to the others and focuses on the impact on stability due to job type uncertainties. \cite{massoulie2018capacity, ShahGMV20} consider multiple job types and server types. Viewing Bayesian updates as job type transitions, they model the job learning process by queueing networks  and give stable algorithms based on \textsc{BackPressure}. \cite{johari2021matching,hsu2022integrated} consider online matchings between jobs with unknown payoffs and servers where the goal is to maximize the total payoffs subject to stability. As noted in \cite{massoulie2018capacity,ShahGMV20,johari2021matching}, a key assumption of this line of work is the perfect knowledge of server types (and job transition probability). Our result for queueing networks serves as a step to consider both server uncertainties and job uncertainties, at least in a context without payoffs.

Concurrent to our work, \cite{nguyen2023learning} propose a frame-based MaxWeight algorithm with sliding-window UCB for scheduling in a general multi-queue multi-server system with non-stationary service rates. With a suitable frame size (depending on the traffic slackness), they show stability of the algorithm and obtain a queue length bound of $\tilde{O}(1/\varepsilon^3)$ in the stationary setting (Appendix~\ref{app:nm}).

Our analytical framework relies on the distinction between learning and regenerate stages. This distinction was also observed in \cite{KrishnasamySJS21}, who call them ``early'' and ``late'' stages. In a heavy traffic regime with $\varepsilon \to 0^+$, they show that it takes at least $\Omega(K/\varepsilon)$ periods for the early stage to transition into the late stage and established a lower bound \cite[proposition 3]{KrishnasamySJS21} for the queueing regret in this period. Our analysis, especially our lower bound result, refines their understanding on the distinction between these two stages. First, the proof of our lower bound in Theorem~\ref{thm:colq-lowerbound} demonstrates that the early (learning) stage indeed lasts for at least $\Omega(K / \varepsilon^2)$ periods. Second, our lower bound on $\colq$ is stronger than what is implied by \cite[proposition 3]{KrishnasamySJS21}; see Remark~\ref{remark:compare}.

\section{Warm-up model: single-queue multi-Server systems}\label{sec:model-single}
% !TEX root = main.tex
We consider a sequential learning setting where a decision maker (DM) repeatedly schedules jobs to a set of servers of unknown quality over discrete time periods $t=1,2,\ldots$. For any $T$, we  refer to the initial $T$ periods as the time horizon $T$. To ease exposition, we first describe the simpler setting where there is only one job type (queue) and subsequently extend our approach to a general setting with multiple queues that interact through a network structure in Sections~\ref{sec:optimal-multi} and \ref{sec:optimal-network}.

\subsection{Model dynamics}
A single-queue multi-server system is specified by a tuple $(\set{K},\lambda,\bolds{\mu})$. There is one queue of jobs and a set of servers $\set{K}$ with $|\set{K}| = K$. The arrival rate of jobs is $\lambda$, that is, in each period there is a probability $\lambda$ that a new job arrives to the queue. The service rate of a server $k \in \set{K}$, that is, the probability it successfully serves the job it is scheduled to work on, is $\mu_k$. Let $Q(t)$ be the number of jobs at the start of period $t$. Initially there is no job and $Q(1) = 0$. 

Figure~\ref{fig:single-queue} summarizes the events that occur in each period $t$. If there is no job in the queue, i.e., $Q(t)=0$, then the DM selects no server; to ease notation, they select the null server $J(t)=\perp$. Otherwise, the DM selects a server $J(t)\in\set{K}$
\begin{figure}[h]
\centering
\hspace*{-1cm}
\scalebox{0.7}{
\begin{tikzpicture}[node distance=3cm, xshift=-2cm]

\tikzstyle{queue} = [rectangle, minimum width=3cm, minimum height=1cm, text centered, draw=black, fill=blue!30]
\tikzstyle{process} = [rectangle, minimum width=3cm, minimum height=1cm, text centered, draw=black, fill=orange!30]
\tikzstyle{startstop} = [rectangle, rounded corners, minimum width=3cm, minimum height=1cm, text centered, draw=black, fill=red!30]
\tikzstyle{arrow} = [thick,->,>=stealth]

\node (start) [startstop] {Start of Period $t$};
\node (queue) [queue, right of=start, xshift=1cm] {Queue: $Q(t)>0$};
\node (select) [process, right of=queue, xshift=2cm] {DM Selects Server $J(t)$};
\node (service) [process, right of=select, xshift=2cm] {Service Job from Queue};
\node (success) [process, below of=select, yshift=0.5cm, xshift=-1.5cm, text width=3cm] {Job Leaves System\\(Prob. $\mu_{J(t)}$)};

\node (arrival) [process, right of=success, xshift=2cm] {New Job Arrival (Prob. $\lambda$)};
\node (last) [startstop, right of=arrival, xshift=2cm] {Start of Period $t+1$};

\draw [arrow] (start) -- (queue);
\draw [arrow] (queue) -- node[anchor=south] {Yes} (select);
\draw [arrow] (queue) -- node[anchor=south] {No} (arrival);
\draw [arrow] (select) -- (service);
\draw [arrow] (service) -- node[anchor=west,xshift=0.3cm] {Successful} (success);
\draw [arrow] (success) -- (arrival);
\draw [arrow] (service) --node[anchor=south,xshift=1.3cm] {Unsuccessful} (arrival);
\draw [arrow] (arrival) -- (last);

\end{tikzpicture}
}
\caption{Flowchart for a single-queue multi-server system.}
\label{fig:single-queue}
\end{figure}
and requests service for the first job in the queue. The service request for period $t$ is successful with probability $\mu_{J(t)}$ and the
job then leaves the system; otherwise, it remains
in the queue. At the end of the period, a new job arrives with probability $\lambda$. We assume that arrival and service events are independent. Let $A(t)$ and $\{S_k(t)\}_{k \in \set{K}}$ be a set of independent Bernoulli random variables such that $\expect{A(t)}=\lambda$ and $\expect{S_k(t)} = \mu_k$ for $k \in \set{K}$; for the null server, $\mu_{\perp} = S_{\perp}(t) = 0$ for all $t$. The queue length dynamics are thus given by \begin{equation}\label{eq:dynamic-single}
    Q(t+1) = Q(t) - S_{J(t)}(t) + A(t).
\end{equation}

A non-anticipatory policy $\pi$ for the DM maps for every period $t$ the historical observations until $t$, i.e., $(A(\tau),S_{J(\tau)}(\tau))_{\tau < t}$, to a server $J(t) \in \set{K} \cup \{\perp\}$. We define $Q(t,\pi)$ as the queue length in period $t$ under policy $\pi$. The DM's goal is to select a non-anticipatory policy $\pi$ such that for any time horizon $T \geq 1$, the expected time-averaged queue length $\frac{1}{T}\sum_{t \leq T}\expect{Q(t,\pi)}$ is as small as possible. 

If service rates are known, the policy $\pi^{\star}$ selecting the server with the highest service rate $\mu^{\star}$ in every period (unless the queue is empty) minimizes the expected time-averaged queue length for any time horizon $T$~\cite{KrishnasamySJS21,StahlbuhkSM21}. If $\lambda \geq \mu^*$, even under $\pi^{\star}$, the expected time-averaged queue length goes to infinity as $T$ increases. We thus assume $\lambda < \mu^\star$, in which case, the system is \emph{stabilizable}, i.e., the expected time-averaged queue length under $\pi^\star$ is bounded by a constant for any time horizon $T$. We next define the \emph{traffic slackness} of this system:
\begin{definition}\label{def:sqms-slackness}
    The traffic slackness of a single-queue multi-server system is defined by $\varepsilon = \mu^\star - \lambda$.
\end{definition}
A larger traffic slackness implies that a system is easier 
to stabilize. It is known that the policy $\pi^\star$ obtains an expected time-averaged queue length of the order of $\frac{1}{\trafficslack}$~\cite[section 3.4.2]{Srikant_Ying_2014}.

\paragraph{Extensions to general queueing networks.} In Sections~\ref{sec:multi-model} we extend the single-queue multi-server system to a multi-queue multi-server system. In this system each queue has dedicated servers. Services of queues are coupled via the set of \emph{feasible schedules}: the DM can choose a subset of servers to work on jobs if and only if this subset is a feasible schedule. In Section~\ref{sec:network-model}, we further extend the multi-queue multi-server system to a queue network, where after a job gets service, it may transit back into another queue with certain probability. We discuss the generality of this model in Appendix~\ref{app:examples-network} by showing that it captures the model of \cite{tassiulas1992stability}.

\begin{remark}
Consistent with \cite{tassiulas1992stability} (see further discussion in Appendix~\ref{app:examples-network}), our model here (and similarly those in Sections~\ref{sec:multi-model} and \ref{sec:network-model}) requires the DM not to query any server if the queue is empty. This captures settings where it is infeasible or impractical to create \emph{fake} jobs. For example, in a call center, a job is like a customer. When there is no customer calling, the DM cannot query a server with a job as there is no customer. The same holds for a hospital setting: a doctor cannot help with a patient when there is no patient. However, for a telecommunication setting, like the ones in \cite{StahlbuhkSM21, KrishnasamySJS21}, it is possible to send a fake job through a channel to probe the service rate of that channel and thus they do not make the requirement like ours. Nevertheless, our results remain intact even when the DM can query a server with an empty queue. 
\end{remark}

\subsection{Objective: The Transient Cost of Learning in Queueing}\label{sec:def-colq}
We now define the \emph{Transient Cost of Learning in Queueing}, or $\colq$ as a shorthand, for our warm-up model. This metric captures the transient performance of a learning algorithm. In Sections~\ref{sec:optimal-multi} and \ref{sec:optimal-network}, we extend this definition to multi-queue multi-server systems and queueing networks.

Our starting point stems from the observation that an asymptotic metric (such as queueing regret), which measures performance in late periods, disregards the transient performance in initial periods (recall the left of Figure~\ref{fig:col-motivation}). Thus motivated, we define the \emph{Transient Cost of Learning in Queueing} (or $\colq$) as the maximum increase in expected time-averaged queue lengths under policy $\pi$ compared with the optimal policy. Specifically, we define the single-queue $\colq$ as:
\begin{equation}
\colq^{\single}(\lambda,\bolds{\mu},\pi) = \max_{T \geq 1} \frac{\sum_{t=1}^T \expect{Q(t,\pi) - Q(t,\pi^{\star})}}{T}.
\end{equation}
As shown in Figure~\ref{fig:col-motivation} (right), $\colq$ is a finite-time metric and explicitly takes into account how fast learning occurs in the initial periods.

Given that the traffic slackness measures the difficulty of stabilizing a system, we also consider the worst-case transient cost of learning in queueing over all pairs of $(\lambda,\bolds{\mu})$ with a fixed traffic slackness $\trafficslack$. In a slight abuse of notation, we overload $\colq^{\single}$ to also denote this worst-case value, i.e., 
\begin{equation}\label{def:tclq-single}
\colq^{\single}(K,\trafficslack,\pi) = \sup_{\lambda \in [0,1),\bolds{\mu} \in [0,1]^K\colon \lambda + \varepsilon \leq \max_k \mu_k} \colq^{\single}(\lambda,\bolds{\mu},\pi).
\end{equation}
Our goal is to design a policy $\pi$, without knowledge of the arrival rate, the service rates, and the traffic slackness, that achieves low worst-case transient cost of learning in a single-queue multi-server system. In Sections~\ref{sec:optimal-multi} and \ref{sec:optimal-network}, we extend this definition of transient cost of learning in queueing to multi-queue multi-server systems and queueing networks.

\begin{remark}
In addition to $\colq$, another potential metric of interest is the maximum increase in the expected \emph{job-averaged wait time}. 
In Appendix~\ref{app:clq-clw}, we show that as long as the arrival rate is not too small, bounding the two is indeed equivalent (up to constant factors).
\end{remark}
\begin{remark}\label{remark:slackness}
This work and prior work on learning in queues (e.g. \cite{KrishnasamySJS21, StahlbuhkSM21, YangSrikantYing}) fundamentally assumes that the underlying system is known to be stabilizable ($\varepsilon > 0$). However, it is possible that the DM does not have such prior knowledge or that the system is actually under-provisioned ($\varepsilon < 0$). We explore this question by understanding (i) how our theory on $\colq$ can serve as a tool for the DM to infer the traffic slackness of a system (Appendix~\ref{app:slackness-inference}) and (ii) the optimality of $\textsc{UCB}$ in the single-queue multi-server setting even when the traffic slackness is negative (Appendix~\ref{app:slackness-negative}).
\end{remark}

\section{Optimal transient cost of learning for our warm-up model}\label{sec:optimal-single}
% !TEX root = main.tex
In this section, we provide a tight characterization of the transient cost of learning in queueing for our warm-up model. We first state a lower bound on the $\colq$ of any policy (Theorem~\ref{thm:colq-lowerbound}) and then prove that the UCB policy matches that lower bound (Theorem~\ref{thm:colq-ucb-single}).

\noindent\textbf{Lower Bound.}  Our first result establishes a lower bound on $\colq^{\single}(K,\trafficslack,\pi)$. In particular, for any feasible policy $\pi$, we show a lower bound of $\Omega(\frac{K}{\trafficslack})$ for sufficiently large $K$. With known parameters, the optimal time-averaged queue length is of the order of $\nicefrac{1}{\trafficslack}$. Hence, our result shows that the transient cost of learning is non-negligible in queueing systems when there are many servers. For fixed $K$ and $\varepsilon$, our lower bound considers the transient cost of learning of the worst-case setting and is instance-independent.
\begin{theorem}\label{thm:colq-lowerbound}
For any $K \geq 2^{14},\trafficslack \in (0,0.25]$ and feasible policy $\pi$, $\colq^{\single}(K,\trafficslack,\pi) \geq \frac{K}{2^{14}\trafficslack}$.
\end{theorem}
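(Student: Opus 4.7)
The plan is to reduce the queueing-length lower bound to a classical multi-armed bandit (MAB) lower bound via an information-theoretic ``needle in a haystack'' construction. Fix $\lambda = 1/2$ and, for each $k^\dagger \in \set{K}$, define the instance $I_{k^\dagger}$ with $\mu_{k^\dagger} = 1/2 + \varepsilon$ and $\mu_k = 1/2 - \varepsilon$ for $k \neq k^\dagger$; the traffic slackness of each such instance equals exactly $\varepsilon$. I also introduce a reference ``null'' instance $I_0$ in which every server has rate $1/2 - \varepsilon$ (used only for KL comparisons; it need not be stabilizable). The important features are that every pull of a suboptimal arm drifts the queue upward by $+\varepsilon$, yet the per-sample KL divergence between $I_{k^\dagger}$ and $I_0$ is only $O(\varepsilon^2)$, so any policy needs $\Omega(K/\varepsilon^2)$ queries to distinguish them.

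Summing the per-period queue dynamics under $I_{k^\dagger}$ and using $\sum_{k\in\set{K}} N_k(T) + N_\perp(T) = T$ (where $N_\perp$ counts idle periods and $N_k$ the periods with $J(t)=k$) yields the closed form
\begin{equation*}
\expect{Q(T+1) \mid I_{k^\dagger}} \;=\; \varepsilon T \;+\; (\lambda - \varepsilon)\,\expect{N_\perp(T)\mid I_{k^\dagger}} \;-\; 2\varepsilon\,\expect{N_{k^\dagger}(T) \mid I_{k^\dagger}}.
\end{equation*}
Since the idle term is non-negative, the proof reduces to exhibiting an instance in which the optimal arm is under-pulled: specifically, I aim to show that for some $k^\dagger$ and $T^* = \Theta(K/\varepsilon^2)$, one has $\expect{N_{k^\dagger}(t) \mid I_{k^\dagger}} \leq t/4$ uniformly over $t \in [T^*/2, T^*]$, which would imply $\expect{Q(t+1) \mid I_{k^\dagger}} \geq \varepsilon t /2$ throughout that window.

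To identify the hard $k^\dagger$, I first apply pigeonhole on $I_0$ to pick $k^\dagger$ with $\expect{N_{k^\dagger}(T^*) \mid I_0} \leq T^*/K$; monotonicity then gives $\expect{N_{k^\dagger}(t)\mid I_0} \leq T^*/K$ for every $t \leq T^*$. Because $I_{k^\dagger}$ and $I_0$ differ only on server $k^\dagger$, the chain rule for KL together with the bound $\mathrm{KL}(\mathrm{Ber}(\tfrac12+\varepsilon) \,\|\, \mathrm{Ber}(\tfrac12-\varepsilon)) \leq C\varepsilon^2$ (which holds with $C = 22$ for $\varepsilon \le 1/4$) yields $\mathrm{KL}_t(I_{k^\dagger} \,\|\, I_0) \leq C\varepsilon^2 \cdot T^*/K$ uniformly in $t \leq T^*$. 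Applying Pinsker's inequality to the $[0,t]$-valued statistic $N_{k^\dagger}(t)$ gives $\expect{N_{k^\dagger}(t) \mid I_{k^\dagger}} \leq T^*/K + t\sqrt{C\varepsilon^2 T^*/(2K)}$, and taking $T^*$ equal to a small constant multiple of $K/\varepsilon^2$ controls each summand by $t/8$ once $K$ and $t$ lie in the claimed ranges.

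Averaging $\expect{Q(t+1)\mid I_{k^\dagger}} \geq \varepsilon t /2$ over $t \in [T^*/2,T^*]$ produces a time-averaged queue length $\Omega(\varepsilon T^*) = \Omega(K/\varepsilon)$ at horizon $T^*$ under instance $I_{k^\dagger}$. Combined with the standard $O(1/\varepsilon)$ bound on the optimal policy's expected queue length (a Lyapunov / $M/M/1$-style calculation), this delivers $\colq^{\single}(K,\varepsilon,\pi) \geq \Omega(K/\varepsilon)$; careful constant tracking sharpens the constant to $1/2^{14}$ once $K \geq 2^{14}$. The main technical obstacle I anticipate is the \emph{uniformity in $t$} of the Pinsker step: the bound $\expect{N_{k^\dagger}(t) \mid I_{k^\dagger}} \leq t/4$ must hold simultaneously for an interval of $t$-values rather than at a single horizon. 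Fortunately, the monotonicity of $\expect{N_{k^\dagger}(t) \mid I_0}$ in $t$ controls $\mathrm{KL}_t$ by the single quantity $C\varepsilon^2 T^*/K$ for every $t \leq T^*$, which lets Pinsker's bound be applied uniformly across the window.
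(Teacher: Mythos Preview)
Your proposal is correct and follows essentially the same route as the paper: the identical needle-in-a-haystack construction with $\lambda=1/2$ and $\mu_k\in\{1/2\pm\varepsilon\}$, the same null/uniform reference instance, and the same KL--chain-rule--Pinsker machinery to bound $\expect{N_{k^\dagger}(t)}$, leading to $\expect{Q(t)}\gtrsim\varepsilon t$ on a window of length $\Theta(K/\varepsilon^2)$. The only substantive differences are cosmetic: the paper averages over all $k$ (and over $t\in[1,T_1]$) whereas you pigeonhole to a single $k^\dagger$ and average over $t\in[T^*/2,T^*]$; both yield the same $\Omega(K/\varepsilon)$ with comparable constants.

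One slip to fix when you write it up: your KL direction is backwards. The chain rule gives $\mathrm{KL}(P_{I_{k^\dagger}}\,\|\,P_{I_0})=\expect{N_{k^\dagger}(t)\mid I_{k^\dagger}}\cdot\mathrm{KL}(\mathrm{Ber}(\tfrac12+\varepsilon)\,\|\,\mathrm{Ber}(\tfrac12-\varepsilon))$, but your pigeonhole only controls $\expect{N_{k^\dagger}(t)\mid I_0}$. You want $\mathrm{KL}(P_{I_0}\,\|\,P_{I_{k^\dagger}})$ instead (exactly as the paper does); since total variation is symmetric, Pinsker then delivers the bound you state unchanged.
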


Although our proof is based on the distribution-free lower bound $\Omega(\sqrt{KT})$ for classical multi-armed bandits \cite{AuerCFS02}, this result does not apply directly to our setting. 
In particular, suppose the queue in our system is never empty. Then  the accumulated loss in service of a policy is exactly the \emph{regret} in bandits and the lower bound implies that any feasible policy serves at least $\Omega(\sqrt{KT})$ jobs fewer than the optimal policy in the first $T$ periods. However, due to the traffic slackness, the queue does get empty under the optimal policy, and in periods when this occurs, the optimal policy also does not receive service. As a result, the difference in queue length  between a learning policy and the optimal policy could be lower than $\Theta(\sqrt{KT})$ despite the loss of service. 

We next discuss the intuition of our proof (formal proof in Appendix~\ref{app:proof-lowerbound}). Fixing $K$ and~$\trafficslack$, suppose the gap in service rates between the optimal server and others is  $2\trafficslack$. Then for any $t$ in a time horizon $T = O(\frac{K}{\trafficslack^2})$, the number of arrivals in the first $t$ periods  is around $\lambda t$ and the potential service of the optimal server is around $(\lambda+\trafficslack)t$. By the multi-armed bandit lower bound, the total service of a feasible policy is at most around $\lambda t + \trafficslack t - \sqrt{Kt} \leq \lambda t$ since $t \leq T = O(\frac{K}{\trafficslack^2})$. Therefore, the combined service rate of servers chosen in the first $T$ periods, i.e., $\sum_{t \leq T} \mu_{J(t)}$,  is strictly bounded from above by the total arrival rate $\lambda t$. A carefully constructed example shows that the number of unserved jobs is around $\trafficslack t$ for every $t \leq T$. As a result, the time-averaged queue length for the horizon $T$ is of the order of $\trafficslack T = O(\frac{K}{\trafficslack})$.

\begin{remark}\label{remark:compare}
In \cite[Proposition 3]{KrishnasamySJS21}, the authors established an instance-dependent lower bound on $\expect{Q(t,\pi) - Q^{\star}(t)}$. Their implied $\colq$ lower bound of $\Omega(K\ln(1/\varepsilon))$ is weaker than ours (see Appendix~\ref{app:lowerbound-compare}) and is constrained to $\alpha-$consistent policies whereas ours does not.
\end{remark}

\noindent\textbf{Upper Bound.} Motivated by the lower bound, we propose an efficient algorithm in the warm-up model with a focus on heavy-traffic optimality, i.e., ensuring $\colq=\tilde{O}(\nicefrac{1}{\varepsilon})$ as $\varepsilon \to 0^+$. Such a heavy-traffic regime has been the focus of a vast literature (see \cite[chapter 10]{Srikant_Ying_2014}) and is practically important as a well-provisioned service system operates with a small traffic slackness to reduce capacity cost.\cite{borst2004dimensioning} Moreover, it serves as a stress test because stabilizing the system with unknown parameters is more difficult when the traffic slackness is lower as an efficient algorithm must strive to learn parameters more accurately. Our algorithm is the classical \textsc{Upper Confidence Bound} policy (\textsc{UCB}, Algorithm~\ref{algo:single-ucb}) applied to the queueing setting. In each period $t$, when the queue is non-empty, \textsc{UCB} selects a server with the highest upper confidence bound estimation $\bar{\mu}_k(t) = \min\left(1,\hat{\mu}_k(t)+\sqrt{\frac{2\ln(t)}{C_k(t)}}\right)$ where $\hat{\mu}_k(t)$ is the sample mean of services and $C_k(t)$ is the number of times server $k$ is selected in the first $t-1$ periods. 

% !TEX root = algo-ucb.tex
\begin{algorithm}[H]
\LinesNumbered
\DontPrintSemicolon
\caption{\textsc{UCB} for a single-queue multi-server system 
\label{algo:single-ucb}
}
\nl Sample mean $\hat{\mu}_{k}(1) \gets 0$, number of samples $C_k(1) \gets 0$ for $k \in \set{K}\cup \{\perp\}$, queue $Q(1)\gets 0$\;
\For{$t = 1\ldots$}{
\nl     $\bar{\mu}_{k}(t) = 
    \min\left(1, \hat{\mu}_{k}(t) + \sqrt{\frac{2\ln(t)}{C_k(t)}}\right), \forall k \in \set{K}$\label{line:ucb-esti}\;
\nl     \textbf{if } $Q(t) > 0$ \textbf{ then } $J(t) \gets \arg\max_k \bar{\mu}_k(t)$\;
\nl     \textbf{else }
    $J(t) \gets \perp$\;
   \tcc{Update queue length \& estimates based on  $S_{J(t)}(t)$, $A(t)$, and $J(t)$}
   $Q(t+1) \gets Q(t) - S_{J(t)}(t) + A(t)$ \\
    $C_{J(t)}(t+1) \gets C_{J(t)}(t)+1, \quad \hat{\mu}_{J(t)}(t+1) \gets \frac{C_{J(t)}(t)\hat{\mu}_{J(t)}(t)+S_{J(t)}(t)}{C_{J(t)}(t+1)}$\\
    \textbf{for } $k\neq J(t)$ \textbf { set } $C_k(t+1)\gets C_k(t), \quad \hat{\mu}_k(t+1)\gets \hat{\mu}_k(t)$
}
\end{algorithm}

We show that $\textsc{UCB}$ achieves near-optimal $\colq=\tilde{O}(\frac{K}{\varepsilon})$ for any $K$ and $\varepsilon$ in the single-queue multi-server setting with no prior information of any system parameters. 
\begin{theorem}\label{thm:colq-ucb-single}
For any $K \geq 1,\varepsilon \in (0,1]$, $\colq^{\single}(K,\varepsilon, \textsc{UCB}) \leq \frac{323K+64K(\ln K + 2\ln \nicefrac{1}{\varepsilon})}{\varepsilon}.$
\end{theorem}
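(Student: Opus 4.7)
Since $\expect{Q(t,\pi^\star)}\geq 0$, it suffices to bound $\sup_T \frac{1}{T}\sum_{t=1}^T \expect{Q(t,\textsc{UCB})}$ uniformly by the stated expression. Following the introduction's intuition, my plan is to split the horizon at a crossover $T_0\asymp K/\varepsilon^2$ into a \emph{learning} stage ($T\leq T_0$) and a \emph{regenerate} stage ($T>T_0$), derive separate bounds in each regime, and combine them.

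For the learning stage I would drive $Q(\cdot,\textsc{UCB})$ and $Q(\cdot,\pi^\star)$ by common arrivals $A(\tau)$ and common uniform service randomness $U(\tau)$, and first establish the pathwise stochastic dominance $Q(t,\textsc{UCB})\geq Q(t,\pi^\star)$ by induction on $t$ (the only nontrivial case is $Q(t,\pi^\star)=0<Q(t,\textsc{UCB})$, where at most one service difference appears and the integrality of the queues gives dominance at $t+1$). Under this dominance a telescoping calculation yields, pathwise,
\begin{equation*}
Q(t,\textsc{UCB})-Q(t,\pi^\star) \leq \sum_{\tau<t}\mathbbm{1}[\mu_{J(\tau)}\leq U(\tau)<\mu^\star],
\end{equation*}
whose expectation is the gap-weighted bandit regret $\expect{\sum_{\tau<t}\Delta_{J(\tau)}}$ with $\Delta_{J(\tau)}=\mu^\star-\mu_{J(\tau)}$. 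Decomposing $\Delta_{J(\tau)}\leq \varepsilon/2 + \Delta_{J(\tau)}\mathbbm{1}[\Delta_{J(\tau)}>\varepsilon/2]$ and applying the standard $O(\log t/\Delta_k^2)$ per-arm pull bound of \textsc{UCB} to the second piece gives a regret of at most $\varepsilon t/2 + O(K\log t/\varepsilon)$. Combined with $\expect{Q(t,\pi^\star)}=O(1/\varepsilon)$, time-averaging over $t\leq T$ produces $\frac{1}{T}\sum_{t\leq T}\expect{Q(t,\textsc{UCB})}\leq O(1/\varepsilon)+\varepsilon T/4+O(K\log T/\varepsilon)$, which at $T=T_0$ evaluates to $O(K(\log K+\log(1/\varepsilon))/\varepsilon)$.

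For the regenerate stage, I would apply Foster--Lyapunov drift to $V(t):=Q(t,\textsc{UCB})^2/2$, obtaining $\expect{V(t+1)-V(t)\mid\mathcal{F}_t}\leq -\varepsilon Q(t)+Q(t)\Delta_{J(t)}+1$. Using the decomposition $\Delta_{J(t)}\leq\varepsilon/2+\mathbbm{1}[\Delta_{J(t)}>\varepsilon/2]$ to absorb the $\varepsilon/2$ term into the left-hand side and telescoping gives
\begin{equation*}
\tfrac{\varepsilon}{2}\sum_{t=1}^T\expect{Q(t,\textsc{UCB})} \leq T + \sum_{t=1}^T\expect{Q(t,\textsc{UCB})\,\mathbbm{1}[\Delta_{J(t)}>\varepsilon/2]}.
\end{equation*}
To control the residual for $t>T_0$, I would invoke the \textsc{UCB} confidence-width concentration, which after $T_0\asymp K\log(K/\varepsilon)/\varepsilon^2$ periods guarantees $\Pr[\Delta_{J(t)}>\varepsilon/2]=O(K/t^2)$ for each $t>T_0$, and then bound the residual by Cauchy--Schwarz against the second-moment estimate $\expect{Q(t,\textsc{UCB})^2}=O((K\log t/\varepsilon)^2)$ (itself obtained from a $Q^2$-Lindley-type calculation). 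This yields $\frac{1}{T}\sum_{t=1}^T\expect{Q(t,\textsc{UCB})}\leq O(1/\varepsilon)+o(1)$ as $T$ grows past $T_0$, and the learning-stage contribution to the sum is at most $\frac{T_0}{T}\cdot O(K\log T_0/\varepsilon)\leq O(K\log(K/\varepsilon)/\varepsilon)$, preserving the target scaling for all $T>T_0$.

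The main obstacle I foresee is making the Cauchy--Schwarz cross term $\sum_t\expect{Q(t,\textsc{UCB})\,\mathbbm{1}[\Delta_{J(t)}>\varepsilon/2]}$ tight enough to give the claimed $1/\varepsilon$ rather than a $1/\varepsilon^2$ or $1/\varepsilon^3$ scaling: since both factors are adapted to \textsc{UCB}'s filtration they are correlated, so a naive sum-level Cauchy--Schwarz loses $\sqrt{T}$. Closing this requires a per-period conditional argument that pairs the $O(K/t^2)$ decay of $\Pr[\text{bad at }t]$ for $t>T_0$ with a uniform moment bound on $Q(t,\textsc{UCB})$, together with careful bookkeeping of the union bound over arms and time used to obtain the confidence-width concentration. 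A secondary technicality is showing that the coupling induction through the boundary case $Q(t,\pi^\star)=0<Q(t,\textsc{UCB})$ does not inflate the pathwise regret bound — this requires the integrality of queue lengths and the single-service-per-period structure, both of which hold in the model.
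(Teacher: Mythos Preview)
Your two-stage skeleton matches the paper, and you correctly identify that Cauchy--Schwarz in the regenerate stage falls short of $1/\varepsilon$. The fix, however, is not the per-period conditional argument you propose, and there is a second issue in the learning stage that you have not flagged; together these force a different argument in both stages.

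\textbf{Learning stage.} By coupling with $\pi^\star$ you must carry the full gap $\Delta_{J(\tau)}$; your decomposition $\Delta_{J(\tau)}\leq\varepsilon/2+\Delta_{J(\tau)}\mathbbm{1}[\Delta_{J(\tau)}>\varepsilon/2]$ then produces an $\varepsilon T/4$ term in the time-average, which is why your learning bound is only useful up to $T_0\asymp K/\varepsilon^2$. The paper instead couples with an auxiliary single-server queue of rate $\mu^\star-\varepsilon/2$ (still stable with slack $\varepsilon/2$, so mean $O(1/\varepsilon)$): under the common-uniform coupling, whenever $\mu_{J(\tau)}\geq\mu^\star-\varepsilon/2$ the UCB service dominates the auxiliary service and that period contributes nothing to the difference. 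This gives $\frac{1}{T}\sum_t\expect{Q(t)}\leq\frac{3}{\varepsilon}+\expect{R_T}$ with $R_T:=\sum_{\tau\leq T}(\Delta_{J(\tau)}-\varepsilon/2)^+\mathbbm{1}[Q(\tau)\geq1]$ and \emph{no} $\varepsilon T$ term, so the learning bound stays $O(K\log T/\varepsilon)$ all the way out to $T_1\asymp(K^2/\varepsilon^4)^2$.

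\textbf{Regenerate stage.} The premise $\Pr[\Delta_{J(t)}>\varepsilon/2]=O(K/t^2)$ for $t>T_0$ is not available here: UCB samples only when $Q(t)>0$, so nothing forces $C_k(t)\gtrsim\ln t/\varepsilon^2$ by any fixed calendar time. Even granting it, the $t\leq T_0$ contribution to $\sum_t\expect{Q(t)\mathbbm{1}[\Delta_{J(t)}>\varepsilon/2]}$ is of order $T_0\cdot K\log T_0/\varepsilon$, which after the $2/(\varepsilon T)$ factor yields $K\log T_0/\varepsilon^2$ near $T\approx T_0$. The paper replaces this entire step with a sample-path identity: since $Q(t+1)\leq Q(t)+1$, one has $\sum_{t\leq T}Q(t)\geq\tfrac12(\max_{t\leq T}Q(t))^2$ on every trajectory, and hence
\[
-\tfrac{\varepsilon}{2}\sum_{t\leq T}Q(t)+2\sum_{t\leq T}Q(t)\big(\Delta_{J(t)}-\tfrac{\varepsilon}{2}\big)^+\;\leq\;-\tfrac{\varepsilon}{4}\big(\max_t Q(t)\big)^2+2\big(\max_t Q(t)\big)R_T\;\leq\;\frac{4R_T^2}{\varepsilon},
\]
by completing the square in $\max_t Q(t)$. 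This decouples $Q(\cdot)$ from the bad-pull indicators without any per-period probability bound and yields $\frac{1}{T}\sum_t\expect{Q(t)}\leq\frac{4}{\varepsilon}+\frac{8\,\expect{R_T^2}}{\varepsilon^2 T}$; combined with a genuine second-moment bound $\expect{R_T^2}=O(K^2(\log T)^2/\varepsilon^2)$ (proved separately, not by squaring the first moment), this is $O(1/\varepsilon)$ once $T\geq T_1$. Both fixes are required: without the first you cannot push the crossover out to $T_1$, and without the second no crossover time makes the regenerate bound $O(1/\varepsilon)$.
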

To prove Theorem~\ref{thm:colq-ucb-single}, we establish an analytical framework to upper bound $\colq^{\single}$ for any policy $\pi$ by considering separately the initial \emph{learning} stage and the later \emph{regenerate} stage. The two stages are separated by a parameter $T_1$ that appears in our analysis: intuitively,  during the learning stage ($t<T_1$), the loss in total service of a policy compared with the optimal server's outweighs the slackness $\varepsilon$ of the system (Definition~\ref{def:sqms-slackness}), i.e., $\sum_{\tau=1}^t \mu^\star-\mu_{J(\tau)}>t\varepsilon$ and thus the queue length grows linearly with respect to the left-hand side. 
After the learning stage ($t>T_1$), when $\sum_{\tau=1}^t \mu^\star-\mu_{J(\tau)}<t\varepsilon$, the queue regenerates to a constant length independent of $t$. To prove the $\tilde{O}(\frac{K}{\varepsilon})$ bound on $\colq^{\single}$, we first couple the queue with an ``auxiliary'' queue where the DM always chooses a nearly optimal server in the learning stage. Then we utilize a Lyapunov analysis to bound the queue length during the regenerate stage. 

The framework establishes a connection between $\colq^{\single}(\lambda,\bolds{\mu},\pi)$ and the \emph{satisficing regret} defined as follows. For any horizon $T$, the satisficing regret $\sar^{\single}(\pi,T)$ is the total service rate gap between the optimal server and the server selected by $\pi$ except for the periods where the gap is less than $\frac{\varepsilon}{2}$ or the queue length is zero. That is, the selected server is satisficing as long as its service rate is nearly optimal or the queue is empty. To formally define it, we denote $\max(x,0)$ by $x^+$ and define the satisficing regret of a policy $\pi$ over the first $T$ periods by
\begin{equation}\label{def:satis-regret}
\sar^{\single}(\pi,T) = \sum_{t=1}^T \left(\mu^\star - \mu_{J(t)} - \frac{\varepsilon}{2}\right)^+\indic{Q(t) \geq 1}
\end{equation}
We use the satisficing regret instead of the canonical regret definition $\sum_{t = 1}^T (\mu^\star - \mu_{J(t)})$ (which measures $\mu^\star - \mu_{J(t)}$ without zeroing it out when it is small) for two reasons. First, to have a queue length of $O(\frac{1}{\varepsilon})$, it is sufficient to identify a \emph{good enough} server whose service rate is $\frac{\varepsilon}{2}$ above the arrival rate. The DM does not need to learn the best server.  Second, optimal bounds on regret are either instance-dependent $\set{O}\left(\sum_{k\colon \mu_k < \mu^\star}\left(\frac{1}{\mu^\star - \mu_k}\ln T\right)\right)$ \cite{AuerCF02}  or instance-independent $O(\sqrt{KT})$ \cite{AuerCFS02}. Both are futile to establish a $\tilde{\set{O}}(\frac{K}{\varepsilon})$ bound for $\colq^{\single}(K,\varepsilon)$: The first bound depends on the minimum gap (which can be infinitesimal), whereas the second is insufficient as we explain in the discussion after Lemma~\ref{lem:single-queue-regen}. Such a satisficing regret notion was also employed by \cite{russo2022satisficing}, who consider an infinite horizon Bayesian multi-armed bandit setting. 

We connect the time-averaged queue length of the system with the satisficing regret of the policy via Lemma~\ref{lem:single-queue-learn} (for the learning stage) and Lemma~\ref{lem:single-queue-regen} (for the regenerate stage). Lemma~\ref{lem:single-queue-learn} explicitly bounds the expected queue length through the expected satisficing regret; this is useful during the learning stage but does not give a strong bound for the regenerate stage. Lemma~\ref{lem:single-queue-regen} gives a bound that depends on $\frac{\sar^{\single}(\pi,T)^2}{T}$, and is particularly useful during the latter regenerate stage. We then show that the satisficing regret of $\textsc{UCB}$ is $O(\frac{K\ln T}{\varepsilon})$ (Lemma~\ref{lem:sar-ucb}). Combining these results, we establish a tight bound for the $\colq$ of $\textsc{UCB}$.

Formally, Lemma~\ref{lem:single-queue-learn} shows that the expected queue length under $\pi$ in period $t$ is at most that under a nearly optimal policy plus the expected satisficing regret up to that time.  
\begin{lemma}\label{lem:single-queue-learn}
For any policy $\pi$ and horizon $T$, we have $\frac{\sum_{t=1}^T\expect{Q(t)}}{T} \leq \frac{3}{\varepsilon} + \expect{\sar^{\single}(\pi,T)}$.
\end{lemma}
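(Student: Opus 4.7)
\textbf{Proof plan for Lemma~\ref{lem:single-queue-learn}.}
The plan is to couple $Q(t)$ with an auxiliary queue $\tilde{Q}(t)$ driven by a virtual ``near-optimal'' server of rate $\nu := \mu^\star - \varepsilon/2$, to show pathwise that $Q(t) - \tilde{Q}(t)$ is dominated by a non-decreasing excess process $R(t)$ whose expectation is controlled by the satisficing regret, and to bound $\tfrac{1}{T}\sum_t \expect{\tilde{Q}(t)}$ by a quadratic Lyapunov argument using that $\tilde{Q}$ has slackness at least $\varepsilon/2$.

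For the coupling, I would draw i.i.d.\ uniform $[0,1]$ variables $\{U_k(t)\}_{k \in \set{K}, t \geq 1}$, set $S_k(t) = \indic{U_k(t) \leq \mu_k}$ to match the correct Bernoulli marginals, and define the auxiliary service $\tilde{S}(t) := \indic{U_{J(t)}(t) \leq \nu}$. Conditional on the policy's selection $J(t)$, $\tilde{S}(t)$ is Bernoulli$(\nu)$, and across periods the $\tilde{S}(t)$'s are independent; hence $\tilde{Q}(t+1) = \tilde{Q}(t) - \tilde{S}(t)\indic{\tilde{Q}(t) \geq 1} + A(t)$ with $\tilde{Q}(1) = 0$ is a Geo/Geo/1 queue of slackness $\nu - \lambda \geq \varepsilon/2$. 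The critical pathwise property is that if $\mu_{J(t)} \geq \nu$ then $S_{J(t)}(t) \geq \tilde{S}(t)$, while if $\mu_{J(t)} < \nu$ then $\tilde{S}(t) - S_{J(t)}(t) = \indic{\mu_{J(t)} < U_{J(t)}(t) \leq \nu}$, a $\{0,1\}$-indicator whose conditional mean is $(\mu^\star - \mu_{J(t)} - \varepsilon/2)^+$.

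I would then prove by induction on $t$ that $Q(t) - \tilde{Q}(t) \leq R(t)$, where $R(1) = 0$ and $R(t+1) - R(t) := \indic{Q(t) \geq 1,\, \tilde{Q}(t) \geq 1,\, \mu_{J(t)} < U_{J(t)}(t) \leq \nu}$. Expanding $Q(t+1) - \tilde{Q}(t+1) = [Q(t) - \tilde{Q}(t)] + \tilde{S}(t)\indic{\tilde{Q}(t) \geq 1} - S_{J(t)}(t)\indic{Q(t) \geq 1}$ and enumerating the four cases by whether $Q(t)$ and $\tilde{Q}(t)$ are empty shows that the gap can strictly increase only when both queues are non-empty and $\mu_{J(t)} < \nu$, in which case the increase is exactly the $R$-increment; the case $Q(t) = 0 < \tilde{Q}(t)$ relies on the observation that $Q(t) - \tilde{Q}(t) \leq -1$ provides a one-unit cushion against any $+1$ fluctuation. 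Taking expectations, $\expect{R(T)} \leq \sum_{t < T}\expect{\indic{Q(t) \geq 1}(\mu^\star - \mu_{J(t)} - \varepsilon/2)^+} \leq \expect{\sar^{\single}(\pi,T)}$, and since $R$ is non-decreasing $\tfrac{1}{T}\sum_{t=1}^T \expect{R(t)} \leq \expect{\sar^{\single}(\pi, T)}$. For $\tilde{Q}$, the quadratic Lyapunov drift gives $\expect{\tilde{Q}(t+1)^2 - \tilde{Q}(t)^2 \mid \text{history through } t} \leq -\varepsilon\tilde{Q}(t) + 1$ (using $\nu - \lambda \geq \varepsilon/2$ and $|\tilde{Q}(t+1) - \tilde{Q}(t)| \leq 1$), so telescoping yields $\tfrac{1}{T}\sum_{t=1}^T \expect{\tilde{Q}(t)} \leq \tfrac{1}{\varepsilon} \leq \tfrac{3}{\varepsilon}$. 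Summing the two bounds via $\expect{Q(t)} \leq \expect{\tilde Q(t)} + \expect{R(t)}$ completes the proof.

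The main obstacle is the coupling construction. The auxiliary service $\tilde{S}(t)$ must simultaneously be marginally Bernoulli$(\nu)$ with independence across time, so that $\tilde{Q}$ behaves as a standard Geo/Geo/1 queue amenable to Lyapunov analysis, and pathwise comparable to $S_{J(t)}(t)$ so that the gap $\tilde{S}(t) - S_{J(t)}(t)$ captures exactly the satisficing shortfall; routing $\tilde{S}(t)$ through $U_{J(t)}(t)$ achieves both simultaneously, but requires careful case analysis at the empty-queue boundaries in the pathwise induction.
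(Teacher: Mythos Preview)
Your proposal is correct and follows essentially the same approach as the paper: couple $Q$ with an auxiliary queue served at rate $\mu^\star-\varepsilon/2$, bound the gap pathwise by the satisficing losses, and control the auxiliary queue via a quadratic Lyapunov drift. The only differences are cosmetic---the paper uses a single uniform $U(t)$ per period (correlating the $S_k(t)$'s, which is harmless since at most one server is selected) and a last-empty-time argument in place of your four-case induction---though do note you should add a $U_\perp(t)$ to your collection so that $\tilde S(t)$ is well-defined in periods with $J(t)=\perp$.
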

Lemma~\ref{lem:single-queue-learn} is established by coupling the queue with an auxiliary queue that always selects a nearly optimal server. However, it cannot provide a useful bound on $\colq$. For large~$T$, it is known that  $\expect{\sar^{\single}(\pi,T)}$ must grow with a rate of at least $\bigO(\frac{\log T}{\varepsilon})$ \cite{lai1985asymptotically}. Hence, Lemma~\ref{lem:single-queue-learn} only meaningfully bounds the queue length for small~$T$ (learning stage). For large~$T$ (regenerate stage), we instead have the following bound (Lemma~\ref{lem:single-queue-regen}).
\begin{lemma}\label{lem:single-queue-regen}
For any policy $\pi$ and horizon $T$, we have $\frac{\sum_{t=1}^T\expect{Q(t)}}{T} \leq \frac{4}{\varepsilon} + \frac{8}{\varepsilon^2}\cdot\frac{\expect{\sar^{\single}(\pi,T)^2}}{T}$.
\end{lemma}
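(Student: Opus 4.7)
The plan is to establish the bound via a quadratic Lyapunov drift analysis, complementing the additive coupling used for Lemma~\ref{lem:single-queue-learn}. I would take $V(Q) = Q^2/2$ as the Lyapunov function and compute its one-step conditional drift. Because the increment $\Delta Q(t) = A(t) - S_{J(t)}(t)\indic{Q(t)\geq 1}$ satisfies $|\Delta Q(t)| \leq 1$ with conditional mean $(\lambda - \mu_{J(t)})\indic{Q(t)\geq 1}$, standard manipulation yields
\[
\expect{V(Q(t+1)) - V(Q(t)) \mid \mathcal{F}_t} \leq \tfrac{1}{2} - Q(t)\bigl(\mu_{J(t)} - \lambda\bigr)\indic{Q(t) \geq 1}.
\]

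Next, I would link the drift to the satisficing regret by applying two pointwise observations: $\mu_{J(t)} - \lambda \geq \varepsilon - (\mu^\star - \mu_{J(t)})$ (from the traffic slackness) and $(\mu^\star - \mu_{J(t)})\indic{Q(t)\geq 1} \leq \sar_t + (\varepsilon/2)\indic{Q(t)\geq 1}$ (from the definition of $\sar_t$ together with $x \leq x^{+} + \varepsilon/2$). Combining these gives $(\mu_{J(t)} - \lambda)\indic{Q(t)\geq 1} \geq (\varepsilon/2)\indic{Q(t)\geq 1} - \sar_t$, so that using $Q(t)\indic{Q(t)\geq 1} = Q(t)$,
\[
\expect{V(Q(t+1)) - V(Q(t)) \mid \mathcal{F}_t} \leq \tfrac{1}{2} - \tfrac{\varepsilon}{2}Q(t) + Q(t)\sar_t.
\]
Telescoping over $t = 1,\ldots,T$ with $Q(1)=0$ and $\expect{Q(T+1)^2}\geq 0$ then produces the working inequality
\[
\tfrac{\varepsilon}{2}\sum_{t=1}^T \expect{Q(t)} \leq \tfrac{T}{2} + \expect{\sum_{t=1}^T Q(t)\sar_t}.
\]

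The crux is to bound the cross term $\expect{\sum_t Q(t)\sar_t}$ so as to produce the $\expect{\sar^{\single}(\pi,T)^2}/T$ scaling on the right-hand side. My plan is to apply a scaled AM--GM of the form $Q(t)\sar_t \leq \tfrac{\alpha}{2}Q(t)^2 + \tfrac{1}{2\alpha}\sar_t^2$ for an $\alpha$ calibrated in terms of $\varepsilon$ and $T$, then use $\sum_t \sar_t^2 \leq (\sum_t \sar_t)^2 = \sar^{\single}(\pi,T)^2$ for the second piece and the deterministic envelope $Q(t) \leq T$ (hence $\sum_t Q(t)^2 \leq T\sum_t Q(t)$) for the first. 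Choosing $\alpha$ on the order of $\varepsilon/T$ makes the $Q^2$ contribution reduce to a small fraction of $(\varepsilon/2)\sum_t \expect{Q(t)}$, which can be absorbed into the left-hand side, while the $\sar^2$ contribution scales like $(T/\varepsilon)\cdot\expect{\sar^{\single,2}}$; after dividing through by $\varepsilon T$ and solving the resulting linear inequality in $\sum_t \expect{Q(t)}$, the claimed $4/\varepsilon + (8/\varepsilon^2)\cdot \expect{\sar^{\single,2}}/T$ form emerges.

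The main obstacle will be squeezing out the exact constants and the precise $1/T$ factor on the satisficing-regret term; the crude envelope $Q(t) \leq T$ is the only almost-sure control easily available, and its $T$ must cancel cleanly against the AM--GM weight $\alpha \sim \varepsilon/T$. If the direct AM--GM path does not land on the stated constants, an equivalent route is to bound $\sum_t Q(t)\sar_t \leq (\max_{s\leq T} Q(s))\cdot \sar^{\single}(\pi,T)$ and then apply a single scaled AM--GM to that product with the same $\varepsilon/T$ calibration, again using $\max_s Q(s)\leq T$ to convert one factor of $\max Q$ into a factor of $T$ and leaving the remaining $Q$-factor to be absorbed into the left-hand side.
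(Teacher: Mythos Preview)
Your drift computation and telescoping are correct and match the paper's setup exactly: you arrive at the right working inequality $\tfrac{\varepsilon}{2}\sum_t \expect{Q(t)} \leq \tfrac{T}{2} + \expect{\sum_t Q(t)\sar_t}$. The gap is in how you control the cross term. Both of your routes rely on the crude envelope $Q(t)\leq T$ (or $\max_s Q(s)\leq T$), and this loses exactly the factor of $T$ you need. Concretely, with $\alpha=\varepsilon/(2T)$ your Option~A gives
\[
\tfrac{\varepsilon}{4}\sum_t \expect{Q(t)} \;\leq\; \tfrac{T}{2} + \tfrac{T}{\varepsilon}\,\expect{\sar^{\single}(\pi,T)^2},
\]
hence $\tfrac{1}{T}\sum_t\expect{Q(t)}\leq \tfrac{2}{\varepsilon}+\tfrac{4}{\varepsilon^2}\expect{\sar^{\single}(\pi,T)^2}$, \emph{without} the $1/T$ on the second term. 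Option~B with $\max_s Q(s)\leq T$ yields the same defect. This is not a matter of constants: the whole point of the lemma is the $1/T$ decay, which is what lets the regenerate-stage bound beat the learning-stage bound for large $T$.

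The missing ingredient is the sample-path inequality $\sum_{t\leq T} Q(t) \geq \tfrac{1}{2}(\max_{t\leq T} Q(t))^2$, which follows because the queue increments by at most one per period (so reaching height $Q_{\max}$ forces at least $Q_{\max}$ preceding periods with queue lengths $1,2,\ldots,Q_{\max}$). With this in hand, the paper bounds
\[
-\tfrac{\varepsilon}{2}\sum_t Q(t) + 2\bigl(\max_t Q(t)\bigr)\sar^{\single}(\pi,T)
\;\leq\; -\tfrac{\varepsilon}{4}\bigl(\max_t Q(t)\bigr)^2 + 2\bigl(\max_t Q(t)\bigr)\sar^{\single}(\pi,T)
\;\leq\; \tfrac{4}{\varepsilon}\sar^{\single}(\pi,T)^2,
\]
the last step being the quadratic maximum $-ax^2+bx\leq b^2/(4a)$. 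This is precisely your Option~B product bound, but with $(\max Q)^2\leq 2\sum Q$ replacing your $(\max Q)\leq T$; that substitution is what recovers the $1/T$.
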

This lemma shows that the impact of learning, reflected by $\expect{\sar^{\single}(\pi,T)^2}$, decays at a rate of $\frac{1}{T}$. Therefore, as long as $\sar^{\single}(\pi,T)^2$ is of a smaller order than $T$, the impact of learning eventually disappears. This also explains why the instance-independent $O(\sqrt{KT})$ regret bound for multi-armed bandits is insufficient for our analysis: the second moment of the regret scales linearly with the horizon and does not allow us to show a decreasing impact of learning on queue lengths.

Lemma~\ref{lem:single-queue-regen} suffices to show stability ($\lim_{T \to \infty}\frac{\sum_{t=1}^T\expect{Q(t)}}{T} < \infty$), but gives a suboptimal bound for small $T$. Specifically, when $\sar^{\single}(\pi,T)^2 \gtrapprox T$, this bound is of the suboptimal order of $\Omega(\frac{1}{\varepsilon^2})$. We thus need both Lemma~\ref{lem:single-queue-learn} and Lemma~\ref{lem:single-queue-regen} to establish a tight bound on the $\colq$. 

The following result bounds the first and second moments of the satisficing regret of $\textsc{UCB}$.
\begin{lemma}\label{lem:sar-ucb}
For any horizon $T$, we have
\[(i)\quad \expect{\sar^{\single}(\textsc{UCB},T)} \leq \frac{16K(\ln T+2)}{\varepsilon},~ \quad (ii) \quad \expect{\sar^{\single}(\textsc{UCB},T)^2} \leq \frac{2^9K^2(\ln T+2)^2}{\varepsilon^2}.\]
\end{lemma}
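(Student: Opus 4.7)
The plan is to reduce both claims to the classical analysis of \textsc{UCB}'s pull counts on suboptimal arms, where the satisficing threshold $\varepsilon/2$ forces every contributing arm to have gap at least $\varepsilon/2$. For each arm $k$ write $\Delta_k = \mu^\star - \mu_k$ and $N_k(T) = \sum_{t=1}^T \indic{J(t) = k}$. Since \textsc{UCB} only selects a server when $Q(t) \geq 1$, the definition~\eqref{def:satis-regret} collapses to $\sar^{\single}(\textsc{UCB},T) = \sum_{k: \Delta_k > \varepsilon/2}(\Delta_k - \varepsilon/2) N_k(T)$. The update rule $C_k(t+1) = C_k(t) + \indic{J(t) = k}$ coincides with classical \textsc{UCB} restricted to the active periods, and the confidence radius $\sqrt{2\ln t/C_k(t)}$ only uses the global time in the numerator (which inflates exploration harmlessly), so Auer--Cesa-Bianchi--Fischer's analysis yields $\expect{N_k(T)} \leq 8 \ln T/\Delta_k^2 + O(1)$. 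Since $(\Delta_k - \varepsilon/2)/\Delta_k^2 \leq 1/\Delta_k \leq 2/\varepsilon$, summing over the at-most-$K$ suboptimal arms with $\Delta_k > \varepsilon/2$ produces part~(i) after absorbing the additive constants into the leading $\ln T + 2$.

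For part~(ii), fix $m_k = \lceil 8 \ln T/\Delta_k^2 \rceil$ and decompose $N_k(T) \leq m_k + M_k$ with the overshoot $M_k = \sum_t \indic{J(t) = k,\, C_k(t) \geq m_k}$. Cauchy--Schwarz on the representation of $\sar^{\single}$ together with $(a+b)^2\le 2a^2+2b^2$ gives
\[
\sar^{\single}(\textsc{UCB},T)^2 \leq 2K \sum_{k:\Delta_k > \varepsilon/2}(\Delta_k - \tfrac{\varepsilon}{2})^2 (m_k^2 + M_k^2),
\]
and the bound $(\Delta_k - \varepsilon/2)^2/\Delta_k^4 \leq 4/\varepsilon^2$ makes the deterministic $m_k^2$ contribution $O(K^2 (\ln T)^2/\varepsilon^2)$, matching the desired scaling. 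The claim therefore reduces to showing $\expect{M_k^2} = O(1)$ for each arm. For any $t$, the event $E_t = \{J(t) = k,\, C_k(t) \geq m_k\}$ forces either $\hat{\mu}_k(t) \geq \mu_k + \Delta_k/4$ (the choice of $m_k$ bounds the confidence width below $\Delta_k/2$) or $\hat{\mu}_{k^\star}(t) + \sqrt{2\ln t/C_{k^\star}(t)} < \mu^\star$; after union-bounding over the possible values of $C_k(t)$ and $C_{k^\star}(t)$ in $\{1,\ldots,t\}$, Hoeffding gives $\Pr[E_t] = O(t^{-3})$. This already yields $\expect{M_k} = O(1)$. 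Expanding $M_k^2 = M_k + 2\sum_{s<t}\indic{E_s}\indic{E_t}$ and using the crude bound $\Pr[E_s \cap E_t] \leq \Pr[E_t]$, the double sum is at most $\sum_t (t-1)\Pr[E_t] = O(\sum_t t^{-2}) = O(1)$.

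The main obstacle is the cross-term estimate in the second-moment step. A Cauchy--Schwarz bound $\Pr[E_s \cap E_t] \leq \sqrt{\Pr[E_s]\Pr[E_t]}$ loses a $\ln T$ factor and would miss the target scaling; exploiting the full $O(t^{-3})$ decay of the confidence-bound failure probability---which is itself enabled by the generous choice of $m_k = \Theta(\ln T/\Delta_k^2)$---is what keeps $\expect{M_k^2}$ constant. With this in hand, summing $\expect{N_k(T)^2} \leq 2m_k^2 + 2\expect{M_k^2} = O((\ln T)^2/\Delta_k^4) + O(1)$ over the at-most-$K$ suboptimal arms and tracking the constants through the floor and the $(\cdot)^+$ definition produces the $\tfrac{2^9 K^2 (\ln T+2)^2}{\varepsilon^2}$ bound in part~(ii).
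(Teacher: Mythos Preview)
Your proposal is correct and follows essentially the paper's route: both isolate a deterministic contribution (arm $k$ is pulled at most $\approx 8\ln T/\Delta_k^2$ times on the ``good'' event) and control the rare overshoot's second moment via $\sum_t t\,\Pr[\text{bad}_t]=O(1)$, exploiting the $t^{-3}$ decay of the confidence-failure probability. The paper organizes the split per period (a single event $\set{G}_t=\{\mu^\star-\mu_{J(t)}\le 2\sqrt{2\ln t/C_{J(t)}(t)}\}$) rather than per arm, which lets it square a sample-path bound on the good part directly and skip your Cauchy--Schwarz over arms; your ``$\hat\mu_k(t)\ge\mu_k+\Delta_k/4$'' should read $\Delta_k/2$ (or the confidence radius), but this is cosmetic.
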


\begin{proof}[Proof of Theorem~\ref{thm:colq-ucb-single}]
Fix $K,\varepsilon$ and any pair of $\lambda,\bolds{\mu} = (\mu_1,\ldots,\mu_K)$ such that $\max_{k \in \set{K}} \mu_k = \lambda+\varepsilon$. Let $T_1 = \left\lfloor \left(\frac{2^{12}K^2}{\varepsilon^4}\right)^2\right\rfloor$. We establish an upper bound on $\colq^{\single}(\lambda,\bolds{\mu},\textsc{UCB})$ by bounding the time-averaged queue length for $T \leq T_1$ (learning stage) and $T \geq T_1$ (regenerate stage) separately.

For $T \leq T_1$, we have 
\begin{align*}
\frac{1}{T}\sum_{t\leq T}\expect{Q(t)} &\leq \frac{3}{\varepsilon} + \expect{\sar^{\single}(\textsc{UCB}, T)} \tag{Lemma~\ref{lem:single-queue-learn}}\\
&\leq \frac{3}{\varepsilon} + \frac{16K(\ln T_1 + 2)}{\varepsilon} \tag{Lemma~\ref{lem:sar-ucb} \emph{(i)} and $T \leq T_1$} \\ &\leq \frac{3+32K}{\varepsilon} + \frac{32K\ln\left(\nicefrac{2^{12}K^2}{\varepsilon^4}\right)}{\varepsilon} \tag{$T_1 \leq \left(\frac{2^{12}K^2}{\varepsilon^4}\right)^2$ by definition}\\
&\leq \frac{323K+64K(\ln K + 2\ln \nicefrac{1}{\varepsilon})}{\varepsilon}.
\end{align*}

For $T > T_1$, we have
\begin{align}
\frac{1}{T}\sum_{t\leq T}\expect{Q(t)} &\leq \frac{4}{\varepsilon}+\frac{8}{\varepsilon^2}\frac{\expect{\sar^{\single}(\textsc{UCB},T)^2}}{T} \tag{Lemma~\ref{lem:single-queue-regen}}\\
&\leq \frac{4}{\varepsilon} + \frac{2^{12}K^2(\ln T+2)^2}{\varepsilon^4 T}  \tag{Lemma~\ref{lem:sar-ucb} \emph{(ii)}}\\
&\leq \frac{4}{\varepsilon} + \frac{2^{12}K^2}{\varepsilon^4 \sqrt{T}} \tag{Fact~\ref{fact:lnt-sqrt-prop} \emph{(i)} and $T \geq T_1\geq 50000$} \\
& \leq \frac{5}{\varepsilon} \qquad\leq \frac{323K+64K(\ln K + 2\ln \nicefrac{1}{\varepsilon})}{\varepsilon}. \label{eq:bound-for-later} 
\end{align}
Combining these cases we bound $\colq^{\single}(K,\varepsilon,\textsc{UCB})$ by noting that, for any pair of $\lambda,\bolds{\mu}$ with $\max_{k \in \set{K}}\mu_k = \lambda+\varepsilon$, $\colq^{\single}(\lambda,\bolds{\mu},\textsc{UCB}) \leq \max_T \frac{1}{T}\sum_{t\leq T}\expect{Q(t)} \leq  \colq^{\single}(K,\varepsilon,\textsc{UCB})$.
\end{proof}
\begin{remark}
Although the $\colq$ metric is focused on the entire horizon, our analysis extends to bounding the maximum expected time-averaged queue lengths in the later horizon, which is formalized as $\max_{T \geq T_1} \frac{\sum_{t\leq T} \expect{Q(t)}}{T}$ for any $T_1$. In particular, for $T_1 \geq \left(\frac{2^{12}K^2}{\varepsilon^4}\right)^2$, \eqref{eq:bound-for-later} shows that $\max_{T \geq T_1} \frac{\sum_{t\leq T} \expect{Q(t)}}{T} \leq \frac{5}{\varepsilon}$; UCB thus enjoys the optimal asymptotic queue length scaling of $O(\frac{1}{\varepsilon})$.
\end{remark}
\begin{remark}
Although our main focus is on optimal scaling with respect to $\varepsilon$, Theorem~\ref{thm:colq-ucb-single} involves large constants. Replacing the UCB algorithm by a more efficient bandit algorithm, e.g., the KL-UCB algorithm in \cite{garivier2011kl}, may ease this concern as it may give a better constant for the mean satisficing regret in Lemma~\ref{lem:sar-ucb}. That said, a matching constant between the upper bound of $\colq$ and the lower bound in Theorem~\ref{thm:colq-lowerbound} would necessitate a more refined analysis. This is because our proof requires bounding the second moment of the satisficing regret in Lemma~\ref{lem:sar-ucb}. However, as discussed in \cite[section 4.5]{fan2024fragility}, a bandit algorithm optimized for the mean regret must have its second moment of regret scale roughly as $\Omega(T)$. If the same were to hold for satisficing regret, it would lead to a suboptimal $\Omega(\frac{1}{\varepsilon^2})$ bound on $\colq$ in Lemma~\ref{lem:single-queue-regen}.
\end{remark}

\subsection{Queue length bound in the learning stage (Lemma~\ref{lem:single-queue-learn})}\label{sec:single-queue-learn}
To upper bound the queue length during the learning stage (Lemma~\ref{lem:single-queue-learn}), we couple the queue operated under a policy $\pi$ with a queue that is operated near optimally. Intuitively, the difference in queue lengths between two queues is upper bounded by the difference in their total services. In particular, consider a fictitious single-queue single-server system $\{\tilde{Q}(t)\}$  that has the same arrival realization of the current process $\{Q(t)\}$ but with one server whose service rate is $\mu^\star - \frac{\varepsilon}{2}$. Denote the realization of services by $\tilde{S}(t)$. The server in this fictitious system is slightly less efficient than the optimal server in the original system. The reason to couple with such a worse-performing system is twofold: 1) its queue length is still of the order $O(\frac{1}{\varepsilon})$ and 2) the difference in services between the original system and this system is captured by the satisficing regret $\sar^{\single}(\pi,T)$ \eqref{def:satis-regret}. That is, when the chosen server under $\pi$ is nearly optimal (larger than $\mu^\star-\frac{\varepsilon}{2}$), we do not count the difference in service for this period at all. Such a shift of benchmark from the optimal $\mu^\star$ to $\mu^\star - \frac{\varepsilon}{2}$ is essential for our analysis to avoid a dependence of the minimal gap between servers' service rates and the optimal rate (which must exist if one uses the optimal service rate as benchmark as in multi-armed bandits \cite{lai1985asymptotically}). A final piece of the proof is to couple the service process of $\{Q(t)\}$ and $\{\tilde{Q}(t)\}$ (recall that their arrival process is the same), which we show below.
\begin{proof}[Proof of Lemma~\ref{lem:single-queue-learn}]
Let $\{U(t)\}_{t \geq 1}$ be a sequence of independent random variables of uniform distribution over $[0,1]$. Then we generate $S_k(t)$ by setting $S_k(t) = \indic{U(t) \leq \mu_k}$; similarly $\tilde{S}(t) = \indic{U(t) \leq \mu^\star - \frac{\varepsilon}{2}}$. Initially $\tilde{Q}(1) = 0$. Although this coupling introduces dependency between $\{S_k(t)\}_{k \in \set{K}}$ in a period $t$, it does not affect the distribution of the queue length process since in each period the DM selects at most one server (see the argument in EC. 1.1 of \cite{KrishnasamySJS21}). 

The dynamic of $\{\tilde{Q}(t)\}$ is given by $\tilde{Q}(t+1)=(\tilde{Q}(t)-\tilde{S}(t))^++A(t)$. Then by the dynamic of $\{Q(t)\}$ in \eqref{eq:dynamic-single}, the difference between the two queues for every period $t > 1$ is
\begin{align}
Q(t+1)-\tilde{Q}(t+1) &= Q(t)-S_{J(t)}(t)+A(t) - \left(\tilde{Q}(t)-\tilde{S}(t)+A(t)+\indic{\tilde{Q}(t)-\tilde{S}(t)=-1}\right) \nonumber\\
&\leq\left( Q(t)-\tilde{Q}(t)+\tilde{S}(t)-S_{J(t)}(t)\right)\indic{Q(t)\geq 1}, \label{eq:dynamic-difference}
\end{align}
where the inequality is because we have $Q(t+1) \geq A(t) = Q(t)$ when $Q(t) = 0$. For any fixed period $t$, define $\emp(t)=\max \{\tau < t \colon Q(\tau) = 0\}$ as the latest period before $t$ such that the queue length is zero. Applying \eqref{eq:dynamic-difference} recursively from $t$ backward to $\emp(t)$, we get 
\begin{align}
Q(t)-\tilde{Q}(t) \leq \sum_{\tau=\emp(t)+1}^{t-1} (\tilde{S}(\tau) - S_{J(\tau)}(\tau)) &\leq  \sum_{\tau=\emp(t)+1}^{t-1} (\tilde{S}(\tau) - S_{J(\tau)}(\tau))\indic{\mu_{J(\tau)} \leq \mu^\star - \frac{\varepsilon}{2}} \nonumber\\
&\hspace{-0.5in}\leq \sum_{\tau=1}^{t-1} (\tilde{S}(\tau) - S_{J(\tau)}(\tau))\indic{\mu_{J(\tau)} \leq \mu^\star - \frac{\varepsilon}{2}}\indic{Q(\tau) \geq 1}.\label{eq:single-coupling-bound}
\end{align}
where the first inequality follows because $Q(\tau) \geq 1$ for every $\tau \in \{\emp(t) + 1,\ldots,t-1\}$ and thus the indicator in \eqref{eq:dynamic-difference} is 1. The second inequality holds due to the coupling that ensures $\tilde{S}_{J(\tau)}(\tau) = \indic{U(\tau) \leq \mu_{J(\tau)}}$ and $\tilde{S}(\tau) = \indic{U(\tau) \leq \mu^\star - \frac{\varepsilon}{2}}$ and thus $\tilde{S}(\tau) \geq \tilde{S}_{J(\tau)}(\tau)$ if and only if $\mu_{J(\tau)} \leq \mu^\star - \frac{\varepsilon}{2}$. Since $\tilde{S}(\tau)$ and $\{S_k(\tau)\}_{k \in \set{K}}$ are independent from $Q(\tau)$ and $J(\tau)$, taking expectation on \eqref{eq:single-coupling-bound} gives
\[
\expect{Q(t)-\tilde{Q}(t)} \leq \expect{\sum_{\tau=1}^{t-1} (\mu^\star - \frac{\varepsilon}{2} - \mu_{J(\tau)})^+ \indic{Q(\tau) \geq 1}} \leq \expect{\sar^{\single}(\pi,t)}.
\]
Lemma~\ref{lem:bound-optimal-queue} bounds $\frac{\sum_{t=1}^T \expect{\tilde{Q}(t)}}{T} \leq \frac{2\lambda}{\varepsilon} + \frac{1}{2} \leq \frac{3}{\varepsilon}$. Hence, for any horizon $T$, we have $\frac{\sum_{t=1}^T \expect{Q(t)}}{T} \leq$
\begin{align*}
 \frac{\sum_{t=1}^T \expect{\tilde{Q}(t) + \sar^{\single}(\pi,t)}}{T} &\leq \frac{\sum_{t=1}^T \expect{\tilde{Q}(t)}}{T} + \expect{\sar^{\single}(\pi,T)} 
&\leq \frac{3}{\varepsilon} + \expect{\sar^{\single}(\pi,T)}.
\end{align*}
\end{proof}

\subsection{Queue length bound in the regenerate stage (Lemma~\ref{lem:single-queue-regen})}
We use Lyapunov analysis to bound the queue length in the regenerate stage. Let $V(t) = Q(t)^2$. Classical Lyapunov analysis for queues without learning (see \cite{Srikant_Ying_2014}) relies on the fact that the drift, $\expect{V(t+1) - V(t)}$, is upper bounded by a constant term plus $-\varepsilon' Q(t)$ where $\varepsilon' > 0$ is similar to traffic slackness in our model. As a result, if $Q(t)$ is large, there is a strong negative drift for the system to pull back $V(t)$ which allows for an upper bound on the queue length. However, in our case,  the drift in each period is given by $-\varepsilon Q(t) + (\mu^\star - \mu_{J(\tau)})Q(t)$ where the second term captures the chance of not selecting the optimal server due to learning. We then need to bound the expected total additional drift $\expect{\sum_{t \leq T} (\mu^\star - \mu_{J(\tau)})Q(t)}$. In contrast to classical bandit regret, there is now a dependence on the queue length for every error the DM makes. Since queue lengths are not bounded, a single error can have unbounded effect on the drift. To address this challenge, our key insight is that the additional drift can be approximately bounded by $O(\varepsilon)\sum_{t\leq T}\expect{Q(t)} + \expect{\sar^{\single}(\pi,T)^2}$ where the first term is canceled out by the negative drift already implied by the traffic slackness. As a result, what remains is the second term which does not depend on the queue length. This separation is motivated by \cite{FreundLW22}, but we provide a tighter and more systematic derivation here, which allows our method to improve the final bound and generalize to more difficult settings in later sections.
The following lemma connects the maximum queue length and the sum of queue lengths over the first $T$ periods.
\begin{lemma} \label{lem:single-connect-max-sum}
    For any horizon $T$ and every sample path, we have $\sum_{t=1}^T Q(t) \geq \frac{(\max_{t \leq T} Q(t))^2}{2}$.
\end{lemma}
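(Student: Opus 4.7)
The plan is to exploit the one-step Lipschitz property of the queue process. From the dynamics \eqref{eq:dynamic-single}, in every period we have $Q(t+1) - Q(t) = A(t) - S_{J(t)}(t) \in \{-1, 0, 1\}$, so along any sample path the queue length changes by at most one per period. Combined with the initial condition $Q(1) = 0$, this forces the queue to take at least $M$ periods to climb to any value $M$, and on the way up it must pass through every intermediate value.

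Concretely, I would first let $M = \max_{t \leq T} Q(t)$ (if $M = 0$ the inequality is trivial) and let $t^\star = \min\{t \leq T : Q(t) = M\}$ be the first period at which the maximum is attained. Since $Q(1) = 0$ and $|Q(t+1) - Q(t)| \leq 1$, iterating the Lipschitz bound backward from $t^\star$ gives
\[
Q(t^\star - j) \;\geq\; M - j \qquad \text{for all } j = 0, 1, \ldots, M,
\]
and the constraint $Q(1) = 0 < M$ together with Lipschitzness forces $t^\star \geq M + 1$, so the indices $t^\star - j$ for $j = 0, \ldots, M$ are all valid periods in $\{1, \ldots, T\}$.

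Summing these lower bounds yields
\[
\sum_{t=1}^{T} Q(t) \;\geq\; \sum_{j=0}^{M} Q(t^\star - j) \;\geq\; \sum_{j=0}^{M} (M - j) \;=\; \frac{M(M+1)}{2} \;\geq\; \frac{M^2}{2},
\]
which is exactly the claim. There is no serious obstacle here; the only thing to be careful about is the bookkeeping that guarantees $t^\star - M \geq 1$ (so that we are not summing over nonexistent periods), which is handled by the Lipschitz-plus-initial-condition argument above. Note this is a deterministic sample-path inequality, so no expectation or probabilistic argument is needed.
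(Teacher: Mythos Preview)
Your proof is correct and follows essentially the same approach as the paper: both use the one-step Lipschitz property $Q(t+1)\leq Q(t)+1$ (together with $Q(1)=0$) to argue that the queue must have passed through all intermediate values on its way up to the maximum, and then sum those lower bounds. The only cosmetic difference is that you sum over $M+1$ periods ending at the first time the maximum is attained, obtaining $\tfrac{M(M+1)}{2}$, whereas the paper sums over $Q_{\max}$ periods ending at an arbitrary argmax; either version yields the desired $\tfrac{M^2}{2}$ bound.
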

\begin{proof}
The queueing dynamics guarantee $Q(t+1) = Q(t)+A(t)-S_{J(t)}(t) \leq Q(t)+A(t)\leq Q(t)+1$ for every period $t$, i.e., the queue lengths increase by at most one per period. We denote by $t_{\max}$ a period in which the queue attains its maximum length over the first $T$ periods, i.e., $t_{\max} \in \arg\max_{t \leq T} Q(t)$, and by $Q_{\max}=\max_{t \leq T} Q(t)=Q(t_{\max})$ that length. Then, $t_{\max} \geq Q(t_{\max})$, as queue lengths change by at most 1, and thus
\begin{align*}
\sum_{t=1}^T Q(t) &\geq \sum_{t = t_{\max} - Q_{\max}+1}^{t_{\max}} Q(t) = \sum_{i=0}^{Q_{\max}-1} Q(t_{\max}-i+1) \geq \sum_{i=0}^{Q_{\max} - 1} (Q(t_{\max}) - i + 1)
\end{align*}
where the first inequality is because $t_{\max} \leq T$ and $Q_{\max} \leq t_{\max}$; the equation is by letting $i = t_{\max} - t$; and the last inequality holds because the queue length increases by at most $1$ per period. Bounding the last term from below by $Q_{\max}^2 - \frac{Q_{\max}^2}{2} = \frac{Q_{\max}^2}{2}$ completes the proof.
\end{proof}

\begin{proof}[Proof of Lemma~\ref{lem:single-queue-regen}]
Recall that $V(t) = Q^2(t)$. The drift $\expect{V(t+1) - V(t)}$ is upper bounded by
\begin{align}
\expect{V(t+1)-V(t)} &= \expect{Q(t+1)^2 - Q(t)^2} = \expect{(Q(t)-S_{J(t)}(t)+A(t))^2 - Q(t)^2} \nonumber\\
&\leq \expect{A(t)^2}+\expect{S_{J(t)}^2}+2\expect{Q(t)(A(t) - S_{J(t)}(t))} \\
&\leq 2 + 2\expect{Q(t)(\lambda - \mu_{J(t)})} \leq 2 - \varepsilon\expect{Q(t)} + 2\expect{Q(t)(\mu^\star - \nicefrac{\varepsilon}{2} - \mu_{J(t)})} \nonumber\\
&\leq 2 - \varepsilon\expect{Q(t)} + 2\expect{Q(t)(\mu^\star - \nicefrac{\varepsilon}{2} - \mu_{J(t)})^+\indic{Q(t) \geq 1}}. \label{eq:prof-single-queue-regen-drift-bound}
\end{align}
where the second-to-last inequality uses the fact that $\mu^\star = \lambda + \varepsilon$. Recalling that $Q(1)=V(1)=0$, for a fixed horizon $T$, summing across $t = 1,\ldots,T$ gives 
\begin{align}
0 &\leq \expect{V(T+1)} - \expect{V(1)} = \sum_{t=1}^T \expect{V(t+1)-V(t)} \nonumber\\
&\overset{\eqref{eq:prof-single-queue-regen-drift-bound}}{\leq} 2T - \varepsilon\sum_{t\leq T} \expect{Q(t)}+2\sum_{t\leq T} \expect{Q(t)(\mu^\star - \nicefrac{\varepsilon}{2} - \mu_{J(t)})^+\indic{Q(t) \geq 1}} \\
&= 2T - \frac{\varepsilon}{2}\sum_{t\leq T} \expect{Q(t)} + \expect{-\frac{\varepsilon}{2}\sum_{t\leq T} Q(t)+2\sum_{t\leq T} Q(t)(\mu^\star - \nicefrac{\varepsilon}{2} - \mu_{J(t)})^+\indic{Q(t) \geq 1}} \label{eq:drift-bound-last}\\ 
&\leq 2T - \frac{\varepsilon}{2}\sum_{t\leq T} \expect{Q(t)} + \frac{4\expect{\sar^{\single}(\pi, T)^2}}{\varepsilon} \label{eq:drift-bound-sqr-sar}
\end{align}
where the last inequality is by the following sample-path upper bound of the last term in \eqref{eq:drift-bound-last}
\begin{align}
&\hspace{0.1in}-\frac{\varepsilon}{2}\sum_{t\leq T} Q(t)+2\sum_{t\leq T} Q(t)(\mu^\star - \nicefrac{\varepsilon}{2} - \mu_{J(t)})^+\indic{Q(t) \geq 1} \nonumber\\
&\leq -\frac{\varepsilon}{2}\sum_{t\leq T} Q(t)+2\left(\max_{t \leq T} Q(t)\right)\sum_{t\leq T} (\mu^\star - \nicefrac{\varepsilon}{2} - \mu_{J(t)})^+\indic{Q(t) \geq 1} \nonumber\\
&\leq -\frac{\varepsilon\left(\max_{t \leq T} Q(t)\right)^2}{4} +2\left(\max_{t \leq T} Q(t)\right)\sar^{\single}(\pi, T) \tag{Lemma~\ref{lem:single-connect-max-sum} and the definition of $\sar^{\single}(\pi, T)$} \\
&\leq \frac{4\sar^{\single}(\pi, T)^2}{\varepsilon} \tag{$\max_{x} ax^2+bx = \frac{b^2}{4a}$ for any $a > 0$ and let $x = \max_{t \leq T} Q(t)$}.
\end{align}
Reorganizing terms and dividing both sides in \eqref{eq:drift-bound-sqr-sar} by $\frac{\varepsilon T}{2}$ gives
$\frac{\sum_{t \leq T} \expect{Q(t)}}{T} \leq \frac{4}{\varepsilon} + \frac{8}{\varepsilon^2}\frac{\expect{\sar^{\single}(\pi, T)^2}}{T}$.
\end{proof}
\subsection{Satisficing regret of $\textsc{UCB}$}
The final piece of the proof bounds the first two moments of the satisficing regret for $\textsc{UCB}$. The proof is similar to that of the instance-dependent regret bound of $\textsc{UCB}$ in the classical bandit setting. The caveat in our case is that we need to obtain a bound that is independent of the minimum gap. To do so, we use the property of the satisficing regret that there is no loss when the algorithm chooses a server whose gap in service rate is less than $\frac{\varepsilon}{2}$. Define a good event $\set{G}_t$ for period~$t$ as the event that the DM does not select a server or that the service rate gap of the chosen server is no larger than two times the confidence interval, i.e., $\set{G}_t = \{J(t) = \emptyset\} \cup \left\{\mu^\star - \mu_{J(t)} \leq 2\sqrt{\frac{2\ln t}{C_{J(t)}(t)}}\right\}$. Following classical concentration bounds, the next lemma shows that the probability of $\set{G}_t$ is high.
\begin{lemma}\label{lem:ucb-conc}
For every period $t$ and under policy $\textsc{UCB}$, we have $\Pr\{\set{G}_t\} \geq 1 - 2Kt^{-3}$.
\end{lemma}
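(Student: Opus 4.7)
The plan is a standard UCB concentration argument, with a careful treatment of the clipping in line~\ref{line:ucb-esti} and of servers that have never been sampled. I would define the \emph{clean event}
\[
\set{E}_t = \Bigl\{\,|\hat\mu_k(t)-\mu_k|\leq \sqrt{\tfrac{2\ln t}{C_k(t)}}\ \text{ for every }k\in\set{K}\text{ with }C_k(t)\geq 1\Bigr\}
\]
and show separately (i) $\Pr\{\set{E}_t\}\geq 1-2Kt^{-3}$ and (ii) $\set{E}_t\subseteq \set{G}_t$.

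For (i), I would first observe that conditional on the trajectory of which server is selected when, the service realizations $S_k(\tau)$ for server $k$ at the $i$-th time it is queried are i.i.d.\ Bernoulli$(\mu_k)$, so Hoeffding's inequality applied to the empirical mean of the first $s$ samples of server $k$ gives
\[
\Pr\Bigl\{\bigl|\hat\mu_k^{(s)}-\mu_k\bigr|>\sqrt{\tfrac{2\ln t}{s}}\Bigr\}\leq 2e^{-4\ln t}=2t^{-4}.
\]
A union bound over $k\in\set{K}$ and over $s\in\{1,\ldots,t-1\}$ (the only possible nonzero values of $C_k(t)$) then gives $\Pr\{\set{E}_t^c\}\leq 2K(t-1)t^{-4}\leq 2Kt^{-3}$.

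For (ii), assume $\set{E}_t$ holds and $J(t)\neq \perp$ (otherwise $\set{G}_t$ trivially holds). Let $k^\star$ denote the optimal server. If $C_{k^\star}(t)=0$ then $\bar\mu_{k^\star}(t)=1\geq \mu^\star$; if $C_{k^\star}(t)\geq 1$, then on $\set{E}_t$, $\hat\mu_{k^\star}(t)+\sqrt{\tfrac{2\ln t}{C_{k^\star}(t)}}\geq \mu^\star$, and since $\mu^\star\leq 1$ the clipping does not hurt, giving $\bar\mu_{k^\star}(t)\geq \mu^\star$ in either case. Because UCB picks the maximizer, $\bar\mu_{J(t)}(t)\geq \bar\mu_{k^\star}(t)\geq \mu^\star$. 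If $C_{J(t)}(t)=0$ then the bound in $\set{G}_t$ is vacuous; otherwise on $\set{E}_t$,
\[
\bar\mu_{J(t)}(t)\leq \hat\mu_{J(t)}(t)+\sqrt{\tfrac{2\ln t}{C_{J(t)}(t)}}\leq \mu_{J(t)}+2\sqrt{\tfrac{2\ln t}{C_{J(t)}(t)}},
\]
and combining these two inequalities yields $\mu^\star-\mu_{J(t)}\leq 2\sqrt{2\ln t/C_{J(t)}(t)}$, i.e.\ $\set{G}_t$ occurs.

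The only mildly subtle point is the conditioning step in (i): because the schedule $J(\tau)$ depends on past samples, I have to use the standard trick of fixing a deterministic sample count $s$ for each arm before invoking Hoeffding, then union-bounding over $s$; there is no genuine obstacle, only bookkeeping to ensure the $2Kt^{-3}$ constant comes out as stated.
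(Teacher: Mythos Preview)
Your proposal is correct and follows essentially the same approach as the paper: define the clean event that all empirical means are within one confidence radius of the truth, bound its failure probability by Hoeffding plus a union bound over servers and sample counts, and then show the clean event implies $\set{G}_t$ via the optimistic property $\bar\mu_{k^\star}(t)\geq\mu^\star$ and the maximizer property of $J(t)$. Your treatment of the clipping at $1$ and of arms with $C_k(t)=0$ is slightly more explicit than the paper's, but the argument is the same.
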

\begin{proof}
For each server $k$, by Hoeffding's Inequality and union bound over $C_k(t) = 0,\ldots,t-1$, we have $\Pr\{|\hat{\mu}_k(t) - \mu_k| > \sqrt{\frac{2\ln t}{C_k(t)}}\} \leq 2t(t)^{-4} = 2t^{-3}$. By union bound over all servers, we have with probability at least $1 - 2Kt^{-3}$, $|\hat{\mu}_k(t) - \mu_k| \leq \sqrt{\frac{2\ln t}{C_k(t)}}$ for every server $k$. Denote this event by~$\set{E}$. Under $\set{E}$, we have $ \mu_k \leq \bar{\mu}_k(t) \leq \mu_k + 2\sqrt{\frac{2\ln t}{C_k(t)}}$ since $\bar{\mu}_k(t) = \hat{\mu}_k(t) + \sqrt{\frac{2\ln t}{C_k(t)}}$. Then as long as $J(t)\neq \emptyset$, 
$\mu^\star - \mu_{J(t)} \leq \bar{\mu}_{k^\star}(t) - \bar{\mu}_{J(t)}(t) + 2\sqrt{\frac{2\ln t}{C_{J(t)}(t)}} \leq 2\sqrt{\frac{2\ln t}{C_{J(t)}(t)}}$ since the DM selects $J(t)$ that maximizes $\bar{\mu}_{k}(t)$ over $k \in \set{K}$. As a result, we have $(\set{G}_t)^c \subseteq \set{E}^c$ and $\Pr\{\set{G}_t\} \geq \Pr\{\set{E}\} \geq 1 - 2Kt^{-3}$.
\end{proof}

We now proceed to the proof of Lemma~\ref{lem:sar-ucb}.
\begin{proof}[Proof of Lemma~\ref{lem:sar-ucb}]
Fix a horizon $T$. Note that $\expect{\sum_{t=1}^T \indic{(\set{G}_t)^c}} \leq 2K\sum_{t=1}^T t^{-3} \leq 4K$ by Lemma~\ref{lem:ucb-conc} and the fact that $\sum_{t=1}^T t^{-3} \leq 2$. We first bound the expectation of satisficing regret by using the law of total probability:
\begin{align}
\expect{\sar^{\single}(\textsc{UCB}, T)} &\leq \expect{\sum_{t=1}^T (\mu^\star - \mu_{J(t)} - \frac{\varepsilon}{2})^{+}\indic{J(t) \neq \emptyset}\indic{\set{G}_t}} + \expect{\sum_{t=1}^T \indic{(\set{G}_t)^c}} \nonumber\\
&\leq \expect{\sum_{t=1}^T (\mu^\star - \mu_{J(t)} - \frac{\varepsilon}{2})^{+}\indic{J(t) \neq \emptyset}\indic{\set{G}_t}} + 4K. \label{eq:ucb-sar-condition}
\end{align}
For the first term in the right hand side of \eqref{eq:ucb-sar-condition}, its sample path value can always be upper bounded by separately considering servers whose service rate gap is at least $\frac{\varepsilon}{2}$:
\begin{align}
\sum_{t=1}^T \left(\mu^\star - \mu_{J(t)} - \frac{\varepsilon}{2}\right)^{+}\indic{J(t) \neq \emptyset}\indic{\set{G}_t} &\leq \sum_{t=1}^T \sum_{k \in \set{K}\colon \mu^\star - \mu_k \geq \frac{\varepsilon}{2}} \big(\mu^\star - \mu_k\big)\indic{J(t) = k}\indic{\set{G}_t} \nonumber\\
&\hspace{-1in}= \sum_{t=1}^T \sum_{k \in \set{K}\colon \mu^\star - \mu_k \geq \frac{\varepsilon}{2}} \big(\mu^\star - \mu_k\big)\indic{J(t) = k}\indic{\mu^\star - \mu_k \leq 2\sqrt{\frac{2\ln t}{C_k(t)}}} \nonumber\\
&\hspace{-1in}= \sum_{k \in \set{K}\colon \mu^\star - \mu_k \geq \frac{\varepsilon}{2}} \sum_{t = 1}^T \big(\mu^\star - \mu_k\big)\indic{J(t) = k}\indic{\mu^\star - \mu_k \leq 2\sqrt{\frac{2\ln t}{C_k(t)}}} \nonumber \\
&\hspace{-1in}\overset{(a)}{\leq} \sum_{k \in \set{K}\colon \mu^\star - \mu_k \geq \frac{\varepsilon}{2}} \left(\frac{8\big(\mu^\star - \mu_k\big)\ln T}{\big(\mu^\star - \mu_k\big)^2} + \mu^\star - \mu_k\right) \nonumber\\
&\hspace{-1in}\overset{(b)}{\leq} \frac{16K\ln T}{\varepsilon}+K \leq \frac{16K(\ln T+1)}{\varepsilon}. \label{eq:sar-ucb-sample-upp}
\end{align}
\noindent Here inequality (a) follows by considering the last time $t' \leq T$ such that $J(t') = k$ for a server~$k$: since $J(t)\neq k$ for $t\in(t',T]$ we know that $C_k(t')=C_k(T)-1$, and consequently $\mu^\star - \mu_k \leq 2\sqrt{\frac{2\ln t'}{C_k(t')}} \leq 2\sqrt{\frac{2\ln T}{\sum_{t=1}^T \indic{J(t)=k} - 1}}$  for this period and thus $\sum_{t=1}^T \indic{J(t)=k} \leq \frac{8\ln T}{(\mu^\star - \mu_k)^2} + 1$. Inequality (b) is because the summation only involves servers whose service rates satisfy $\mu^\star - \mu_k \geq \frac{\varepsilon}{2}$. With \eqref{eq:ucb-sar-condition} and \eqref{eq:sar-ucb-sample-upp}, we conclude that
$\expect{\sar^{\single}(\textsc{UCB}, T)} \leq \frac{16K(\ln T+1)}{\varepsilon} + 4K \leq \frac{16K(\ln T+2)}{\varepsilon}$.

For the second moment, similarly, we have
\begin{align}
\expect{\sar^{\single}(\textsc{UCB}, T)^2} &\leq \expect{\left(\sum_{t=1}^T \left(\left(\mu^\star - \mu_{J(t)} - \frac{\varepsilon}{2}\right)^{+}\indic{J(t) \neq \emptyset}\indic{\set{G}_t} + \indic{(\set{G}_t)^c}\right)\right)^2} \nonumber\\
&\leq 2\expect{\left(\sum_{t=1}^T \left(\mu^\star - \mu_{J(t)} - \frac{\varepsilon}{2}\right)^{+}\indic{J(t) \neq \emptyset}\indic{\set{G}_t}\right)^2} + 2\expect{\left(\sum_{t=1}^T \indic{(\set{G}_t)^c}\right)^2} \nonumber\\
&\overset{\eqref{eq:sar-ucb-sample-upp}}{\leq} 2\left(\frac{16K(\ln T+1)}{\varepsilon}\right)^2 + 2\expect{2\sum_{1 \leq t_1 \leq t_2 \leq T}\indic{(\set{G}_{t_1})^c}\indic{(\set{G}_{t_2})^c}} \nonumber\\
&\leq 2\left(\frac{16K(\ln T+1)}{\varepsilon}\right)^2 + 2\expect{2\sum_{t=1}^T t \indic{(\set{G}_t)^c}} \nonumber\\
&\overset{\text{Lemma~\ref{lem:ucb-conc}}}{\leq} \frac{2^9K^2(\ln T+1)^2}{\varepsilon^2}+8K\sum_{t=1}^T t^{-2} \nonumber\\
&\leq \frac{2^9K^2(\ln T+1)^2}{\varepsilon^2}+16K \leq \frac{2^9K^2(\ln T+2)^2}{\varepsilon^2}, \label{eq:sar-ucb-sample-uppsqr}
\end{align}
which completes the proof.
\end{proof}
\section{Optimal transient cost of learning for multi-queue systems}\label{sec:optimal-multi}
% !TEX root = main.tex
This section extends the study of transient cost of learning in queueing to multi-queue multi-server systems. Section~\ref{sec:multi-model} describes the model of such systems compared to single-queue multi-server systems. Section~\ref{sec:multi-res} presents the \textsc{MaxWeight-UCB} ($\textsc{MW-UCB}$) algorithm which we show to have low transient cost of learning for a multi-queue multi-server system.

\subsection{A model of multi-queue multi-server systems}\label{sec:multi-model}
A multi-queue multi-server system extends the single-queue multi-server system in Section~\ref{sec:model-single} by having multiple queues whose servers are coupled by a set of feasible schedules. Specifically, a multi-queue multi-server system is defined by a tuple $(\set{N},\set{K},\bolds{\Lambda},\bolds{\mu}, \bolds{\set{A}}, \bolds{\Sigma}, \bolds{\set{B}})\
$, where $\bolds{\set{B}}= \{\set{B}_n\}_{n \in \set{N}}$. There is a set of queues $\set{N}$ with cardinality $N$. Each queue $n\in\set{N}$ has a set of servers $\set{B}_n$.  A server $k\in \set{B}_n$ belongs to a single queue $n$, and has service rate $\mu_k$. 

In each period $t$, the DM selects a set of servers to serve jobs. Though we model a system with each server being dedicated to a specific queue, the set of feasible schedules allows us to encode fairly general interference between servers. In particular, the selected set of servers must be from the model-specific set of feasible schedules $\bolds{\Sigma} \subseteq \{0,1\}^{\set{K}}$ (see Remark \ref{rem:model_generality}). We require that for any queue, the number of selected servers is no larger than the number of jobs in this queue.\footnote{Though this reflects the feature from the single-queue setting, that $Q(t)=0\implies J(t)=\perp$, it maintains the flexibility to have a queue that has multiple jobs served in a single period. This requirement is also adopted in \cite{tassiulas1992stability}; see Appendix~\ref{app:examples-network}.} 
Formally, letting $\sigma_k = 1$ if schedule $\bolds{\sigma} \in \bolds{\Sigma}$ selects server $k$ and denoting $\bolds{Q}(t) = (Q_n(t))_{n \in \set{N}}$ as the queue length vector at the beginning of period $t$, the set of feasible schedules in this period is 
\begin{equation}\label{eq:feasible-schedule-set}
\bolds{\Sigma}_t = \{\bolds{\sigma} \in \bolds{\Sigma} \colon \sum_{k \in \set{B}_n} \sigma_k \leq Q_n(t), \forall n \in \set{N}\}
\end{equation}
and the DM selects  a schedule $\bolds{\sigma}(t) \in \bolds{\Sigma}_t$. Following \cite{tassiulas1992stability}, we assume that any subset of a feasible schedule is still feasible, i.e., if $\bolds{\sigma}\in\bolds{\Sigma}$ and $\sigma'_k\leq \sigma_k~\forall k\in\set{K}$, then $\bolds{\sigma}'\in\bolds{\Sigma}$.

We now formalize the arrival and service dynamics in every period, which are captured by the independent random variables $\{\bolds{A}(t),\{S_k(t)\}_{k \in \set{K}}\}_t$. The arrival vector $\bolds{A}(t)=\{A_n(t)\}_{n \in \set{N}}$ consists of (possibly correlated) random variables $A_n(t)$ taking value in $\bolds{\set{A}} \subseteq \{0,1\}^{\set{N}}$; we denote its distribution by $\bolds{\Lambda}$ and let $\expect{A_n(t)} = \lambda_n(\bolds{\Lambda})$ with $\bolds{\lambda} = (\lambda_n(\bolds{\Lambda}))_{n \in \set{N}}$.\footnote{This formulation captures settings where arrivals are independent of the history, such as the example of each queue having independent arrivals (e.g., the bipartite queueing model in \cite{FreundLW22}) and the example of feature-based queues \cite{singh2022feature} where jobs have features; each type of feature has one queue; at most one job arrives among all queues in each period. Our formulation cannot capture state-dependent arrivals such as queues with balking \cite{hassin2003queue}. }  The service $S_k(t)$ for each server $k\in\set{K}$ is a Bernoulli random variable indicating whether the selected service request was successful.  The queueing dynamic is given by
\begin{equation}\label{eq:dynamic-multi}
Q_n(t+1) = Q_n(t) - \sum_{k \in \set{B}_n} \sigma_k(t)S_k(t) + A_n(t). 
\end{equation}
We assume that the DM has knowledge of which policies are allowed, i.e., they know $\bolds{\Sigma}$ and $\bolds{\set{B}}$, but has no prior knowledge of the rates $\bolds{\lambda},\bolds{\mu}$. In period $t$, the observed history is the set $\left(\{A_n(\tau)\}_{n \in \set{N}}, \{S_k(\tau)\}_{n \in \set{N},k \in \set{K} \colon \sigma_k(\tau) = 1}\right)_{\tau < t}$ that includes information on realized arrivals and services in previous periods. Similar to before, a non-anticipatory policy $\pi$ maps an observed history to a feasible schedule; we let $Q_n(t,\pi)$ be the length of queue $n$ in period $t$ under this policy.  

Unlike the single-queue case, it is usually hard to find the optimal policy for a multi-queue multi-server system even with known system parameters. Fortunately, if the system is stabilizable, i.e., 
$\lim_{T \to \infty} \frac{1}{T}\sum_{t \leq T} \expect{\|\bolds{Q}(t)\|_1} < \infty$ under some scheduling policy,
then the arrival rate vector must be within the \emph{capacity region} of the servers \cite{tassiulas1992stability}. Formally, let $\bolds{\Phi} = \{\bolds{\phi} \in [0,1]^{\bolds{\Sigma}} \colon \sum_{\bolds{\sigma} \in \bolds{\Sigma}} \bolds{\phi}_{\bolds{\sigma}} = 1\}$ be the probability simplex over $\bolds{\Sigma}$. A distribution $\bolds{\phi}$ in $\bolds{\Phi}$ can be viewed as the frequency of a policy using each schedule $\bolds{\sigma}\in\bolds{\Sigma}$, and the effective service rate queue $n$ can get is given by $\mu_n^{\multi}(\bolds{\phi}) = \sum_{\bolds{\sigma} \in \bolds{\Sigma}} \bolds{\phi}_{\bolds{\sigma}}\sum_{k \in \set{B}_n} \sigma_k\mu_k$. Denoting the effective service rate vector for a schedule distribution $\bolds{\phi}$ by $\bolds{\mu}^{\multi}(\bolds{\phi})$, the capacity region is $\set{S}(\bolds{\mu},\bolds{\Sigma},\bolds{\set{B}}) = \{\bolds{\mu}^{\multi}(\bolds{\phi}) \colon \bolds{\phi} \in \bolds{\Phi}\}$. For a multi-queue multi-server system to be stabilizable, we must have $\bolds{\lambda}(\bolds{\Lambda}) \in \set{S}(\bolds{\mu},\bolds{\Sigma},\bolds{\set{B}})$ \cite{tassiulas1992stability}. As in the single-queue case, we also assume that the system has a positive traffic slackness and let $\bm{1}$ denote a vector of $1$s with suitable dimension. To define the traffic slackness, we introduce the set $\mathscr{E} = \{\varepsilon \in (0,1] : \bolds{\lambda}(\bolds{\Lambda}) + \varepsilon \bm{1} \in \set{S}(\bolds{\mu},\bolds{\Sigma},\bolds{\set{B}})\}$ to capture by how much the arrival rates may increase.
\begin{definition}\label{def:multi-slackness}
The traffic slackness of a multi-queue multi-server system is $\varepsilon = \max \{\varepsilon' \in \mathscr{E}\}.$
\end{definition}

\begin{remark}\label{rem:model_generality}
   The set of feasible schedules $\bolds{\Sigma}$ is a classical modeling element in the queueing literature \cite{tassiulas1992stability}. By setting a suitable $\Sigma$, our model captures existing settings in online learning for multi-queue multi-server systems \cite{FreundLW22, YangSrikantYing}. In particular, in \cite{YangSrikantYing}, there are $N$ queues and $M$ workers; in each period each worker takes on at most one job from any of the queues. In contrast, we model each server as dedicated to a queue. Our model can capture their setting by creating $N$ replicas of each worker (one for each queue) and letting the set of feasible schedules $\bolds{\Sigma}$ disallow two replicas of the same worker providing service at the same time; we thus capture the constraints in \cite{YangSrikantYing}. Beyond these settings, the set $\bolds{\Sigma}$ allows our model to capture more general multi-queue multi-server queueing systems as discussed in Appendix~\ref{app:multi-example}.
\end{remark}
We extend the definition of $\colq$ in \eqref{def:tclq-single} to the multi-queue multi-server setting. Since the optimal policy is difficult to design, we instead define $\colq$ for a policy $\pi$ by comparing it with any non-anticipatory policy (which makes decisions only based on the history):
\begin{equation}\label{eq:def-CLQ-multi}
\colq^{\multi}(\bolds{\Lambda},\bolds{\mu}, \bolds{\Sigma}, \bolds{\set{B}}, \pi) = \max_{\text{non-anticipatory }\pi'}\max_{T \geq 1} \frac{\sum_{t=1}^T\sum_{n \in \set{N}}\expect{Q_n(t,\pi) - Q_n(t,\pi')}}{T}.
\end{equation}
As in the single-queue setting, we can define the worst-case transient cost of learning for a fixed structure $\bolds{\set{A}},\bolds{\Sigma}, \bolds{\set{B}}$ and a traffic slackness $\varepsilon$ as the supremum across any arrival and service rates with this traffic slackness. With the same slight abuse of notation as before, we denote these quantities by $\colq^{\multi}(\bolds{\set{A}}, \bolds{\Sigma}, \bolds{\set{B}},\varepsilon, \pi)$. The goal is to find a non-anticipatory algorithm that has low $\colq^{\multi}$.

\subsection{Algorithm and main results}\label{sec:multi-res}

Our starting point is a well-known heuristic called the  \textsc{MaxWeight} policy (\textsc{MW} in short) \cite{tassiulas1992stability}. In each period $t$, \textsc{MW} selects a feasible schedule $\bolds{\sigma}^{\MW}(t) \in \bolds{\Sigma}_t$ that maximizes the sum of the products of queue lengths and aggregated service rates, i.e.,  
\begin{equation}\label{eq:maxweight-schedule}
\bolds{\sigma}^{\MW}(t) \in \arg\max_{\bolds{\sigma} \in \bolds{\Sigma}_t} \sum_{n \in \set{N}} Q_n(t)\sum_{k \in \set{B}_n} \sigma_k \mu_k.
\end{equation}
\textsc{MW} is known to have a queue length of order $O(\frac{K}{\varepsilon})$ for a multi-queue multi-server system with $K$ servers and a traffic slackness of $\varepsilon$ (see e.g., \cite[lemma 4.1]{georgiadis2006resource}). However, \textsc{MW} is only implementable with full knowledge of service rates. Our policy, $\textsc{MW-UCB}$ (Algorithm~\ref{algo:mw-ucb}), applies it to the learning setting by augmenting it with upper confidence bound estimations. Instead of using the ground-truth service rate $\bolds{\mu}$, $\textsc{MW-UCB}$ uses the upper confidence bound estimator $\bar{\bolds{\mu}}$ to select a schedule in each period, i.e.,
\begin{equation}\label{eq:mx-ucb-rule}
\bolds{\sigma}(t) \in \arg\max_{\bolds{\sigma} \in \bolds{\Sigma}_t} \sum_{n \in \set{N}} Q_n(t)\sum_{k \in \set{B}_n} \sigma_k \bar{\mu}_k(t).
\end{equation}
As in \textsc{UCB} for single-queue systems, we set $\bar{\mu}_k(t) = \min\left(1,\hat{\mu}_k(t)+\sqrt{\frac{2\ln(t)}{C_k(t)}}\right)$ where $\hat{\mu}_k(t)$ is the sample mean and $C_k(t)$ is the number of periods in which server $k$ is selected. The algorithm then selects $\bolds{\sigma}(t)$ just like \textsc{MW} but replacing $\mu_k$ by $\bar{\mu}_k(t)$. In the single queue case, \textsc{MW-UCB} selects the server with the highest estimator and is equivalent to \textsc{UCB}. 

\begin{remark}\label{remark:app-mw}
Although Algorithm~\ref{algo:mw-ucb} assumes an oracle that solves the maximization problem \eqref{eq:mx-ucb-rule} perfectly, in Appendix~\ref{app:ext-app} we show how our result extends to an oracle that is only approximately correct. The proof follows the same steps as in this section with a suitably adjusted definition of traffic slackness, which highlights the modularity and generality of our proof techniques.
\end{remark}

% !TEX root = algo-ucb.tex
\begin{algorithm}[H]
\LinesNumbered
\DontPrintSemicolon
\caption{\textsc{MW-UCB} policy for a multi-queue multi-server system 
\label{algo:mw-ucb}
}
\SetKwInOut{Input}{input}\SetKwInOut{Output}{output}
\Input{set of arrivals $\bolds{\set{A}}$, set of schedules $\bolds{\Sigma}$, set of servers belonging to queues $\{\set{B}_n\}_{n \in \set{N}}$}
Sample mean $\hat{\mu}_{k}(1) \gets 0$, number of samples $C_k(1) \gets 0$ for $k \in \set{K}$\;
\For{$t = 1\ldots$}{
    $\bar{\mu}_{k}(t) = 
    \min\left(1, \hat{\mu}_{k}(t) + \sqrt{\frac{2\ln(t)}{C_k(t)}}\right), \forall k \in \set{K}$\label{line:mw-ucb-esti}\;
    \tcc{select a max-weight feasible schedule based on UCB estimation}    $\bolds{\Sigma}_t = \{\bolds{\sigma} \in \bolds{\Sigma} \colon \sum_{k \in \set{B}_n} \sigma_k \leq Q_n(t), \forall n \in \set{N}\}$ \\
    $\bolds{\sigma}(t) \in \arg\max_{\bolds{\sigma} \in \bolds{\Sigma}_t} \sum_{n\in \set{N}}Q_n(t)\sum_{k\in \set{B}_n}\sigma_k \bar{\mu}_k(t)$ \label{algoline:mw-ucb-rule}\\
   \tcc{Update queue lengths \& estimates based on $\{A_n(t)\}_{n \in \set{N}}, \{\sigma_k(t)S_k(t)\}_{k \in \set{K}}$}
   $Q_n(t+1) \gets Q_n(t) - \sum_{k \in \set{B}_n} \sigma_k(t)S_k(t)+A_n(t), \forall n\in \set{N}$ \\
    $C_{k}(t+1) \gets C_{k}(t)+1, \hat{\mu}_{k}(t+1) \gets \frac{C_{k}(t)\hat{\mu}_{k}(t)+S_{k}(t)}{C_{k}(t+1)}, \forall k \in \set{K}: \sigma_k(t)=1$
}
\end{algorithm}

Recall that the system structure is given by  $\bolds{\set{A}},\bolds{\Sigma},\bolds{\set{B}}$ corresponding to the set of possible arrivals, feasible schedules and servers belonging to each queue.  Let $\Marr = \max_{\bolds{A} \in \bolds{\set{A}}} \sum_{n \in \set{N}} A_n$ denote the maximum possible number of arrivals per period and $\Mmulti = \max_{\bolds{\sigma} \in \bolds{\Sigma}} \sum_{n \in \set{N}}\sum_{k \in \set{B}_n} \sigma_k$ denote the maximum number of jobs served per period in a feasible schedule. We establish a $\colq$ bound for \textsc{MW-UCB} with a near optimal dependence $\tilde{\bigO}(1/\varepsilon)$ on the traffic slackness $\varepsilon$.
\begin{theorem}\label{thm:mw-ucb}
For any $\bolds{\set{A}},\bolds{\Sigma},\bolds{\set{B}}$ and traffic slackness $\varepsilon \in (0,1]$, we have \[\colq^{\multi}(\bolds{\set{A}},\bolds{\Sigma},\bolds{\set{B}},\varepsilon,\textsc{MW-UCB}) \leq \frac{\sqrt{N}\left(16\Marr+2^{10}K\Mmulti^2(1+\ln(\Marr K\Mmulti/\varepsilon))\right)}{\varepsilon}.\]
\end{theorem}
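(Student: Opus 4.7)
The plan is to extend the learning/regenerate framework of Section~\ref{sec:optimal-single} to the multi-queue setting, using the Lyapunov function $V(t)=\|\bolds{Q}(t)\|_2^2$ and a notion of regret comparing \textsc{MW-UCB} against the true-rate \textsc{MaxWeight} schedule $\bolds{\sigma}^*(t)=\arg\max_{\bolds{\sigma}\in\bolds{\Sigma}_t}\sum_n Q_n(t)\sum_{k\in\set{B}_n}\sigma_k\mu_k$. Expanding $V$ using \eqref{eq:dynamic-network} (with trivial transitions $\bm{P}$), bounding $\|\bolds{Q}(t+1)-\bolds{Q}(t)\|_2^2\leq(\Marr+\Mmulti)^2$, and applying the \textsc{MaxWeight} inequality against a distribution $\bolds{\phi}^*\in\bolds{\Phi}$ witnessing the traffic slackness ($\bolds{\mu}^{\net}(\bolds{\phi}^*)=\bolds{\lambda}+\varepsilon\bm{1}$), so that $\sum_n Q_n(t)\sum_{k\in\set{B}_n}\sigma^*_k(t)\mu_k\geq\sum_n Q_n(t)(\lambda_n+\varepsilon)$, yields
\[
\expect{V(t+1)-V(t)}\leq(\Marr+\Mmulti)^2-2\varepsilon\expect{\|\bolds{Q}(t)\|_1}+2\expect{\mathrm{Reg}(t)},
\]
with $\mathrm{Reg}(t)=\sum_n Q_n(t)\sum_{k\in\set{B}_n}(\sigma^*_k(t)-\sigma_k(t))\mu_k$.

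Next, I would bound $\mathrm{Reg}(t)$ under the good event $\set{G}_t=\{\mu_k\leq\bar{\mu}_k(t)\leq\mu_k+2\sqrt{2\ln t/C_k(t)}\,\forall k\}$, which holds with probability $\geq 1-2Kt^{-3}$ by a union-bound generalization of Lemma~\ref{lem:ucb-conc}. Using that $\bolds{\sigma}(t)$ is $\bar{\bolds{\mu}}$-optimal over $\bolds{\Sigma}_t\ni\bolds{\sigma}^*(t)$ together with the sandwich on $\bar{\mu}_k-\mu_k$,
\[
\mathrm{Reg}(t)\indic{\set{G}_t}\leq 2\|\bolds{Q}(t)\|_\infty R(t),\qquad R(t)=\sum_{k\in\set{K}}\sigma_k(t)\sqrt{2\ln t/C_k(t)}.
\]
Telescoping the drift, splitting $-2\varepsilon\|\bolds{Q}\|_1$ as $-\varepsilon\|\bolds{Q}\|_1-\varepsilon\|\bolds{Q}\|_1$ in the style of Lemma~\ref{lem:single-queue-regen}, and invoking the multi-queue analog of Lemma~\ref{lem:single-connect-max-sum} (since $\|\bolds{Q}(t+1)-\bolds{Q}(t)\|_\infty\leq\Marr+\Mmulti$ gives $(\max_t\|\bolds{Q}(t)\|_\infty)^2\leq 2(\Marr+\Mmulti)\sum_t\|\bolds{Q}(t)\|_\infty$), a sample-path AM-GM completion of the square produces a regenerate-stage bound of the form
\[
\tfrac{1}{T}\sum_{t\leq T}\expect{\|\bolds{Q}(t)\|_1}\lesssim\tfrac{\sqrt{N}(\Marr+\Mmulti)^2}{\varepsilon}+\tfrac{\sqrt{N}(\Marr+\Mmulti)\expect{\bar{R}^2}}{\varepsilon^2 T},
\]
where $\bar{R}=\sum_{t\leq T}R(t)\indic{\set{G}_t}$; the $\sqrt{N}$ comes from $\|\bolds{Q}\|_1\leq\sqrt{N}\|\bolds{Q}\|_2$ when translating the Lyapunov $\ell_2$ control to the $\ell_1$ quantity appearing in $\colq^{\multi}$.

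Finally, classical UCB accounting bounds $\expect{\bar{R}^2}=O(K\Mmulti T\ln T)$: for each server $k$, $\sum_{t:\sigma_k(t)=1}1/\sqrt{C_k(t)}=O(\sqrt{T_k(T)})$ where $T_k(T)=\sum_t\sigma_k(t)$, and Cauchy-Schwarz across the $K$ servers with $\sum_k T_k(T)\leq\Mmulti T$. Combining this with a learning-stage bound (a multi-queue coupling analog of Lemma~\ref{lem:single-queue-learn}) and splitting the horizon at a polynomial threshold $T_1$ in $N,K,\Mmulti,\Marr,1/\varepsilon$ (mirroring the proof of Theorem~\ref{thm:colq-ucb-single}) yields the claimed $\tilde{O}(\sqrt{N}(\Marr+K\Mmulti^2)/\varepsilon)$ bound. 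The main obstacle is the decoupling step: whereas in the single-queue case $\|Q\|_1=\|Q\|_\infty$ makes the sample-path argument transparent, the multi-queue proof must juggle three norms ($\ell_1$ for $\colq^{\multi}$, $\ell_2$ for the Lyapunov drift, $\ell_\infty$ for the max-sum inequality) and carefully track $\Marr,\Mmulti,N$ in the AM-GM split to avoid losing additional factors of $1/\varepsilon$.
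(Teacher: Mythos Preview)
Your regenerate-stage argument has a genuine gap: the quantity $\bar R=\sum_{t\le T}R(t)$ with $R(t)=\sum_k\sigma_k(t)\sqrt{2\ln t/C_k(t)}$ satisfies only $\bar R=O(\sqrt{K\Mmulti T\ln T})$, so $\expect{\bar R^2}=\Theta(K\Mmulti T\ln T)$. Plugging this into your bound gives a second term of order $\sqrt{N}(\Marr+\Mmulti)K\Mmulti\ln T/\varepsilon^2$, which neither vanishes as $T\to\infty$ nor has the right $1/\varepsilon$ scaling. This is precisely the obstruction flagged after Lemma~\ref{lem:single-queue-regen}: an instance-independent $\sqrt{T}$-type regret squares to something linear in $T$, so dividing by $T$ does not kill the learning cost.

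The missing ingredient is the \emph{satisficing} threshold. The paper defines $\Delta(t)=(W_{\bolds{\sigma}^{\MW}(t)}(t)-W_{\bolds{\sigma}(t)}(t))/\|\bolds{Q}(t)\|_\infty$ and $\sar^{\multi}(\pi,T)=\sum_t(\Delta(t)-\varepsilon/2)^+$. Half of the $-2\varepsilon\|\bolds{Q}\|_1$ drift is used to absorb the sub-$\varepsilon/2$ part of $\Delta(t)$ via $\Delta(t)\|\bolds{Q}\|_\infty\le(\varepsilon/2)\|\bolds{Q}\|_1+(\Delta(t)-\varepsilon/2)^+\|\bolds{Q}\|_1$; only the remaining satisficing regret enters the completion-of-squares step. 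Crucially, on the good event one has the \emph{sample-path} bound $\sum_t(\Delta(t)-\varepsilon/2)^+\indic{\set{G}_t}=O(K\Mmulti^2\ln T/\varepsilon)$ (Lemma~\ref{lem:multi-sar-path-bound}): whenever $\Delta(t)>\varepsilon/2$, some selected server has $C_k(t)\lesssim\Mmulti^2\ln t/\varepsilon^2$, and the combinatorial-bandit counting of \cite{KvetonWAEE14,kveton2015tight} caps the total contribution. This yields $\expect{\sar^{\multi}(\textsc{MW-UCB},T)^2}=O(K^2\Mmulti^4(\ln T)^2/\varepsilon^2)$, so the regenerate term $\expect{\sar^2}/(\varepsilon^2 T)$ genuinely decays, and the two-stage split at a polynomial $T_1$ goes through.

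A smaller issue: the learning-stage bound here is not obtained by coupling as in Lemma~\ref{lem:single-queue-learn}; with multiple queues there is no obvious nearly-optimal process to couple against. The paper instead works with the (non-squared) Lyapunov function $\varphi(t)=\|\bolds{Q}(t)\|_2$, shows a drift bound $\expect{\varphi(t+1)-\varphi(t)\mid\set{F}_t}\le-\varepsilon/4+(\Delta(t)-\varepsilon/2)^+$ above a threshold (Lemma~\ref{lem:multi-drift-norm2}), and then applies a general lemma (Lemma~\ref{lem:bound-from-drift}, extending \cite{wei2023constant}) to bound $\expect{\varphi(t)}$ directly by the expected satisficing regret. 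This is where the $\sqrt{N}$ factor enters, via $\|\cdot\|_1\le\sqrt{N}\|\cdot\|_2$.
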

To show Theorem~\ref{thm:mw-ucb}, we fix the arrival rate vector $\bolds{\Lambda}$ and service rate vector $\bolds{\mu}$ and ease notation by sometimes writing $\bolds{\lambda} = \bolds{\lambda}(\bolds{\Lambda})$. We assume the traffic slackness condition (Definition \ref{def:multi-slackness}) holds true, i.e.,  $\bolds{\lambda} + \varepsilon\bm{1} \in \set{S}(\bolds{\mu},\bolds{\Sigma},\bolds{\set{B}})$, and prove the bound in Theorem~\ref{thm:mw-ucb} for $\colq^{\multi}(\bolds{\Lambda},\bolds{\mu},\bolds{\Sigma},\bolds{\set{B}},\textsc{MW-UCB})$. The proof follows the same strategy as the one of Theorem~\ref{thm:colq-ucb-single}. We again fix a policy $\pi$ and aim to establish
a connection between $\colq^{\multi}(\bolds{\Lambda},\bolds{\mu},\bolds{\Sigma},\bolds{\set{B}},\pi)$ and an appropriate notion of satisficing regret in both the learning stage and the regenerate stage. We first define the satisficing regret for the multi-queue multi-server system. To do so, denote the weight of a schedule $\bolds{\sigma}$ in period $t$ by 
\begin{equation}\label{eq:weight-schedule}
W_{\bolds{\sigma}}(t) = \sum_{n \in \set{N}} Q_n(t)\sum_{k \in \set{B}_n} \sigma_k \mu_k.
\end{equation}
Recall that \textsc{MW} chooses a feasible weight-maximizing schedule, i.e., $\bolds{\sigma}^{\MW}(t) \in \arg \max_{\bolds{\sigma} \in \bolds{\Sigma}_t} W_{\bolds{\sigma}}(t)$. We denote by $\Delta(t)$ the loss of schedule $\bolds{\sigma}(t)$ in period $t$, which is defined as the weight difference between the chosen schedule and the \textsc{MW} schedule, normalized by the maximum queue length: 
\begin{equation}\label{eq:def-multi-loss}
\Delta(t) = \left\{
\begin{aligned}
&0,~\text{if }\|\bolds{Q}(t)\|_{\infty} = 0\\
&\frac{W_{\bolds{\sigma}^{\MW}(t)}(t) - W_{\bolds{\sigma}(t)}(t)}{\|\bolds{Q}(t)\|_{\infty}},~\text{otherwise.}
\end{aligned}
\right.
\end{equation}
We define the loss $\Delta(t)$ in a way that enables it to be a lower bound on the estimation error in period $t$. If all queues had equal lengths, then the right-hand side would be equal to the estimation error $\sum_k \mu_k(\sigma^{\MW}_k(t)-\sigma_k(t))$. To ensure that $\Delta(t)$ is below this quantity,  we divide by $\|\bolds{Q}(t)\|_{\infty}$ in \eqref{eq:def-multi-loss} to account for larger queues being weighted more heavily in the weight of a schedule (note that we could replace the infinity norm in this definition by any $p$-norm with $p \geq 2$ without affecting the analysis). We define the satisficing regret that a policy $\pi$ incurs over a horizon $T$ as
\begin{equation}
\sar^{\multi}(\pi, T) = \sum_{t=1}^T \left(\Delta(t) - \frac{\varepsilon}{2}\right)^+.
\end{equation}
Note that our definition of satisficing regret naturally extends that in the single-queue case: if there is only one queue, $\sar^{\multi}(\pi, T) = \sum_{t=1}^T (\mu^\star-\sum_{k \in \set{K}} \sigma_k(t)\mu_{k}-\frac{\varepsilon}{2})^+\indic{Q(t) \geq 1} = \sar^{\single}(\pi, T)$. 

We first bound the time-averaged $2-$norm of queue lengths via the expected satisficing regret.
\begin{lemma}\label{lem:multi-learning}
For any policy $\pi$ and horizon $T$, $\frac{\sum_{t=1}^T \expect{\|\bolds{Q}(t)\|_2}}{T} \leq \frac{16\Marr+20\Mmulti^2}{\varepsilon} + \expect{\sar^{\multi}(\pi, T)}$.
\end{lemma}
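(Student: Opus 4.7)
My plan is to mirror the coupling argument in the proof of Lemma~\ref{lem:single-queue-learn}, working with the scalar-valued potential $\|\bolds{Q}(t)\|_2$ that generalizes $Q(t)$ to the multi-queue setting. By the traffic slackness condition (Definition~\ref{def:network-slackness}), I fix a stationary randomized schedule distribution $\bolds{\phi}^*\in\bolds{\Phi}$ with $\bolds{\mu}^{\net}(\bolds{\phi}^*)\geq\bolds{\lambda}+\varepsilon\bm{1}$. The auxiliary process $\{\tilde{\bolds{Q}}(t)\}$ samples $\tilde{\bolds{\sigma}}(t)\sim\bolds{\phi}^*$ independently of the queue state in each period, shares arrivals with $\{\bolds{Q}(t)\}$, and couples per-server service realizations $\{S_k(t)\}$ through common uniform random variables exactly as in the proof of Lemma~\ref{lem:single-queue-learn}. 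Because $\bolds{\phi}^*$ is state-independent, each coordinate $\tilde{Q}_n$ behaves like a single queue with arrival rate $\lambda_n$ and per-period service of mean at least $\lambda_n+\varepsilon$; a standard Foster--Lyapunov drift on $\tilde{Q}_n(t)^2$ (with per-queue noise bounded via $\Marr$ and $\Mmulti^2$) yields $\expect{\tilde{Q}_n(t)^2}=O((\Marr+\Mmulti^2)^2/\varepsilon^2)$, and Jensen together with summation over $n$ gives the target $(16\Marr+20\Mmulti^2)/\varepsilon$ bound on $\expect{\|\tilde{\bolds{Q}}(t)\|_2}$.

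The core step is to establish $\expect{\|\bolds{Q}(t)\|_2-\|\tilde{\bolds{Q}}(t)\|_2}\leq\expect{\sar^{\multi}(\pi,T)}$, the multi-queue analog of the single-queue bound $\expect{Q(t)-\tilde{Q}(t)}\leq\expect{\sar^{\single}(\pi,t)}$. Following the single-queue template, for each queue $n$ the pathwise inequality $Q_n(t+1)-\tilde{Q}_n(t+1)\leq Q_n(t)-\tilde{Q}_n(t)+\sum_{k\in\set{B}_n}(\tilde{\sigma}_k(t)-\sigma_k(t))S_k(t)$ telescopes back to the most recent period where $Q_n$ was empty. Aggregating these coordinate-wise bounds through the triangle inequality and translating the unweighted cumulative service differences into the weighted gap $\Delta(\tau)\|\bolds{Q}(\tau)\|_\infty$---using MaxWeight's optimality $\bolds{Q}(\tau)\cdot\bolds{\mu}^{\net}(\bolds{\phi}^*)\leq W_{\bolds{\sigma}^{\MW}(\tau)}(\tau)=W_{\bolds{\sigma}(\tau)}(\tau)+\Delta(\tau)\|\bolds{Q}(\tau)\|_\infty$ together with the coupled Bernoullis---produces a sum whose positive contributions occur only on periods with $\Delta(\tau)>\varepsilon/2$. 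The $\varepsilon/2$ slack from $\bolds{\phi}^*$ absorbs all near-optimal periods, and the remaining contributions sum to at most $\sar^{\multi}(\pi,T)$.

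The main obstacle I foresee is bridging the unweighted coordinate-wise coupling bound and the weighted definition of $\Delta(\tau)$ via $\|\bolds{Q}(\tau)\|_\infty$. The risk is that a naive aggregation introduces an extra factor of $\max_t\|\bolds{Q}(t)\|$, which would only yield the weaker $\sar^{\multi}(\pi,T)^2/T$-type bound appropriate for the regenerate stage (cf.\ Lemma~\ref{lem:single-queue-regen}) rather than the additive $\sar^{\multi}(\pi,T)$ term required here. Handling this cleanly calls for a careful sample-path accounting that charges each period's contribution directly against $(\Delta(\tau)-\varepsilon/2)^+$ rather than against a time-varying queue norm, exploiting the sub-feasibility property $\bolds{\sigma}'\leq\bolds{\sigma}\in\bolds{\Sigma}\Rightarrow\bolds{\sigma}'\in\bolds{\Sigma}$ to re-route services along the coupling whenever $\pi$'s schedule is not near-optimal.
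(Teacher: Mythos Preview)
Your proposal has a genuine gap at exactly the point you flag as the ``main obstacle,'' and the resolution you sketch does not close it. In the single-queue case the coupling works because the pathwise service difference $\tilde{S}(\tau)-S_{J(\tau)}(\tau)$ has expectation $(\mu^\star-\tfrac{\varepsilon}{2}-\mu_{J(\tau)})^+$, which \emph{is} the per-period satisficing loss. In the multi-queue case your coordinate-wise coupling produces unweighted per-queue service gaps $\sum_{k\in\set{B}_n}(\tilde{\sigma}_k(\tau)-\sigma_k(\tau))\mu_k$, whereas $\Delta(\tau)$ is defined through queue-length-weighted service and then normalized by $\|\bolds{Q}(\tau)\|_\infty$. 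These two quantities can diverge in either direction: a policy may concentrate all its suboptimality on a short queue (small $\Delta(\tau)$ but large unweighted per-queue gap there), or spread a small uniform loss across all servers of a long queue (large $\Delta(\tau)$ because of the weighting, but each per-queue gap below $\varepsilon/2$). There is no pathwise inequality that converts $\sum_n(Q_n(t)-\tilde{Q}_n(t))^+$ or $\|\bolds{Q}(t)\|_2-\|\tilde{\bolds{Q}}(t)\|_2$ into $\sum_\tau(\Delta(\tau)-\varepsilon/2)^+$ without reintroducing a factor of $\|\bolds{Q}(\tau)\|_\infty$, which is precisely what you need to avoid. The ``re-routing'' idea is not concrete enough to overcome this: sub-feasibility of schedules lets you truncate $\tilde{\bolds{\sigma}}(t)$ to match empty queues, but it does nothing to reconcile the weighting mismatch.

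The paper abandons the coupling strategy entirely for this lemma and instead runs a direct Lyapunov analysis on $\varphi(t)=\|\bolds{Q}(t)\|_2$. Using the first-order inequality $\|\bolds{Q}(t+1)\|_2-\|\bolds{Q}(t)\|_2\le\frac{\bolds{Q}(t)\cdot(\bolds{Q}(t+1)-\bolds{Q}(t))+\|\bolds{Q}(t+1)-\bolds{Q}(t)\|_2^2}{\|\bolds{Q}(t)\|_2}$ together with Lemma~\ref{lem:multi-bound-weight}, one obtains a conditional drift bound $\expect{\varphi(t+1)-\varphi(t)\mid\set{F}_t}\le -\tfrac{\varepsilon}{4}+(\Delta(t)-\tfrac{\varepsilon}{2})^+$ whenever $\varphi(t)$ exceeds a threshold of order $(\Marr+\Mmulti^2)/\varepsilon$. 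The key point is that dividing by $\|\bolds{Q}(t)\|_2$ and using $\|\bolds{Q}(t)\|_1\ge\|\bolds{Q}(t)\|_2\ge\|\bolds{Q}(t)\|_\infty$ makes the weighting in $\Delta(t)$ appear \emph{naturally} rather than needing to be engineered after the fact. The proof is then completed by a general drift-to-value lemma (Lemma~\ref{lem:bound-from-drift}) for processes with drift of the form $-\delta+Z(t)$, which yields $\expect{\varphi(t)}\le 4\kappa+B+\tfrac{2\kappa^2}{\delta}+\expect{\sum_{\tau<t}Z(\tau)}$ and hence the claimed bound after averaging over $t\le T$.
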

To obtain a bound on the time-averaged $1-$norm, we need to multiply the bound on the $2-$norm by a factor of $\sqrt{N}$ which leads to the $\sqrt{N}$ term in Theorem~\ref{thm:mw-ucb}. Our proof technique relies on a Lyapunov analysis based on the $2-$norm and it is unclear whether we can directly bound the $1-$norm without the additional $\sqrt{N}$ factor. 

Similar to Lemma~\ref{lem:single-queue-learn}, Lemma~\ref{lem:multi-learning} is only useful for the initial learning stage. When $T$ gets large, $\expect{\sar^{\multi}(\pi, T)}$ grows with it, and the bound weakens. Then, in the regenerate stage, we use the following bound, based on the second moment of the satisficing regret.
\begin{lemma}\label{lem:multi-regenerate}
For any policy $\pi$ and horizon $T$, $\frac{\sum_{t=1}^T \expect{\|\bolds{Q}(t)\|_1}}{T} \leq \frac{2\Marr+6\Mmulti^2}{\varepsilon} + \frac{16\Marr}{\varepsilon^2} \frac{\expect{\sar^{\multi}(\pi, T)^2}}{T}$.
\end{lemma}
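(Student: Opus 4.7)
The plan is to lift the scalar Lyapunov argument of Lemma~\ref{lem:single-queue-regen} to a vector-valued queue via the quadratic potential $V(t)=\|\bolds{Q}(t)\|_2^2=\sum_{n\in\set{N}}Q_n(t)^2$. First, I will expand $\expect{V(t+1)-V(t)\mid \set{H}_t}$ using the dynamics in \eqref{eq:dynamic-network} restricted to the multi-queue multi-server case (no transitions) to obtain
\[
\expect{V(t+1)-V(t)\mid\set{H}_t}\leq 2\sum_{n\in\set{N}}Q_n(t)\Big(\lambda_n-\sum_{k\in\set{B}_n}\sigma_k(t)\mu_k\Big)+C,
\]
where the remainder constant bounds $C\geq \sum_n\expect{(A_n(t)-\sum_{k\in\set{B}_n}\sigma_k(t)S_k(t))^2\mid\set{H}_t}$ by at most $\Marr+3\Mmulti^2$ via $A_n\in\{0,1\}$, the feasibility constraint $\bolds{\sigma}(t)\in\bolds{\Sigma}_t$, and elementary inequalities.

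Next, I will invoke traffic slackness together with the classical \textsc{MaxWeight} comparison. By Definition~\ref{def:network-slackness} there is a schedule distribution $\bolds{\phi}^\star\in\bolds{\Phi}$ with $\bolds{\mu}^{\multi}(\bolds{\phi}^\star)\geq\bolds{\lambda}+\varepsilon\bm{1}$, and the subset-feasibility assumption (zeroing out servers belonging to queues with $Q_n(t)=0$ stays in $\bolds{\Sigma}_t$ without changing the weight) allows the standard comparison
\[
W_{\bolds{\sigma}^{\MW}(t)}(t)\geq \sum_n Q_n(t)\lambda_n+\varepsilon\|\bolds{Q}(t)\|_1.
\]
Substituting and using the definition of $\Delta(t)$ in \eqref{eq:def-multi-loss} rewrites the drift bound as $\expect{V(t+1)-V(t)\mid\set{H}_t}\leq 2\Delta(t)\|\bolds{Q}(t)\|_\infty-2\varepsilon\|\bolds{Q}(t)\|_1+C$. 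The satisficing split $2\Delta(t)\|\bolds{Q}(t)\|_\infty\leq 2(\Delta(t)-\varepsilon/2)^+\|\bolds{Q}(t)\|_\infty+\varepsilon\|\bolds{Q}(t)\|_\infty$, together with $\|\bolds{Q}\|_\infty\leq\|\bolds{Q}\|_1$, absorbs one $\varepsilon\|\bolds{Q}(t)\|_1$ into the $-2\varepsilon\|\bolds{Q}(t)\|_1$; summing over $t$ with $V(1)=0$ and $V(T+1)\geq 0$ gives
\[
\varepsilon\sum_{t=1}^T\expect{\|\bolds{Q}(t)\|_1}\leq TC+2\,\expect{\sum_{t=1}^T(\Delta(t)-\varepsilon/2)^+\|\bolds{Q}(t)\|_\infty}.
\]

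To close, I will save half of the remaining negative drift as in Lemma~\ref{lem:single-queue-regen}: split $-\varepsilon\|\bolds{Q}(t)\|_1=-\tfrac{\varepsilon}{2}\|\bolds{Q}(t)\|_1-\tfrac{\varepsilon}{2}\|\bolds{Q}(t)\|_1$, keep one copy on the left, and combine the other with the satisficing term through the sample-path inequality $\sum_t(\Delta(t)-\varepsilon/2)^+\|\bolds{Q}(t)\|_\infty\leq(\max_{t\leq T}\|\bolds{Q}(t)\|_\infty)\cdot\sar^{\multi}(\pi,T)$. The key geometric input is a coordinatewise application of Lemma~\ref{lem:single-connect-max-sum}: since $A_n\in\{0,1\}$, each $Q_n(t)$ changes by at most one per period, so $\sum_t Q_n(t)\geq(\max_tQ_n(t))^2/2$ for each $n$, and taking the max over $n$ gives $\sum_t\|\bolds{Q}(t)\|_1\geq(\max_{t\leq T}\|\bolds{Q}(t)\|_\infty)^2/2$. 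Completing the square on every sample path,
\[
2(\max_{t\leq T}\|\bolds{Q}(t)\|_\infty)\,\sar^{\multi}(\pi,T)-\frac{\varepsilon}{2}\sum_{t=1}^T\|\bolds{Q}(t)\|_1\leq \frac{4\,\sar^{\multi}(\pi,T)^2}{\varepsilon},
\]
and taking expectations followed by division by $\varepsilon T/2$ delivers the claimed bound; the stated $16\Marr/\varepsilon^2$ is a loose version of the $8/\varepsilon^2$ that naturally falls out.

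The main obstacle is the norm mismatch: the slackness step naturally produces $\|\bolds{Q}\|_1$ on the negative side of the drift, whereas the per-period loss $\Delta(t)$ is scaled by $\|\bolds{Q}\|_\infty$ and the max-versus-sum inequality relates $(\max_t\|\bolds{Q}\|_\infty)^2$ to $\sum_t\|\bolds{Q}\|_1$. Careful bookkeeping---bounding $\|\bolds{Q}\|_\infty\leq\|\bolds{Q}\|_1$ only where a multiplicative constant is affordable and invoking the quadratic max-vs-sum inequality only where a square-root is affordable---is what keeps the coefficient of $\expect{\sar^{\multi}(\pi,T)^2}/T$ independent of $N$. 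This is precisely why the right Lyapunov is $\|\bolds{Q}\|_2^2$ rather than $\|\bolds{Q}\|_1^2$, and it distinguishes this lemma from Lemma~\ref{lem:multi-learning}, in which the $\sqrt{N}$ factor appears inevitably through the conversion from $\|\bolds{Q}\|_2$ to $\|\bolds{Q}\|_1$.
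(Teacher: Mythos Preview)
Your overall strategy is correct and in fact sharper than the paper's. Both arguments use the quadratic Lyapunov $V(t)=\|\bolds{Q}(t)\|_2^2$, arrive at the drift bound
\[
\expect{V(t+1)-V(t)\mid\set{F}_t}\le (\Marr+3\Mmulti^2)-2\varepsilon\|\bolds{Q}(t)\|_1+2\Delta(t)\|\bolds{Q}(t)\|_\infty,
\]
perform the satisficing split, and then complete the square on a sample path. The difference is in the max-versus-sum step. The paper first relaxes $\|\bolds{Q}(t)\|_\infty\le\|\bolds{Q}(t)\|_1$ in the satisficing term and then invokes Lemma~\ref{lem:multi-max-total}, which compares $\max_t\|\bolds{Q}(t)\|_1$ to $\sum_t\|\bolds{Q}(t)\|_1$ and costs a factor $4\Marr$ (because $\|\bolds{Q}\|_1$ may increase by as much as $\Marr$ in one period). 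You instead keep $\|\bolds{Q}(t)\|_\infty$ in the satisficing term and apply Lemma~\ref{lem:single-connect-max-sum} coordinatewise: since each $Q_n$ increases by at most one per period, $\sum_t Q_n(t)\ge(\max_t Q_n(t))^2/2$, and summing over $n$ yields $\sum_t\|\bolds{Q}(t)\|_1\ge(\max_t\|\bolds{Q}(t)\|_\infty)^2/2$. This removes the $\Marr$ factor entirely and gives $8/\varepsilon^2$ in place of the paper's $16\Marr/\varepsilon^2$; as you note, the stated constant is simply a looser version of what your argument actually produces.

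One point to tighten: your justification of the \textsc{MaxWeight} comparison is incomplete. Zeroing out servers of queues with $Q_n(t)=0$ does not by itself guarantee membership in $\bolds{\Sigma}_t$, since the feasibility constraint $\sum_{k\in\set{B}_n}\sigma_k\le Q_n(t)$ can fail when $0<Q_n(t)<\sum_{k\in\set{B}_n}\tilde\sigma_k$. The paper handles this via Lemma~\ref{lem:multi-bound-weight}, which shows $W_{\bolds{\sigma}^{\MW}(t)}(t)\ge W_{\tilde{\bolds{\sigma}}(t)}(t)-\Mmulti^2$; this is where the extra $2\Mmulti^2$ in the constant $\Marr+3\Mmulti^2$ actually comes from (not from the second-order remainder, whose tight bound is $\Marr+\Mmulti^2$). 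Your final drift constant happens to be right, but the attribution is off---just cite Lemma~\ref{lem:multi-bound-weight} there and the argument is clean.
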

Lastly, we require bounds for the first and second moments of satisficing regret for $\textsc{MW-UCB}$. The proof (provided in Appendix~~\ref{app:lem-sar-mw-ucb}) builds on techniques from combinatorial bandits \cite{chen2013combinatorial}.
\begin{lemma}\label{lem:sar-mw-ucb}
For any horizon $T$,  
\begin{align*}
\expect{\sar^{\multi}(\textsc{MW-UCB},T)} &\leq \frac{32K\Mmulti^2(\ln T+1)}{\varepsilon}  \\
\expect{\sar^{\multi}(\textsc{MW-UCB},T)^2} &\leq \frac{2^{11}K^2\Mmulti^4(\ln T+1)^2}{\varepsilon^2}.
\end{align*}
\end{lemma}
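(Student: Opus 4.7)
The plan is to parallel the single-queue argument of Lemma~\ref{lem:sar-ucb}, with a combinatorial twist to handle that \textsc{MW-UCB} selects up to $\Mmulti$ servers per period. Define the good event
\[
\set{G}_t = \bigl\{\,|\hat{\mu}_k(t) - \mu_k| \leq \sqrt{2\ln t / C_k(t)}\text{ for every }k \in \set{K}\,\bigr\},
\]
with the convention that the inequality is vacuous when $C_k(t) = 0$. Hoeffding's inequality plus a union bound over $k$ and over $C_k(t)\in\{1,\dots,t-1\}$, exactly as in Lemma~\ref{lem:ucb-conc}, gives $\Pr\{\set{G}_t\} \geq 1 - 2Kt^{-3}$.

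Under $\set{G}_t$ the UCB estimates bracket the true rates: $\mu_k \leq \bar{\mu}_k(t) \leq \mu_k + 2\sqrt{2\ln t/C_k(t)}$ (with the $1$-cap handling $C_k(t)=0$). Since \textsc{MW-UCB} picks $\bolds{\sigma}(t)$ to maximize the UCB-weighted objective over $\bolds{\Sigma}_t$ and $\bolds{\sigma}^{\MW}(t)\in\bolds{\Sigma}_t$, combining these two facts and dividing by $\|\bolds{Q}(t)\|_\infty$ yields the combinatorial confidence bound
\[
\Delta(t)\indic{\set{G}_t} \leq \sum_{k\in\set{K}} \sigma_k(t)\cdot 2\sqrt{2\ln t/C_k(t)} \leq 2\Mmulti\sqrt{2\ln t/C_{\tilde k(t)}(t)},
\]
where $\tilde k(t)\in\arg\min_{k:\sigma_k(t)=1}C_k(t)$ is the least-sampled selected server and the second step bounds a sum of at most $\Mmulti$ terms by $\Mmulti$ times the largest. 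This is the multi-queue analogue of the gap bound embedded in the proof of Lemma~\ref{lem:ucb-conc}; the normalization by $\|\bolds{Q}(t)\|_\infty$ in the definition of $\Delta(t)$ is exactly what removes the queue lengths from this bound.

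Now split the satisficing regret as
\[
\sar^{\multi}(\textsc{MW-UCB},T) \leq \underbrace{\sum_{t=1}^T(\Delta(t)-\tfrac{\varepsilon}{2})^+\indic{\set{G}_t}}_{=:A} + \underbrace{\Mmulti\sum_{t=1}^T \indic{\set{G}_t^c}}_{=:B},
\]
using $\Delta(t)\leq\Mmulti$ unconditionally. For $A$, charge each nonzero summand to $\tilde k(t)$: the term is nonzero only when $2\Mmulti\sqrt{2\ln t/C_{\tilde k(t)}(t)} > \varepsilon/2$, i.e.\ $C_{\tilde k(t)}(t) < J := 32\Mmulti^2\ln T/\varepsilon^2$. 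Since $\tilde k(t)=k$ implies $\sigma_k(t)=1$, server $k$ can play this role for at most $J$ distinct periods with distinct values of $C_k$, so
\[
A \leq \sum_{k\in\set{K}}\Bigl(\Mmulti + \sum_{j=1}^{J-1}2\Mmulti\sqrt{2\ln T/j}\Bigr) \leq K\Mmulti + 4K\Mmulti\sqrt{2J\ln T} \leq \frac{32\,K\Mmulti^2\ln T}{\varepsilon} + K\Mmulti,
\]
using $\sum_{j=1}^{J-1}j^{-1/2}\leq 2\sqrt{J}$, where the first $\Mmulti$ in each bracket absorbs the $C_k=0$ term. Taking expectations, $\expect{B}\leq \Mmulti\cdot 2K\sum_t t^{-3}\leq 4K\Mmulti$, and folding the lower-order $K\Mmulti$ terms into the $(\ln T+1)$ factor (via $K\Mmulti\leq K\Mmulti^2/\varepsilon$) gives the first-moment bound. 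For the second moment, $(A+B)^2\leq 2A^2+2B^2$; the deterministic bound on $A$ handles the first term, and for $B$ I reuse the telescoping trick from the proof of Lemma~\ref{lem:sar-ucb}: $\expect{B^2}=\Mmulti^2\expect{\bigl(\sum_t\indic{\set{G}_t^c}\bigr)^2}\leq 2\Mmulti^2\expect{\sum_t t\indic{\set{G}_t^c}}\leq 8K\Mmulti^2$, which is absorbed into the stated bound.

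The main technical obstacle is the pigeonhole step: collapsing the loss $\Delta(t)$, which aggregates confidence contributions of up to $\Mmulti$ servers weighted by queue lengths, into a single confidence radius (that of $\tilde k(t)$) costs a factor of $\Mmulti$ but enables a clean per-server accounting. A tighter per-server allocation of the $\varepsilon/2$ budget could shave this factor, but the stated bound follows from the simpler min-based argument above.
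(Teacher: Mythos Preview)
Your proof is correct and follows the same high-level scheme as the paper: define a concentration-based good event $\set{G}_t$ with $\Pr\{\set{G}_t^c\}\leq 2Kt^{-3}$, show that on $\set{G}_t$ one has $\Delta(t)\leq 2\sum_k\sigma_k(t)\Delta_k(t)$ (this is exactly the paper's Lemma~\ref{lem:multi-good-event}), split $\sar^{\multi}$ into the $\set{G}_t$- and $\set{G}_t^c$-parts, and handle the second moment via $(A+B)^2\leq 2A^2+2B^2$ together with $\expect{\bigl(\sum_t\indic{\set{G}_t^c}\bigr)^2}\leq 2\sum_t t\Pr\{\set{G}_t^c\}$.

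Where you diverge is in bounding the good-event contribution $A$. The paper (Lemma~\ref{lem:multi-sar-path-bound}) uses the combinatorial-bandit layering technique of \cite{chen2013combinatorial,KvetonWAEE14}: it enumerates the finite support $\{\omega_1>\dots>\omega_C\}$ of $\Delta(t)$, invokes a reordering inequality (the paper's \eqref{eq:multi-sar-reordering}), and closes with Fact~\ref{fact:bound-reciprocal}. Your argument is more elementary: you collapse the sum of confidence radii to $\Mmulti$ times the radius of the least-sampled selected server $\tilde k(t)$, observe that $\tilde k(t)=k$ forces $\sigma_k(t)=1$ and hence distinct values of $C_k(t)$ across such periods, and then simply sum $2\Mmulti\sqrt{2\ln T/j}$ over $j<J=32\Mmulti^2\ln T/\varepsilon^2$ via $\sum_{j<J}j^{-1/2}\leq 2\sqrt{J}$. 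Both routes land on the same constant $\frac{32K\Mmulti^2(\ln T+O(1))}{\varepsilon}$; yours avoids the $\omega$-enumeration machinery at the cost of the explicit $\Mmulti$-factor you already flag, while the paper's approach connects directly to the existing combinatorial-bandit literature. The constants match once you note $A\leq \frac{32K\Mmulti^2(\ln T+0.5)}{\varepsilon}$ (since $K\Mmulti\leq 16K\Mmulti^2/\varepsilon$), leaving the slack between $(\ln T+0.5)^2$ and $(\ln T+1)^2$ to absorb $2\expect{B^2}\leq 16K\Mmulti^2$, exactly as in the paper.
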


We combine these lemmas to prove Theorem~\ref{thm:mw-ucb}, similar to our above proof of Theorem~\ref{thm:colq-ucb-single}.
\begin{proof}[Proof of Theorem~\ref{thm:mw-ucb}]
Fix $\bolds{\set{A}},\bolds{\Sigma},\bolds{\set{B}}$, the traffic slackness $\varepsilon$ and a pair of $\bolds{\Lambda},\bolds{\mu}$ such that $\bolds{\lambda}(\bolds{\Lambda}) + \varepsilon \bm{1} \in \set{S}^{\multi}(\bolds{\mu},\bolds{\Sigma},\bolds{\set{B}})$. We first show the upper bound for the $\colq$ of $\textsc{MW-UCB}$ for this particular multi-queue multi-server system. The main result then follows since the worst case $\colq$ is the maximum of the cost of learning over all these systems.

Let $T_1 = \left\lfloor \left(\frac{2^{15} \Marr K^2 \Mmulti^4}{\varepsilon^4}\right)^2\right\rfloor$. For a horizon $T \leq T_1$,
Lemma~\ref{lem:multi-learning} shows that
\begin{align*}
\frac{\sum_{t=1}^T \expect{\|\bolds{Q}(t)\|_2}}{T} &\leq \frac{16\Marr+20\Mmulti^2}{\varepsilon} + \expect{\sar^{\multi}(\textsc{MW-UCB},T)} \tag{Lemma~\ref{lem:multi-learning}} \\
&\leq \frac{16\Marr+20\Mmulti^2}{\varepsilon} + \frac{32K\Mmulti^2(\ln T+1)}{\varepsilon} \tag{Lemma~\ref{lem:sar-mw-ucb}} \\
&\leq \frac{16\Marr+52K\Mmulti^2}{\varepsilon} + \frac{64K\Mmulti^2\ln\left(2^{15} \Marr K^2 \Mmulti^4/\varepsilon^4\right)}{\varepsilon} \tag{$T \leq T_1$}\\
&\leq \frac{16\Marr+52K\Mmulti^2 + 64K\Mmulti^2\left(11+4\ln(\Marr K\Mmulti/\varepsilon)\right)}{\varepsilon} \\
&\leq \frac{16\Marr+2^{10}K\Mmulti^2(1+\ln(\Marr K\Mmulti/\varepsilon))}{\varepsilon}.
\end{align*}
The fact that $\|\bolds{Q}(t)\|_1 \leq \sqrt{N} \|\bolds{Q}(t)\|_2$ then gives for $T \leq T_1$,
\[
\frac{\sum_{t=1}^T \expect{\|\bolds{Q}(t)\|_1}}{T} \leq \frac{\sqrt{N}\left(16\Marr+2^{10}K\Mmulti^2(1+\ln(\Marr K\Mmulti/\varepsilon))\right)}{\varepsilon}.
\]
For $T > T_1$,  
\begin{align*}
\frac{\sum_{t=1}^T \expect{\|\bolds{Q}(t)\|_1}}{T} &\leq \frac{2\Marr +6K\Mmulti^2}{\varepsilon} + \frac{16\Marr\expect{\sar^{\multi}(\textsc{MW-UCB},T)^2}}{\varepsilon^2 T} \tag{Lemma~\ref{lem:multi-regenerate}}\\
&\leq \frac{2\Marr +6K\Mmulti^2}{\varepsilon} + \frac{16\Marr \left(2^{11}K^2 \Mmulti^4(\ln T+1)^2\right)}{\varepsilon^4 T}\tag{Lemma~\ref{lem:sar-mw-ucb}} \\
&\leq \frac{2\Marr +6K\Mmulti^2}{\varepsilon} + \frac{2^{15}\Marr K^2 \Mmulti^4}{\varepsilon^4 \sqrt{T}}  \tag{Fact~\ref{fact:lnt-sqrt-prop} and $T > T_1 \geq 50000$}\\
&\leq \frac{2\Marr +6K\Mmulti^2}{\varepsilon} + 1 \tag{$T > T_1$ implies $T \geq \left(\frac{2^{15}\Marr  K^2 \Mmulti^4}{\varepsilon^4}\right)^2$}.
\end{align*}
Merging the two cases of $T \leq T_1$ and $T > T_1$ proves for any $\bolds{\Lambda},\bolds{\mu}$ with $\bolds{\lambda}(\bolds{\Lambda})+\varepsilon \bm{1} \in \set{S}(\bolds{\mu},\bolds{\Sigma},\bolds{\set{B}})$ that
\begin{align*}
\colq^{\multi}(\bolds{\Lambda},\bolds{\mu},\bolds{\set{A}},\bolds{\Sigma},\bolds{\set{B}},\textsc{MW-UCB}) &\leq \max_{T} \frac{\sum_{t=1}^T \expect{\|\bolds{Q}(t)\|_1}}{T} \\
&\leq \frac{\sqrt{N}\left(16\Marr+2^{10}K\Mmulti^2(1+\ln(\Marr K\Mmulti/\varepsilon))\right)}{\varepsilon},
\end{align*}
which provides a worst-case bound on the $\colq$ in a multi-queue multi-server system. 
\end{proof}

\subsection{Queue length bound in the learning stage (Lemma~\ref{lem:multi-learning})}
For the single-queue case, we show the queue length bound in the learning stage (Lemma~\ref{lem:single-queue-learn}) by coupling the queueing process with a nearly optimal queueing process and bounding their difference by the difference in total services. There are two challenges in extending this proof technique to the multi-queue case. First, the process is now multi-dimensional (due to multiple queues). Second, even if we can couple the process with another one (e.g., the process under \textsc{MW}), it is unclear how to bound the difference in queue lengths by the difference in services. To address the first challenge, we first select a Lyapunov function that translates the multi-dimensional process into a single-dimensional one and still behaves like a single-queue process.  Motivated by \cite{ShahTZ10}, we consider the Lyapunov function $\varphi(t) = \|\bolds{Q}(t)\|_2$. For a period $t$, define $\tilde{\bolds{\sigma}}(t)$ as a schedule with the largest weight, i.e., $\tilde{\bolds{\sigma}}(t) \in \arg\max_{\bolds{\sigma} \in \bolds{\Sigma}} W_{\bolds{\sigma}}(t)$. The traffic slackness, $\bolds{\lambda}+\varepsilon\bm{1} \in \set{S}(\bolds{\mu},\bolds{\Sigma},\bolds{\set{B}})$,
ensures a negative drift in $\varphi(t)$ when it is large if the policy always chooses $\tilde{\bolds{\sigma}}(t)$\cite{ShahTZ10}. However, given that $\bolds{\Sigma}_t$ may be a strict subset of $\bolds{\Sigma}$, the schedule $\tilde{\bolds{\sigma}}(t)$ may be infeasible in period $t$. Instead, we first show that the schedule under $\textsc{MW-UCB}$ has a similar property by upper bounding the weight difference between $\tilde{\bolds{\sigma}}(t)$ and the $\textsc{MW}$ schedule $\bolds{\sigma}^{\MW}(t)$ and that between $\bolds{\sigma}^{\MW}(t)$ and the selected schedule $\bolds{\sigma}(t)$.
\begin{lemma}\label{lem:multi-bound-weight}
For a period $t$, $\sum_{n \in \set{N}} \lambda_n Q_n(t) - W_{\bolds{\sigma}(t)}(t) \leq -\varepsilon\sum_{n \in \set{N}} Q_n(t) + \Mmulti^2 + \Delta(t)\|\bolds{Q}(t)\|_{\infty}$.
\end{lemma}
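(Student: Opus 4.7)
The plan is to chain three inequalities: a traffic-slackness bound relating $\sum_n \lambda_n Q_n(t)$ to the weight of an unconstrained maximum-weight schedule, a feasibility-projection bound comparing the unconstrained maximum to $W_{\bolds{\sigma}^{\MW}(t)}(t)$, and the definition of $\Delta(t)$ to pass from $W_{\bolds{\sigma}^{\MW}(t)}(t)$ to $W_{\bolds{\sigma}(t)}(t)$.

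First I would use Definition \ref{def:network-slackness} (in the multi-queue multi-server case $\bm{P}$ is trivial): there exists $\bolds{\phi}^\star \in \bolds{\Phi}$ with $\mu_n^{\net}(\bolds{\phi}^\star) \geq \lambda_n + \varepsilon$ for all $n$. Multiplying by $Q_n(t) \geq 0$ and summing gives
\begin{equation*}
\sum_{n \in \set{N}} (\lambda_n + \varepsilon) Q_n(t) \;\leq\; \sum_{n \in \set{N}} Q_n(t) \sum_{\bolds{\sigma} \in \bolds{\Sigma}} \phi^\star_{\bolds{\sigma}} \sum_{k \in \set{B}_n} \sigma_k \mu_k \;=\; \sum_{\bolds{\sigma} \in \bolds{\Sigma}} \phi^\star_{\bolds{\sigma}} W_{\bolds{\sigma}}(t) \;\leq\; W_{\tilde{\bolds{\sigma}}(t)}(t),
\end{equation*}
where the last step uses that $\tilde{\bolds{\sigma}}(t)$ maximizes the weight over $\bolds{\Sigma}$.

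The main obstacle is that $\tilde{\bolds{\sigma}}(t)$ need not lie in $\bolds{\Sigma}_t$: for some queues $n$ we may have $\sum_{k\in\set{B}_n}\tilde{\sigma}_k(t) > Q_n(t)$. To handle this I would invoke the subset-feasibility assumption from the model ($\bolds{\sigma}' \leq \bolds{\sigma} \in \bolds{\Sigma} \Rightarrow \bolds{\sigma}' \in \bolds{\Sigma}$) to construct $\bolds{\sigma}'$ by deleting, for each offending $n$, exactly $(\sum_{k\in\set{B}_n}\tilde{\sigma}_k(t) - Q_n(t))^+$ servers from $\set{B}_n \cap \{k : \tilde{\sigma}_k(t)=1\}$. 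Then $\bolds{\sigma}' \in \bolds{\Sigma}_t$, and since each $\mu_k \leq 1$ and each deletion on queue $n$ costs at most $Q_n(t)$ in weight,
\begin{equation*}
W_{\tilde{\bolds{\sigma}}(t)}(t) - W_{\bolds{\sigma}'}(t) \;\leq\; \sum_{n \in \set{N}} Q_n(t) \Bigl( \sum_{k \in \set{B}_n} \tilde{\sigma}_k(t) - Q_n(t) \Bigr)^{+} \;\leq\; \sum_{n \in \set{N}} \Bigl(\sum_{k \in \set{B}_n} \tilde{\sigma}_k(t)\Bigr)^2 \;\leq\; \Mmulti^2,
\end{equation*}
where the middle inequality uses $Q_n(t) \leq \sum_{k\in\set{B}_n}\tilde{\sigma}_k(t)$ on the offending coordinates, and the last uses $\sum_n a_n^2 \leq (\sum_n a_n)^2$ for nonnegative $a_n$ together with the definition of $\Mmulti$.

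Finally, since $\bolds{\sigma}^{\MW}(t)$ maximizes the weight over $\bolds{\Sigma}_t$ and $\bolds{\sigma}' \in \bolds{\Sigma}_t$, we have $W_{\bolds{\sigma}^{\MW}(t)}(t) \geq W_{\bolds{\sigma}'}(t)$, and thus $W_{\tilde{\bolds{\sigma}}(t)}(t) - W_{\bolds{\sigma}^{\MW}(t)}(t) \leq \Mmulti^2$. Combining,
\begin{equation*}
\sum_{n \in \set{N}} \lambda_n Q_n(t) \;\leq\; W_{\tilde{\bolds{\sigma}}(t)}(t) - \varepsilon \sum_{n \in \set{N}} Q_n(t) \;\leq\; W_{\bolds{\sigma}^{\MW}(t)}(t) + \Mmulti^2 - \varepsilon \sum_{n \in \set{N}} Q_n(t).
\end{equation*}
Applying the definition of $\Delta(t)$ (which gives $W_{\bolds{\sigma}^{\MW}(t)}(t) - W_{\bolds{\sigma}(t)}(t) = \Delta(t)\|\bolds{Q}(t)\|_\infty$, trivially so when $\|\bolds{Q}(t)\|_\infty=0$) and rearranging yields the claim.
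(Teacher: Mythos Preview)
Your proof is correct and follows essentially the same three-step chain as the paper: traffic slackness to bound against the unconstrained maximizer $\tilde{\bolds{\sigma}}(t)$, a feasibility projection to pass to $\bolds{\sigma}^{\MW}(t)$ with an $\Mmulti^2$ penalty, and the definition of $\Delta(t)$ to reach $\bolds{\sigma}(t)$. The only difference is cosmetic: the paper zeroes out \emph{all} servers in $\set{B}_n$ for each offending queue rather than deleting the minimal number as you do, but both constructions yield the same $\sum_n\bigl(\sum_{k\in\set{B}_n}\tilde{\sigma}_k(t)\bigr)^2\le\Mmulti^2$ bound.
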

\begin{proof}
The traffic slackness, $\bolds{\lambda} + \varepsilon \bm{1} \in \set{S}(\bolds{\mu},\bolds{\Sigma},\bolds{\set{B}})$, guarantees that there exists a distribution~$\phi$ over all schedules such that for all $n \in \set{N}$, $\lambda_n + \varepsilon \leq \sum_{\bolds{\sigma} \in \bolds{\Sigma}} \phi_{\bolds{\sigma}}\sum_{k \in \set{B}_n} \sigma_k \mu_k$ and thus
\[
\sum_{n \in \set{N}} (\lambda_n + \varepsilon) Q_n(t) \leq \sum_{\bolds{\sigma} \in \bolds{\Sigma}} \phi_{\bolds{\sigma}}\sum_{n \in \set{N}}Q_n(t)\sum_{k \in \set{B}_n} \sigma_k \mu_k = \sum_{\bolds{\sigma} \in \bolds{\Sigma}} \phi_{\bolds{\sigma}}W_{\bolds{\sigma}}(t) \leq \max_{\bolds{\sigma} \in \bolds{\Sigma}} W_{\bolds{\sigma}}(t) = W_{\tilde{\bolds{\sigma}}(t)}(t).
\]

For any $\tilde{\bolds{\sigma}}(t)$, we construct a feasible schedule $\bolds{\alpha}$ as follows:
if, for a queue $n$, we have $Q_n(t) \geq \sum_{k \in \set{B}_n} \tilde{\sigma}_k(t)$, then we set $\bolds{\alpha}_k = \tilde{\sigma}_k(t)$ for $k \in \set{B}_n$; otherwise, we set $\bolds{\alpha}_k=0$ for all $k \in \set{B}_n$. We have $\bolds{\alpha} \in \bolds{\Sigma}_t$ and thus $W_{\bolds{\alpha}}(t) \leq W_{\bolds{\sigma}^{\MW}(t)}(t)$. Additionally, by its construction,
\begin{align}
W_{\bolds{\sigma}^{\MW}(t)}(t) \geq W_{\bolds{\alpha}}(t) = \sum_{n \in \set{N}}Q_n(t)\sum_{k \in \set{B}_n}\alpha_k\mu_k &\geq \sum_{n \in \set{N}}Q_n(t)\sum_{k \in \set{B}_n}\tilde{\sigma}_k(t)\mu_k - \sum_{n \in \set{N}}\left(\sum_{k \in \set{B}_n}\tilde{\sigma}_k(t)\mu_k\right)^2 \nonumber\\ 
&\geq  W_{\tilde{\bolds{\sigma}}(t)}(t) - \Mmulti^2. \label{eq:mw-feasible-bound}
\end{align}
where the latter inequality holds as $\tilde{\bolds{\sigma}}(t)\in \Sigma$ and $\Mmulti = \max_{\bolds{\sigma} \in \bolds{\Sigma}} \sum_{n \in \set{N}}\sum_{k \in \set{B}_n} \sigma_k$. As a result,
\[
W_{\bolds{\sigma}(t)}(t) + \Delta(t)\|\bolds{Q}(t)\|_{\infty} = W_{\bolds{\sigma}^{\MW}(t)}(t) \geq W_{\tilde{\bolds{\sigma}}(t)}(t) - \Mmulti^2 \geq \sum_{n \in \set{N}} (\lambda_n + \varepsilon) Q_n(t) - \Mmulti^2,
\]
where the initial equality is by the definition of $\Delta(t)$ in \eqref{eq:def-multi-loss}.
Moving $W_{\bolds{\sigma}(t)}(t)$ from the left hand side to the right hand side and $\varepsilon\sum_{n \in \set{N}} Q_n(t)-\Mmulti^2$ to the left hand side gives the desired result. 
\end{proof}

Let $\set{F}_t$ be the $\sigma-$field generated by the sample path before period $t$ $\left(\{A_n(\tau)\}_{n \in \set{N}}, \{S_k(\tau)\}_{k \in \set{K}}\right)_{\tau < t}$. Clearly, $\{\varphi(t)\}$, $\{\Delta(t)\}$ and $\{\bolds{\sigma}(t)\}$ are $\{\set{F}_t\}$-adapted processes. The next lemma analyzes the drift, $\varphi(t+1) - \varphi(t)$, conditioned on the filtration $\set{F}_t$. In particular, the drift is negative when the schedule weight loss $\Delta(t)$ is small. The proof is similar to that of \cite[Theorem 4.4]{ShahTZ10} but our result is for a general multi-queue multi-server system with learning.
\begin{lemma}\label{lem:multi-drift-norm2}
For the process $\{\varphi(t)\}$, we have that for every period $t$,
\begin{enumerate}
\item Bounded difference: $|\varphi(t+1) - \varphi(t)| \leq \sqrt{\Marr + \Mmulti^2}$
\item Drift bound:  
$
\expect{\varphi(t+1) - \varphi(t) \mid \set{F}_t} \leq -\frac{\varepsilon}{4} + \left(\Delta(t) - \frac{\varepsilon}{2}\right)^+
$ if $\varphi(t) \geq \frac{4(\Marr+2\Mmulti^2)}{\varepsilon}$.
\end{enumerate}
\end{lemma}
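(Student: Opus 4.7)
The plan is to analyze the two parts separately, using standard squared-$L^2$ Lyapunov drift machinery and then converting back to $\varphi(t)=\|\bolds{Q}(t)\|_2$.

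For part (1), I would argue purely pathwise. By the reverse triangle inequality for the $2$-norm, $|\varphi(t+1)-\varphi(t)| \leq \|\bolds{Q}(t+1)-\bolds{Q}(t)\|_2$. Writing $\Delta Q_n(t) = A_n(t) - \sum_{k \in \set{B}_n}\sigma_k(t)S_k(t)$ and using that the two summands are both nonnegative, $(\Delta Q_n(t))^2 \le A_n(t)^2 + \bigl(\sum_{k\in\set{B}_n}\sigma_k(t)S_k(t)\bigr)^2$. Summing in $n$, the arrival contribution is bounded by $\sum_n A_n(t) \le \Marr$ (since $A_n(t)\in\{0,1\}$), and the service contribution by $\bigl(\sum_n \sum_{k\in\set{B}_n}\sigma_k(t)S_k(t)\bigr)^2 \le \Mmulti^2$. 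This yields $\|\bolds{Q}(t+1)-\bolds{Q}(t)\|_2^2 \le \Marr + \Mmulti^2$.

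For part (2), I would work with $V(t) := \varphi(t)^2 = \|\bolds{Q}(t)\|_2^2$ first. The algebraic identity
\begin{equation*}
V(t+1)-V(t) = 2\sum_n Q_n(t)\Delta Q_n(t) + \sum_n (\Delta Q_n(t))^2
\end{equation*}
together with the pathwise bound from part (1) and $\expect{\Delta Q_n(t)\mid\set{F}_t}=\lambda_n - \sum_{k\in\set{B}_n}\sigma_k(t)\mu_k$ gives
\begin{equation*}
\expect{V(t+1)-V(t)\mid\set{F}_t} \;\le\; 2\bigl(\sum_n \lambda_n Q_n(t) - W_{\bolds{\sigma}(t)}(t)\bigr) + \Marr + \Mmulti^2.
\end{equation*}
I would then plug in Lemma~\ref{lem:multi-bound-weight} to replace the parenthesis by $-\varepsilon\|\bolds{Q}(t)\|_1 + \Mmulti^2 + \Delta(t)\|\bolds{Q}(t)\|_\infty$, yielding an upper bound of the form $-2\varepsilon\|\bolds{Q}(t)\|_1 + 2\Delta(t)\|\bolds{Q}(t)\|_\infty + \Marr + 3\Mmulti^2$.

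To convert to drift of $\varphi$ I would use the elementary concavity inequality $\sqrt{x}-\sqrt{y} \le (x-y)/(2\sqrt{y})$ for $x\ge 0, y>0$, which is safe since the condition $\varphi(t)\ge 4(\Marr+2\Mmulti^2)/\varepsilon>0$ holds. This gives $\expect{\varphi(t+1)-\varphi(t)\mid\set{F}_t} \le \expect{V(t+1)-V(t)\mid\set{F}_t}/(2\varphi(t))$. Then using $\|\bolds{Q}(t)\|_1 \ge \|\bolds{Q}(t)\|_2 = \varphi(t)$ and $\|\bolds{Q}(t)\|_\infty \le \varphi(t)$, the bound reduces to $-\varepsilon + \Delta(t) + (\Marr+3\Mmulti^2)/(2\varphi(t))$. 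Under the hypothesis $\varphi(t)\ge 4(\Marr+2\Mmulti^2)/\varepsilon$, the noise term is at most $3\varepsilon/16$, giving $\expect{\varphi(t+1)-\varphi(t)\mid\set{F}_t} \le -13\varepsilon/16 + \Delta(t)$. A short case split on whether $\Delta(t)\le\varepsilon/2$ (where $-13\varepsilon/16+\Delta(t)\le -\varepsilon/4$) or $\Delta(t)>\varepsilon/2$ (where $-13\varepsilon/16+\Delta(t)\le -\varepsilon/4 + (\Delta(t)-\varepsilon/2)$, which holds since $-13/16\le -12/16$) finishes the proof.

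The main obstacle is bookkeeping the constants so that the noise term $(\Marr+3\Mmulti^2)/(2\varphi(t))$ is absorbed into the difference between $\varepsilon$ and $\varepsilon/4+\varepsilon/2=3\varepsilon/4$; the mild slack built into the threshold $4(\Marr+2\Mmulti^2)/\varepsilon$ (a factor $4$ rather than, say, $2$) is exactly what allows this absorption after the square-root concavity step. Everything else is standard: the squared-norm drift computation, Jensen/concavity of $\sqrt{\cdot}$, and the basic norm inequalities $\|\cdot\|_\infty \le \|\cdot\|_2 \le \|\cdot\|_1$.
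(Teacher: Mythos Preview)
Your proposal is correct and follows essentially the same approach as the paper. The only cosmetic difference is that you linearize $\sqrt{\cdot}$ via the concavity inequality $\sqrt{x}-\sqrt{y}\le (x-y)/(2\sqrt{y})$, whereas the paper invokes its Fact~\ref{fact:diff-norm2}, which yields the slightly weaker bound $\varphi(t+1)-\varphi(t)\le \bigl(\bolds{Q}(t)\cdot(\bolds{Q}(t+1)-\bolds{Q}(t))+\|\bolds{Q}(t+1)-\bolds{Q}(t)\|_2^2\bigr)/\varphi(t)$; this changes only the constant in the noise term (your $(\Marr+3\Mmulti^2)/(2\varphi(t))$ versus the paper's $(\Marr+2\Mmulti^2)/\varphi(t)$), and both are comfortably absorbed by the threshold $\varphi(t)\ge 4(\Marr+2\Mmulti^2)/\varepsilon$.
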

\begin{proof}
For the bounded difference, triangle inequality gives $|\varphi(t+1) - \varphi(t)| \leq \|\bolds{Q}(t+1)-\bolds{Q}(t)\|_2$ and, given that $\bolds{\sigma}(t) \in \bolds{\Sigma}_t$ is a feasible schedule (recall Eq.~\ref{eq:feasible-schedule-set}),  we have
\begin{equation}\label{eq:drift-norm2-prob1bound}
\begin{aligned}
\|\bolds{Q}(t+1)-\bolds{Q}(t)\|_2 &= \sqrt{\sum_{n\in \set{N}} \left(A_n(t)-\sum_{k \in \set{B}_n}\sigma_k(t)S_k(t)\right)^2} \\
&\leq \sqrt{\sum_{n \in \set{N}} A_n(t) + \sum_{n\in \set{N}} \left(\sum_{k \in \set{B}_n} \sigma_k(t)S_k(t)\right)^2} \leq \sqrt{\Marr + \Mmulti^2}
\end{aligned}
\end{equation}
where the first inequality follows from the triangle inequality and the fact that $A_n(t), S_k(t) \in \{0,1\}$. The last inequality follows from the definitions  $\Marr = \max_{\bolds{A} \in \bolds{\set{A}}} \sum_{n \in \set{N}} A_n$, and  $\Mmulti = \max_{\bolds{\sigma} \in \bolds{\Sigma}} \sum_{n \in \set{N}}\sum_{k \in \set{B}_n} \sigma_k$ and observing that $A_n(t), S_k(t) \in \{0,1\}$.

To show the drift bound, consider the case where $\varphi(t) = \|\bolds{Q}(t)\|_2 > 0$. Condition on the filtration $\set{F}_t$ so that $\varphi(t) \geq \frac{4(\Marr+2\Mmulti^2)}{\varepsilon}$. Applying Fact~\ref{fact:diff-norm2} in the appendix and recalling that $\varphi(t) = \|\bolds{Q}(t)\|_2$ gives
\begin{align}
\expect{\varphi(t+1)-\varphi(t) \mid \set{F}_t} &\overset{\text{Fact}~\ref{fact:diff-norm2}}{\leq} \expect{\frac{\bolds{Q}(t) \cdot (\bolds{Q}(t+1) - \bolds{Q}(t)) +  \|\bolds{Q}(t+1)-\bolds{Q}(t)\|_2^2}{\|\bolds{Q}(t)\|_2} \mid \set{F}_t} \nonumber\\
&= \expect{\frac{\bolds{Q}(t) \cdot (\bolds{Q}(t+1) - \bolds{Q}(t)) }{\|\bolds{Q}(t)\|_2} \mid \set{F}_t} + \frac{\|\bolds{Q}(t+1)-\bolds{Q}(t)\|_2^2}{\varphi(t)} \nonumber\\
&\overset{\eqref{eq:drift-norm2-prob1bound}}{\leq} \expect{\frac{\sum_{n\in \set{N}} Q_n(t)\left(A_n(t) - \sum_{k \in \set{B}_n} \sigma_k(t)S_k(t)\right)}{\|\bolds{Q}(t)\|_2} \mid \set{F}_t} + \frac{\Marr+\Mmulti^2}{\varphi(t)} \nonumber\\
&= \frac{\sum_{n \in \set{N}}\lambda_nQ_n(t) - W_{\bolds{\sigma}(t)}(t)}{\|\bolds{Q}(t)\|_2} + \frac{\Marr+\Mmulti^2}{\varphi(t)} \label{eq:drift-simplify}\\
&\overset{\text{Lemma}~\ref{lem:multi-bound-weight}}{\leq} \frac{-\varepsilon\|\bolds{Q}(t)\|_1 + \Delta(t) \|\bolds{Q}(t)\|_{\infty}}{\|\bolds{Q}(t)\|_2}+\frac{\Marr+2\Mmulti^2}{\varphi(t)} \nonumber\\
&\leq -\varepsilon+\Delta(t)+\frac{\varepsilon}{4} \leq -\frac{\varepsilon}{4}+(\Delta(t)-\frac{\varepsilon}{2})^+, \nonumber
\end{align}
where the last equation is because arrivals and services in period $t$ are independent of $\set{F}_t$ and the second-to-last inequality is by the assumption that $\varphi(t)= \|\bolds{Q}(t)\|_2\geq \frac{4(\Marr+2\Mmulti^2)}{\varepsilon}$.
\end{proof}
We next show that $\Delta(t)$ is bounded, which enables us to bound $\expect{\varphi_t}$.
\begin{lemma}\label{lem:multi-bound-delta}
For every period $t$, $\Delta(t) \leq \Mmulti$.
\end{lemma}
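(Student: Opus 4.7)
The claim is essentially a pigeonhole-style bound that follows directly from unpacking the definition of $\Delta(t)$. My plan is to split into the trivial case $\|\bolds{Q}(t)\|_\infty = 0$, where $\Delta(t) = 0 \le \Mmulti$ by definition, and the non-trivial case $\|\bolds{Q}(t)\|_\infty > 0$, where I bound the numerator of \eqref{eq:def-multi-loss} crudely from above.

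In the non-trivial case, the first step is to drop the $W_{\bolds{\sigma}(t)}(t)$ term from the numerator: since $Q_n(t), \sigma_k(t), \mu_k \ge 0$, we have $W_{\bolds{\sigma}(t)}(t) \ge 0$ and hence $W_{\bolds{\sigma}^{\MW}(t)}(t) - W_{\bolds{\sigma}(t)}(t) \le W_{\bolds{\sigma}^{\MW}(t)}(t)$. The second step is to upper bound $W_{\bolds{\sigma}^{\MW}(t)}(t) = \sum_{n\in\set{N}} Q_n(t)\sum_{k \in \set{B}_n}\sigma^{\MW}_k(t)\mu_k$ by pulling out $\|\bolds{Q}(t)\|_\infty$ from each $Q_n(t)$ and using $\mu_k \le 1$, leaving $\|\bolds{Q}(t)\|_\infty \cdot \sum_{n\in\set{N}}\sum_{k \in \set{B}_n}\sigma^{\MW}_k(t)$. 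The third step is to recognize that the remaining double sum is exactly the total number of servers selected by the feasible schedule $\bolds{\sigma}^{\MW}(t) \in \bolds{\Sigma}_t \subseteq \bolds{\Sigma}$, which by the definition of $\Mmulti = \max_{\bolds{\sigma}\in\bolds{\Sigma}}\sum_{n\in\set{N}}\sum_{k\in\set{B}_n}\sigma_k$ is at most $\Mmulti$.

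Dividing the resulting bound $\|\bolds{Q}(t)\|_\infty \cdot \Mmulti$ by $\|\bolds{Q}(t)\|_\infty$ yields $\Delta(t) \le \Mmulti$, as desired. There is no substantive obstacle here, beyond making sure that the crude upper bound $\mu_k \le 1$ is acceptable (it is, since service rates are Bernoulli parameters) and that we correctly use $\bolds{\sigma}^{\MW}(t) \in \bolds{\Sigma}$ so the definition of $\Mmulti$ applies. The lemma is purely a ``size'' statement whose role in the broader argument is to ensure that the per-period contribution to the satisficing regret cannot be unbounded, which is needed when later chaining together the drift analysis in Lemma~\ref{lem:multi-drift-norm2} with satisficing-regret bounds.
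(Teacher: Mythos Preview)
Your proposal is correct and follows essentially the same approach as the paper: case-split on whether $\|\bolds{Q}(t)\|_\infty = 0$, and in the non-trivial case drop $W_{\bolds{\sigma}(t)}(t) \ge 0$, bound each $Q_n(t)$ by $\|\bolds{Q}(t)\|_\infty$ and each $\mu_k$ by $1$, then apply the definition of $\Mmulti$. The paper compresses all of this into a single displayed chain of inequalities, but the logic is identical.
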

\begin{proof}
Note that for every period $t$, with probability $1$ we have either that $\|\bolds{Q}(t)\|_{\infty} = 0$ and $\Delta(t) 
= 0$ or $\|\bolds{Q}(t)\|_{\infty} > 0$ and $\Delta(t) \leq \frac{W_{\bolds{\sigma}^{\MW}(t)}(t)}{\|\bolds{Q}(t)\|_{\infty}} \leq \sum_{n \in \set{N}}\sum_{k \in \set{B}_n} \sigma^{\MW}_k(t) \leq \Mmulti.$
\end{proof}

The final ingredient of the proof is to upper bound $\varphi(t)$ based on the drift property in Lemma~\ref{lem:multi-drift-norm2}. We show a general lemma that bounds the value of a process if it has the two properties we established in Lemma~\ref{lem:multi-drift-norm2} and the boundedness condition in Lemma~\ref{lem:multi-bound-delta}.
\begin{lemma}\label{lem:bound-from-drift}
Given two $\{\set{F}_t\}$-adapted processes $\{\Psi(t)\}, \{Z(t)\}$ with $\Psi(1) = 0$ and $Z(t) \geq 0~\forall t$, suppose the following properties hold for some positive constants $\kappa, \delta, B$:
\begin{enumerate}
    \item Bounded difference: $|\Psi(t+1) - \Psi(t)| \leq \kappa$;
    \item Drift bound: $\expect{\Psi(t+1)-\Psi(t) \mid \set{F}_t} \leq -\delta + Z(t)$ if $\Psi(t) \geq B$;
    \item Boundedness of $\{Z(t)\}$: $Z(t) \leq \kappa$. 
\end{enumerate}
Then we have    $\expect{\Psi(t)} \leq 4\kappa+B+\frac{2\kappa^2}{\delta}+\expect{\sum_{\tau=1}^{t-1} Z(t)}$ for every $t \geq 1$.
\end{lemma}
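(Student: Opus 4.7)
The plan is to reduce to a drift problem without the additive $Z(t)$ term by considering the offset process $\widehat{\Psi}(t) \coloneqq \Psi(t) - \sum_{\tau=1}^{t-1} Z(\tau)$. Since $|\Psi(t+1) - \Psi(t)| \leq \kappa$ and $0 \leq Z(t) \leq \kappa$, this process has bounded differences $|\widehat{\Psi}(t+1) - \widehat{\Psi}(t)| \leq 2\kappa$, and the drift condition sharpens to $\expect{\widehat{\Psi}(t+1) - \widehat{\Psi}(t) \mid \set{F}_t} \leq -\delta$ whenever $\Psi(t) \geq B$, because the $Z(t)$ terms cancel. Since $\widehat{\Psi}(1) = 0$ and $\expect{\Psi(t)} = \expect{\widehat{\Psi}(t)} + \expect{\sum_{\tau=1}^{t-1} Z(\tau)}$, the task reduces to showing $\expect{\widehat{\Psi}(t)} \leq B + 4\kappa + 2\kappa^2/\delta$ uniformly in $t$.

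To control $\expect{\widehat{\Psi}(t)}$ I will apply a quadratic Foster--Lyapunov argument to the truncated potential $U(t) \coloneqq (\widehat{\Psi}(t) - B - 2\kappa)^+$. The key observation is that whenever $U(t) > 0$ we have $\widehat{\Psi}(t) > B + 2\kappa$, hence $\Psi(t) \geq \widehat{\Psi}(t) > B$ (using $\sum Z \geq 0$), so the sharpened drift condition on $\widehat{\Psi}$ applies. Combining this with the elementary inequality $((a+b)^+)^2 \leq (a^+)^2 + 2 a^+ b + b^2$ (as was used to prove Lemma~\ref{lem:single-queue-regen} for the single-queue case), applied to $a = \widehat{\Psi}(t) - B - 2\kappa$ and $b = \widehat{\Psi}(t+1) - \widehat{\Psi}(t)$, will yield the drift inequality $\expect{U(t+1)^2 - U(t)^2 \mid \set{F}_t} \leq -2\delta U(t) + 4\kappa^2$.

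To close the argument, I will use this drift inequality together with the decomposition $\widehat{\Psi}(t) \leq (B + 2\kappa) + U(t)$ and the initial condition $U(1) = 0$. Summing the drift and rearranging produces a control of $\expect{U(t)}$ in terms of $\kappa^2/\delta$, which, combined with the offset identity $\expect{\Psi(t)} = \expect{\widehat{\Psi}(t)} + \expect{\sum_{\tau<t} Z(\tau)}$, yields the stated bound with the constants $4\kappa$, $B$, and $2\kappa^2/\delta$ matching exactly.

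The main obstacle lies in the final step: the quadratic drift inequality $\expect{U(t+1)^2 - U(t)^2 \mid \set{F}_t} \leq -2\delta U(t) + 4\kappa^2$ most naturally gives only a time-averaged bound $\frac{1}{t-1}\sum_{\tau<t}\expect{U(\tau)} \leq 2\kappa^2/\delta$ via the supermartingale $M(t) = U(t)^2 + 2\delta\sum_{\tau<t} U(\tau) - 4\kappa^2(t-1)$; converting this into the required pointwise bound on $\expect{U(t)}$ for \emph{every} $t$ is the delicate part. The plan to resolve this is to leverage the fact that $U(t)$ has bounded increments and strict negative drift of $U^2$ above the equilibrium level $\sim 2\kappa^2/\delta$: starting from $U(1) = 0$, an induction on $t$ using the drift inequality plus Jensen's inequality (or, alternatively, a Hajek-style exponential-Lyapunov tail bound) shows that $\expect{U(t)}$ cannot exceed the equilibrium by more than the one-step overshoot $2\kappa$, yielding the bound $\expect{U(t)} \leq 2\kappa + 2\kappa^2/\delta$ uniformly in $t$.
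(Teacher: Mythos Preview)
Your offset reduction to $\widehat{\Psi}(t) = \Psi(t) - \sum_{\tau < t} Z(\tau)$ is exactly what the paper does (the paper calls this process $\Gamma(t)$), with the same verification of bounded differences $\leq 2\kappa$ and drift $\leq -\delta$ whenever $\Gamma(t) \geq B$. At that point the paper simply invokes a black-box result, Lemma~\ref{fact:bound-value-from-drift} from \cite{wei2023constant} (itself resting on a reflected-process construction of \cite{gupta2022greedy}), applied with $K = 2\kappa$, $\eta = \delta$, $D = B$; this immediately gives the pointwise bound $\expect{\Gamma(t)} \leq 4\kappa + B + 2\kappa^2/\delta$, and adding back $\expect{\sum_{\tau<t}Z(\tau)}$ finishes.

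Your proposal instead tries to prove this pointwise bound from scratch via a quadratic Lyapunov on $U(t) = (\widehat{\Psi}(t) - B - 2\kappa)^+$. The drift inequality $\expect{U(t+1)^2 - U(t)^2 \mid \set{F}_t} \leq -2\delta U(t) + 4\kappa^2$ is correct, and you rightly note that it only delivers the time-averaged bound. The gap is in the conversion to a bound on $\expect{U(t)}$ for each fixed $t$. The ``induction plus Jensen'' route you sketch does not close: writing $v_t = \expect{U(t)^2}$ and $m_t = \expect{U(t)}$, the recursion $v_{t+1} \leq v_t - 2\delta m_t + 4\kappa^2$ cannot propagate an invariant like $v_t \leq C^2$ without a \emph{lower} bound on $m_t$, which you do not have (the mass of $U(t)$ can sit at zero while the second moment sits in a heavy tail). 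A Hajek-style exponential Lyapunov would work in principle, but it does not reproduce the stated constants $4\kappa + B + 2\kappa^2/\delta$. Obtaining the pointwise-in-$t$ bound with these constants is precisely the content of the cited Lemma~\ref{fact:bound-value-from-drift}, and its proof requires a more delicate argument than the quadratic-Lyapunov telescoping you outline.
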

Lemma~\ref{lem:bound-from-drift} is similar, and builds upon, the following result by \cite{wei2023constant}:
\begin{lemma}\label{fact:bound-value-from-drift}\cite[Lemma~5]{wei2023constant}
Let $\Gamma(t)$ be an $\{\set{F}_t\}$-adapted process satisfying (i) Bounded difference: $|\Gamma(t+1)| - \Gamma(t) \leq K$, (ii) Expected decrease: $\expect{\Gamma(t+1) - \Gamma(t) \mid \set{F}_t} \leq -\eta$, when $\Gamma(t) \geq D$, and (iii) $\Gamma(0) \leq K+D$,
then we have
$
\expect{\Gamma(t)} \leq K\left(1+\left\lceil\frac{D}{K}\right\rceil\right)+K\left(\frac{K-\eta}{2\eta}\right).
$
\end{lemma}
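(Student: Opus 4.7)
The plan is to establish the bound by a drift argument that exploits the negative-drift condition above threshold $D$ together with the bounded-increment property. The target bound $K(1+\lceil D/K \rceil) + K(K-\eta)/(2\eta)$ decomposes naturally into a ``threshold'' contribution of order $D + O(K)$ (reflecting the initial condition and the fact that the drift only kicks in above $D$) and a ``variance over drift'' contribution of order $K^2/(2\eta)$ that is reminiscent of Kingman-type bounds for the stationary mean of a reflected random walk.

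First, I would induct on $t$ and derive a one-step recursion for $\expect{\Gamma(t)}$. Combining (i) and (ii), for any $t$,
\[
\expect{\Gamma(t+1) - \Gamma(t) \mid \set{F}_t} \leq -\eta\,\indic{\Gamma(t)\geq D} + K\,\indic{\Gamma(t) < D},
\]
so taking expectation yields $\expect{\Gamma(t+1)} \leq \expect{\Gamma(t)} - \eta + (K+\eta)\Pr(\Gamma(t) < D)$. This alone is not enough to bound $\expect{\Gamma(t)}$, since one needs a quantitative handle on $\Pr(\Gamma(t) < D)$.

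To obtain such a handle, I would introduce a quadratic Lyapunov function $V(t) = (\Gamma(t) - D)^2\,\indic{\Gamma(t) \geq D}$ (or a shifted variant such as $((\Gamma(t) - D)^+)^2$) and analyze its drift. In the region $\Gamma(t) \geq D$, a routine expansion using $|\Gamma(t+1) - \Gamma(t)| \leq K$ and $\expect{\Gamma(t+1) - \Gamma(t)\mid\set{F}_t} \leq -\eta$ gives
\[
\expect{V(t+1) - V(t) \mid \set{F}_t} \leq K^2 - 2\eta (\Gamma(t) - D),
\]
so $V$ has strictly negative expected drift whenever $\Gamma(t) - D > K^2/(2\eta)$. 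Telescoping and taking expectation over the horizon, together with the boundary behavior when $\Gamma(t) < D$ (where the process can grow by at most $K$), shows that $\expect{(\Gamma(t) - D)^+}$ is bounded uniformly in $t$ by roughly $K(K-\eta)/(2\eta)$. Combining with the trivial bound $\Gamma(t) \leq D + K$ on the contribution from the below-threshold region, along with the initial condition $\Gamma(0) \leq K + D$ that translates to the $K(1 + \lceil D/K \rceil)$ start-up term (the ceiling accounts for the integer number of $K$-sized jumps needed to reach from $0$ past $D$), yields the claimed bound.

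The main obstacle will be extracting the exact constants rather than just the correct order of magnitude. Several equivalent choices of Lyapunov function (quadratic, piecewise linear, or exponential $e^{\theta \Gamma(t)}$ with $\theta \sim \eta/K^2$) all yield the leading $K^2/(2\eta)$ term, but the precise coefficient $K(K-\eta)/(2\eta)$ requires either a careful optimization over the Taylor-expansion parameter or a sharper use of the one-step drift bound that tracks the $-\eta$ and $K$ jumps symmetrically. I would therefore favor the quadratic approach above, since the $K^2$ comes from the bounded-difference variance and the $-2\eta$ comes directly from the drift hypothesis, keeping the constants transparent, and then handle the integer-rounding $\lceil D/K \rceil$ as a separate bookkeeping step coming from the first passage of $\Gamma$ from its initial value up past $D+K$.
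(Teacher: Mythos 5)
This statement is not proved in the paper at all: it is quoted verbatim from \cite[Lemma~5]{wei2023constant}, and the paper only remarks that its proof (like that of the paper's own Lemma~\ref{lem:bound-from-drift}) rests on the reflected-random-process construction of \cite[Lemma~3]{gupta2022greedy}, i.e., on stochastically dominating $\Gamma(t)$ by (roughly) $D+K$ plus a Lindley-type reflected random walk with increments bounded by $K$ and conditional mean at most $-\eta$, whose expectation at \emph{every} time is at most $K\left(\frac{K-\eta}{2\eta}\right)$. Your quadratic-Lyapunov route is a genuinely different strategy, but as sketched it has a real gap, beyond the constant-chasing issue you flag yourself.

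The gap is in the step ``telescoping \dots shows that $\expect{(\Gamma(t)-D)^+}$ is bounded uniformly in $t$.'' Telescoping the drift of $V(t)=\left((\Gamma(t)-D)^+\right)^2$ gives
$\expect{V(T+1)}-\expect{V(1)} \leq \sum_{t\leq T}\left(cK^2 - 2\eta\,\expect{(\Gamma(t)-D)^+}\right)$,
which controls only the \emph{time average} $\frac{1}{T}\sum_{t\leq T}\expect{(\Gamma(t)-D)^+}$, whereas the lemma asserts a bound on $\expect{\Gamma(t)}$ at every fixed $t$. You cannot close this by induction on $\expect{V(t)}$ either: the one-step inequality $\expect{V(t+1)} \leq \expect{V(t)} + cK^2 - 2\eta\,\expect{(\Gamma(t)-D)^+}$ would need a \emph{lower} bound on $\expect{(\Gamma(t)-D)^+}$ in terms of $\expect{V(t)}$ to produce a self-improving recursion, and Jensen gives the inequality in the wrong direction. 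Making the argument per-period requires either a Hajek-style drift theorem (exponential Lyapunov function, which loses the stated constant) or precisely the pathwise domination by a reflected random walk used in \cite{gupta2022greedy,wei2023constant}, where the bound $K\left(\frac{K-\eta}{2\eta}\right)$ falls out of a Kingman-type estimate for the reflected walk and the term $K\left(1+\left\lceil\frac{D}{K}\right\rceil\right)$ accounts for the threshold and the initial condition. So the decomposition you aim for is the right shape, but the mechanism you propose does not deliver a uniform-in-$t$ bound, which is the entire content of the lemma.
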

Comparing our lemma to that of \cite{wei2023constant}, the main difference is that their version effectively fixes $Z(t)=0$, whereas our version  allows non-zero and time-varying $Z(t)$. This is crucial to incorporate the effect of learning. We note that both our proof and the one of \cite[Lemma~5]{wei2023constant} build on an elegant construction in \cite[Lemma~3]{gupta2022greedy} for reflected random processes.

\begin{proof}[Proof of Lemma~\ref{lem:bound-from-drift}.]
Similar to the proof of \cite[Proposition~4]{wei2023constant}, we define a sequence $\Gamma(t)$ such that $\Gamma(t) = \Psi(t)-\sum_{\tau=1}^{t-1}Z(\tau)$ for $t \geq 1$ and let $\Gamma(0) = 0$. For every $t$, we have
\[
|\Gamma(t+1)-\Gamma(t)| = |\Psi(t+1)-\Psi(t)-Z(t)| \leq |\Psi(t+1)-\Psi(t)|+Z(t) \leq 2\kappa,
\]
where the last inequality is by the assumptions of bounded difference and bounded $\{Z(t)\}$. Given that $Z(t) \geq 0$, if $\Gamma(t) \geq B$, we also have $\Psi(t) = \Gamma(t)+\sum_{\tau=1}^{t-1}Z(\tau) \geq B$ and hence
\begin{align*}
\expect{\Gamma(t+1)-\Gamma(t) \mid \set{F}_t} = \expect{\Psi(t+1)-\Psi(t)-Z(t) \mid \set{F}_t} &= \expect{\Psi(t+1)-\Psi(t) \mid \set{F}_t}-Z(t) \\
&\leq -\delta+Z(t)-Z(t) = -\delta,
\end{align*}
where the inequality is by the assumption of drift bound and that $\Psi(t) \geq B$ whenever $\Gamma(t) \geq B$.
Applying Lemma~\ref{fact:bound-value-from-drift} to $\{\Gamma(t)\}$ for every $t$ and setting $K = 2\kappa, \eta = \delta, D = B$, we have
\[
\expect{\Gamma(t)} \leq 2\kappa\left(1+\left\lceil\frac{B}{2\kappa}\right\rceil\right)+\frac{(2\kappa)^2}{2\delta}\leq 4\kappa+B+\frac{2\kappa^2}{\delta}.
\]
Therefore,
\[
\expect{\Psi(t)} = \expect{\Gamma(t)+\sum_{\tau=1}^{t-1}Z(\tau)} \leq 4\kappa+B+\frac{2\kappa^2}{\delta}+\expect{\sum_{\tau=1}^{t-1}Z(\tau)}.
\]
\end{proof}

The proof of Lemma~\ref{lem:multi-learning} combines Lemmas~\ref{lem:multi-drift-norm2},~\ref{lem:multi-bound-delta} and ~\ref{lem:bound-from-drift}.
\begin{proof}[Proof of Lemma~\ref{lem:multi-learning}]
We apply Lemma~\ref{lem:bound-from-drift} to the Lyapunov function $\{\varphi(t)\}$ and let $Z(t) = (\Delta(t)-\frac{\varepsilon}{2})^+$. To verify the conditions in Lemma~\ref{lem:bound-from-drift}, note that $\{\varphi(t)\}$ and $\{Z(t)\}$ are non-negative with $\varphi(1) = 0$ and $\{\set{F}_t\}$-adapted where $\set{F}_t$ is the $\sigma-$field generated by the history before period $t$. Using Lemma~\ref{lem:multi-drift-norm2}, we set $\kappa = \sqrt{\Marr + \Mmulti^2}, \delta = \frac{\varepsilon}{4}$ and $B = \frac{4(\Marr+2\Mmulti^2)}{\varepsilon}$ to satisfy the assumptions of bounded difference and drift bound. By Lemma~\ref{lem:multi-bound-delta}, we also have $Z(t) \leq \Delta(t) \leq \Mmulti \leq \kappa$. Applying Lemma~\ref{lem:bound-from-drift} to all $\varphi(t)$ with $t \leq T$ and taking average, we have
\begin{align*}
\frac{\sum_{t=1}^T \expect{\varphi(t)}}{T} &\leq 4\kappa+B+\frac{2\kappa^2}{\delta}+\expect{\sum_{t=1}^{T-1} Z(t)} \\
&=4\sqrt{\Marr + \Mmulti^2}+\frac{4(\Marr+2\Mmulti^2)}{\varepsilon} + \frac{8(\Marr+\Mmulti^2)}{\varepsilon}+\expect{\sum_{t=1}^{T-1} (\Delta(t)-\frac{\varepsilon}{2})^+} \\
&\leq \frac{16\Marr + 20\Mmulti^2}{\varepsilon} + \expect{\sar^{\multi}(\pi,T)},
\end{align*}
which completes the proof.
\end{proof}

\subsection{Queue length bound in the regenerate stage (Lemma~\ref{lem:multi-regenerate})}
The proof is similar to that of Lemma~\ref{lem:single-queue-regen} for the single-queue case. The difference is that with multiple queues, we use another Lyapunov function $V(t) = \sum_{n \in \set{N}} Q_n^2(t)$ and need the drift result of $\bolds{\sigma}(t)$ that we derived in Lemma~\ref{lem:multi-bound-weight}. We first show the following bound, similar to Lemma \ref{lem:single-connect-max-sum}, between the maximum (across time) total (across queues) queue length and the cumulative total queue length for a fixed horizon.
\begin{lemma}\label{lem:multi-max-total}
For any horizon $T$ and every sample path, we have $\sum_{t=1}^T \|\bolds{Q}(t)\|_1 \geq \frac{\left(\max_{t=1}^T \|\bolds{Q}(t)\|_1\right)^2}{4\Marr}$.
\end{lemma}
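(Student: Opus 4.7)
The plan is to mirror the proof of Lemma~\ref{lem:single-connect-max-sum}, generalized to account for the fact that the total queue length $\|\bolds{Q}(t)\|_1$ can now jump by up to $\Marr$ per period rather than by $1$. Let $Q_{\max} = \max_{t \leq T} \|\bolds{Q}(t)\|_1$, attained at some period $t_{\max}$. If $Q_{\max} = 0$, the inequality is trivial, so assume $Q_{\max} \geq 1$.

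First I would establish the per-period increment bound. Since the system is multi-queue multi-server (jobs leave upon successful service, with no transitions between queues), $\|\bolds{Q}(t+1)\|_1 - \|\bolds{Q}(t)\|_1 = \sum_{n \in \set{N}} A_n(t) - \sum_{n \in \set{N}}\sum_{k \in \set{B}_n} \sigma_k(t)S_k(t) \leq \sum_{n \in \set{N}} A_n(t) \leq \Marr$ by definition of $\Marr$. Iterating backwards from $t_{\max}$ yields $\|\bolds{Q}(t_{\max}-i)\|_1 \geq Q_{\max} - i\Marr$ for every $i \geq 0$ with $t_{\max}-i \geq 1$. Since $\|\bolds{Q}(1)\|_1 = 0$, the same per-period bound applied forward from period $1$ gives $Q_{\max} \leq (t_{\max}-1)\Marr$, hence $t_{\max} \geq Q_{\max}/\Marr + 1 \geq \lfloor Q_{\max}/(2\Marr) \rfloor + 1$, so the backward indices used below all lie in $\{1,\ldots,T\}$.

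Next I would restrict to the indices $i = 0, 1, \ldots, \lfloor Q_{\max}/(2\Marr) \rfloor$, on each of which $\|\bolds{Q}(t_{\max}-i)\|_1 \geq Q_{\max} - i\Marr \geq Q_{\max}/2$. Summing these lower bounds (and dropping the remaining non-negative terms from the full sum) gives
\[
\sum_{t=1}^T \|\bolds{Q}(t)\|_1 \;\geq\; \Bigl(\lfloor Q_{\max}/(2\Marr) \rfloor + 1\Bigr)\cdot \frac{Q_{\max}}{2} \;\geq\; \frac{Q_{\max}}{2\Marr}\cdot \frac{Q_{\max}}{2} \;=\; \frac{Q_{\max}^2}{4\Marr},
\]
where the second inequality uses $\lfloor x \rfloor + 1 \geq x$. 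This establishes the claim.

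The argument is essentially routine once the per-period $\ell_1$-increment bound is identified as the right analogue of the single-queue ``changes by at most $1$'' observation; the only minor subtlety is ensuring there is enough room before $t_{\max}$ to go back $\lfloor Q_{\max}/(2\Marr) \rfloor$ periods, which is handled by the fact that the queues start empty in period $1$.
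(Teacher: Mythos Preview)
Your proof is correct and follows essentially the same approach as the paper: both establish the per-period increment bound $\|\bolds{Q}(t+1)\|_1 \leq \|\bolds{Q}(t)\|_1 + \Marr$ and then sum the resulting backward lower bounds from the maximizing period. The only cosmetic difference is that the paper splits into cases $Q_{\max}\leq 2\Marr$ versus $Q_{\max}>2\Marr$ and sums the full arithmetic progression, whereas you avoid the case split by summing fewer terms ($\lfloor Q_{\max}/(2\Marr)\rfloor+1$ of them) each bounded below by $Q_{\max}/2$.
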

\begin{proof}
Since $\sum_{n \in \set{N}} A_n(t) \leq \Marr$, the maximum increase in all queues' combined lengths in a period is $\sum_{n \in \set{N}} (Q_n(t)+A_n(t)) - \sum_{n \in \set{N}} Q_n(t) \leq \Marr$. That is, for any  $t$, $\|\bolds{Q}(t+1)\|_1 \leq \|\bolds{Q}(t)\|_1 + \Marr$. 

Now fix a horizon $T$. Let $t_{\max} \in \arg\max_{t \leq T} \|\bolds{Q}(t)\|_1$ and denote $Q_{\max} = \|\bolds{Q}(t_{\max})\|_1$. Then $t_{\max} \geq \lceil \frac{Q_{\max}}{\Marr}\rceil$. If $Q_{\max} \leq 2\Marr$, we have
$
\sum_{t=1}^T \|\bolds{Q}(t)\|_1 \geq Q_{\max} \geq \frac{Q_{\max}^2}{4\Marr}.
$
Otherwise,
\begin{align*}
\sum_{t = 1}^T \|\bolds{Q}(t)\|_1 \geq \sum_{t = t_{\max} - \lfloor \frac{Q_{\max}}{\Marr} \rfloor + 1}^{t_{\max}} \|\bolds{Q}(t)\|_1 &\geq \sum_{i=0}^{\lfloor \frac{Q_{\max}}{\Marr} \rfloor-1} \left(Q_{\max} - i\Marr\right) \\
&\geq Q_{\max}\left\lfloor \frac{Q_{\max}}{\Marr} \right\rfloor - \frac{\Marr}{2}\left\lfloor \frac{Q_{\max}}{\Marr} \right\rfloor\left(\left\lfloor \frac{Q_{\max}}{\Marr} \right\rfloor-1\right) \\
&\geq Q_{\max}\left(\frac{Q_{\max}}{\Marr} - 1\right)-\frac{\Marr}{2}\frac{Q_{\max}}{\Marr}\left(\frac{Q_{\max}}{\Marr} - 1\right) \\
&= \frac{Q_{\max}}{2}\left(\frac{Q_{\max}}{\Marr}-1\right) \geq \frac{Q_{\max}^2}{4\Marr},
\end{align*}
which completes the proof.
\end{proof}
\begin{proof}[Proof of Lemma~\ref{lem:multi-regenerate}]
For a period $t$, recall that the Lyapunov function $V(t) = \sum_{n \in \set{N}} Q_n^2(t)$ and that the filtration $\set{F}_t$ is the $\sigma-$field generated by the history before period $t$. Condition on $\set{F}_t$, the drift of this Lyapunov function is upper bounded by
\begin{align*}
\expect{V(t+1)-V(t) \mid \set{F}_t} &= \expect{\sum_{n \in \set{N}} (Q_n^2(t+1)-Q_n^2(t)) \mid \set{F}_t}\\
&=  \expect{\sum_{n \in \set{N}} \left(\left(Q_n(t)+A_n(t)-\sum_{k \in \set{B}_n} \sigma_k(t)S_k(t)\right)^2-Q_n^2(t)\right)\mid \set{F}_t} 
\end{align*}
\begin{align}
&\hspace{-0.4in}\leq \expect{\sum_{n \in \set{N}} \left(A^2_n(t)+\left(\sum_{k\in \set{B}_n} \sigma_k(t)\right)^2 + 2\left(A_n(t)-\sum_{k\in\set{B}_n}\sigma_k(t)S_k(t)\right)Q_n(t)\right) \mid \set{F}_t} \nonumber\\
&-\expect{\sum_{n \in \set{N}} 2A_n(t)\sum_{k \in \set{B}_n}\sigma_k(t)S_k(t) \mid \set{F}_t} \nonumber\\
&\leq \Marr + \Mmulti^2 + 2\sum_{n \in \set{N}} \left(\lambda_n - \sum_{k \in \set{B}_n}\sigma_k(t)\mu_k\right)Q_n(t) \label{eq:sqr-drift-simplify}\\
&\leq \Marr + 3\Mmulti^2 -2\varepsilon\sum_{n \in \set{N}} Q_n(t) + 2\Delta(t)\|\bolds{Q}(t)\|_{\infty} \label{eq:multi-sqre-drift}
\end{align}
where the second to last inequality is because $\bolds{\sigma}(t),\bolds{Q}(t) \in \set{F}_t$ and $\{A_n(t)\}_{n \in \set{N}}, \{S_k(t)\}_{k \in \set{K}}$ are independent of $\set{F}_t$; the last inequality uses Lemma~\ref{lem:multi-bound-weight}.

For fixed $T$ we take expectation on both sides of \eqref{eq:multi-sqre-drift} and telescope across $t \leq T$ to obtain
\begin{align}
0 \leq \expect{V(T+1)} &\leq T(\Marr + 3\Mmulti^2)-2\varepsilon\expect{\sum_{t=1}^T \|\bolds{Q}(t)\|_1}+2\expect{\sum_{t=1}^T\Delta(t)\|\bolds{Q}(t)\|_{\infty}} \nonumber\\
&\leq T(\Marr + 3\Mmulti^2)-\varepsilon\expect{\sum_{t=1}^T \|\bolds{Q}(t)\|_1}+2\expect{\sum_{t=1}^T(\Delta(t)-\frac{\varepsilon}{2})^+\|\bolds{Q}(t)\|_{1}}.\label{eq:multi-sqre-drift-decompose}
\end{align}
To complete the proof, we use the following claim (proof follows below):
\begin{claim}\label{claim:sample-path-bound}
$-\frac{\varepsilon}{2}\expect{\sum_{t=1}^T \|\bolds{Q}(t)\|_1}+2\expect{\sum_{t=1}^T(\Delta(t)-\frac{\varepsilon}{2})^+\|\bolds{Q}(t)\|_{1}} \leq \frac{8\Marr\sar^{\multi}(\pi,T)^2}{\varepsilon}$.
\end{claim}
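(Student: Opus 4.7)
The plan is to mimic the analogous step in the proof of Lemma~\ref{lem:single-queue-regen} from the single-queue setting, using Lemma~\ref{lem:multi-max-total} as the multi-queue replacement of Lemma~\ref{lem:single-connect-max-sum}. I would first establish the inequality pathwise, on every realization, and only then take expectations.

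Fix a sample path and set $M_T = \max_{t \leq T}\|\bolds{Q}(t)\|_1$. Since $\|\bolds{Q}(t)\|_1 \leq M_T$ for every $t \leq T$, pulling this maximum out of the second sum gives
\[
2\sum_{t=1}^T\!\left(\Delta(t)-\tfrac{\varepsilon}{2}\right)^{+}\!\|\bolds{Q}(t)\|_1 \;\leq\; 2\,M_T\sum_{t=1}^T\!\left(\Delta(t)-\tfrac{\varepsilon}{2}\right)^{+} \;=\; 2\,M_T\cdot \sar^{\multi}(\pi,T),
\]
where the equality is the definition of satisficing regret. For the first term, Lemma~\ref{lem:multi-max-total} yields $\sum_{t=1}^T\|\bolds{Q}(t)\|_1 \geq M_T^{2}/(4\Marr)$, so
\[
-\tfrac{\varepsilon}{2}\sum_{t=1}^T\|\bolds{Q}(t)\|_1 \;\leq\; -\tfrac{\varepsilon\, M_T^{2}}{8\Marr}.
\]
Adding the two bounds collapses the whole expression into a scalar quadratic in $M_T$:
\[
-\tfrac{\varepsilon}{2}\sum_{t=1}^T\|\bolds{Q}(t)\|_1+2\sum_{t=1}^T\!\left(\Delta(t)-\tfrac{\varepsilon}{2}\right)^{+}\!\|\bolds{Q}(t)\|_1 \;\leq\; -\tfrac{\varepsilon}{8\Marr}M_T^{2} + 2\,\sar^{\multi}(\pi,T)\,M_T.
\]

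Maximizing the right-hand side over $M_T \geq 0$ via the elementary identity $\max_x (-a x^2 + b x) = b^{2}/(4a)$ (with $a=\varepsilon/(8\Marr)$, $b=2\,\sar^{\multi}(\pi,T)$) gives the pathwise bound $8\Marr\,\sar^{\multi}(\pi,T)^{2}/\varepsilon$. Taking expectations on both sides then delivers the claim.

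I do not anticipate a genuine obstacle here: the argument is a routine adaptation of the single-queue computation in the proof of Lemma~\ref{lem:single-queue-regen}, with the only real substantive change being the multi-queue variant of the ``max vs.\ sum'' inequality supplied by Lemma~\ref{lem:multi-max-total}, which is what introduces the factor $4\Marr$ (and hence the $8\Marr$ on the right-hand side). The one place to be careful is the replacement of $\indic{Q(t)\geq 1}$ in the single-queue satisficing regret by the implicit factor $\|\bolds{Q}(t)\|_1$ here: the calculation above uses only that $(\Delta(t)-\varepsilon/2)^{+}\|\bolds{Q}(t)\|_1 \leq (\Delta(t)-\varepsilon/2)^{+} M_T$ pathwise, so the matching never requires a per-period emptiness indicator.
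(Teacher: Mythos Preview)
Your proposal is correct and follows essentially the same route as the paper: bound the second sum by $2M_T\,\sar^{\multi}(\pi,T)$, apply Lemma~\ref{lem:multi-max-total} to replace $-\tfrac{\varepsilon}{2}\sum_t\|\bolds{Q}(t)\|_1$ by $-\tfrac{\varepsilon}{8\Marr}M_T^2$, and then optimize the resulting quadratic in $M_T$ via $\max_x(-ax^2+bx)=b^2/(4a)$. The paper presents exactly this sample-path argument (and, as you note, the expectation on the right-hand side appears after taking expectations, which is how the claim is actually used in \eqref{eq:multi-sqre-drift-decompose}).
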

\noindent Using Claim~\ref{claim:sample-path-bound} in \eqref{eq:multi-sqre-drift-decompose} gives $0 \leq T(\Marr + 3\Mmulti^2)-\frac{\varepsilon\expect{\sum_{t=1}^T \|\bolds{Q}(t)\|_1}}{2} + \frac{8\Marr\expect{\sar^{\multi}(\pi,T)^2}}{\varepsilon}$. Therefore,
\[
\frac{\expect{\sum_{t=1}^T \|\bolds{Q}(t)\|_1}}{T} \leq \frac{2\Marr+ 6\Mmulti^2}{\varepsilon} + \frac{16\Marr \expect{\sar^{\multi}(\pi,T)^2}}{\varepsilon^2 T}.
\]
\end{proof}
\begin{proof}[Proof of Claim~\ref{claim:sample-path-bound}]
We prove the claim by
\begin{align*}
-\frac{\varepsilon}{2}\sum_{t=1}^T \|\bolds{Q}(t)\|_1+2\sum_{t=1}^T(\Delta(t)-\frac{\varepsilon}{2})^+\|\bolds{Q}(t)\|_{1}&\leq-\frac{\varepsilon}{2}\sum_{t=1}^T \|\bolds{Q}(t)\|_1 + 2\sar^{\multi}(\pi,T)\max_{t\leq T}\|\bolds{Q}(t)\|_1 \nonumber\\
&\hspace{-2in}\leq \frac{-\varepsilon}{8\Marr}\left(\max_{t\leq T}\|\bolds{Q}(t)\|_1\right)^2 + 2\sar^{\multi}(\pi,T)\max_{t\leq T}\|\bolds{Q}(t)\|_1 \leq \frac{8\Marr\sar^{\multi}(\pi,T)^2}{\varepsilon},
\end{align*}
where the second inequality is by Lemma~\ref{lem:multi-max-total} and the last inequality uses $\max_x -ax^2+bx=\frac{b^2}{4a}$ for any $a>0$. 
\end{proof}

\section{Optimal transient cost of learning for queueing networks}\label{sec:optimal-network}
% !TEX root = main.tex
This section introduces a policy that learns to schedule a queueing network near-optimally. Section~\ref{sec:network-model} formally defines our model of  queueing networks which generalizes multi-queue multi-server system by allowing jobs to probabilistically transition to queues after obtaining service. Section~\ref{sec:network-res} gives the \textsc{BP-UCB} algorithm with near-optimal transient cost of learning in queueing networks.

\subsection{A model of queueing networks}\label{sec:network-model}
Our model of queueing networks extends that in Section~\ref{sec:multi-model} by allowing \emph{job transitions} and captures the canonical network model in \cite{tassiulas1992stability} (see Appendix~\ref{app:examples-network}). Specifically, we define a queueing network as a tuple $(\set{N},\set{K},\bolds{\Lambda},\bolds{\mu}, \bolds{\set{A}}, \bolds{\Sigma}, \bolds{\set{B}}, \bolds{\set{D}}, \bm{P})$, where $\bolds{\set{B}}= \{\set{B}_n\}_{n \in \set{N}}$, $\bolds{\set{D}} = \{\set{D}_k\}_{k \in \set{K}}$ and $\bm{P} = (p_{k,n})_{k \in \set{K},n\in \set{N} \cup \{\perp\}}$ such that $\sum_{n \in \set{N} \cup \{\perp\}} p_{k,n} = 1, \forall k$. In contrast to the previous setting, there is also a virtual queue $\perp$ to which jobs transition upon leaving the system. Each server $k\in \set{B}_n$ has a set of destination queues $\set{D}_k$, which can include the virtual queue $\perp$. As in the  previous setting, the DM selects a feasible schedule $\bolds{\sigma}(t) \in \bolds{\Sigma}_t$ (defined in \eqref{eq:feasible-schedule-set}) for a period~$t$.

The queueing dynamic is similar to that of multi-queue multi-server systems, but includes a new transition component. Formally, let $\bolds{L}_k(t) = (L_{k,n}(t))_{n \in \set{N} \cup \{\perp\}}$ be a random vector over $\{0,1\}^{\set{N} \cup \{\perp\}}$ for server $k$ independent of other randomness such that $\Pr\{L_{k,n}(t)=1\} = p_{k,n}$ and $\sum_{n \in \set{N} \cup \{\perp\}} L_{k,n}(t) = 1$. The queue length of each queue evolves by
\begin{equation}\label{eq:dynamic-network}
Q_n(t+1) = Q_n(t) - \sum_{k \in \set{B}_n} \sigma_k(t)S_k(t) + A_n(t) + \sum_{k' \in \set{K}} \sigma_{k'}(t)S_{k'}(t)L_{k',n}(t). 
\end{equation}
As before, the DM knows $\bolds{\Sigma}$ and $\bolds{\set{B}}$, but not the rates $\bolds{\lambda},\bolds{\mu},\bm{P}$ and the set $\bolds{\set{D}}$. In period~$t$, the observed history is the set $\left(\{A_n(\tau)\}_{n \in \set{N}}, \{S_k(\tau)L_{k,n}(\tau)\}_{n \in \bar{\set{N}},k \in \set{K} \colon \sigma_k(\tau) = 1}\right)_{\tau < t}$ that includes transition information on top of arrivals and services. Note that a job transition is only observed when the server is selected and the service is successful. A non-anticipatory policy $\pi$ maps an observed history to a feasible schedule.

We define the traffic slackness of a queueing network similar to the previous setting. Letting $\bolds{\Phi}$ be the probability simplex over $\bolds{\Sigma}$ and $\bolds{\phi} \in \bolds{\Phi}$ be a distribution over feasible schedules, the effective service rate queue $n$ can get under $\bolds{\phi}$ is $\mu_n^{\net}(\bolds{\phi}) = \sum_{\bolds{\sigma} \in \bolds{\Sigma}} \bolds{\phi}_{\bolds{\sigma}}\left(\sum_{k \in \set{B}_n} \sigma_k\mu_k - \sum_{k' \in \set{K}} \sigma_{k'}\mu_{k'}p_{k',n}\right)$;  this includes both job inflow and outflow. With $\bolds{\mu}^{\net}(\bolds{\phi})$ denoting the vector of effective service rates, the capacity region is $\set{S}(\bolds{\mu},\bolds{\Sigma},\bolds{\set{B}}, \bm{P}) = \{\bolds{\mu}^{\net}(\bolds{\phi}) \colon \bolds{\phi} \in \bolds{\Phi}\}$. As in the previous setting, we introduce $\mathscr{E} = \{\varepsilon \in (0,1] : \bolds{\lambda}(\bolds{\Lambda}) + \varepsilon \bm{1} \in \set{S}(\bolds{\mu},\bolds{\Sigma},\bolds{\set{B}}, \bm{P})\}$ to define the traffic slackness.
\begin{definition}\label{def:network-slackness}
The traffic slackness of a queueing network is defined by $\varepsilon = \max \{\varepsilon' \in \mathscr{E}\}.$
\end{definition}
As in the multi-queue multi-server setting, we define $\colq$ for  queueing networks by comparing the expected time-averaged queue length of a policy with that of any non-anticipatory policy:
\begin{equation}\label{def:colq-network}
\colq^{\net}(\bolds{\Lambda},\bolds{\mu}, \bolds{\Sigma}, \bolds{\set{B}}, \bolds{\set{D}}, \bm{P}, \pi) = \max_{\text{non-anticipatory }\pi'}\max_{T \geq 1} \frac{\sum_{t=1}^T\sum_{n \in \set{N}}\expect{Q_n(t,\pi) - Q_n(t,\pi')}}{T}.
\end{equation}
Accordingly, the worst-case transient cost of learning $\colq^{\net}(\bolds{\set{A}},\bolds{\Sigma}, \bolds{\set{B}},\bolds{\set{D}},\varepsilon, \pi)$ takes the supremum of $\colq^{\net}$ across any arrival, service, and transition rates with the given traffic slackness~$\varepsilon$.

\subsection{Algorithm and main results}\label{sec:network-res}
Even with known parameters, it is known that the \textsc{MaxWeight} policy upon which $\textsc{MW-UCB}$ in Section~\ref{sec:optimal-multi} is developed can fail to stabilize a queueing network with positive traffic slackness \cite{BramsonDW21}. As a result, we develop the $\textsc{BackPressure-UCB}$ algorithm (\textsc{BP-UCB}) to augment the classical $\textsc{BackPressure}$ algorithm (\textsc{BP})\cite{tassiulas1992stability}.Assuming knowledge of parameters, \text{BP} selects a schedule 
\begin{equation}\label{eq:rule-bp}
\bolds{\sigma}^{\BP}(t) \in \arg\max_{\bolds{\sigma} \in \bolds{\Sigma}_t} \sum_{n \in \set{N}} Q_n(t)\left(\sum_{k \in \set{B}_n} \sigma_k \mu_k - \sum_{k' \in \set{K}}\sigma_{k'}\mu_{k'}p_{k',n}\right).
\end{equation}
The term $\sigma_{k'}\mu_{k'}p_{k',n}$ denotes the probability that a job from the queue that server $k'$ belongs to will transit to queue $n$: if $\sigma_{k'}=1$, then the service is successful with probability $\mu_{k'}$ and the job transitions to $n$ with probability $p_{k',n}$. Comparing the weight function of $\text{BP}$ to that of $\textsc{MaxWeight}$, this additional term penalizes job transitions into long queues. It is shown in \cite{tassiulas1992stability} that \textsc{BP} stabilizes a queueing network with traffic slackness $\varepsilon > 0$ (Definition~\ref{def:network-slackness}) and it is known that the time-averaged queue length is $O(\frac{1}{\varepsilon})$ \cite[lemma 4.1]{georgiadis2006resource}. 
It is evident that \textsc{BP} requires knowledge of system parameters. To address the need of learning, we can maintain a upper confidence weight of a schedule like in \textsc{MW-UCB} (Algorithm~\ref{algo:mw-ucb}). Intuitively, one may want to do so by replacing $\mu_k$ by its UCB, the $\mu_{k'}$ and $p_{k',n}$ by associated lower confidence bounds (LCB) in \eqref{eq:rule-bp}. This approach would require collecting $L_{k',n}(t)$ to make the estimation of $p_{k',n}$ better over time. However, the sampling rate of $L_{k',n}(t)$ depends on $\mu_{k'}$: when $\mu_{k'} = 0$, the DM observes no job transitions from server $k'$ (since no job can receive successful service). To bypass this pitfall, we instead collect samples of $R_{k',n}(t) = S_{k'}(t)L_{k',n}(t)$, which is equal to $1$ if there is a job transition. With $r_{k',n} = \mu_{k'}p_{k',n}$, the independence of $S_{k'}(t)$ and $L_{k',n}(t)$, implies that $\expect{R_{k',n}(t)} = r_{k',n}$.

Our algorithm \textsc{BP-UCB} then works as follows. We set $\bar{\mu}_k(t) = \min\left(1,\hat{\mu}_k(t)+\sqrt{\frac{2\ln t}{C_k(t)}}\right)$ where $\hat{\mu}_k(t)$ is the sample mean of service $S_k$ and $C_k(t)$ is the number of periods in which server $k$ is selected. In addition, since $R_{k,n}$ is observable whenever $k$ is selected, we also maintain the sample mean of $R_{k,n}$, denoted $\hat{r}_{k,n}(t)$, and set a LCB estimation $\ubar{r}_{k,n}(t) = \max\left(0,\hat{r}_{k,n}(t)-\sqrt{\frac{2\ln t}{C_k(t)}}\right)$.  If $n \not \in \set{D}_k$, i.e., when $n$ is not a destination queue of server $k$, we always have $\hat{r}_{k,n}(t)$, and thus $\ubar{r}_{k,n}(t) = 0$. In period $t$, the algorithm selects a schedule 
\begin{equation}\label{eq:rule-bp-ucb}
\bolds{\sigma}(t) \in \arg\max_{\bolds{\sigma} \in \bolds{\Sigma}_t} \sum_{n \in \set{N}} Q_n(t)\left(\sum_{k \in \set{B}_n} \sigma_k \bar{\mu}_k(t) - \sum_{k' \in \set{K}}\sigma_{k'}\ubar{r}_{k',n}(t)\right).
\end{equation}
\textsc{BP-UCB} (Algorithm~\ref{algo:bp-ucb}) does not need knowledge of destination queues $\{\set{D}_k\}_{k \in \set{K}}$.

% !TEX root = algo-ucb.tex
\begin{algorithm}[H]
\LinesNumbered
\DontPrintSemicolon
\caption{\textsc{BP-UCB} policy for a queueing network
\label{algo:bp-ucb}
}
\SetKwInOut{Input}{input}\SetKwInOut{Output}{output}
\Input{set of arrivals $\bolds{\set{A}}$, set of schedules $\bolds{\Sigma}$, set of servers belonging to queues $\{\set{B}_n\}_{n \in \set{N}}$}
Sample mean $\hat{\mu}_{k}(1) \gets 0, \hat{r}_{k,n}(1) \gets 0$, number of samples $C_k(1) \gets 0$ for $k \in \set{K}, n \in \set{N}$\;
\For{$t = 1\ldots$}{
    $\bar{\mu}_{k}(t) = 
    \min\left(1, \hat{\mu}_{k}(t) + \sqrt{\frac{2\ln(t)}{C_k(t)}}\right), \forall k \in \set{K}$\label{line:bp-ucb-esti}\;
    $\ubar{r}_{k,n}(t) = 
    \max\left(0, \hat{r}_{k}(t) -\sqrt{\frac{2\ln(t)}{C_k(t)}}\right), \forall k \in \set{K}, n \in \set{N}$\label{line:bp-lcb-esti}\;
    \tcc{select a max-weight feasible schedule based on UCB estimation}
    $\bolds{\Sigma}_t = \{\sigma \in \bolds{\Sigma} \colon \sum_{k \in \set{B}_n} \sigma_k \leq Q_n(t), \forall n \in \set{N}\}$ \\
    $
\sigma(t) \in \arg\max_{\sigma \in \bolds{\Sigma}_t} \sum_{n \in \set{N}} Q_n(t)\left(\sum_{k \in \set{B}_n} \sigma_k \bar{\mu}_k(t) - \sum_{k' \in \set{K}}\sigma_{k'}\ubar{r}_{k',n}(t)\right)$ \label{line:bp-ucb-choice}\\
   \tcc{Update $\bolds{Q}(t+1)$~\& estimates via $\{A_n(t), \sigma_k(t)S_k(t),\sigma_k(t)R_{k,n}(t)\}_{k \in \set{K},n \in \set{N}}$}
   $Q_n(t+1) \gets Q_n(t) - \sum_{k \in \set{B}_n} \sigma_k(t)S_k(t)+A_n(t)+\sum_{k'\in \set{K}}\sigma_{k'}(t)R_{k',n}(t), \forall n\in \set{N}$ \\
    $C_{k}(t+1) \gets C_{k}(t)+1, \hat{\mu}_{k}(t+1) \gets \frac{C_{k}(t)\hat{\mu}_{k}(t)+S_{k}(t)}{C_{k}(t+1)}, \hat{r}_{k,n}(t+1) \gets \frac{C_{k}(t)\hat{r}_{k,n}(t)+R_{k,n}(t)}{C_{k}(t+1)}\forall k \in \set{K}, n \in \set{N},\sigma_k(t)=1$
}
\end{algorithm}

Fixing the system structure $\bolds{\set{A}}, \bolds{\Sigma}, \bolds{\set{B}}, \bolds{\set{D}}$, {recall that} $\Marr = \max_{\bolds{A}\in \bolds{\set{A}}} \sum_{n \in \set{N}} A_n$ and $\Mmulti = \max_{\bolds{\sigma} \in \bolds{\Sigma}}\sum_{n\in\set{N}} \sum_{k \in \set{B}_n} \sigma_k$ {are} the maximum number of new job arrivals and the maximum number of selected servers per period. {Let} $\Mdep = \sum_{k \in \set{K}} |\set{D}_k|^2$ where we recall that $\set{D}_k$ is the set of queues server $k$ may see its jobs transition to. The following result shows that the worst-case cost of learning of {\textsc{BP-UCB}} has optimal dependence on $\frac{1}{\varepsilon}$.
\begin{theorem}\label{thm:bp-ucb-col}
For any $\bolds{\set{A}},\bolds{\Sigma},\bolds{\set{B}},\bolds{\set{D}}$ and traffic slackness $\varepsilon \in (0,1]$, we have
\[
\colq^{\net}(\bolds{\set{A}},\bolds{\Sigma},\bolds{\set{B}},\bolds{\set{D}},\varepsilon,\textsc{BP-UCB}) \leq \frac{\sqrt{N}\left(32\Marr+2^{12}\Mdep\Mmulti^2\left(1 + \ln(\Marr\Mdep\Mmulti/\varepsilon)\right)\right)}{\varepsilon}.
\]
\end{theorem}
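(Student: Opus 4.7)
The plan is to mirror the unified Lyapunov-plus-bandit framework that the paper advertises for its single-queue and multi-queue results, and to extend it so that it also absorbs the two additional sources of uncertainty present in a network: unknown transition probabilities $\bm{P}$ and unknown destination sets $\bolds{\set{D}}$. The rough target is to show that a quadratic potential has negative drift up to a learning error that is controlled by UCB-style confidence widths, and then convert this drift inequality into a time-averaged queue length bound.

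Concretely, I would first introduce the potential $V(\bolds{Q}) = \tfrac{1}{2}\|\bolds{Q}\|_2^2$ and compute $\Delta V(t) = \expect{V(\bolds{Q}(t+1)) - V(\bolds{Q}(t)) \mid \set{F}_t}$ using the dynamic in \eqref{eq:dynamic-network}. A standard expansion yields a bounded zeroth-order term (of order $\Marr + \Mmulti$, because at most that many jobs arrive/depart/move per period), a first-order term $-\sum_n Q_n(t)\,\mu^{\net}_n(\bolds{\phi}(t))$ where $\bolds{\phi}(t)$ is the schedule chosen by \textsc{BP-UCB}, and cross terms from transitions whose size is governed by $\Mdep$. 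Since \textsc{BP-UCB} picks the schedule that maximizes the optimistic backpressure $\sum_n Q_n \,\hat{\mu}^{\net}_n$, comparing against the hypothetical maximizer on the true parameters gives
\begin{equation*}
-\sum_n Q_n(t)\mu^{\net}_n(\bolds{\phi}(t)) \;\leq\; -\max_{\bolds{\phi}\in\bolds{\Phi}} \sum_n Q_n(t)\mu^{\net}_n(\bolds{\phi}) \;+\; 2\sum_n Q_n(t)\,W_n(t),
\end{equation*}
where $W_n(t)$ is the aggregate confidence width for the effective service rate at queue $n$. The first term is at most $-\varepsilon\|\bolds{Q}(t)\|_1$ by Definition~\ref{def:network-slackness} (choose $\bolds{\phi}^\star$ realizing $\bolds{\lambda}+\varepsilon\bm{1}$ inside $\set{S}$ and use $\sum_n Q_n \lambda_n \leq \sum_n Q_n \mu^{\net}_n(\bolds{\phi}^\star)-\varepsilon\|\bolds{Q}\|_1$), so combining with the arrival contribution $\sum_n Q_n\lambda_n$ I obtain a drift inequality of the form
\begin{equation*}
\Delta V(t) \;\leq\; C_0 \;-\; \varepsilon\|\bolds{Q}(t)\|_1 \;+\; 2\sum_n Q_n(t)\,W_n(t),
\end{equation*}
with $C_0 = O(\Mdep\Mmulti^2+\Marr)$ capturing both scheduling and transition jitter.

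The main work is then to handle the learning error $\sum_n Q_n(t)W_n(t)$. Following the coupling idea the paper attributes to \cite{gupta2022greedy,wei2023constant}, I would bound $W_n(t)$ in terms of the number of past schedules that could have revealed information about the service and transition parameters contributing to queue $n$. Since each selection of server $k$ provides one sample of its Bernoulli service and one sample of the categorical transition over $\set{D}_k$, standard UCB confidence widths give $W_n(t) \lesssim \sum_{k\in\set{B}_n\cup\{k'\colon n\in\set{D}_{k'}\}} \sqrt{\ln t / N_k(t)}\cdot(1+|\set{D}_k|)$, and Cauchy-Schwarz plus the pigeonhole-style bound $\sum_{t\leq T}\sum_k \sigma_k(t)/\sqrt{N_k(t)} \leq O(\sqrt{\Mmulti\, K T})$ transforms $\sum_{t\leq T}\sum_n Q_n(t)W_n(t)$ into a scalar quantity resembling satisficing regret weighted by the queue norm. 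The factor $\sqrt{N}$ in the theorem will emerge from a Cauchy-Schwarz step $\sum_n Q_n(t)\le\sqrt{N}\|\bolds{Q}(t)\|_2$.

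With the drift inequality in hand, I would sum over $t\le T$, take expectations, and use $\expect{V(\bolds{Q}(T+1))}\ge 0$ to arrive at
\begin{equation*}
\frac{\varepsilon}{T}\sum_{t=1}^T\expect{\|\bolds{Q}(t)\|_1}\;\leq\;C_0 \;+\; \frac{2}{T}\sum_{t=1}^T \expect{\sum_n Q_n(t)W_n(t)},
\end{equation*}
and then split the horizon into a learning stage (short, where the satisficing-regret-style bound is summed directly and contributes the $\ln(\Marr\Mdep\Mmulti/\varepsilon)$ logarithmic factor) and a regenerate stage (long, where the negative drift from $\varepsilon\|\bolds{Q}\|_1$ dominates and forces queue lengths down). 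Taking the worst $T$ and comparing with any non-anticipatory benchmark (using that the benchmark queue length must itself be nonnegative, so the increase in \eqref{def:colq-network} is bounded by the absolute time-averaged norm of \textsc{BP-UCB}) produces the stated $\tilde{O}(\sqrt{N}\,\Mdep\Mmulti^2/\varepsilon)$ bound, with the constants $32$ and $2^{12}$ tracked through the chain of inequalities.

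The hard part will be the drift/coupling step in the presence of transitions: unlike the multi-queue case analyzed via \textsc{MaxWeight-UCB}, both $\bm{P}$ and $\set{D}_k$ are unknown, so the confidence width $W_n(t)$ must simultaneously bound service-rate error and transition-probability error, and a job served at queue $n'$ but routed to $n$ contributes to the drift of $Q_n$ in a way that is only revealed after the service is successful. Managing this coupled estimation—so that the aggregate width still collapses into a scalar satisficing regret with dependence only $\Mdep\Mmulti^2$—is the main technical obstacle, and it is where the factor $\Mdep$ in the final bound will be absorbed.
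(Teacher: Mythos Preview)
Your high-level skeleton (quadratic Lyapunov drift, optimism comparison, time-splitting into learning and regenerate stages) matches the paper's architecture, but two concrete ingredients are missing and without them the argument does not close at the claimed $\tilde O(1/\varepsilon)$ rate.

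First, the quadratic potential $V=\tfrac12\|\bolds{Q}\|_2^2$ alone cannot deliver the learning-stage bound. Your drift inequality leaves the learning error in the form $\sum_n Q_n(t)W_n(t)$, i.e.\ weighted by queue lengths that are themselves unbounded during learning. Summing over $t\le T$ and dividing by $\varepsilon T$ gives, at best, a term of order $(\max_t\|\bolds{Q}(t)\|_1)\cdot\mathrm{regret}/(\varepsilon T)$; for small $T$ this is $\Omega(1/\varepsilon^2)$, not $\tilde O(1/\varepsilon)$. The paper resolves this by running a \emph{second} Lyapunov analysis on $\varphi(t)=\|\bolds{Q}(t)\|_2$ (not its square): the drift of $\varphi$ is bounded by $-\varepsilon/4+(\Delta^{\net}(t)-\varepsilon/2)^+$ once $\varphi(t)$ is moderately large, with \emph{no} queue-length prefactor. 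A separate random-walk lemma (Lemma~\ref{lem:bound-from-drift}) then yields $\expect{\|\bolds{Q}(t)\|_2}\le O(1/\varepsilon)+\expect{\sar^{\net}}$ directly, and the $\sqrt N$ appears only when passing from $\|\cdot\|_2$ to $\|\cdot\|_1$. Your proposal never introduces this linear Lyapunov function, so the learning-stage half of the time split has no valid bound.

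Second, your treatment of the learning error via the pigeonhole estimate $\sum_{t,k}\sigma_k(t)/\sqrt{N_k(t)}=O(\sqrt{\Mmulti K T})$ yields an instance-independent $\sqrt{T}$-type regret. The paper explicitly observes (discussion after Lemma~\ref{lem:single-queue-regen}) that this is fatal: with $\mathrm{regret}\sim\sqrt{T}$, the regenerate-stage term $\mathrm{regret}^2/(\varepsilon^2 T)$ never decays. What is needed is the \emph{satisficing} regret $\sar^{\net}(\pi,T)=\sum_t(\Delta^{\net}(t)-\varepsilon/2)^+$, where $\Delta^{\net}(t)$ is the weight suboptimality normalized by $\|\bolds{Q}(t)\|_\infty$; the $\varepsilon/2$ threshold is what turns the bound into $O(\Mdep\Mmulti^2\ln T/\varepsilon)$ rather than $O(\sqrt{T})$. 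For the regenerate stage, the paper then decouples the queue-weighted error by splitting $-\varepsilon\|\bolds{Q}\|_1$ into two halves, bounding $\sum_t\|\bolds{Q}(t)\|_1\ge(\max_t\|\bolds{Q}(t)\|_1)^2/(4\Marr)$, and completing the square in $\max_t\|\bolds{Q}(t)\|_1$ to obtain the $\sar^2/(\varepsilon T)$ term. None of these three moves---the $\varepsilon/2$ threshold, the max-versus-sum inequality, or the quadratic completion---appears in your sketch, and each is load-bearing.
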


 We fix a particular setting of system parameters $\bolds{\Lambda},\bolds{\mu}, \bm{P}$ such that the traffic slackness condition (Definition~\ref{def:network-slackness}) holds for a fixed $\varepsilon > 0$. We aim to upper bound the time-averaged queue length of this system under an arbitrary policy $\pi$. The main difference to multi-queue multi-server systems is that we need to accommodate job transitions in the drift analysis and define the satisficing regret in terms of a new weight function. Define the weight of a schedule $\bolds{\sigma} \in \bolds{\Sigma}$ in period $t$ by 
\begin{equation}
W^{\net}_{\bolds{\sigma}}(t) = \sum_{n \in \set{N}} Q_n(t)\left(\sum_{k \in \set{B}_n} \sigma_k \mu_k - \sum_{k' \in \set{K}} \sigma_{k'}\mu_{k'}p_{k',n}\right).
\end{equation}
Similar to \eqref{eq:def-multi-loss}, we define the loss of choosing a schedule $\bolds{\sigma}(t)$ compared with the $\BP$ schedule by
\begin{equation}\label{eq:def-network-loss}
\Delta^{\net}(t) = \left\{
\begin{aligned}
&0,~\text{if~}\|\bolds{Q}(t)\|_{\infty}=0\\
&\frac{W^{\net}_{\bolds{\sigma}^{\BP}(t)}(t) - W^{\net}_{\bolds{\sigma}(t)}(t)}{\|\bolds{Q}(t)\|_{\infty}},~\text{otherwise}
\end{aligned}
\right.
\end{equation}
and the satisficing regret by
$
\sar^{\net}(\pi,T) = \sum_{t=1}^T \left(\Delta^{\net}(t)-\frac{\varepsilon}{2}\right)^+
$for a policy $\pi$ over the first $T$ periods. The following lemma, similar to Lemma~\ref{lem:multi-learning}, bounds the time-averaged queue length through the expected satisficing regret. 
\begin{lemma}\label{lem:network-learning}
For any policy $\pi$ and horizon $T$, $\frac{\sum_{t=1}^T \expect{\|\bolds{Q}(t)\|_2}}{T} \leq \frac{32\Marr+64\Mmulti^2}{\varepsilon}+\expect{\sar^{\net}(\pi, T)}.$
\end{lemma}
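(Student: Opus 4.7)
The plan is to mirror the proof of Lemma~\ref{lem:multi-learning} with the Lyapunov function $\varphi(t)=\|\bolds{Q}(t)\|_2$ and to invoke Lemma~\ref{lem:bound-from-drift} with $Z(t)=(\Delta^{\net}(t)-\frac{\varepsilon}{2})^+$. The main modification is that the queueing dynamic \eqref{eq:dynamic-network} now contains a third (transition) term, which affects both the bounded-difference estimate of $\varphi(t+1)-\varphi(t)$ and the one-step conditional drift. All structural steps carry over, but each constant roughly doubles to absorb the extra transition contribution, which is what produces the $32$ and $64$ in the target bound.

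First I would prove the network analog of Lemma~\ref{lem:multi-bound-weight}: for every period $t$,
\[
\sum_{n\in\set{N}}\lambda_n Q_n(t)-W^{\net}_{\bolds{\sigma}(t)}(t)\;\leq\;-\varepsilon\|\bolds{Q}(t)\|_1+C_1\Mmulti^2+\Delta^{\net}(t)\|\bolds{Q}(t)\|_\infty.
\]
Traffic slackness (Definition~\ref{def:network-slackness}) gives a distribution $\bolds{\phi}$ over $\bolds{\Sigma}$ with $\sum_n(\lambda_n+\varepsilon)Q_n(t)\leq \max_{\bolds{\sigma}\in\bolds{\Sigma}}W^{\net}_{\bolds{\sigma}}(t)=W^{\net}_{\tilde{\bolds{\sigma}}(t)}(t)$. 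To pass from the possibly-infeasible $\tilde{\bolds{\sigma}}(t)$ to the feasible \textsc{BP} schedule $\bolds{\sigma}^{\BP}(t)$, I would construct, exactly as in Lemma~\ref{lem:multi-bound-weight}, a trimmed schedule $\bolds{\alpha}\in\bolds{\Sigma}_t$ by zeroing out entries whose source queue has fewer jobs than the schedule demands; the net loss in $W^{\net}$ from dropping a server $k\in\set{B}_n$ is bounded by $Q_n(t)\mu_k+\sum_{n'}Q_{n'}(t)\mu_kp_{k,n'}\leq 2\Mmulti^2$ in aggregate (both serving and transition contributions). This yields $W^{\net}_{\bolds{\sigma}^{\BP}(t)}(t)\geq W^{\net}_{\tilde{\bolds{\sigma}}(t)}(t)-C_1\Mmulti^2$, and the definition of $\Delta^{\net}(t)$ in \eqref{eq:def-network-loss} then closes the inequality.

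Next I would establish the network analog of Lemma~\ref{lem:multi-drift-norm2}. For the bounded-difference, the per-period change in queue $n$ is $A_n(t)-\sum_{k\in\set{B}_n}\sigma_k(t)S_k(t)+\sum_{k'\in\set{K}}\sigma_{k'}(t)S_{k'}(t)L_{k',n}(t)$, and summing squares yields $\|\bolds{Q}(t+1)-\bolds{Q}(t)\|_2^2\leq C_2(\Marr+\Mmulti^2)$ because $\sum_n(\sum_{k'}\sigma_{k'}L_{k',n})\leq\Mmulti$ coordinates sum to at most $\Mmulti^2$. For the drift I would apply Fact~\ref{fact:diff-norm2} to $\varphi(t)=\|\bolds{Q}(t)\|_2$, take conditional expectations, and use the identity $\bolds{Q}(t)\cdot\expect{\bolds{Q}(t+1)-\bolds{Q}(t)\mid\set{F}_t}=\sum_n\lambda_nQ_n(t)-W^{\net}_{\bolds{\sigma}(t)}(t)$; the previous lemma and division by $\|\bolds{Q}(t)\|_2\geq\|\bolds{Q}(t)\|_\infty$ produce a bound of the form $-\varepsilon+\Delta^{\net}(t)+O(\Marr+\Mmulti^2)/\varphi(t)$, which becomes $-\frac{\varepsilon}{4}+(\Delta^{\net}(t)-\frac{\varepsilon}{2})^+$ once $\varphi(t)\geq B:=\frac{C_3(\Marr+\Mmulti^2)}{\varepsilon}$. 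A short calculation also gives $\Delta^{\net}(t)\leq C_4\Mmulti$, extending Lemma~\ref{lem:multi-bound-delta} (both the service term and the transition term of $W^{\net}$ are individually at most $\Mmulti\|\bolds{Q}(t)\|_\infty$).

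With these pieces, I would invoke Lemma~\ref{lem:bound-from-drift} with $\Psi(t)=\varphi(t)$, $Z(t)=(\Delta^{\net}(t)-\frac{\varepsilon}{2})^+$, $\kappa=\sqrt{C_2(\Marr+\Mmulti^2)}$, $\delta=\frac{\varepsilon}{4}$, and $B$ as above, then average over $t\leq T$ exactly as in the proof of Lemma~\ref{lem:multi-learning}. Tracking the constants carefully delivers the $(32\Marr+64\Mmulti^2)/\varepsilon$ headline term plus $\expect{\sar^{\net}(\pi,T)}$. The main obstacle is the transition bookkeeping: unlike the multi-queue case, dropping a server in the trimming step changes $W^{\net}$ at both the source queue and every destination queue, and the $\bolds{L}_{k'}$ randomness enters both the bounded-difference and the weight-gap arguments, so one has to choose bounds carefully enough that a single $\Mmulti^2$ (and not $\Mmulti^2 N$) penalty suffices in each lemma. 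Once this is handled, the remaining work is essentially a transcription of the multi-queue proof with updated constants.
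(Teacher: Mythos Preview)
Your proposal is correct and follows essentially the same route as the paper: prove the network analogs of Lemmas~\ref{lem:multi-bound-weight}, \ref{lem:multi-drift-norm2}, and \ref{lem:multi-bound-delta}, then feed them into Lemma~\ref{lem:bound-from-drift} with $\Psi(t)=\varphi(t)$ and $Z(t)=(\Delta^{\net}(t)-\tfrac{\varepsilon}{2})^+$. One small refinement you may not have noticed: in the trimming step the transition term in $W^{\net}$ is \emph{subtracted}, so zeroing a server $k$ actually \emph{raises} the weight by $\sum_{n'}Q_{n'}(t)\mu_k p_{k,n'}$ rather than lowering it; the paper exploits this sign to keep the penalty at $\Mmulti^2$ (not $2\Mmulti^2$), which is why the ``main obstacle'' you flag turns out to be easier than in your sketch.
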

The proof of Lemma~\ref{lem:network-learning} follows a similar argument to that of Lemma~\ref{lem:multi-learning}: we again study the drift of the Lyapunov function $\varphi(t) = \|\bolds{Q}(t)\|_2$, though the drift analysis is more complicated due to job transitions. We then use Lemma~\ref{lem:bound-from-drift} to establish an upper bound on the time-average of $\varphi(t)$. A full proof is given in Appendix~\ref{app:network-learning}.

As in Sections \ref{sec:optimal-single} and \ref{sec:optimal-multi}, the bound in Lemma~\ref{lem:network-learning} is not useful for the regenerate stage, requiring a lemma that bounds the time-average queue length with the second moment of the satisficing regret divided by the horizon length. We show the following counterpart of Lemma~\ref{lem:multi-regenerate}.
\begin{lemma}\label{lem:network-regenerate}
For any policy $\pi$ and horizon $T$, $\frac{\sum_{t=1}^T \expect{\|\bolds{Q}(t)\|_1}}{T} \leq \frac{4\Marr+10\Mmulti^2}{\varepsilon}+\frac{16\Marr}{\varepsilon^2}\frac{\expect{\sar^{\net}(\pi, T)^2}}{T}.$
\end{lemma}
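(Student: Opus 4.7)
The plan is to mirror the Lyapunov argument used in the proof of Lemma~\ref{lem:multi-regenerate}, using the same quadratic function $V(t) = \|\bolds{Q}(t)\|_2^2$ but adapting the drift analysis to account for the job-transition inflow term $\sum_{k'}\sigma_{k'}(t)S_{k'}(t)L_{k',n}(t)$ that appears in the queueing-network dynamics~\eqref{eq:dynamic-network}. The \textsc{BackPressure} weight $W^{\net}_{\bolds{\sigma}}(t)$ is designed precisely so that, under the traffic slackness $\bolds{\lambda}+\varepsilon\bm{1}\in\set{S}(\bolds{\mu},\bolds{\Sigma},\bolds{\set{B}},\bm{P})$, the expected linear part of $V(t+1)-V(t)$ has the same sign and structure as in the multi-server case.

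For the drift, I would expand $V(t+1)-V(t) = \sum_n X_n(t)^2 + 2\sum_n Q_n(t)X_n(t)$, where $X_n(t) = A_n(t) + B_n(t) - C_n(t)$ with $B_n(t) = \sum_{k'}\sigma_{k'}(t)S_{k'}(t)L_{k',n}(t)$ and $C_n(t) = \sum_{k\in\set{B}_n}\sigma_k(t)S_k(t)$ denoting the non-negative transition-inflow and service-outflow. Because $|X_n|\leq\max(A_n+B_n, C_n)$, the sharp inequality $X_n^2 \leq (A_n+B_n)^2 + C_n^2 \leq 2A_n^2 + 2B_n^2 + C_n^2$, combined with $\sum_n A_n^2 \leq \Marr$, $\sum_n C_n^2 \leq \Mmulti^2$, and $\sum_n B_n^2 \leq \Mmulti^2$ (the last using $\sum_{n\in\set{N}}L_{k',n}(t)\leq 1$), yields $\expect{\sum_n X_n(t)^2\mid\set{F}_t}\leq 2\Marr+3\Mmulti^2$. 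Independence of arrivals, services, and transitions from $\set{F}_t$ makes the linear part equal in conditional expectation to $2(\sum_n\lambda_n Q_n(t) - W^{\net}_{\bolds{\sigma}(t)}(t))$. I would then establish the network analog of Lemma~\ref{lem:multi-bound-weight}, namely $\sum_n\lambda_n Q_n(t) - W^{\net}_{\bolds{\sigma}(t)}(t) \leq -\varepsilon\|\bolds{Q}(t)\|_1 + \Mmulti^2 + \Delta^{\net}(t)\|\bolds{Q}(t)\|_\infty$, by combining traffic slackness with the same feasibility-correction argument: zeroing out servers of over-capacity queues in a weight-maximizing $\tilde{\bolds{\sigma}}$ loses at most $\Mmulti^2$ in $W^{\net}$, since each server belongs to one queue and the saved transition-subtraction terms only help. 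Combining gives $\expect{V(t+1)-V(t)\mid\set{F}_t} \leq 2\Marr + 5\Mmulti^2 - 2\varepsilon\|\bolds{Q}(t)\|_1 + 2\Delta^{\net}(t)\|\bolds{Q}(t)\|_\infty$.

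Telescoping from $t=1$ to $T$ using $V(1)=0$ and $\expect{V(T+1)}\geq 0$, splitting $2\Delta^{\net}(t)\|\bolds{Q}(t)\|_\infty \leq \varepsilon\|\bolds{Q}(t)\|_1 + 2(\Delta^{\net}(t)-\varepsilon/2)^+\|\bolds{Q}(t)\|_1$, and invoking the network counterpart of Claim~\ref{claim:sample-path-bound} then finishes the proof. The analog of Lemma~\ref{lem:multi-max-total} extends here because $\|\bolds{Q}(t+1)\|_1-\|\bolds{Q}(t)\|_1\leq\sum_n A_n(t)\leq\Marr$ (since $\sum_{n\in\set{N}}L_{k',n}(t)\leq 1$ implies transitions cannot create new jobs), so the identity $\max_x(-ax^2+bx) = b^2/(4a)$ produces the sample-path bound $-\tfrac{\varepsilon}{2}\sum_t\|\bolds{Q}(t)\|_1 + 2\sar^{\net}(\pi,T)\max_t\|\bolds{Q}(t)\|_1 \leq \tfrac{8\Marr\,\sar^{\net}(\pi,T)^2}{\varepsilon}$; rearranging and dividing by $T$ yields the claim. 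The principal technical issue is securing sharp numerical constants: the coarser bound $(A_n+B_n-C_n)^2\leq 3(A_n^2+B_n^2+C_n^2)$ gives a drift constant strictly larger than what the target $4\Marr+10\Mmulti^2$ in the statement permits after multiplication by $2/\varepsilon$, and it is precisely the observation $|X_n|\leq\max(A_n+B_n,C_n)$ — that inflow and outflow cannot simultaneously be large in magnitude — which saves the necessary constant factor.
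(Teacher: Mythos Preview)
Your proposal is correct and follows essentially the same route as the paper's proof: the same quadratic Lyapunov function, the same drift bound $\expect{V(t+1)-V(t)\mid\set{F}_t}\leq 2\Marr+5\Mmulti^2-2\varepsilon\|\bolds{Q}(t)\|_1+2\Delta^{\net}(t)\|\bolds{Q}(t)\|_\infty$ obtained via the network analog of Lemma~\ref{lem:multi-bound-weight}, the same telescoping and $\varepsilon$-splitting, and the same sample-path maximization using the fact that $\|\bolds{Q}(t+1)\|_1-\|\bolds{Q}(t)\|_1\leq\Marr$. Your observation that $X_n^2\leq(A_n+B_n)^2+C_n^2$ (equivalently, that the cross term $-2C_n(A_n+B_n)$ is nonpositive) is exactly what the paper uses to obtain $\|\bolds{Q}(t+1)-\bolds{Q}(t)\|_2^2\leq 2\Marr+3\Mmulti^2$, and your remark that the cruder $3(A_n^2+B_n^2+C_n^2)$ bound would overshoot the stated constants is on point.
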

The proof of Lemma~\ref{lem:network-regenerate} relies on a drift analysis of the Lyapunov function $V(t) = \sum_{n \in \set{N}} Q_n^2(t)$, which expands the proof of Lemma~\ref{lem:multi-regenerate} by considering job transitions. We refer readers to Appendix~\ref{app:network-regenerate} for the full proof.

Finally, similar to Lemma~\ref{lem:sar-mw-ucb}, we bound the satisficing regret of $\textsc{BP-UCB}$. In contrast to the previous two lemmas, this bound has an explicit dependence on the transition structure $\{\set{D}_k\}_{k \in \set{K}}$. \begin{lemma}\label{lem:sar-bp-ucb}
For any horizon $T$,
\begin{align*}
    \expect{\sar^{\net}(\textsc{BP-UCB},T)} &\leq \frac{128\Mdep\Mmulti^2(\ln T+1)}{\varepsilon} \\
    \expect{\sar^{\net}(\textsc{BP-UCB},T)^2} &\leq \frac{2^{15}\Mdep^2\Mmulti^4(\ln T+1)^2}{\varepsilon^2}.
\end{align*}
\end{lemma}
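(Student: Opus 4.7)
The plan is to extend the proof of Lemma~\ref{lem:sar-mw-ucb} to the queueing-network setting, which requires simultaneously controlling the UCB of $\mu_k$ and the LCB of the product rate $r_{k,n}=\mu_k p_{k,n}$. I define the good event $\set{G}_t^{\net}$ on which $|\hat{\mu}_k(t)-\mu_k|\le \Delta_k(t)$ for every $k\in\set{K}$ and $|\hat{r}_{k,n}(t)-r_{k,n}|\le \Delta_k(t)$ for every $k\in\set{K},\,n\in\set{D}_k$, where $\Delta_k(t)=\sqrt{2\ln t/C_k(t)}$; pairs with $n\notin\set{D}_k$ can be ignored since then $R_{k,n}(\tau)=0$ deterministically and both estimates equal $0$. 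Applying Hoeffding's inequality and union-bounding over $C_k(t)\in\{0,\ldots,t-1\}$ and the $K+\sum_k|\set{D}_k|\le 2\Mdep$ relevant $(k,n)$ pairs yields $\Pr\{(\set{G}_t^{\net})^c\}\le 4\Mdep t^{-3}$.

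On $\set{G}_t^{\net}$, $\bar{\mu}_k(t)\ge\mu_k$ and $\ubar{r}_{k,n}(t)\le r_{k,n}$ make the BP-UCB weight an upper envelope of $W^{\net}$. Combining this with the optimality of $\bolds{\sigma}(t)$ under the estimated weight (the objective in \eqref{eq:rule-bp-ucb}) gives
\begin{equation*}
W^{\net}_{\bolds{\sigma}^{\BP}(t)}(t)-W^{\net}_{\bolds{\sigma}(t)}(t)\le 2\sum_n Q_n(t)\sum_{k\in\set{B}_n}\sigma_k(t)\Delta_k(t) + 2\sum_n Q_n(t)\sum_{k'\in\set{K}}\sigma_{k'}(t)\Delta_{k'}(t)\indic{n\in\set{D}_{k'}}.
\end{equation*}
Swapping the order of summation in the second term, using $\sum_{n\in\set{D}_{k'}}Q_n(t)\le|\set{D}_{k'}|\,\|\bolds{Q}(t)\|_\infty$ and the fact that each server belongs to a single queue, I divide by $\|\bolds{Q}(t)\|_\infty$ to obtain the pointwise bound $\Delta^{\net}(t)\le 2\sum_k\sigma_k(t)\Delta_k(t)(1+|\set{D}_k|)$. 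A pigeonhole over the at most $\Mmulti$ active servers then produces some $k$ with $\sigma_k(t)=1$ and $C_k(t)\le 8\Mmulti^2(1+|\set{D}_k|)^2\ln(t)/\Delta^{\net}(t)^2$. Running the server-by-server reordering argument of Lemma~\ref{lem:multi-sar-path-bound} with $\Mmulti^2$ replaced by $\Mmulti^2(1+|\set{D}_k|)^2$ for each $k$ and summing using $\sum_k(1+|\set{D}_k|)^2\le 4\sum_k|\set{D}_k|^2=4\Mdep$ (valid because $|\set{D}_k|\geq 1$) produces the sample-path bound $\sum_{t=1}^T(\Delta^{\net}(t)-\tfrac{\varepsilon}{2})^+\indic{\set{G}_t^{\net}}\le \tfrac{128\Mdep\Mmulti^2(\ln T+0.5)}{\varepsilon}$.

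Off the good event I use the deterministic bound $\Delta^{\net}(t)\le O(\Mmulti)$ (since $|W^{\net}_{\bolds{\sigma}}(t)|\le 2\Mmulti\|\bolds{Q}(t)\|_\infty$ for any feasible schedule) together with $\expect{\sum_t\indic{(\set{G}_t^{\net})^c}}\le 8\Mdep$, which contributes only a lower-order $O(\Mdep\Mmulti)$ that is absorbed into the $\ln T+1$ factor; this completes the first-moment bound. The second moment follows by squaring the same decomposition and using $(\sum_t\indic{(\set{G}_t^{\net})^c})^2\le 2\sum_t t\indic{(\set{G}_t^{\net})^c}$, which together with $\Pr\{(\set{G}_t^{\net})^c\}\le 4\Mdep t^{-3}$ keeps the cross-term summable, yielding the claimed $\tilde{O}(\Mdep^2\Mmulti^4/\varepsilon^2)$ bound. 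The main obstacle is preserving the dependence on the transition structure at $\Mdep$ rather than the naive $N K$ or $K^2$: this is why the confidence bound must be placed directly on the product $r_{k,n}$ (separately estimating $p_{k,n}$ would introduce an uncontrolled $1/\mu_k$ factor) and why the weight-error decomposition must be carried out server-by-server so that each $k$ contributes only through $(1+|\set{D}_k|)^2$.
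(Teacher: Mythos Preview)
Your proposal is correct and follows essentially the same approach as the paper: define a concentration event on both $\hat{\mu}_k$ and $\hat{r}_{k,n}$, show it fails with probability $O(\Mdep t^{-3})$, use the optimism of the estimated weight to bound $\Delta^{\net}(t)$ by a sum of confidence radii weighted by $|\set{D}_k|$, pigeonhole to a single under-sampled server, and apply the reordering argument of Lemma~\ref{lem:multi-sar-path-bound}. The paper phrases the good event as $\{\Delta^{\net}(t)\le 4\sum_k\sigma_k(t)\Delta_k(t)|\set{D}_k|\}$ and separately proves it contains your concentration event $\set{C}_t$, whereas you work with $\set{C}_t$ directly and carry the factor $2(1+|\set{D}_k|)$ instead of $4|\set{D}_k|$; since $1+|\set{D}_k|\le 2|\set{D}_k|$ these are equivalent and lead to the same constants.
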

The proof of Lemma~\ref{lem:sar-bp-ucb} has a similar structure as that of Lemma~\ref{lem:sar-mw-ucb}. The additional complexity comes from estimating $r_{k,n} = \mu_k p_{k,n}$, the probability a job transitions to queue $n$ given that server $k$ is selected. See Appendix~\ref{app:sar-bp-ucb} for the detailed proof.

The proof of Theorem~\ref{thm:bp-ucb-col} combines Lemmas~\ref{lem:network-learning}, ~\ref{lem:network-regenerate}, and~\ref{lem:sar-bp-ucb}.

\begin{proof}[Proof of Theorem~\ref{thm:bp-ucb-col}]
Fix the system structure $\bolds{\set{A}}, \bolds{\Sigma}, \bolds{\set{B}}, \bolds{\set{D}}$ and the traffic slackness $\varepsilon > 0$. Let $\bolds{\Lambda},\bolds{\mu}, \bm{P}$ be any tuple of an arrival probability distribution over $\bolds{\set{A}}$, a service rate vector, and a transition probability matrix such that $\bolds{\lambda} + \varepsilon \bm{1} \in \set{S}^{\net}(\bolds{\mu}, \bolds{\Sigma},\bolds{\set{B}}, \bm{P})$ and that $\bm{P}$ has positive probability from server $k$ to queue $n$ only if $n \in \set{D}_k$. We next upper bound $\colq^{\net}(\bolds{\Lambda},\bolds{\mu},\bolds{\Sigma},\bolds{\set{B}}, \bolds{\set{D}},\bm{P},\textsc{BP-UCB})$ by bounding the maximum time-averaged queue length over all periods.

Define $T_1 = \left\lfloor\left(\frac{2^{19}\Marr\Mdep^2\Mmulti^4}{\varepsilon^4}\right)^2\right\rfloor$. We first bound the time-averaged queue length for the first $T$ periods with $T \leq T_1$ via Lemma~\ref{lem:network-learning} by
\begin{align}
\frac{\sum_{t=1}^T \expect{\|\bolds{Q}(t)\|_2}}{T} &\leq \frac{32\Marr + 64\Mmulti^2}{\varepsilon} + \expect{\sar^{\net}(\textsc{MW-UCB},T} \tag{Lemma~\ref{lem:network-learning}} \\
&\leq \frac{32\Marr + 64\Mmulti^2}{\varepsilon} + \frac{128\Mdep\Mmulti^2(\ln T+1)}{\varepsilon} \tag{Lemma~\ref{lem:sar-bp-ucb}}\\
&\leq \frac{32\Marr+192\Mdep\Mmulti^2}{\varepsilon}+\frac{256\Mdep\Mmulti^2\ln\left(2^{19}\Marr\Mdep^2\Mmulti^4/\varepsilon^4\right)}{\varepsilon} \tag{$T \leq T_1$} \\
&\leq \frac{32\Marr+192\Mdep\Mmulti^2}{\varepsilon}+\frac{256\Mdep\Mmulti^2\left(\ln(2^{19})+4\ln(\Marr\Mdep\Mmulti/\varepsilon)\right)}{\varepsilon} \nonumber\\
& \leq \frac{32\Marr + 2^{12}\Mdep\Mmulti^2(1+\ln(\Marr\Mdep\Mmulti/\varepsilon))}{\varepsilon}.\nonumber
\end{align}
Using the fact that $\|\bolds{Q}(t)\|_1 \leq \sqrt{N}\|\bolds{Q}(t)\|_2$ for every period $t$ gives for every $T \leq T_1$,
\[
\frac{\sum_{t=1}^T \expect{\|\bolds{Q}(t)\|_1}}{T} \leq \frac{\sqrt{N}\left(32\Marr + 2^{12}\Mdep\Mmulti^2(1+\ln(\Marr\Mdep\Mmulti/\varepsilon))\right)}{\varepsilon}.
\]
For $T > T_1$, applying Lemma~\ref{lem:network-regenerate} gives
\begin{align*}
\frac{\sum_{t=1}^T \expect{\|\bolds{Q}(t)\|_1}}{T} &\leq \frac{4\Marr+10\Mmulti^2}{\varepsilon}+\frac{16\Marr}{\varepsilon^2}\frac{\expect{\sar^{\net}(\textsc{BP-UCB},T)^2}}{T} \tag{Lemma~\ref{lem:network-regenerate}} \\
&\leq \frac{4\Marr+10\Mmulti^2}{\varepsilon}+\frac{2^{19}\Marr\Mdep^2\Mmulti^4(\ln T+1)^2}{\varepsilon^4 T}  \tag{Lemma~\ref{lem:sar-bp-ucb}}\\
&\leq \frac{4\Marr+10\Mmulti^2}{\varepsilon} +\frac{2^{19}\Marr\Mdep^2\Mmulti^4}{\varepsilon^4 \sqrt{T}} \tag{Fact~\ref{fact:lnt-sqrt-prop} and $T > T_1 \geq 50000$} \\
&\leq \frac{4\Marr+10\Mmulti^2}{\varepsilon} + 1 \tag{$T > T_1$ implies $\sqrt{T} \geq \frac{2^{19}\Marr\Mdep^2\Mmulti^4}{\varepsilon^4}$}.
\end{align*}
Summarizing the two cases for $T \leq T_1$ and $T > T_1$ then gives 
\begin{align*}
\colq^{\net}(\bolds{\Lambda},\bolds{\mu},\bolds{\Sigma},\bolds{\set{B}}, \bolds{\set{D}},\bm{P},\textsc{BP-UCB}) &\overset{\eqref{def:colq-network}}{\leq} \max_{T} \frac{\sum_{t=1}^T \expect{\|\bolds{Q}(t)\|_1}}{T} \\
&\leq \frac{\sqrt{N}\left(32\Marr + 2^{12}\Mdep\Mmulti^2(1+\ln(\Marr\Mdep\Mmulti/\varepsilon))\right)}{\varepsilon},
\end{align*}
which finishes the proof by the definition of worst-case $\colq$.
\end{proof}

\section{Numerics}
% !TEX root = main.tex
In this section we complement our theoretical findings through numerical results. Our goal is two-fold: to compare our UCB-based algorithms with others suggested in the literature and to evaluate the tightness of our bounds where our analysis involves large constants. We divide our analysis into one part on the single-queue multi-server setting and one on the multi-queue multi-server setting.

\subsection{Single-queue multi-server setting}
For the single-queue setting (Sections~\ref{sec:model-single} and \ref{sec:optimal-single}), we aim to (i) evaluate both queue length and time-averaged queue length of different algorithms and (ii) explore the scaling of $K$ and $\varepsilon$ in our lower and upper bounds on $\colq$ (which involve large constants in Theorems~\ref{thm:colq-lowerbound} and \ref{thm:colq-ucb-single}). In all simulations below, we consider a time horizon with $T = 10^7$ periods and approximate expectation of a random variable by the sample average of $30$ independent sample paths.

\paragraph{Comparison between algorithms.}
Based on the setting from Figure~\ref{fig:col-motivation}, we numerically compare the transient performance of UCB (Algorithm~\ref{algo:single-ucb}), Q-UCB (Algorithm~\ref{algo:q-ucb}) proposed in \cite{KrishnasamySJS21}, and the algorithm in \cite[figure 7]{StahlbuhkSM21}. Our analysis established that UCB enjoys a $\colq$ bound of $O(\frac{K(\ln K+\ln(1/\varepsilon))}{\varepsilon})$ (Theorem~\ref{thm:colq-ucb-single}); Q-UCB has a comparable scaling but with a larger constant (Proposition~\ref{prop:colq-qucb} in Appendix~\ref{app:qucb}). In Appendix~\ref{app:ssm} we show that the upper bound in \cite{StahlbuhkSM21} implies a $\colq$ no better than $\Theta(1 / \varepsilon^4)$ for their algorithm (we do not claim that this is a lower bound on their algorithm's $\colq$ but rather that their analysis cannot be leveraged into a better upper bound). Combining these results, we expect UCB to have a better transient performance than Q-UCB and both of them to have a much better performance than the algorithm in \cite{StahlbuhkSM21}. The results in Figure~\ref{fig:alg-compare} confirm this for both the queue length and its time-average. Though the queue lengths of all algorithms converge to the optimal benchmark (see the left of Figure~\ref{fig:alg-compare}), their transient performance varies significantly as suggested by the different bounds on $\colq$.
\begin{figure}
\centering
\includegraphics[width=5in]{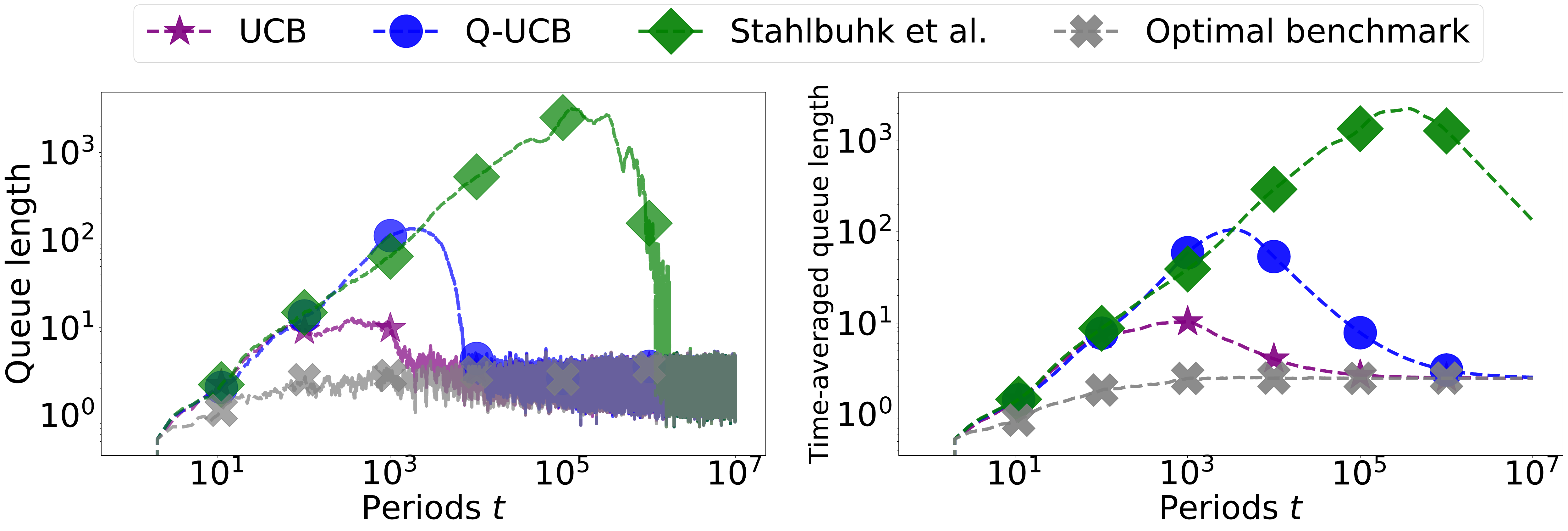}
\caption{Comparison between UCB, Q-UCB, and \cite{StahlbuhkSM21} algorithm in the setting of Figure~\ref{fig:col-motivation}}
\label{fig:alg-compare}
\end{figure}

\paragraph{Scaling with respect to $\varepsilon$ and $K$.} We next explore the scaling of the $\colq$ of UCB with respect to $\varepsilon$ and $K$. In  Theorem~\ref{thm:colq-ucb-single} we established that the $\colq$ of UCB is at most $O(\frac{K(\ln K+\ln(1/\varepsilon))}{\varepsilon})$ with a constant of around $400$ hidden in the big-O notation. Our theoretical lower bound $\Omega(\frac{K}{\varepsilon})$ on the $\colq$ of any policy (Theorem~\ref{thm:colq-lowerbound}) requires $K \geq 2^{14}$ and involves a constant of $2^{-14}$ hidden in the big-O notation. We now show that these constants, necessary for our analytical derivations, are actually conservative and the big-O scaling dependence -- with very small constants -- closely captures the actual $\colq$. Our numerics extend the setting in Figure~\ref{fig:col-motivation} and are close to the bad instance used in the proof of Theorem~\ref{thm:colq-lowerbound}. We focus on such settings to stress test the performance of considered learning algorithms. Specifically, for a pair $(K,\varepsilon)$, the arrival rate is $\lambda = 0.5 - \varepsilon / 2$ and there are~$K$ servers with service rates~$\mu_1 = \lambda / 10, \mu_j = \lambda - \varepsilon$ for~$j \in \{2,\ldots,K - 1\}$ and $\mu_K = \lambda + \varepsilon$.

Figure~\ref{fig:tclq_eps} displays the $\colq$ of UCB across exponentially decreasing $\varepsilon \in \{2^{-i}, i \in \{1,\ldots,9\}\}$ with $K = 5$ or $K = 10$. We find that it is consistently upper bounded by, and parallel to, the line $2K / \varepsilon$; this verifies the accuracy of the dependence on $1 / \varepsilon$ for the $\colq$ of UCB. Similarly, we display in Figure~\ref{fig:tclq_k}  the $\colq$ of UCB for exponentially increasing $K \in \{2^i, i \in \{2,\ldots, 9\}\}$ and~$\varepsilon \in \{0.1, 0.3\}$. Except for the point $K = 4$ and $\varepsilon = 0.1$, the $\colq$ of UCB is well sandwiched by an upper bound $2K\ln(K) / \varepsilon$ and a lower bound $K / (4\varepsilon)$. This demonstrates that a dependence of $K\ln(K)$ on the number of servers is accurate for the $\colq$ of UCB, and the lower bound result in Theorem~\ref{thm:colq-lowerbound} is valid even for small $K$. We highlight that the lower bound result holds for any policy whereas Figure~\ref{fig:tclq_k} only considers UCB. However, since we cannot enumerate all possible policies and since UCB vastly outperforms other existing algorithms with respect to  $\colq$ (Figure~\ref{fig:alg-compare}), it is a natural choice against which to test  the lower bound result.

\begin{figure}[hbtp]
\centering
\begin{minipage}{.45\textwidth}
\includegraphics[width=2.5in]{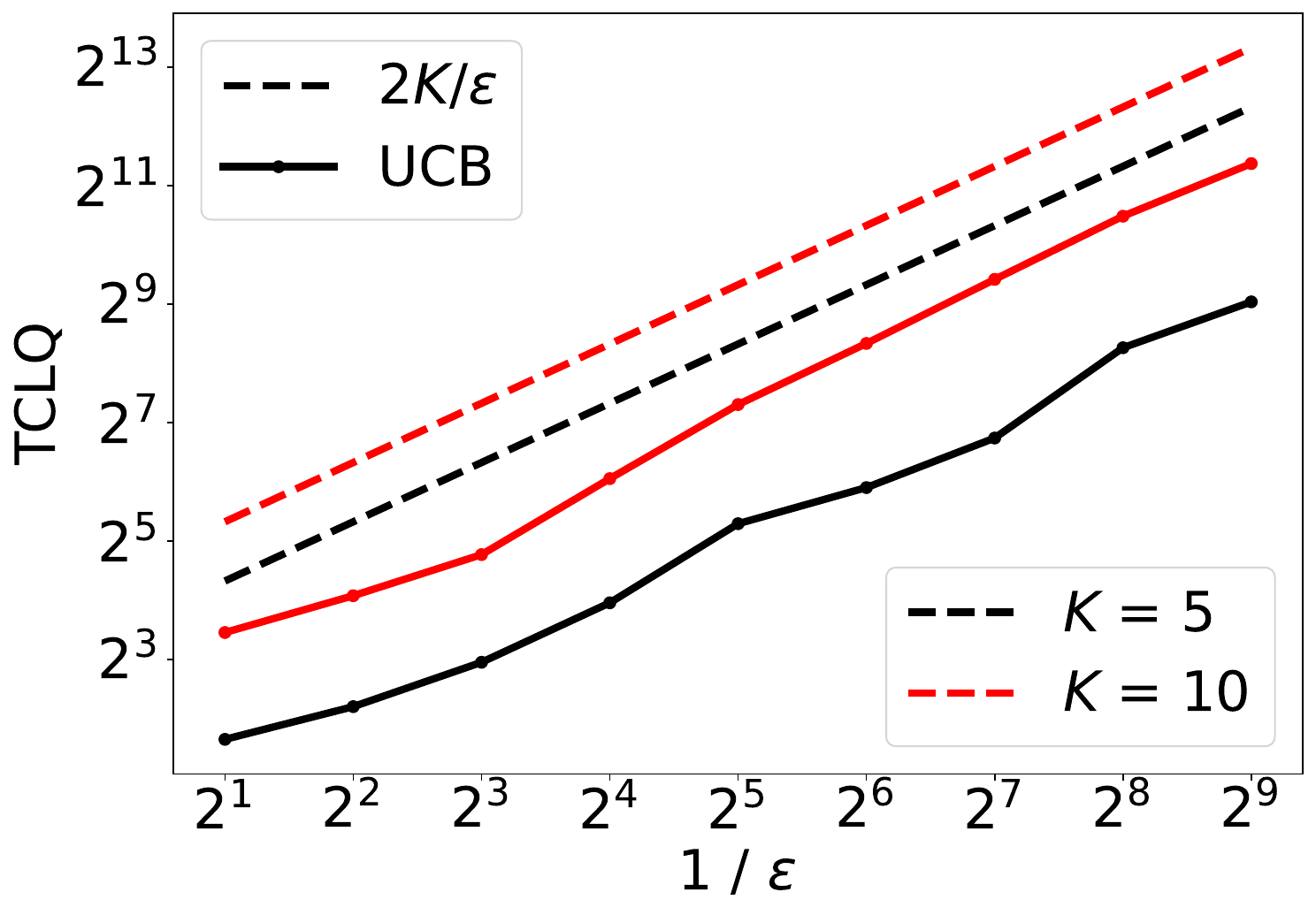}
\caption{Scaling with respect to $\varepsilon$}
\label{fig:tclq_eps}
\end{minipage}
\begin{minipage}{.45\textwidth}
\includegraphics[width=2.5in]{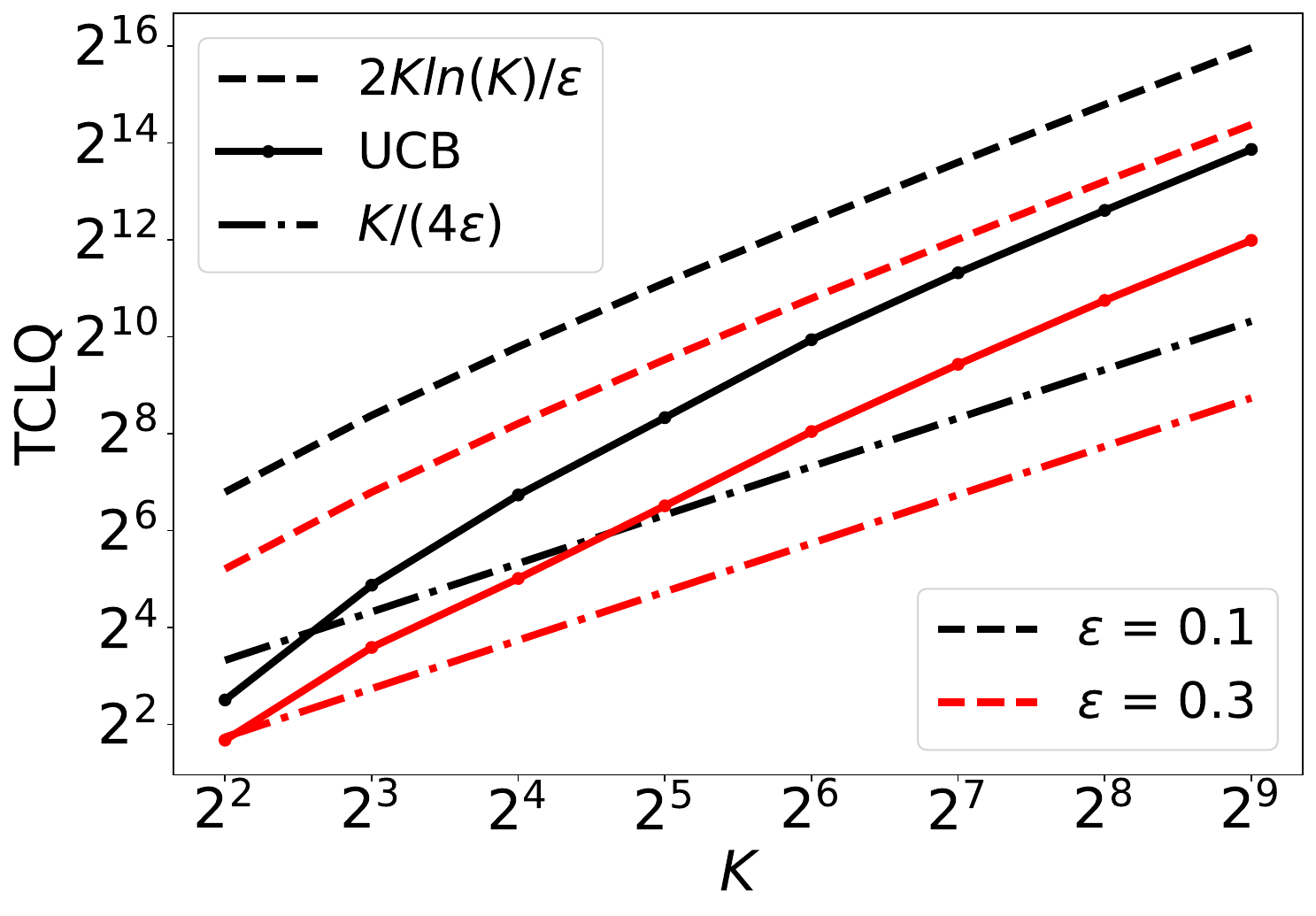}
\caption{Scaling with respect to $K$}
\label{fig:tclq_k}
\end{minipage}
\end{figure}

\subsection{Multi-queue multi-server setting}
Theorem~\ref{thm:mw-ucb} (Theorem~\ref{thm:bp-ucb-col}) shows that the $\colq$ of $\textsc{MW-UCB}$ ($\textsc{BP-UCB}$) has an optimal $\tilde{O}(1 / \varepsilon)$ scaling dependence on $\varepsilon$, but the constant in the bound is large. We now simulate the stationary multi-queue multi-server setting from \cite[appendix D]{YangSrikantYing} and show that the optimal scaling dependence requires only a small constant and translates into an improved transient performance.

Following \cite{YangSrikantYing}, we consider a setting with $N = 10$ queues and $M = 10$ workers, where worker~$j$ has service rates $\tilde{\mu}_{i,j}$ for jobs from queue $i$ with 
\[
\forall u,v \in \{0,1,2,3,4\},~\tilde{\mu}_{2u+1,2v+1}=0.9, \tilde{\mu}_{2u+1,2v+2}=0.6, \tilde{\mu}_{2u+2,2v+1} = 0.5, \tilde{\mu}_{2u+2,2v+2} = 1.
\]
In each period, each queue $i$ has a new job with probability $\lambda_i = 0.9 - \varepsilon$. As discussed by \cite{YangSrikantYing}, who choose~$\varepsilon=.15$, $\varepsilon$ captures the traffic slackness of this system; our results vary this quantity. The DM selects for each worker a queue to work on. If worker $j$ is assigned to queue $i$, a job from queue $i$ leaves with probability $\tilde{\mu}_{i,j}.$ The DM cannot assign more workers to queue $i$ than its current number of waiting jobs. This setting is a special case of our model in Section~\ref{sec:multi-model}; see Appendix~\ref{app:multi-example}. 

\begin{figure}
    \centering   \includegraphics[width=0.5\linewidth]{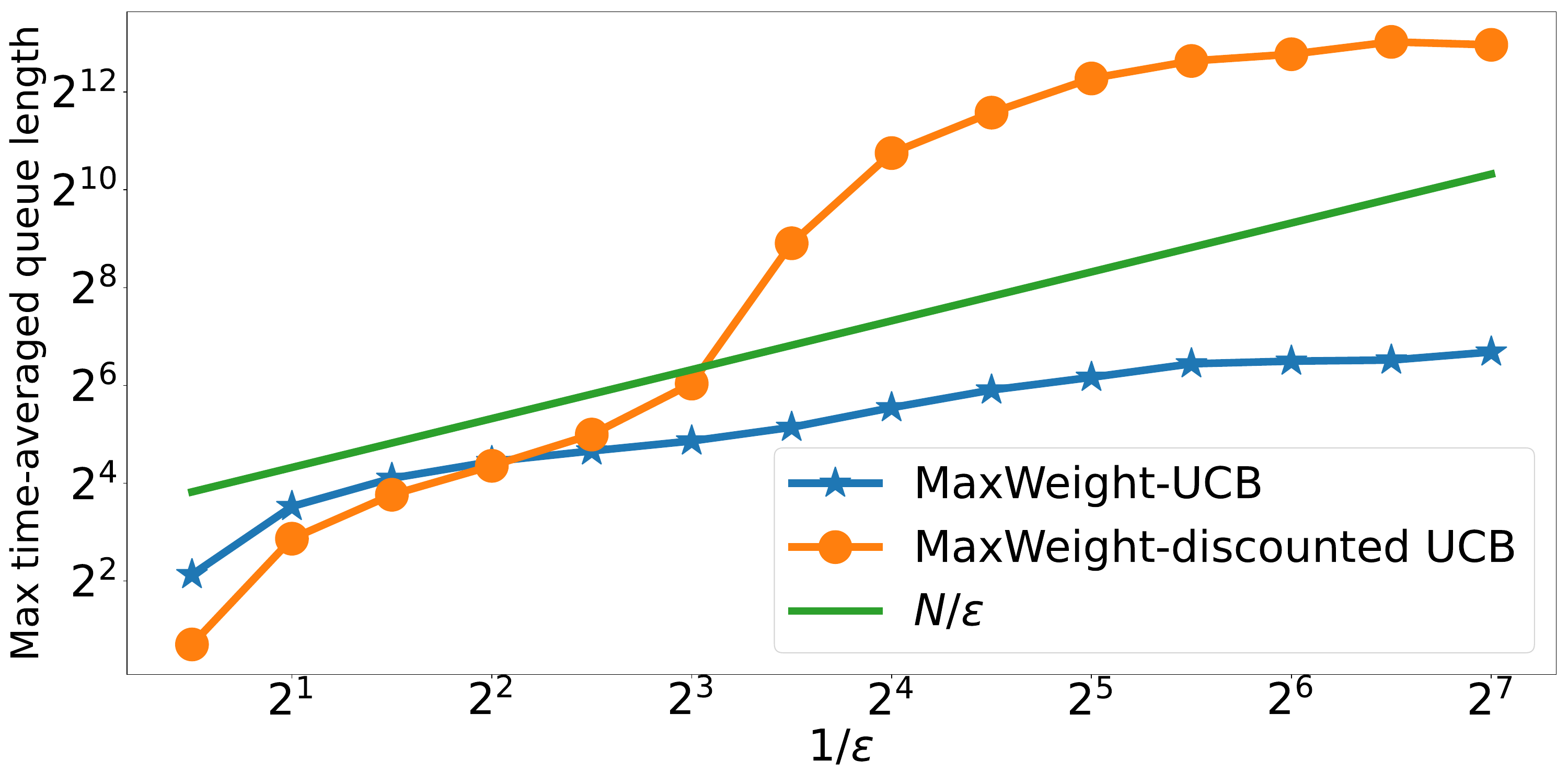}    \caption{Comparing the transient performance of MaxWeight-UCB (Algorithm~\ref{algo:mw-ucb}) with MaxWeight-Discounted UCB when the traffic slackness $\varepsilon$ varies.}
    \label{fig:net-compare}
\end{figure}

We implement $\textsc{MW-UCB}$ (Algorithm~\ref{algo:mw-ucb}) and MaxWeight-Discounted UCB (\textsc{MW-DUCB}) in \cite[algorithm 1]{YangSrikantYing}, whose hyper-parameters are set according to \cite[appendix D]{YangSrikantYing}. With $30$ independent runs of a time horizon $T = 10^6$, either algorithm ($\pi \in \{\textsc{MW-UCB}, \textsc{MW-DUCB}\}$) produces trajectories of queue lengths $\tilde{Q}_r(t, \pi)$ for the $r-$th run, period $t$. To measure their transient performance, we use the maximum time-averaged queue length, defined by $$\max_{t \leq T} \frac{1}{t}\cdot \frac{1}{30}\sum_{r\leq 30} \sum_{\tau \leq t} \tilde{Q}_r(\tau, \pi).$$  Figure~\ref{fig:net-compare} shows the maximum time-averaged queue length of $\textsc{MW-UCB}$ and $\textsc{MW-DUCB}$ as a function of the traffic slackness $\varepsilon$, which takes value in $\{2^{-(i+1)/2}, i \in \{0,\ldots,13\}\}.$ In Appendix~\ref{app:ysy}, we discuss that in the stationary setting of \cite{YangSrikantYing}, $\textsc{MW-UCB}$ has a $\colq$ bound of $\tilde{O}(\frac{N^{3.5}M^3}{\varepsilon})$ while the bound in \cite{YangSrikantYing} for $\textsc{MW-DUCB}$ translates into a $\colq$ bound of $\Theta(\frac{N^4M^4}{\varepsilon^3})$. Therefore, theory suggests that the former would have a better scaling dependence on $\varepsilon$. The figure displays this improvement in performance bounds in actual transient performance: $\textsc{MW-UCB}$ has better performance when $\varepsilon < 0.25$ and the gap between the two algorithms increases as $\varepsilon$ decreases.  Finally, although our $\colq$ bound involves a large constant in Theorem~\ref{thm:mw-ucb}, the figure shows that the maximum time-averaged queue length of $\textsc{MW-UCB}$ is consistently below $N / \varepsilon$.

\section{Conclusions}
% !TEX root = main.tex
Motivated by the observation that queueing regret is an asymptotic metric that does not capture the transient performance of learning algorithms in queueing systems, we propose an alternative metric, $\colq$. For a single-queue multi-server system with $K$ servers and a traffic slackness $\varepsilon$, we derive a lower bound $\Omega(\frac{K}{\varepsilon})$ on $\colq$, thus establishing that learning incurs a non-negligible increase in queue lengths. We then show that the classical UCB algorithm has a matching upper bound of $\tilde{O}(\frac{K}{\varepsilon})$. Finally, we extend our result to multi-queue multi-sever systems and general queueing networks by providing algorithms, \textsc{MaxWeight-UCB} and \textsc{BackPressure-UCB}, whose $\colq$ has a near optimal $\tilde{O}(1/\varepsilon)$ dependence on traffic slackness.

Having introduced a metric that captures the transient performance of learning in queueing systems, our work can serve as a starting point for interesting extensions that can help shed further light on the area. In particular, non-stationarity is an important consideration in queueing systems as job arrival patterns, server characteristics, and the feasible set of schedules can change over time. Although \cite{YangSrikantYing, nguyen2023learning} have explored similar settings, understanding the implication of $\colq$ and how our results would change due to non-stationarity is an interesting direction. Future research may also focus on beyond worst case guarantees for $\colq$, obtaining tight bounds with respect to the system's size, or improving bounds using contextual information, etc.

\bibliographystyle{alpha}
\bibliography{references}

\newpage
\appendix

\section{The transient cost of learning of previous algorithms (Section~\ref{sec:intro})}\label{app:related-work}
% !TEX root = main.tex
\subsection{The transient cost of learning for \cite{KrishnasamySJS21}}\label{app:qucb}
The bound in \cite{KrishnasamySJS21} implies a $\colq$ bound of at least $\Theta(\frac{K}{\varepsilon^2})$ for \textsc{Q-UCB}. This is the case because the bound in \cite[Proposition 2]{KrishnasamySJS21} requires $t \geq \frac{24K\log t}{\varepsilon^2}$ and there is no guarantee for $t$ smaller than that. As a result, the queue length can grow linearly for $t \leq \Theta(\frac{K}{\varepsilon^2})$, and the claim follows. Indeed, \cite[Proposition 2]{KrishnasamySJS21} further requires that $t \geq \exp(4/\Delta^2)$ where $\Delta$ is the minimum gap between service rates and thus their bound may scale as $\exp(1/\varepsilon^2)$ when $\Delta = 2\varepsilon$, which is the case for our hard instance for the $\colq$ lower bound in Section~\ref{app:proof-lowerbound}. The reason for such a high $\colq$ for \textsc{Q-UCB} is because their proof only utilizes samples from the forced exploration and disregards samples collected by the UCB exploration.

However, our analytical framework in Section~\ref{sec:optimal-single} indeed allows us to show a much better $\colq$ bound for \textsc{Q-UCB}, which is near optimal up to logarithmic factors (but higher order logarithmic term than that of \textsc{UCB} in Theorem~\ref{thm:colq-ucb-single}). For completeness, we give the algorithmic description of \textsc{Q-UCB} in Algorithm~\ref{algo:q-ucb} (note that in accordance to our model, we add Line~\ref{algoline:emptyqueue} to avoid service when there is no job). Our $\colq$ bound of \textsc{Q-UCB} is given in Proposition~\ref{prop:colq-qucb}. Combining this result with the $\tilde{O}(1/T)$ queueing regret of \textsc{Q-UCB} in \cite[Proposition 2]{KrishnasamySJS21} then shows \textsc{Q-UCB} exhibits near optimal performance for both early and late stages.

% !TEX root = algo-ucb.tex
\begin{algorithm}[H]
\LinesNumbered
\DontPrintSemicolon
\caption{\textsc{Q-UCB} for a single-queue multi-server system 
\label{algo:q-ucb}
}
\SetKwInOut{Input}{input}\SetKwInOut{Output}{output}
Sample mean $\hat{\mu}_{k}(1) \gets 0$, number of samples $C_k(1) \gets 0$ for $k \in \set{K}\cup \{\perp\}$, queue $Q(1)\gets 0$\;
\For{$t = 1\ldots$}{
 \nl    Let $E(t)$ be a Bernoulli random variable with mean $\min(1,\frac{3K\log^2 t}{t})$\;
\nl     $\bar{\mu}_{k}(t) = 
    \min\left(1, \hat{\mu}_{k}(t) + \sqrt{\frac{2\ln(t)}{C_k(t)}}\right), \forall k \in \set{K}$\;
\nl     \textbf{if }{$E(t) = 1$ } \textbf{then }        $J(t) \gets \{1,\ldots,K\}$ uniformly at random \;
\nl \textbf{else }
    $J(t) \gets \arg\max_k \bar{\mu}_k(t)$ \;
 \nl    \textbf{if } {$Q(t) = 0$ } \textbf{then } {$J(t) \gets \perp$} \label{algoline:emptyqueue}

   \tcc{Update queue length \& estimates based on  $S_{J(t)}(t)$, $A(t)$, and $J(t)$}
   $Q(t+1) \gets Q(t) - S_{J(t)}(t) + A(t)$ \\
    $C_{J(t)}(t+1) \gets C_{J(t)}(t)+1, \quad \hat{\mu}_{J(t)}(t+1) \gets \frac{C_{J(t)}(t)\hat{\mu}_{J(t)}(t)+S_{J(t)}(t)}{C_{J(t)}(t+1)}$\\
    \textbf{for } $k\neq J(t)$ \textbf { set } $C_k(t+1)\gets C_k(t), \quad \hat{\mu}_k(t+1)\gets \hat{\mu}_k(t)$
}
\end{algorithm} 

\begin{proposition}\label{prop:colq-qucb}
For any $K \geq 1, \varepsilon \in (0,1]$, \[\colq^{\single}(K,\varepsilon,\textsc{Q-UCB}) \leq \frac{960K+64K(\ln K+2\ln(1/\varepsilon))}{\varepsilon} + 2^9\ln^3(K/\varepsilon) + 2^{22}K.\]
\end{proposition}
We use Lemmas~\ref{lem:single-queue-learn} and ~\ref{lem:single-queue-regen} to show Proposition~\ref{prop:colq-qucb}. To do so, we first need to bound the satisficing regret of \textsc{Q-UCB}. Compared with the bound of \textsc{UCB} (Lemma~\ref{lem:sar-ucb}), \textsc{Q-UCB} has higher order dependence on $\ln T$ due to its use of forced exploration. 
\begin{lemma}\label{lem:sar-qucb}
For any horizon $T$, we have
\begin{align*}
(i) \expect{\sar^{\single}(\textsc{Q-UCB},T)} &\leq \frac{16K(\ln T+2)}{\varepsilon}+3K\ln^3 T \\
(ii) \expect{\sar^{\single}(\textsc{Q-UCB},T)^2} &\leq \frac{2^{10}K^2(\ln T+2)^2}{\varepsilon^2}+72K^2\ln^6 T.
\end{align*}
\end{lemma}
The proof of this lemma considers the additional regret brought by forced exploration on top of the proof of Lemma~\ref{lem:sar-ucb}. The idea is similar, and we include it here for completeness.
\begin{proof}
Define an event $\tilde{\set{G}}_t = \{J(t) = \emptyset\} \cup \{\mu^\star - \mu_{J(t)}\leq 2\sqrt{\frac{2\ln t}{C_{J(t)}(t)}}\}\cup \{E(t)=1\}$ (recall from Algorithm \ref{algo:q-ucb} that $E(t)$ denotes whether the algorithm does forced exploration in round $t$). Following the same proof of Lemma~\ref{lem:ucb-conc} shows $\Pr\{\set{G}_t^c\} \leq 2Kt^{-3}$. Then
\begin{align*}
\expect{\sar^{\single}(\textsc{Q-UCB},T)} &\leq \expect{\sum_{t=1}^T (\mu^\star-\mu_{J(t)}-\frac{\varepsilon}{2})^+\indic{J(t) \neq \emptyset, E(t)=0,\tilde{\set{G}}_t}} + \expect{\sum_{t=1}^T E(t)} \\
&\hspace{0.2in}+ \expect{\sum_{t=1}^T \indic{\tilde{\set{G}}_t^c}}.
\end{align*}
The first term can be bounded for any sample path by $\frac{16K(\ln T+1)}{\varepsilon}$ as in \eqref{eq:sar-ucb-sample-upp} because it is the loss from doing UCB exploration; the second term is bounded by $\sum_{t=1}^T \frac{3K\ln^2 t}{t} \leq 3K\ln^3 t$ since $\expect{E(t)}=\frac{3K\ln^2 t}{t}$; the last term is bounded by $\sum_{t=1}^T 2Kt^{-3} \leq 4K$. As a result, 
\[
\expect{\sar^{\single}(\textsc{Q-UCB},T)} \leq \frac{16K(\ln T + 2)}{\varepsilon} + 3K\ln^3 T.
\]
Similarly, because $(a+b+c)^2 \leq 4(a^2+b^2+c^2)$, we have
\begin{align*}
\expect{\sar^{\single}(\textsc{Q-UCB},T)^2} &\leq 4\expect{\left(\sum_{t=1}^T (\mu^\star-\mu_{J(t)}-\frac{\varepsilon}{2})^+\indic{J(t) \neq \emptyset, E(t)=0,\tilde{\set{G}}_t}\right)^2} \\
&\hspace{0.1in}+
4\expect{\left(\sum_{t=1}^T E(t)\right)^2} +4\expect{\left(\sum_{t=1}^T \indic{\tilde{\set{G}}_t^c}\right)^2}.
\end{align*}
The first term is upper bounded by $4\left(\frac{16K(\ln T+1)}{\varepsilon}\right)^2$ by the sample path upper bound; the last term is upper bounded by $32K$ as in \eqref{eq:sar-ucb-sample-uppsqr}; the second term is upper bounded by
\begin{align*}
4\expect{\left(\sum_{t=1}^T E(t)\right)^2} &= 4\expect{\sum_{t=1}^T E(t)+\sum_{t_1 < t_2} E(t_1)E(t_2)} \\
&\overset{E(t_1) \text{independent of }E(t_2)}{\leq} 4\expect{\sum_{t=1}^T E(t)}\left(1+\expect{\sum_{t=1}^T E(t)}\right) \\
&\leq 72K^2\ln^6 T.
\end{align*}
Thus, 
\[
\expect{\sar^{\single}(\textsc{Q-UCB},T)^2} \leq \frac{2^{10}K^2(\ln T+1)^2}{\varepsilon^2}+72K^2\ln^6 T + 32K \leq \frac{2^{10}K^2(\ln T+2)^2}{\varepsilon^2}+72K^2\ln^6 T.
\]
\end{proof}
We are ready to prove Proposition~\ref{prop:colq-qucb} following a similar approach of the proof of Theorem~\ref{thm:colq-ucb-single}.
\begin{proof}[Proof of Proposition~\ref{prop:colq-qucb}]
Fix $K,\varepsilon$. Take $T_2 = \max\left(12^{19}, \left\lfloor\left(\frac{2^{13}K^2}{\varepsilon^4}\right)^2\right\rfloor\right)$. Note that $\ln(T_2) \leq 56+4\ln K+8\ln(1/\varepsilon)$. For $T \leq T_2$,
\begin{align*}
\frac{1}{T}\sum_{t\leq T}\expect{Q(t)} &\leq \frac{3}{\varepsilon}+\expect{\sar^{\single}(\textsc{Q-UCB},T)} \tag{Lemma~\ref{lem:single-queue-learn}} \\
&\leq \frac{3}{\varepsilon}+\frac{16K(\ln T_2 + 2)}{\varepsilon} + 3K\ln^3(T_2) \tag{Lemma~\ref{lem:sar-qucb} \emph{(i)} and $T \leq T_2$} \\
&\leq \frac{3}{\varepsilon}+\frac{16K(58+4\ln K + 8\ln(1/\varepsilon)}{\varepsilon}+3K(60+8\ln\nicefrac{K}{\varepsilon})^3 \\
&\leq \frac{960K+64K(\ln K+2\ln(1/\varepsilon))}{\varepsilon} + 12K((60)^3+8^3\ln^3(K/\varepsilon)) \tag{Fact~\ref{fact:generalized-mean}} \\
&\leq \frac{960K+64K(\ln K+2\ln(1/\varepsilon))}{\varepsilon} + 2^9\ln^3(K/\varepsilon) + 2^{22}K.
\end{align*}
For $T > T_2$,
\begin{align*}
\frac{1}{T}\sum_{t\leq T}\expect{Q(t)} &\leq \frac{4}{\varepsilon}+\frac{8}{\varepsilon^2}\frac{\expect{\sar^{\single}(\textsc{Q-UCB},T)^2}}{T} \tag{Lemma~\ref{lem:single-queue-regen}} \\
&\leq \frac{4}{\varepsilon}+\frac{8}{\varepsilon^2}\frac{2^{10}K^2(\ln T+2)^2/\varepsilon^2 + 72K^2\ln^6 T}{T} \tag{Lemma~\ref{lem:sar-qucb} \emph{(ii)}} \\
&\leq \frac{4}{\varepsilon}+16 \leq \frac{20}{\varepsilon}\tag{By definition of $T_2$ and Fact~\ref{fact:lnt-sqrt-prop}}.
\end{align*}
Combining the above two cases concludes the proof.
\end{proof}

\subsection{The transient cost of learning for \cite{StahlbuhkSM21}}\label{app:ssm}
The setting in \cite{StahlbuhkSM21} that resembles our single-queue setting is \cite[Theorem 4]{StahlbuhkSM21}, which gives a bound on $\sum_{t\leq T} Q(t)-Q^{\star}(t)$ of order $1/\varepsilon^8$. Since $\frac{\sum_{t\leq T} Q(t)-Q^{\star}(t)}{T} \leq T$, this implies a $\colq$ bound of $O(1/\varepsilon^4)$. We next discuss how to derive the bound on $\sum_{t\leq T} Q(t)-Q^{\star}(t)$. In the proof of Theorem 4 on page 1770 in \cite{StahlbuhkSM21}, the authors show in Eq. (16) and thereafter that 
\[
R^{\pi_3}_{(\lambda,\mu)}(T) \leq \sum_{p=1}^{p_0-1} (M_2p^2+\beta_2)+\sum_{p=p_0}^T (M_2p^2+\beta_2)M_0e^{-\chi p}
\]
where $R^{\pi_3}_{(\lambda,\mu)}(T) = \sum_{t \leq T} (Q^{\pi_3}(t) - Q^{\star}(t))$ and $\pi_3$ is their policy, described in \cite[Fig. 7]{StahlbuhkSM21}. The constants $M_0,\chi$ are from \cite[Lemma 6]{StahlbuhkSM21} and $M_2,\beta_2$ are from \cite[Lemma 8]{StahlbuhkSM21}. We obtain lower bounds of these constants from the corresponding proofs as follows.

For $M_0, \chi$, the last inequality (after Eq. (28)) in the proof of their Lemma 6 (pp. 1776) requires $M_0e^{-\chi p}$ to be at least $e^{-\frac{1}{2}p\delta^2} + 2(N-1)e^{-2\frac{p}{N}\gamma^2}$. Here $N$ is the number of servers (i.e., $K$ in our paper), $\gamma$ is the minimum service rate suboptimality gap, and $\delta$ is at most the expected decrease in queue lengths by choosing the fastest server, which is at most the traffic slackness $\varepsilon$ in our paper.

For $M_2p^2 + \beta_2$, Eq. (34) in the proof of their Lemma 8 on page 1778 requires $M_2 \geq 0$ and $\beta_2 \geq 2\sum_{n=0}^{\infty} n^2e^{-c_3 n}$ and $c_3$ is a constant from their Lemma 10. Checking the proof of Lemma 10 on page 1780, after Eq. (41), one can see that $c_3$ is at least $2\delta^2$ and $\delta$ is at most the traffic slackness $\varepsilon$ in our paper (see the definition of $\delta$ after their Eq. (40)). Therefore, $\beta_2 \geq \sum_{n=0}^{\infty} n^2e^{-\varepsilon^2 n} = \Theta(\frac{1}{\varepsilon^6})$ since $\int_0^{\infty} x^2e^{-\alpha x} dx = \frac{2}{\alpha^3}$ for any $\alpha > 0$.

Combining these constants, the upper bound in \cite[Theorem 4]{StahlbuhkSM21} is at least (for $T \geq \frac{1}{\varepsilon^2}$)
\[
\sum_{p=1}^{p_0-1} (M_2p^2+\beta_2)+\sum_{p=p_0}^T (M_2p^2+\beta_2)M_0e^{-\chi p}  \geq \Theta\left(\sum_{p=1}^{T} \frac{e^{-\frac{1}{2}p\varepsilon^2}}{\varepsilon^6}\right) \geq \Theta\left(\frac{(1-e^{-\frac{1}{2}T\varepsilon^2})}{\varepsilon^8}\right)\geq \Theta\left(\frac{1}{\varepsilon^8}\right).
\]

\subsection{The transient cost of learning for \cite{NeelyRP12}}\label{app:nrp}
The model in \cite{NeelyRP12} captures our single-queue multi-server setting by considering (in their notations) $N=1$ queue, fixed events $\beta(t)$ across time (so no information), action $k(t)$ corresponding the server chosen, and random events $\omega(t)$ is Bernoulli distributed with mean $\mu_{k(t)}$ where $\mu_k$'s are initially unknown. There is no utility to consider and we are focused on the queue length. 
Apply \cite[Theorems~1,2]{NeelyRP12} to our single queue setting. In their notations, $\varepsilon_{\max}$ is equal to our slackness $\varepsilon$, and under the single-queue setting, we have $\tilde{K} = K$. To satisfy the condition that $\varepsilon_Q < \varepsilon_{\max}$ in \cite[Theorem~1]{NeelyRP12} by their guarantee in Theorem~2, one needs to set a window size $W=\Theta(\frac{K^2}{\varepsilon^2})$, which set their constant $C = \Theta(\frac{K^4}{\varepsilon^2})$. Putting $C$ back into their Theorem~1 shows that the limited time-averaged queue length scales at least $O(\frac{K^4}{\varepsilon^3}).$

\subsection{The transient cost of learning for \cite{YangSrikantYing}}\label{app:ysy}
For a stationary multi-server system with $N$ queues and $M$ workers, the guarantee in \cite[Theorem 1]{YangSrikantYing} can be translated into a $\colq$ bound of $\Theta(\frac{N^4M^4}{\varepsilon^3})$. To see this, on the top right of page 6 in \cite{YangSrikantYing}, the authors derive an any-time time-averaged queue length bound of $O(I^2J^2g(\gamma)/\delta_{\max})$, where $I,J$ correspond to $N,M$ here, $\delta_{\max}$ is equal to $\varepsilon$ in our paper, and $g(\gamma)$ is at least $I^2J^2/\delta^2_{\max}$ by the requirement in \cite[Theorem 1]{YangSrikantYing}. Putting these together gives the $\colq$ bound. Note that our guarantee for \textsc{MW-UCB} in Theorem~\ref{thm:mw-ucb} corresponds to a $\colq$ bound of $\tilde{O}(\frac{N^{3.5}M^3}{\varepsilon})$; see the discussion in Appendix~\ref{app:examples}.

\subsection{The transient cost of learning for \cite{FreundLW22}}\label{app:flw}
For a bipartite queueing system with $N$ agents and $M$ workers, \cite[Theorem~3]{FreundLW22} shows a queue length bound of $\tilde{O}(\frac{M^3}{\varepsilon^3} + \frac{(ML_{\text{epoch})^2}}{\varepsilon^2 T})$ for the expected time-averaged queue length of horizon $T$ (note that queue lengths are weighted by arrival rates in their bound). By Eq. (2) in \cite{FreundLW22}, $L_{\text{epoch}} = \tilde{O}(\frac{M^2}{\varepsilon^2})$. As a result, their queue length bound gives a $\colq$ bound of $\tilde{\Theta}(\frac{M^3}{\varepsilon^3}+\frac{M^6}{\varepsilon^6 T}).$ Since the time-averaged queue length is at most $O(T)$, one can get a $\colq$ bound of $\tilde{O}(\frac{M^3}{\varepsilon^3})$. Our Theorem~\ref{thm:mw-ucb} gives a $\colq$ bound of $\tilde{O}(\frac{N^{1.5}M^3}{\varepsilon})$ by the mapping we provide in Appendix~\ref{app:examples}.

\subsection{The transient cost of learning for \cite{nguyen2023learning}}\label{app:nm}
\cite[Corollary~1]{nguyen2023learning} gives a queue length bound for their algorithm. The proof of this Corollary in \cite[Appendix~E]{nguyen2023learning} shows that the bound is about $\tilde{O}(\tau)$ where $\tau$ is a parameter such that $\varepsilon \gtrapprox \mu_{\min}\tau^{-1/3}$ in the stationary setting and $\mu_{\min}$ is $\min_{k \in \set{K}} \mu_k$. Therefore, their queue length bound at best shows a $\colq$ of $\tilde{O}(\frac{1}{(\varepsilon \min_{k \in \set{K}} \mu_k)^3})$.

\section{Additional details for modeling examples}\label{app:examples}
% !TEX root = main.tex
\subsection{Generality of our multi-queue multi-server model (Section~\ref{sec:multi-model})}\label{app:multi-example}
We discuss in detail how our multi-queue multi-server systems capture the settings in \cite{FreundLW22,YangSrikantYing} and beyond. 

In a bipartite queueing system~\cite{FreundLW22}, there are $N$ agents and $M$ workers. In period $t$, a new job arrives to each agent $n$’s queue with probability $\lambda_n$. The DM selects a matching $\bolds{\sigma} = (\sigma_{n,m})_{n \in [N], m \in [M]}$ between agents and workers such that $\sum_{m’\in[M]} \sigma_{n,m’}\leq 1,\sum_{n’\in [N]}$ for all $ n \in [N]$ and $\sigma_{n’,m} \leq 1$ for all $m\in [M]$. If $\sigma_{n,m}=1$, then the first job in agent $n$’s queue (if any) is cleared with probability $\mu_{n,m}$. Otherwise, the job stays in the queue. All arrivals and services are independent across queues, servers and periods. To translate this model into our ]multi-queue multi-server formulation, let $\mathcal{N} = [N], \mathcal{K} = [N] \times [M] = \{(n,m), n\in [N],m\in [M]\}$. $\Lambda$ corresponds to $N$ independent Bernoulli random variables. For each $k=(n,m) \in \mathcal{K}$, we have $\mu_k = \mu_{n,m}$. The set $\mathcal{A} = \{0,1\}^{\mathcal{N}}$.
$\Sigma$ is the set of subsets $\boldsymbol{\sigma}$ of $\mathcal{K}$, such that no two queues can be served simultaneously by the same server, i.e., $\sum_{k=(n,m’)} \sigma_k \leq 1, \sum_{k=(n’,m)} \sigma_k \leq 1, \forall n\in[N],m\in[M]$. $\mathcal{B}_n = \{k=(n,m), m \in [M]\}$. 
 
The multi-server system \cite{YangSrikantYing} is similarly defined but instead of selecting a matching between agents and workers, the DM can match an agent with multiple workers, as long as there are sufficiently many jobs assigned to different workers. Therefore, the only change compared with the bipartite queueing system would be that $\Sigma$ consists of all $\boldsymbol{\sigma}$ such that $\sum_{k=(n’,m)} \sigma_k \leq 1, \forall m \in [M]$.

Our model allows us to capture additional constraints that can arise in a multi-queue multi-server system. One possible extension, in the context of the above setting in \cite{YangSrikantYing}, is that workers cannot take on jobs from two particular queues because jobs from the two queues require the same external resources. Our feasible set of schedules can incorporate such interference by having $\Sigma$ remove any schedule that selects servers from these two queues simultaneously. 

\subsection{Generality of our queueing network model (Section~\ref{sec:network-model})}\label{app:examples-network}
Our queueing network model captures the setting in \cite{tassiulas1992stability}, which consists of a directed
graph $G = (\set{V},\set{E})$ with a set of nodes $\set{V}$, a set of links $\set{E}$,  a set of job classes $\set{J}$, and an activation set $\set{S}$.
\begin{enumerate}
\item Each node $v \in \set{V}$ maintains a backlog and in each period, a random number of class-$j$ jobs arrive to the backlog of node $v$.
\item A link $i\in\set{E}$ goes from a source node $q(i)$ to a sink node $h(i)$. 
\item A class $j\in\set{J}$ has a set of destination nodes $\set{V}_j$. If a class-$j$ job is routed to a node in $\set{V}_j$, it leaves the system.
\item For each period $t$, the DM makes two decisions. First, the DM activates a set of links. Second, for each activated link $i$, the DM selects a single job from the backlog of the source node $q(i)$ to serve. Let $e_{i,j}(t) \in \{0,1\}$ be the indicator on whether the DM activates link $i$ and select a class-$j$ job to serve for that link in period $t$. \label{item:network-decision}
\item The DM has three constraints.
\begin{enumerate}
    \item [(a)] the set of links DM activates must be an element from the activation set $\set{S}$;
    \item [(b)] each activated link serves at most one job;
    \item [(c)] for any node $v$, the DM cannot select more class-$j$ jobs to serve than the number of class-$j$ jobs, $Q_{v,j}(t)$, in the backlog, i.e., $\sum_{i:q(i) = \ell} e_{i,j}(t) \leq Q_{v,j}(t)$.
\end{enumerate}
\label{item:network-constraint}
\item If $e_{i,j}(t) = 1$, service is successful with probability $m_i$. In this event, the served job moves to the backlog of the sink node $h(i).$ If $h(i)$ is a destination node of class $j$, this job leaves the system.
\end{enumerate}
The model above can be instantiated in our model from Section~\ref{sec:network-model}. For each pair of node $v \in \set{V}$ and class $j \in \set{J}$, we create a queue labeled by $(v,j)$, corresponding to the class-$j$ jobs in node $v$. For each tuple $(v,j,i)$ where the source node of link $i \in \set{E}$ is $v$, we create a server $(v,j,i)$ dedicated to queue $(v,j)$. The service rate of server $(v,j,i)$ is equal to $m_i$, which is the probability of successful service of link $i$. In each period, the DM selects a set of servers $\{(v_k, j_k, i_k)\}_k$, which corresponds to the decision $\{e_{i,j}(t)\}$ in item \ref{item:network-decision} above. This set of servers must be an element of the set of feasible schedules $\bolds{\Sigma}_t$ which abides by the three constraints of item \ref{item:network-constraint} above: 
\begin{enumerate}
\item[(a)] the set of selected links $\{i_k\}$ must be an element in the activation set $\set{S}$; 
\item[(b)] selected links are different: $i_k \neq i_{k'}$ for $k \neq k'$ (so no two jobs are served by the same link). 
\item[(c)] no more servers are selected than the number of jobs in a queue.
\end{enumerate}
The final component is job transitions. If server $(v,j,i)$ has a successful service, then a job in queue $(v,j)$ transitions to queue $(h(i), j)$. If the sink node $h(i)$ is a destination node of class $j$, the job transitions to the virtual queue $\perp$, i.e., it leaves the system. The above discussion shows how our model captures the one in \cite{tassiulas1992stability} by defining suitable sets of queues, servers, and feasible schedules.

\section{Supplementary materials of Section~\ref{sec:model-single}}\label{app:model-single}
% !TEX root = main.tex
\subsection{Bound on job-averaged time in system}\label{app:clq-clw}
Our results focus on TCLQ, the maximum increase in expected time-averaged queue length; in this section we show that upper bounding TCLQ is essentially equivalent to upper bounding the maximum increase in expected \emph{job-averaged time in the system}. Therefore, a bound on TCLQ is desirable as it connects to the degradation in job (customer) experience of a service system. Such a connection is not surprising by Little's Law; we formally derive the connection in \Cref{prop:clw}. Our focus is on the single-queue multi-server setting from Section~\ref{sec:model-single}, but the derivation generalizes to other queueing systems as long as certain concentration bound holds for the arrival process.

We first introduce some notation: suppose that jobs are indexed by labels $1,2,3,\ldots$ according to the order of their arrivals. The arrival time of job $i$ is $\textsc{Arr}_i$ and its departure time is $\textsc{Dep}_i$, which may depend on the policy $\pi$. For a period $t$, let $W_i(t,\pi)$ be the time that job $i$ has spent in the system until period $t$, that is, $W_i(t,\pi) = \min(\textsc{Dep}_i, t) - \textsc{Arr}_i.$ Let $N(t)$ be the number of arrivals before period $t$. There are then two quantities of interest over the first $T$ periods:
\begin{itemize}
\item Time-averaged queue length: $\bar{Q}(T,\pi) \coloneqq \frac{1}{T}\sum_{t=1}^T Q(t,\pi)$;
\item Job-averaged time in system: $\bar{W}(T,\pi) \coloneqq \frac{1}{\max(N(T),1)}\sum_{i=1}^{N(T)} W_i(T, \pi)$.
\end{itemize} 
Our paper focuses on the transient cost of learning in queueing (TCLQ) metric defined by the maximum increase in expected time-averaged queue length compared to the optimal policy~$\pi^\star$:
\[
\colq(\pi) = \max_{T \geq 1} \expect{\bar{Q}(T,\pi) - \bar{Q}(T,\pi^\star)}
\]
A similar metric, which we call the transient wait cost of learning in queueing (TWCL), may accordingly be defined by the maximum increase in expected job-averaged time in system:
\[
\textsc{TWCL}(\pi) = \max_{T \geq 1} \expect{\bar{W}(T, \pi) - \bar{W}(T,\pi^\star)}.
\]
The following result implies that, up to constant factors depending on the arrival probability $\lambda$, an upper bound for any of those two metrics yields an upper bound for the other metric.
\begin{proposition}\label{prop:clw}
The $\textsc{TCLQ}$ and $\textsc{TWCL}$ metrics can be bounded as a function of each other by:
\begin{itemize}
\item $\textsc{TWCL}(\pi) \leq \max\left(100,\frac{16\sqrt{2}}{\lambda^3},\frac{2}{\lambda}\colq(\pi)+1\right)$;
\item $\colq(\pi) \leq \max\left(100,\frac{16\sqrt{2}}{\lambda^3},1.5\lambda \textsc{TWCL}(\pi)+1\right)$.
\end{itemize}
\end{proposition}
As long as $\lambda$ is not too small and the system is heavily loaded, the dominant terms in Proposition~\ref{prop:clw} imply $\textsc{TWCL}(\pi) \leq \frac{2}{\lambda}\colq(\pi)+1$ and $\colq(\pi) \leq 2\lambda \textsc{TWCL}(\pi) + 1$. This matches our intuition from Little's Law: the long-run time-averaged queue length is equal to the product of the arrival rate and the long-run job-averaged wait time, i.e., $\bar{Q}(T,\pi) = \lambda \bar{W}(T,\pi)$ as $T \to \infty.$ Proposition~\ref{prop:clw} incurs a loss, in the form of the constant factor $2$, compared to this intuition because it looks at the maximum increase rather than the long-run average.

Our proof of Proposition~\ref{prop:clw} relies on a coupling argument similar to that in Section~\ref{sec:single-queue-learn}.  Specifically, for each period $t$ there is a  random variable $U(t)$ uniform in $[0,1]$ such that the service of server $k$ is generated by $S_k(t) = \indic{U(t) \leq \mu_k}.$ Since a policy $\pi$ selects at most one server in a period, the coupling does not change the distribution for the time-averaged queue length $\bar{Q}(T,\pi)$ and job-averaged time in system $\bar{W}(T,\pi)$.
\begin{proof}[Proof of Proposition~\ref{prop:clw}]
Fix a time horizon $T$. The sample path version of Little's Law (\cite[Theorem LL.2]{little2011or}) shows
$N(T)\bar{W}(T,\pi') = T\bar{Q}(T,\pi')$ for any policy $\pi'$. Applying this equation to both $\pi$ and $\pi^\star$ gives
\begin{equation}\label{eq:wait-queue}
N(T)\left(\bar{W}(T,\pi) - \bar{W}(T,\pi^\star)\right) = T\left(\bar{Q}(T,\pi) - \bar{Q}(T,\pi^\star)\right).
\end{equation}
We make the following case distinction based on the magnitude fo $T$:
\begin{itemize}
\item $T \leq \max(100,\frac{16\sqrt{2}}{\lambda^3})$: We use that $T$ is an upper bound on $\bar{Q}(T,\pi)$ and $\bar{W}(T,\pi)$; this holds because there is at most one arrival in each period for the single queue setting and the time in system for a job is at most $T$. As a result, both $\bar{Q}(T,\pi) - \bar{Q}(T,\pi^\star)$ and $\bar{W}(T,\pi) - \bar{W}(T,\pi^\star)$ are upper bounded by $\max(100,\frac{16\sqrt{2}}{\lambda^3})$;
\item $T > \max(100,\frac{16\sqrt{2}}{\lambda^3})$:
By Hoeffding's Inequality (Fact~\ref{fact:hoeffding}), the event \[\set{E}_T = \left\{N(T) \in \left[\lambda T - \sqrt{2T\ln T}, \lambda T + \sqrt{2T\ln T}\right]\right\}\] happens with probability $1 - 2T^{-4}.$ Moreover, $T \geq 100$ and Fact~\ref{fact:lnt-sqrt-prop}(iii) imply that $\ln(T) \leq T^{1/3}$, which implies $\sqrt{2T\ln T} \leq \sqrt{2T^{4/3}} \leq \sqrt{2}T^{2/3} \leq \lambda T / 2$ where the last inequality follows from $T \geq \frac{16\sqrt{2}}{\lambda^3}.$ As a result, the event $\set{E}_T$ implies $N(T) \in [0.5\lambda T, 1.5\lambda T].$ Conditioning on $\set{E}_T$ and applying \eqref{eq:wait-queue} gives:
\begin{align*}
\bar{W}(T,\pi) - \bar{W}(T,\pi^\star) = \frac{T}{N(T)}(\bar{Q}(T,\pi) - \bar{Q}(T,\pi^\star)) &\leq \frac{T}{0.5\lambda T}(\bar{Q}(T,\pi) - \bar{Q}(T,\pi^\star)) \\
&= \frac{2}{\lambda}(\bar{Q}(T,\pi) - \bar{Q}(T,\pi^\star))
\end{align*}
\begin{align*}
\text{ and } \bar{Q}(T,\pi) - \bar{Q}(T,\pi^\star) = \frac{N(T)}{T}(\bar{W}(T,\pi) - \bar{W}(T,\pi^\star)) &\leq \frac{1.5\lambda T}{T}(\bar{W}(T,\pi) - \bar{W}(T,\pi^\star)) \\
&= 1.5\lambda (\bar{W}(T,\pi) - \bar{W}(T,\pi^\star)),
\end{align*}
where the two inequalities rely on the fact that $\bar{Q}(T,\pi) \geq \bar{Q}(T,\pi^\star)$ under the above-mentioned coupling (see e.g. the proof of \cite[Lemma 1]{KrishnasamySJS21}). Since $\bar{W}(\pi, T) \leq T, \Pr\{\set{E}_T\} \geq 1 - 2T^{-4}$ and $T \geq 100$, we then have 
\begin{align*}
\expect{\bar{W}(T,\pi)-\bar{W}(T,\pi^\star)} &= \expect{\bar{W}(T,\pi)-\bar{W}(T,\pi^\star) \mid \set{E}_T}\Pr\{\set{E}_T\} + T\Pr\{\set{E}_T^c\} \\
&\hspace{-0.3in}\leq \frac{2}{\lambda}\expect{\bar{Q}(T,\pi) - \bar{Q}(T,\pi^\star) \mid \set{E}_T}\Pr\{\set{E}_T\} + 1 \leq \frac{2}{\lambda}\expect{\bar{Q}(T,\pi) - \bar{Q}(T,\pi^\star)} + 1, 
\end{align*}
where we again use that $\bar{Q}(T,\pi) \geq \bar{Q}(T,\pi^\star).$ We similarly obtain $$\expect{\bar{Q}(T,\pi) - \bar{Q}(T,\pi^\star)} \leq 1.5\lambda\expect{\bar{W}(T,\pi) - \bar{W}(T,\pi^\star)} + 1.$$
\end{itemize}
Summarizing the above two cases shows that for any $T$, 
\[
\expect{\bar{W}(T,\pi)-\bar{W}(T,\pi^\star)} \leq \max\left(100,\frac{16\sqrt{2}}{\lambda^3},\frac{2}{\lambda}\expect{\bar{Q}(T,\pi) - \bar{Q}(T,\pi^\star)} + 1\right), \text{ and}
\]
\[
\expect{\bar{Q}(T,\pi) - \bar{Q}(T,\pi^\star)} \leq \max\left(100,\frac{16\sqrt{2}}{\lambda^3},1.5\lambda\expect{\bar{W}(T,\pi) - \bar{W}(T,\pi^\star)} + 1\right).
\]
Taking the maximum over $T$ on both sides for both inequalities gives the desired result.
\end{proof}

\subsection{Statistical inference of traffic slackness  (Remark~\ref{remark:slackness})}\label{app:slackness-inference}
Before designing an optimal learning algorithm, a valid question the DM may have is whether the system is even stabilizable. Specifically, given a constant $\kappa > 0$, how can the DM know if the traffic slackness of the system is below $\kappa$, indicating more resources are needed? We next show how our theory on $\colq$ can help address such a statistical inference problem by identifying a procedure of hypothesis testing and analyzing its significance level and power. Although we restrict to a single-queue multi-server system, the discussion is also valid for the multi-queue multi-server and the queueing network settings.

Let the null hypothesis $H_0$ be that the traffic slackness $\varepsilon$ is greater or equal to $\kappa$ and the alternative hypothesis $H_1$ be that $\varepsilon < \kappa$. Rejecting the null hypothesis implies that the DM can (reasonably) infer that the system is under-provisioned. Suppose the desired significance level is $\alpha \in (0,1),$ i.e., the probability of rejecting $H_0$ is no larger than $\alpha$ if $H_0$ is indeed true.

To conduct the hypothesis test, the DM runs some learning algorithm $\pi$ (e.g., \textsc{UCB}). Assume that this algorithm enjoys a $\colq$ bound of $\theta(\varepsilon)$ when the system is $\varepsilon$. Then if $H_0$ is true ($\varepsilon \geq \kappa$), combining the $\colq$ bound and Lemma~\ref{lem:bound-optimal-queue} shows that the expected time-averaged queue length $\bar{Q}(T,\pi)$ is upper bounded by $\theta(\kappa) + \frac{\lambda}{\kappa}+\frac{1}{2}$ for any horizon $T$. For ease of exposition, suppose that $\theta(\kappa) \geq \frac{\lambda}{\kappa} + \frac{1}{2}$; otherwise we can adjust its definition. To design our test statistics, let $T_1$ be a pre-specified number of periods the DM runs the algorithm $\pi$, who thus observes a sample path of the queue length $Q(1,\pi),\ldots,Q(T_1,\pi).$ Let the test statistic be the observed time-averaged queue length $\bar{Q}(T_1,\pi).$ The DM rejects the null hypothesis if and only if
\begin{equation}\label{eq:test}
\bar{Q}(T_1,\pi) \geq \frac{2 \theta(\kappa)}{\alpha}.
\end{equation}
The following result identifies the significance level (probability of  rejecting $H_0$ when $H_0$ is 
\emph{true}) and the power (probability of rejecting $H_0$ when $H_0$ is \emph{false}) of this test.
\begin{proposition}\label{prop:test}
Suppose $\theta(\kappa) \geq \frac{\lambda}{\kappa}+\frac{1}{2}$. The test \eqref{eq:test} has significance level $\alpha$ and power $1 - \beta$ with
\[
\beta \leq \Pr_{H_1}\left\{\bar{Q}(T_1,\pi^\star) < \frac{2\theta(\kappa)}{\alpha}\right\}.
\]
\end{proposition}
The above result shows that the power of the test is at least $\Pr_{H_1}\left\{\bar{Q}(T_1,\pi^\star) \geq \frac{2\theta(\kappa)}{\alpha}\right\}$, which is the probability that the queue length under the optimal policy will have a queue length no smaller than $\frac{2\theta(\kappa)}{\alpha}$ when the traffic slackness is $\varepsilon$. Although we do not formally lower bound this probability, we provide some intuition on why it is approximately $(1-\varepsilon)^{2\theta(\kappa)/\alpha}$ when $T_1$ is large enough. Suppose $T_1$ is large enough so that the distribution of $\bar{Q}(T_1,\pi^\star)$ converges to its stationary distribution. Since its stationary distribution is geometrically distributed with rate $\frac{\lambda(1-\mu)}{(1-\lambda)\mu}$ (see e.g., \cite[example 3.3.5]{Srikant_Ying_2014}), \[\Pr_{H_1}\left\{\bar{Q}(T_1,\pi^\star) \geq \frac{2\theta(\kappa)}{\alpha}\right\} \approx \Pr_{H_1}\left\{\bar{Q}(\infty,\pi^\star) \geq \frac{2\theta(\kappa)}{\alpha}\right\} \approx \left(\frac{\lambda(1-\mu)}{(1-\lambda)\mu}\right)^{\frac{2\theta(\kappa)}{\alpha}} = \left(1 - \frac{\varepsilon}{\mu - \lambda \mu}\right)^{\frac{2\theta(\kappa)}{\alpha}}.
\]
This result demonstrates that for the proposed hypothesis testing \eqref{eq:test} to have non-zero power, the traffic slackness $\varepsilon$ should satisfy $\varepsilon < (\mu - \lambda \mu) \frac{2\alpha}{\theta(\kappa)}.$ This highlights the importance of having an efficient learning algorithm with low $\colq$ bound $\theta(\kappa)$, which enables the inference for a larger range of $\varepsilon$.  
\begin{proof}[Proof of Proposition~\ref{prop:test}]
This test has significance level $\alpha$ because by Markov's Inequality, 
\[
\Pr_{H_0}\left\{\bar{Q}(T_1,\pi) \geq  \frac{2\theta(\kappa)}{\alpha}\right\} \leq \frac{\expect{\bar{Q}(T_1,\pi)}}{\frac{2\theta(\kappa)}{\alpha}} \leq \frac{\left(\theta(\kappa) + \frac{\lambda}{\kappa}+\frac{1}{2}\right)}{\frac{2\theta(\kappa)}{\alpha}} \leq \alpha,
\]
where we use $\Pr_{H_0}\{\cdot\}$ to denote the probability of an event when the hypothesis $H_0$ is true and the assumption that $\theta(\kappa) \geq \frac{\lambda}{\kappa}+\frac{1}{2}$.

Understanding the power of this test, i.e., the probability of correctly rejecting the null hypothesis when $H_1$ is true, is more difficult. The goal is to lower bound the probability $\Pr_{H_1}\left\{\bar{Q}(T_1,\pi) \geq \frac{2\theta(\kappa)}{\alpha}\right\}$, which is $1 - \beta$ with $\beta = \Pr_{H_1}\left\{\bar{Q}(T_1,\pi) < \frac{2\theta(\kappa)}{\alpha}\right\}$ being the Type-II error of the test. Similar to the coupling argument in the proof of Proposition~\ref{prop:clw}, the time-averaged queue length under $\pi$ stochastically dominates that under $\pi^\star$. As a result,
\[
\beta =  \Pr_{H_1}\left\{\bar{Q}(T_1,\pi) < \frac{2\theta(\kappa)}{\alpha}\right\} \leq \Pr_{H_1}\left\{\bar{Q}(T_1,\pi^\star) < \frac{2\theta(\kappa)}{\alpha}\right\}.
\]
\end{proof}

\subsection{Optimality of $\textsc{UCB}$ when the traffic slackness is negative (Remark~\ref{remark:slackness})}\label{app:slackness-negative} 
Consider a single-queue multi-server system with a strictly negative traffic slackness $\varepsilon$. We argue that this setting is almost identical to the canonical multi-armed bandit setting. For a policy $\pi$, we define its regret by $\regret(\pi, T) = \expect{\sum_{t=1}^T (\mu^\star - \mu_{J(t)})}.$ Different from Sections~\ref{sec:model-single} and \ref{sec:optimal-single}, we no longer require the chosen server $J(t) = 0$ when $Q(\pi, t) = 0$. This allows the policy to behave like any multi-armed bandit policy. The below result shows that the difference in expected queue length between policy $\pi$ and the optimal policy is equal to its regret up to a constant.

\begin{proposition}\label{prop:regret-negative}
 Given a negative traffic slackness $\varepsilon < 0$, for any period $T$ and policy $\pi$, 
 \[
 \regret(\pi,T) -  \frac{2}{1 - \exp(-\varepsilon^2 / 2)} \leq \expect{Q(\pi,T+1) - Q(\pi^\star, T+1)} \leq \regret(\pi, T).
 \]
 \end{proposition}
 Since the regret must scale as $\Omega(\ln(T))$ for any policy by the Lai-Robbins lower bound, Proposition~\ref{prop:regret-negative} shows that the difference in expected queue length between any policy $\pi$ and the optimal policy must also scale as $\Omega(\ln(T))$ when the traffic slackness is negative. Moreover, optimizing this difference is (orderly) equivalent to optimizing the regret of the policy. Since \textsc{UCB} has the optimal $O(\ln (T))$ scaling \cite{AuerCF02}, we show that $\textsc{UCB}$ is also an optimal learning algorithm for the queueing setting when the traffic slackness is negative.  
 \begin{proof}[Proof of Proposition~\ref{prop:regret-negative}]
We apply the coupling in the proof of Proposition~\ref{prop:clw} such that for each period $t$ there is a uniform random variable $U(t) \in [0,1]$ and the random service satisfies $S_k(t) = \indic{U(t) \leq \mu_k}$ for any server $k$. This coupling does not change the distribution of $Q(\pi, T+1)$ and $Q(\pi^\star, T+1)$, but allows $Q(\pi, T+1) \geq Q(\pi^\star,T+1)$ for any sample path and $S_k(t) \geq S_{j^\star}(t)$ for any server $k$ and the optimal server $j^\star$. 

Since the queue length of a policy $\pi$ in period $T+1$ is equal to 
\begin{equation}\label{eq:queue-dynamic}
Q(\pi, T+1) = \sum_{t=1}^T A(t) - S_{J(t)}(t)\indic{Q(\pi, t) > 0},
\end{equation}
the suboptimality in queue length of policy $\pi$ satisfies
\begin{equation}\label{eq:queue-diff}
Q(\pi, T + 1) - Q(\pi^\star, T+1) = \sum_{t=1}^T S_{j^\star}(t)\indic{Q(\pi^\star,t) > 0} - S_{J(t)}(t)\indic{Q(\pi, t) > 0}.
\end{equation}
Noting that $Q(\pi^\star,t) > 0$ implies $Q(\pi, t) > 0$ due to the coupling and thus $\indic{Q(\pi^\star, t) > 0} \leq \indic{Q(\pi, t) > 0}$ for any $t$, the above equation gives
\begin{align*}
Q(\pi, T + 1) - Q(\pi^\star, T+1) &\leq \sum_{t=1}^T  S_{j^\star}(t)\indic{Q(\pi, t) > 0} - S_{J(t)}(t)\indic{Q(\pi, t) > 0} \\
&= \sum_{t=1}^T  (S_{j^\star}(t)- S_{J(t)}(t))\indic{Q(\pi, t) > 0} \\
&\leq \sum_{t=1}^T  S_{j^\star}(t)- S_{J(t)}(t),
\end{align*}
where the last inequality uses $S_{j^\star}(t) \geq S_{J(t)}(t)$ from the coupling. Taking expectation thus gives 
\[
\expect{Q(\pi, T + 1) - Q(\pi^\star, T+1)} \leq \expect{\sum_{t=1}^T  S_{j^\star}(t)- S_{J(t)}(t)} = \expect{\sum_{t=1}^T \mu^\star - \mu_{J(t)}} = \regret(\pi, T).
\]
For the other direction, \eqref{eq:queue-diff} implies
 \begin{align*}
 Q(\pi, T+1) - Q(\pi^\star, T+1) &\geq \sum_{t=1}^T S_{j^\star}(t)\indic{Q(\pi^\star,t) > 0} - S_{j(t)}(t) \\
 &\geq \sum_{t=1}^T (S_{j^\star}(t) - S_{j(t)}(t)) - \sum_{t=1}^T \indic{Q(\pi^\star,t) = 0}.
 \end{align*}
Taking expectation on both sides gives
\begin{equation}\label{eq:queue-diff-lower}
\expect{Q(\pi, T+1) - Q(\pi^\star, T+1)} \geq \regret(\pi, T) - \sum_{t=1}^T \Pr\{Q(\pi^\star,t) = 0\}.
\end{equation}
For $Q(\pi^\star,t) = 0$, it is necessary that $\sum_{\tau=1}^{t-1} A(\tau) - S_{j^\star}(\tau) < 0$ by applying \eqref{eq:queue-dynamic} at $t$. Therefore,
\[
\Pr\{Q(\pi^\star,t) = 0\} \leq \Pr\left\{\sum_{\tau=1}^{t-1} A(\tau) - S_{j^\star}(\tau) < 0\right\} \leq 2\exp(-\varepsilon^2 (t-1) / 2),
\]
where the last inequality uses the Hoeffding's Inequality (Fact~\ref{fact:hoeffding}) and the fact that $A(\tau) - S_{j^\star}(\tau) \in [-1,1]$ and has mean $-\varepsilon$. Plugging it back to \eqref{eq:queue-diff-lower} gives
\begin{align*}
\expect{Q(\pi, T+1) - Q(\pi^\star, T+1)} &\geq \regret(\pi, T) - \sum_{t=1}^T 2\exp(-\varepsilon^2 (t-1) / 2) \\
&\geq \regret(\pi, T) - \frac{2}{1 - \exp(-\varepsilon^2 / 2)}.
\end{align*}
\end{proof}

\section{Supplementary materials of Section~\ref{sec:optimal-single}}\label{app:single}
% !TEX root = main.tex
\subsection{Proof of the lower bound (Theorem~\ref{thm:colq-lowerbound})}\label{app:proof-lowerbound}
Consider $K$ sequences of service rates $\{\bolds{\mu}^k=(\mu_j^k)_{j \in \set{K}}\}_{k \in \set{K}}$ such that $\mu_j^k = \frac{1}{2} - \trafficslack$ for $j \neq k$ and $\mu_{k}^k = \frac{1}{2}+\trafficslack$. The arrival rate is $\lambda = \frac{1}{2}$. Fix a policy $\pi$. Let $\Pr^{k}\{\cdot\}$ and $\mathbb{E}^k[\cdot]$ denote the probability distribution and the expectation when the service rate is set as $\bolds{\mu}^k$. Our goal is to show that for a particular $T$, we have 
$\frac{1}{K}\sum_{k \in \set{K}}\frac{\sum_{t=1}^T \left(\mathbb{E}^k[Q(t)] - \mathbb{E}^k[Q^\star(t)]\right)}{T} = \Omega(\frac{K}{\trafficslack})$ and thus $\colq^{\single}(\lambda,\bolds{\mu}^k,\pi) = \Omega(\frac{K}{\trafficslack})$ for at least one $k \in \set{K}$. This then implies $\colq^{\single}(K,\trafficslack,\pi) = \Omega(\frac{K}{\trafficslack})$ since the traffic slackness of the system is $\trafficslack$ for every $\bolds{\mu}^k$. For every $t$, define the arrival sequence $\bolds{A}_t = (A(1),\ldots,A(t))$ and service sequence $\bolds{S}_t = (S_{J(1)}(1),\ldots,S_{J(t)}(t))$. Without loss of generality, we assume that $\pi$ is a deterministic function such that for every $t$, it maps $(\bolds{A}_{t-1},\bolds{S}_{t-1})$ to a server from $\set{K} \cup \{\perp\}$. The case where $\pi$ allows randomization does not affect the result by Fubini's Theorem (see Section 3.3 of \cite{BubeckC12}). We denote the history $(\bolds{A}_t,\bolds{S}_t)$ by $\bolds{H}_t$; the selected server $J(t)$ is a function of $\bolds{H}_{t-1}$ since $\pi$ is a deterministic policy.

For any horizon $T$, let $N_k(T)$ be the random variable of the number of times the policy selects server $k$ in the first $T$ periods. Define a sequence of uniform service rates $\bolds{\mu}^{\mathrm{unif}}$ where $\bolds{\mu}^{\mathrm{unif}}_j = \frac{1}{2}-\trafficslack$ for $j \in \set{K}$ and $\Pr^{\mathrm{unif}}\{\cdot\}$ and $\mathbb{E}^{\mathrm{unif}}[\cdot]$ are defined similarly as before. In the bandit setting, it was shown that $\mathbb{E}^k[N_k(T)]$ is not much greater than $\mathbb{E}^{\mathrm{unif}}[N_k(T)]$ \cite{AuerCFS02}. The following lemma shows that this is also the case for the considered queueing system.
\begin{lemma}\label{lem:link-count}
For any $k \in \set{K}, \trafficslack \leq \frac{1}{4}, T \geq 1$, we have $\mathbb{E}^k[N_k(T)] \leq \mathbb{E}^{\mathrm{unif}}[N_k(T)] + 4\trafficslack T\sqrt{\mathbb{E}^{\mathrm{unif}}[N_k(T)]}$.
\end{lemma}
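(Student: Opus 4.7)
The plan is to follow the classical change-of-measure argument from adversarial/stochastic bandits (in the style of Auer--Cesa-Bianchi--Freund--Schapire), adapted to the queueing observations. Since $\pi$ is assumed deterministic, $N_k(T)$ is a $[0,T]$-valued function of the history $\bolds{H}_T = (\bolds{A}_T,\bolds{S}_T)$. Writing $\Pr^k$ and $\Pr^{\mathrm{unif}}$ for the laws of $\bolds{H}_T$ under $\bolds{\mu}^k$ and $\bolds{\mu}^{\mathrm{unif}}$ respectively, the standard coupling bound yields
\[
\mathbb{E}^k[N_k(T)] - \mathbb{E}^{\mathrm{unif}}[N_k(T)] \;\leq\; T \cdot \|\Pr^k - \Pr^{\mathrm{unif}}\|_{\mathrm{TV}},
\]
so the task reduces to bounding the total variation distance between the two history laws.

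The next step is to apply Pinsker's inequality, $\|\Pr^k - \Pr^{\mathrm{unif}}\|_{\mathrm{TV}} \leq \sqrt{\tfrac{1}{2}\mathrm{KL}(\Pr^{\mathrm{unif}}\,\|\,\Pr^k)}$, and then to decompose the KL divergence via the chain rule along the history. At every period $t$, the arrival $A(t)$ has the same Bernoulli$(\lambda)$ distribution under both measures, so contributes $0$ to the KL. The service observation $S_{J(t)}(t)$ is also identically distributed under both measures whenever $J(t)\neq k$, because the two service-rate vectors agree off coordinate $k$. Since $J(t)$ is measurable with respect to $\bolds{H}_{t-1}$, and since $J(t) = \perp$ (queue empty) contributes no randomness either, the only periods that contribute to the KL are those in which $J(t)=k$, each contributing $\mathrm{KL}(\mathrm{Ber}(\tfrac{1}{2}-\trafficslack)\,\|\,\mathrm{Ber}(\tfrac{1}{2}+\trafficslack))$. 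Hence
\[
\mathrm{KL}(\Pr^{\mathrm{unif}}\,\|\,\Pr^k) \;=\; \mathbb{E}^{\mathrm{unif}}[N_k(T)]\cdot \mathrm{KL}\!\left(\mathrm{Ber}(\tfrac{1}{2}-\trafficslack)\,\big\|\,\mathrm{Ber}(\tfrac{1}{2}+\trafficslack)\right).
\]

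Finally, I would use an elementary bound on the Bernoulli KL: computing directly gives $\mathrm{KL}(\mathrm{Ber}(\tfrac{1}{2}-\trafficslack)\,\|\,\mathrm{Ber}(\tfrac{1}{2}+\trafficslack)) = 2\trafficslack\log\!\tfrac{1+2\trafficslack}{1-2\trafficslack}$, and the series expansion of $\log\!\tfrac{1+x}{1-x}$ together with $\trafficslack\leq \tfrac{1}{4}$ yields an upper bound of the form $c\,\trafficslack^2$ with a constant $c\leq 32$. Plugging into Pinsker's inequality gives $\|\Pr^k-\Pr^{\mathrm{unif}}\|_{\mathrm{TV}} \leq 4\trafficslack\sqrt{\mathbb{E}^{\mathrm{unif}}[N_k(T)]}$, and multiplying by $T$ closes the lemma.

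The main obstacle is step three, the KL chain-rule decomposition. Even though the argument is standard in bandits, I would want to write it out carefully in the queueing setting because (i) the DM may decline to query any server in periods with an empty queue, so one has to argue that those periods contribute nothing, and (ii) the decision $J(t)$ depends on the entire past history including previous services, so one must invoke the chain rule conditionally and use the fact that, given $\bolds{H}_{t-1}$, the event $\{J(t)=k\}$ is deterministic under both measures. The remaining steps (Pinsker and the Bernoulli-KL estimate) are routine calculations.
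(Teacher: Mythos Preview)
Your proposal is correct and follows essentially the same approach as the paper: bound the difference of expectations by $T$ times the total variation, apply Pinsker, decompose the KL via the chain rule along the history (noting that arrivals and services at arms $j\neq k$ contribute nothing), and obtain $\mathrm{KL}(\Pr^{\mathrm{unif}}\|\Pr^k)=\mathbb{E}^{\mathrm{unif}}[N_k(T)]\cdot\mathrm{KL}(\mathrm{Ber}(\tfrac12-\trafficslack)\|\mathrm{Ber}(\tfrac12+\trafficslack))$. The only cosmetic difference is in the final numerical step: the paper bounds the Bernoulli KL by the $\chi^2$-divergence, $\frac{(2\trafficslack)^2}{(1/2+\trafficslack)(1/2-\trafficslack)}\leq \frac{64}{3}\trafficslack^2$ for $\trafficslack\leq\frac14$, whereas you compute it explicitly as $2\trafficslack\log\frac{1+2\trafficslack}{1-2\trafficslack}$ and bound via the series; both routes land comfortably on the constant $4$.
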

\begin{proof}
The proof is similar to that of Lemma A.1 in \cite{AuerCFS02} and is based on KL divergence. We provide a detailed proof here because we use a different setting of service rates and because of the presence of arrival events. Fix $k,\trafficslack, T$. For ease of notations, we denote $\bolds{A} = \bolds{A}_T, \bolds{S}=\bolds{S}_T$ and let $\bolds{H} = (\bolds{A},\bolds{S})$ denote the history of horizon $T$. The realization of $\bolds{H}$ is denoted by $\bolds{h} \in \{0,1\}^{2T}$. Then for any distribution $G,P$ on $\{0,1\}^{2T}$, we denote $\tv{G}{P}$ to be the total variation between $G,P$ defined by the maximum of $|G(\set{E})-P(\set{E})|$ for all measurable event $\set{E}$. In addition, we use $\kl{G}{P}$ for the KL divergence between $G,P$ defined by $\kl{G}{P} = \sum_{\bolds{h} \in \{0,1\}^{2T}} G(\bolds{h})\ln\left(\frac{G(\bolds{h})}{P(\bolds{h})}\right)$. For any $t \leq T$, let $G(\bolds{h}_t)$ denote the marginal distributions over $\bolds{H}_t$. The conditional KL divergence between them conditioned on the history $\bolds{h}_{t-1}$, $\kl{G(\bolds{h}_t | \bolds{h}_{t-1})}{P(\bolds{h}_t | \bolds{h}_{t-1})}$ is defined as $\sum_{\bolds{h}_{t} \in \{0,1\}^{2t}} G(\bolds{h}_t)\ln\frac{G(\bolds{h}_t | \bolds{h}_{t-1})}{P(\bolds{h}_t | \bolds{h}_{t-1})}$ with $\bolds{h}_{t-1}$ being the first $t-1$ entries of $\bolds{h}_t$. 

To bound the difference between $\mathbb{E}^k[N_k(T)]$ and $\mathbb{E}^{\mathrm{unif}}[N_k(T)]$, note that $N_k(T)$ must be a deterministic function of the history $\bolds{H}$ when the policy $\pi$ is fixed. Write it as a mapping $N_k \colon \{0,1\}^{2T} \to [0,T]$. Then 
\begin{equation}\label{eq:link-count}
\begin{aligned}
\mathbb{E}^k[N_k(T)]-\mathbb{E}^{\mathrm{unif}}[N_k(T)] = \sum_{\bolds{h} \in \{0,1\}^{2T}} N_k(\bolds{h})\left(\Pr^k\{\bolds{h}\} - \Pr^{\mathrm{unif}}\{\bolds{h}\}\right) &\leq T\tv{\Pr^{\mathrm{unif}}}{\Pr^k} \\
&\leq T\sqrt{\frac{\kl{\Pr^{\mathrm{unif}}}{\Pr^k}}{2}}
\end{aligned}
\end{equation}
where the last inequality is by Pinsker's Inequality (Fact \ref{prop:pinsker}).

By the chain rule of KL divergence (Fact~\ref{prop:chain}) and the independence between the history up to period $t-1$ and the arrivals and services in period $t$, 
\begin{align*}
\kl{\Pr^{\mathrm{unif}}}{\Pr^k}&=\sum_{t=1}^T \kl{\Pr^{\mathrm{unif}}(\bolds{h}_t | \bolds{h}_{t-1})}{\Pr^{k}(\bolds{h}_t | \bolds{h}_{t-1})} \\
&= \sum_{t=1}^T \sum_{\bolds{h}_t \in \{0,1\}^{2t}} \Pr^{\mathrm{unif}}(\bolds{h}_t)\ln\left(\frac{\Pr^{\mathrm{unif}}(\bolds{h}_t | \bolds{h}_{t-1})}{\Pr^k(\bolds{h}_t | \bolds{h}_{t-1})}\right) \\
&= \sum_{t=1}^T \sum\limits_{\substack{\bolds{h}_{t-1} \in \{0,1\}^{2(t-1)} \\ a,s \in \{0,1\}, j \in \set{K}}} \Pr^{\mathrm{unif}}(\bolds{h}_{t-1})\Pr^{\mathrm{unif}}(A(t)=a,J(t)=j,S_j(t)=s | \bolds{h}_{t-1}) \\
&\mspace{32mu}\cdot\ln\frac{\Pr^{\mathrm{unif}}(A(t)=a,J(t)=j,S_j(t)=s | \bolds{h}_{t-1})}{\Pr^k(A(t)=a,J(t)=j,S_j(t)=s | \bolds{h}_{t-1})}.
\end{align*}
When $J(t) \neq k$, the probability of event $(A(t) = a,S_j(t) = s)$ is the same for both $\Pr^{\mathrm{unif}}$ and $\Pr^k$ condition on $\bolds{h}_{t-1}$. As a result, the last term becomes zero. Letting $\kl{g}{p}$ for the KL divergence between two Bernoulli distributions with mean $g, p \in [0,1]$ respectively, we then have
\begin{align}
\kl{\Pr^{\mathrm{unif}}}{\Pr^k}&=\sum_{t=1}^T \Pr^{\mathrm{unif}}(J(t) = k)\kl{\nicefrac{1}{2}-\varepsilon}{\nicefrac{1}{2}+\varepsilon} \nonumber\\
&= \mathbb{E}^{\mathrm{unif}}[N_k(T)]\kl{\nicefrac{1}{2}-\varepsilon}{\nicefrac{1}{2}+\varepsilon} \nonumber\\
&\leq \mathbb{E}^{\mathrm{unif}}[N_k(T)]\frac{4\varepsilon^2}{(\nicefrac{1}{2}+\varepsilon)(\nicefrac{1}{2}-\varepsilon)} = \frac{64}{3}\varepsilon^2\mathbb{E}^{\mathrm{unif}}[N_k(T)], \label{eq:bound-dkl}
\end{align}
where the last inequality is because of Fact~\ref{prop:kl-bound} and the assumption that $\varepsilon \leq 0.25$. Combining \eqref{eq:bound-dkl} with \eqref{eq:link-count}, we have
\[
\mathbb{E}^k[N_k(T)]-\mathbb{E}^{\mathrm{unif}}[N_k(T)] \leq T\sqrt{\frac{32}{3}\varepsilon^2\mathbb{E}^{\mathrm{unif}}[N_k(T)]} \leq 4\varepsilon T\sqrt{\mathbb{E}^{\mathrm{unif}}[N_k(T)]}.
\]
\end{proof}
The following lemma gives an upper bound on the time-averaged queue length for the optimal policy over any horizon. For ease of notations, we use $Q^{\star}(t) = Q(t,\pi^{\star})$.
\begin{lemma}\label{lem:bound-optimal-queue}
For any $\lambda,\varepsilon \in (0,1]$ and $T \geq 1$, it holds that $\frac{1}{T}\sum_{t=1}^T \expect{Q^\star(t)} \leq \frac{\lambda}{\varepsilon}+\frac{1}{2}$.
\end{lemma}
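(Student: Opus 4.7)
The plan is to run a standard Foster--Lyapunov / quadratic drift argument on the queue under $\pi^\star$. Under the optimal policy, whenever $Q^\star(t)\ge 1$ the DM selects the fastest server, whose service is Bernoulli with mean $\mu^\star\ge\lambda+\varepsilon$, and otherwise the null server with zero service. Letting $S^\star(t)$ denote the Bernoulli$(\mu^\star)$ outcome, the recursion \eqref{eq:dynamic-single} specializes to
\[
Q^\star(t+1) = Q^\star(t) - S^\star(t)\mathbb{1}\{Q^\star(t)\ge 1\} + A(t).
\]
I would take the Lyapunov function $V(t) = (Q^\star(t))^2$ and, writing $X=A(t)$ and $Y=S^\star(t)\mathbb{1}\{Q^\star(t)\ge 1\}$, expand the one-step difference as
\[
V(t+1) - V(t) \;=\; (X-Y)^2 + 2Q^\star(t)(X-Y).
\]

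For the quadratic piece, the key observation is that $X,Y\in\{0,1\}$ forces $(X-Y)^2 \le X+Y$ pointwise, so $\expect{(X-Y)^2}\le \lambda + \mu^\star\Pr\{Q^\star(t)\ge 1\}\le 2\lambda+\varepsilon$ (using $\mu^\star = \lambda+\varepsilon$; without loss of generality one can take equality since a larger $\mu^\star$ only helps). For the linear piece I would use that $(A(t),S^\star(t))$ is independent of the history, together with the identity $Q^\star(t)\mathbb{1}\{Q^\star(t)\ge 1\}=Q^\star(t)$, to get
\[
\expect{2Q^\star(t)(X-Y)} \;=\; 2(\lambda-\mu^\star)\expect{Q^\star(t)} \;=\; -2\varepsilon\,\expect{Q^\star(t)}.
\]
Combining yields the drift bound $\expect{V(t+1)-V(t)} \le 2\lambda+\varepsilon - 2\varepsilon\expect{Q^\star(t)}$.

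The rest is bookkeeping: summing the drift inequality over $t=1,\dots,T$ telescopes the left-hand side to $\expect{V(T+1)}-V(1)\ge 0$ since $Q^\star(1)=0$, which gives
\[
2\varepsilon\sum_{t=1}^T \expect{Q^\star(t)} \;\le\; T(2\lambda+\varepsilon),
\]
and dividing by $2\varepsilon T$ produces the claimed bound $\tfrac{\lambda}{\varepsilon}+\tfrac{1}{2}$. There is no real obstacle here; the only point worth flagging is the handling of idle periods, and this is resolved cleanly by the identity $Q^\star(t)\mathbb{1}\{Q^\star(t)\ge 1\}=Q^\star(t)$, which lets the $-2\varepsilon \expect{Q^\star(t)}$ term emerge without a stray boundary correction.
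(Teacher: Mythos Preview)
Your proof is correct and follows essentially the same quadratic Lyapunov argument as the paper: both compute the drift of $V(t)=(Q^\star(t))^2$, bound the quadratic increment by $2\lambda+\varepsilon$ (the paper via $A^2+S^2$, you via $(X-Y)^2\le X+Y$, which is the same since $A,S\in\{0,1\}$), obtain the linear term $-2\varepsilon\expect{Q^\star(t)}$, and telescope. The only cosmetic difference is that the paper writes the dynamics as $Q^\star(t+1)=\max(0,Q^\star(t)-S_{k^\star}(t))+A(t)$ and checks the drift inequality by a two-case analysis, whereas you carry the indicator explicitly and invoke $Q^\star(t)\mathbb{1}\{Q^\star(t)\ge 1\}=Q^\star(t)$; these are equivalent.
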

\begin{proof}
For any period $t$, consider the Lyapunov function $V^\star(t) = (Q^\star(t))^2$. We can rewrite the queueing dynamic \eqref{eq:dynamic-single} by $Q^\star(t+1) = \max(0,Q^\star(t) - S_{k^\star}(t)) + A(t)$. As a result,
\begin{align*}
\expect{V^\star(t+1) - V^\star(t)} &= \expect{\left(\max(0,Q^\star(t) - S_{k^\star}(t)) + A(t)\right)^2 - (Q^\star(t))^2} \\
&\leq \expect{ A^2(t)+S_{k^\star}^2(t) + 2(A(t)-S_{k^\star}(t))Q^\star(t)} \\
&= \lambda + \lambda + \varepsilon - 2\varepsilon \expect{Q^\star(t)} \\
&= 2\lambda+\varepsilon - 2\varepsilon\expect{Q^\star(t)}
\end{align*}
where the first inequality can be verified by considering $Q^\star(t) = 0$ or $Q^\star(t) \geq 1$. Then for any $T \geq 1$, since $Q(1) = 0$, we have
\[
0\leq \expect{V^\star(T+1)} = \sum_{t=1}^T \expect{V^\star(t+1) - V^\star(t)} \leq (2\lambda+\varepsilon)T-2\varepsilon\sum_{t=1}^T\expect{Q^\star(t)},
\]
which gives $\frac{1}{T}\sum_{t=1}^T \expect{Q^\star(t)} \leq \frac{\lambda}{\varepsilon} + \frac{1}{2}$.
\end{proof}
We are ready to prove Theorem~\ref{thm:colq-lowerbound}.
\begin{proof}[Proof of Theorem~\ref{thm:colq-lowerbound}]
For a setting of service rates $\bolds{\mu}^k$, the number of total serviced jobs in time horizon $T$ is $\mathbb{E}^k[\sum_{t=1}^T \mu^{k}_{J(t)}] = (\nicefrac{1}{2}+\varepsilon)\mathbb{E}^k[N_k(T)] + (\nicefrac{1}{2}-\varepsilon)(T-\mathbb{E}^k[N_k(T)]) = (\nicefrac{1}{2}-\varepsilon)T + 2\varepsilon \mathbb{E}^k[N_k(T)]$. By Lemma~\ref{lem:link-count}, we have 
\begin{equation}\label{eq:upperbound-service}
\begin{aligned}
\sum_{k \in \set{K}} \mathbb{E}^k\left[\sum_{t=1}^T \mu^{k}_{J(t)}\right] &= \sum_{k \in \set{K}} \left((\nicefrac{1}{2}-\varepsilon)T + 2\varepsilon \mathbb{E}^k[N_k(T)]\right) \\ &\leq (\nicefrac{1}{2}-\varepsilon)KT + 2\varepsilon\sum_{k \in \set{K}}\left(\mathbb{E}^{\mathrm{unif}}[N_k(T)] + 4\varepsilon T\sqrt{\mathbb{E}^{\mathrm{unif}}[N_k(T)]}\right) \\
&\leq (\nicefrac{1}{2}-\varepsilon)KT + 2\varepsilon T + 8\varepsilon^2 T\sqrt{KT}
\end{aligned}
\end{equation}
where the last inequality is by $\sum_{k \in \set{K}} \mathbb{E}^{\mathrm{unif}}[N_k(T)] \leq T$ and Cauchy-Schwarz inequality. Let $T_1 = \lceil \frac{K}{(32\varepsilon)^2} \rceil$. For any $T \leq T_1$, by the queueing dynamic \eqref{eq:dynamic-single}, 
\begin{equation}\label{eq:lowerbound-queue}
\begin{aligned}
\frac{1}{K}\sum_{k \in \set{K}}\mathbb{E}^k[Q(T)] \geq \frac{1}{K}\sum_{k \in \set{K}} \left(\lambda T - \mathbb{E}^k\left[\sum_{t=1}^T \mu^{k}_{J(t)}\right]\right) &\overset{\eqref{eq:upperbound-service}}{\geq} \frac{T}{2} - \frac{1}{K}\left((\nicefrac{1}{2}-\varepsilon)KT + 2\varepsilon T + 8\varepsilon^2 T\sqrt{KT}\right) \\
&\geq \frac{\varepsilon T}{2} - 8\varepsilon^2 T \sqrt{\frac{T}{K}} \geq \frac{\varepsilon T}{4}
\end{aligned}
\end{equation}
where the second inequality is because $K \geq 4$ and the last inequality is because $8\varepsilon^2T\sqrt{\frac{T}{K}} \leq \frac{\varepsilon T}{4}$ for any $T \leq \frac{K}{(32\varepsilon)^2}$. Therefore, for any feasible policy $\pi$,
\begin{align*}
\colq^{\single}(K,\varepsilon,\pi) \geq \frac{1}{K}\sum_{k \in \set{K}} \colq^{\single}(\lambda,\bolds{\mu}^k,\pi) &\geq \frac{1}{K}\left(\frac{1}{T_1}\sum_{T=1}^{T_1}\left(\mathbb{E}^k[Q(T)] - \mathbb{E}^k[Q^\star(T)]\right)\right) \\
&\geq \frac{\varepsilon T_1}{8} - \frac{1}{2\varepsilon}-\frac{1}{2} \geq \frac{\varepsilon}{8}\left(\frac{K}{(32\varepsilon)^2}\right)-\frac{1}{2\varepsilon}-1\geq \frac{K}{2^{14}\varepsilon}
\end{align*}
where the third inequality is by Lemma~\ref{lem:bound-optimal-queue} and \eqref{eq:lowerbound-queue}; the forth inequality is by the setting of $T_1$ and $\varepsilon \leq \frac{1}{4}$; and the last inequality is by $K \geq 2^{14}$.
\end{proof}

\subsection{Comparison with the lower bound in \cite{KrishnasamySJS21} (Remark~\ref{remark:compare})}\label{app:lowerbound-compare}
Let $\mu_{\min} = \min_k \mu_k, \Delta = \min_{k\colon \mu_k \neq \mu^\star} \mu^\star - \mu_k$ and $D(\bolds{\mu}) = \frac{\Delta}{\kl{\mu_{\min}}{\frac{\mu^\star + 1}{2}}}$. For any $\alpha-$consistent policy $\pi$ (see Definition 1 in \cite{KrishnasamySJS21}), \cite[Proposition 3]{KrishnasamySJS21} establishes the following lower bound.
\begin{proposition}
Given any single-queue system $(\lambda,\bolds{\mu})$ and any $\alpha-$consistent policy $\pi$ and $\gamma > \frac{1}{1-\alpha}$, there exists constant $\tau, C_1$ independent of $\lambda,\bolds{\mu}$ such that for $\eta = \frac{(K-1)D(\bolds{\mu})}{2\max(C_1K^{\gamma},\tau)}$, if $\varepsilon < \eta$, then 
\begin{equation}\label{eq:ksjs-bound}
\expect{Q(t,\pi) - Q^\star(t)} \geq \frac{D(\bolds{\mu})}{2}(K-1)\frac{\log t}{\log \log t}
\end{equation}
for $t \in [\max(C_1K^{\gamma}, \tau), (K-1)\frac{D(\bolds{\mu})}{2\varepsilon}]$.
\end{proposition}
Applying the lower bound \eqref{eq:ksjs-bound} gives a lower bound on $\colq$ that $\colq^{\single}(\lambda,\bolds{\mu},\pi) \geq \tilde{\Omega}(K\ln(1/\varepsilon)D(\bolds{\mu})).$ To compare the tightness of \eqref{eq:ksjs-bound} with our bound (Theorem~\ref{thm:colq-lowerbound}), it remains to compare $D(\bolds{\mu})$ with $O(\frac{1}{\varepsilon})$. The following result shows that $D(\bolds{\mu}) \leq 9$ and thus the bound from \cite{KrishnasamySJS21} is weaker than ours when $\varepsilon$ is sufficiently small.
\begin{proposition}
For any $\bolds{\mu}$, we have $D(\bolds{\mu}) \leq 9.$
\end{proposition}
\begin{proof}
Note that if $\Delta = 0$, we simply have $D(\bolds{\mu}) = 0$; so let us focus on $\Delta \in (0,\mu^\star]$. In addition, we have $\mu_{\min} \leq \mu^\star - \Delta \leq \frac{\mu^\star+1}{2}$ and thus $D(\bolds{\mu}) \leq \frac{\Delta}{\kl{\mu^\star - \Delta}{\frac{\mu^\star+1}{2}}}$. We next consider the minimum value of $\kl{\mu^\star - \Delta}{\frac{\mu^\star+1}{2}}$ when $\Delta$ is fixed but $\mu^\star$ is allowed to change. 

Define the triangular discrimination $D_{\triangle}(g \| p)$ between two Bernoulli random variables of mean $g,p$ respectively by \[D_{\triangle}(g \| p) \coloneqq \frac{(g-p)^2}{g+p}+\frac{(1-g-(1-p))^2}{1-g+1-p}.\]
Let $q = \frac{\mu^\star-1}{2}$. A refined Pinsker's inequality (see \cite[Eq.(16)]{FosterK21} and \cite{Topsoe00}) shows 
\begin{align}
\kl{\mu^\star - \Delta}{\frac{\mu^\star+1}{2}} &\geq \frac{1}{2}D_{\triangle}(\mu^\star - \Delta \| \frac{\mu^\star+1}{2}) \nonumber\\
&= \frac{1}{2}D_{\triangle}(\mu^\star - \Delta \| \mu^\star - q) = \frac{(\Delta-q)^2}{(2+3q-\Delta)(\Delta-3q)}.\label{eq:connect-kl-triangle}
\end{align}
Since $1 \geq \mu^\star \geq \Delta$, we require $q \in [\frac{\Delta-1}{2},0]$. We next consider the minimum value of $f(q) \coloneqq \frac{(\Delta-q)^2}{(2+3q-\Delta)(\Delta-3q)}$ for $q \in [\frac{\Delta-1}{2},0]$. Taking derivative of $f$ gives
\[
f'(q) = \frac{(-12\Delta-6)q^2+(16\Delta^2+4\Delta)q-4\Delta^3+2\Delta^2}{(2+3q-\Delta)^2(\Delta-3q)^2}.
\]
Solving $f'(q^\star) = 0$ gives $q^\star = \frac{4\Delta^2+\Delta \pm (2\Delta^2+2\Delta)}{6\Delta+3}$. There are two cases:
\begin{itemize}
\item $\Delta > 0.5$. Then zero points of $f'$ are all positive and thus $f'(q)$ is negative for $q \leq 0$, implying that $f(q)$ is minimized at $q = 0$ over $[\frac{\Delta-1}{2},0]$. For this case, $f(q) \geq \frac{\Delta^2}{(2-\Delta)\Delta} \geq \frac{\Delta}{2}$;
\item $\Delta \leq 0.5$. There is one zero point $q^\star = \frac{2\Delta^2-\Delta}{6\Delta+3}$ for $f'(q)$ over $q \leq 0$. One can verify that $q^\star \in [\frac{\Delta-1}{2},0]$. Therefore, over $[\frac{\Delta-1}{2},0]$, 
\[
f(q) \geq f(q^\star)=\frac{(4\Delta(\Delta+1)/(6\Delta+3))^2}{(2-2\Delta/(2\Delta+1))(2\Delta / (2\Delta+1)} \geq \frac{16\Delta(\Delta+1)^2}{36(2\Delta+1)} \geq \frac{\Delta}{9}.
\]
\end{itemize}
Combining the above two cases and by \eqref{eq:connect-kl-triangle}, we have $\kl{\mu^\star-\Delta}{\frac{\mu^*+1}{2}} \geq \min_{q \in [\frac{\Delta-1}{2},0] f(q)} \geq \frac{\Delta}{9}$. Therefore, $D(\bolds{\mu}) \leq \frac{\Delta}{\kl{\mu^\star-\Delta}{\frac{\mu^*+1}{2}}} \leq 9$.
\end{proof}

\section{Supplementary materials of Section~\ref{sec:optimal-multi}}\label{app:multi}
% !TEX root = main.tex
\subsection{Satisficing regret of \textsc{MW-UCB} (Lemma~\ref{lem:sar-mw-ucb})} \label{app:lem-sar-mw-ucb}

To analyze the satisficing regret, define $\Delta_k(t) = \sqrt{\frac{2\ln t}{C_k(t)}}$ for the confidence bound of server $k$ in period $t$ as in \textsc{MW-UCB} (Algorithm~\ref{algo:mw-ucb}). This can be viewed as the loss of choosing server~$k$; we next show that, with high probability, the loss in selecting $\bolds{\sigma}(t)$, $\Delta(t)$, is upper bounded by the sum of $\Delta_k(t)$ over selected servers. Formally, define the good event for period $t$ by $\set{G}_t = \left\{\Delta(t) \leq 2\sum_{k \in \set{K}}\sigma_k(t) \Delta_k(t)\right\}$. The following result extends Lemma~\ref{lem:ucb-conc} (which is for \textsc{UCB} in single-queue cases) for \textsc{MW-UCB} and obtains a lower bound on the probability of good events.
\begin{lemma}\label{lem:multi-good-event}
For every period $t$ and under $\textsc{MW-UCB}$, we have $\Pr\{\set{G}_t\} \geq 1 - 2Kt^{-3}$.
\end{lemma}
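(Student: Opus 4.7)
The plan is to mirror the strategy of Lemma~\ref{lem:ucb-conc} but carry the UCB sandwich through the weight function $W_{\bolds{\sigma}}(t)$ rather than through a single scalar service‑rate gap. First I would introduce the concentration event
\[
\set{E}_t = \bigcap_{k \in \set{K}} \left\{ |\hat{\mu}_k(t) - \mu_k| \leq \Delta_k(t)\right\},
\]
with the convention that the $k$-th clause is vacuous when $C_k(t)=0$ (in that case $\Delta_k(t)=\infty$). Applying Hoeffding's inequality at each possible sample count and union-bounding over the $K$ servers and the at most $t$ values of $C_k(t)\in\{1,\dots,t-1\}$ gives $\Pr\{\set{E}_t^c\} \leq 2Kt\cdot t^{-4} = 2Kt^{-3}$, exactly as in the single-queue argument. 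Since the good event only needs to fail on $\set{E}_t^c$, it suffices to prove $\set{E}_t \subseteq \set{G}_t$.

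On $\set{E}_t$, the definition $\bar{\mu}_k(t) = \min(1,\hat{\mu}_k(t)+\Delta_k(t))$ combined with $\mu_k\in[0,1]$ produces the two-sided sandwich $\mu_k \leq \bar{\mu}_k(t) \leq \mu_k + 2\Delta_k(t)$ for every $k\in\set{K}$. Introduce the UCB‑based weight $\bar{W}_{\bolds{\sigma}}(t)=\sum_{n\in\set{N}} Q_n(t)\sum_{k\in\set{B}_n}\sigma_k\bar{\mu}_k(t)$, which is precisely the objective \textsc{MW-UCB} maximizes over $\bolds{\Sigma}_t$. Since $\bolds{\sigma}^{\MW}(t)\in\bolds{\Sigma}_t$, chaining the sandwich through both schedules yields
\[
W_{\bolds{\sigma}^{\MW}(t)}(t) \;\leq\; \bar{W}_{\bolds{\sigma}^{\MW}(t)}(t) \;\leq\; \bar{W}_{\bolds{\sigma}(t)}(t) \;\leq\; W_{\bolds{\sigma}(t)}(t) \;+\; 2\sum_{n\in\set{N}} Q_n(t)\sum_{k\in\set{B}_n}\sigma_k(t)\Delta_k(t),
\]
where the first step uses $\mu_k\leq\bar{\mu}_k(t)$ term by term, the middle step is optimality of $\bolds{\sigma}(t)$ for $\bar{W}$ over $\bolds{\Sigma}_t$, and the last step uses $\bar{\mu}_k(t)\leq\mu_k+2\Delta_k(t)$ on the servers selected by $\bolds{\sigma}(t)$.

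To finish, I would handle the two cases in the definition \eqref{eq:def-multi-loss} of $\Delta(t)$: if $\|\bolds{Q}(t)\|_\infty=0$ then $\Delta(t)=0$ and $\set{G}_t$ holds trivially; otherwise, rearranging the displayed chain, dividing by $\|\bolds{Q}(t)\|_\infty$, and using that every server $k$ belongs to a unique queue $n(k)$ so that $\sum_{n}Q_n(t)\sum_{k\in\set{B}_n}\sigma_k(t)\Delta_k(t) = \sum_{k}Q_{n(k)}(t)\sigma_k(t)\Delta_k(t) \leq \|\bolds{Q}(t)\|_\infty\sum_k \sigma_k(t)\Delta_k(t)$, gives $\Delta(t)\leq 2\sum_k \sigma_k(t)\Delta_k(t)$, i.e.\ $\set{E}_t\subseteq \set{G}_t$. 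The only mildly subtle point is keeping the $\|\bolds{Q}(t)\|_\infty$ factor clean so that the concentration slack aggregates to a sum over \emph{selected servers} rather than over (queue, server) pairs; the rest is just UCB concentration carried through the linearity of $W_{\bolds{\sigma}}(t)$ in $\bolds{\mu}$.
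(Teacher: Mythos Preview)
Your proposal is correct and follows essentially the same approach as the paper: define the concentration event $\set{E}_t$ via Hoeffding plus a union bound over servers and sample counts, obtain the sandwich $\mu_k\le\bar{\mu}_k(t)\le\mu_k+2\Delta_k(t)$, and then chain $W_{\bolds{\sigma}^{\MW}(t)}(t)\le \bar{W}_{\bolds{\sigma}^{\MW}(t)}(t)\le \bar{W}_{\bolds{\sigma}(t)}(t)\le W_{\bolds{\sigma}(t)}(t)+2\sum_n Q_n(t)\sum_{k\in\set{B}_n}\sigma_k(t)\Delta_k(t)$ using optimality of $\bolds{\sigma}(t)$ for $\bar{W}$ over $\bolds{\Sigma}_t$ (which contains $\bolds{\sigma}^{\MW}(t)$), before dividing by $\|\bolds{Q}(t)\|_\infty$. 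The paper's proof runs the identical chain of inequalities (written in the reverse direction starting from $W_{\bolds{\sigma}(t)}(t)$) and uses the same bound $Q_n(t)\le\|\bolds{Q}(t)\|_\infty$ to collapse the queue-weighted confidence terms to a sum over selected servers.
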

\begin{proof}
Fix a period $t$. Recall that when $\|\bolds{Q}(t)\|_{\infty} > 0$, we define $\Delta(t) = \frac{W_{\bolds{\sigma}^{\MW}(t)}(t) - W_{\bolds{\sigma}(t)}(t)}{\|\bolds{Q}\|_{\infty}}$. For a server $k$, recall that $\hat{\mu}_k(t)$ is the sample mean of its service rates by period $t$. Using Hoeffding's Inequality (Fact~\ref{fact:hoeffding}) and a union bound over $C_k(t) \in \{0,\ldots,t-1\}$ gives $\Pr\{\left|\mu_k - \hat{\mu}_k(t)\right| \geq \Delta_k(t)\} \leq 2t^{-3}$. Then by union bound over all servers, with probability at least $1-2Kt^{-3}$, we have for every $k \in \set{K}$ that $\left|\mu_k-\hat{\mu}_k(t)\right| \leq \Delta_k(t)$. Denote this event by $\set{E}$. Conditioned on $\set{E}$, we have $\bar{\mu}_k(t) - 2\Delta_k(t) \leq \mu_k \leq \bar{\mu}_k(t)$ by the definition of upper confidence bound $\bar{\mu}_k(t)$ in Algorithm~\ref{algo:mw-ucb}. In addition, conditioned on $\set{E}$, the weight of $\bolds{\sigma}(t)$ is lower bounded by
\begin{align*}
W_{\bolds{\sigma}(t)}(t) = \sum_{n \in \set{N}} Q_n(t)\sum_{k \in \set{B}_n} \sigma_k(t)\mu_k &\overset{(a)}{\geq} \sum_{n\in \set{N}} Q_n(t)\sum_{k \in \set{B}_n} \sigma_k(t)\left(\bar{\mu}_k(t)-2\Delta_k(t)\right) \\
&= \sum_{n\in \set{N}} Q_n(t)\sum_{k \in \set{B}_n} \sigma_k(t)\bar{\mu}_k(t) - 2\sum_{n \in \set{N}} Q_n(t)\sum_{k \in \set{B}_n} \sigma_k(t)\Delta_k(t) \\
&\overset{(b)}{\geq} \sum_{n\in \set{N}} Q_n(t)\sum_{k \in \set{B}_n} \sigma^{\MW}_k(t)\bar{\mu}_k(t) - 2\sum_{n \in \set{N}} Q_n(t)\sum_{k \in \set{B}_n} \sigma_k(t)\Delta_k(t) \\
&\overset{(c)}{\geq} \sum_{n\in \set{N}} Q_n(t)\sum_{k \in \set{B}_n} \sigma^{\MW}_k(t)\mu_k - 2\sum_{n \in \set{N}} Q_n(t)\sum_{k \in \set{B}_n} \sigma_k(t)\Delta_k(t) \\
&= W_{\bolds{\sigma}^{\MW}(t)}(t) - 2\sum_{n \in \set{N}} Q_n(t)\sum_{k \in \set{B}_n} \sigma_k(t)\Delta_k(t)
\end{align*}
where inequality (a) is by the bound that $\mu_k \ge \bar{\mu}_k(t) - 2\Delta_k(t)$ under $\set{E}$; inequality (b) is by the definition that $\bolds{\sigma}(t) \in \arg\max_{\bolds{\sigma} \in \bolds{\Sigma}_t} \sum_{n \in \set{N}} Q_n(t)\sum_{k\in \set{B}_n} \sigma_k\bar{\mu}_k(t)$ in \textsc{MW-UCB}; inequality (c) is by the bound that $\mu_k \leq \bar{\mu}_k(t)$ under $\set{E}$. Conditioned on $\set{E}$, for $\|\bolds{Q}(t)\|_{\infty} > 0$ we then obtain
\begin{align*}
\Delta(t) = \frac{W_{\bolds{\sigma}^{\MW}(t)}(t) - W_{\bolds{\sigma}(t)}(t)}{\|\bolds{Q}(t)\|_{\infty}} &\leq \frac{2\sum_{n \in \set{N}} Q_n(t)\sum_{k \in \set{B}_n} \sigma_k(t)\Delta_k(t)}{\|\bolds{Q}(t)\|_{\infty}}
&\leq 2\sum_{k \in \set{K}} \sigma_k(t)\Delta_k(t),
\end{align*}
where the last inequality is because each server belongs to exactly one queue and $\Delta_k(t) \geq 0$ for every $k \in \set{K}$. When $\|\bolds{Q}(t)\|_{\infty} = 0$, the above ineqality trivially holds as $\Delta(t) = 0$ by defintion. This shows that $\set{E} \subseteq \set{G}_t$ and thus $\Pr\{\set{G}_t\} \geq \Pr\{\set{E}\} \geq 1 - 2Kt^{-3}$ for every period $t$. 
\end{proof}
For every horizon $T$, we can decompose the satisficing regret by conditioning on whether $\set{G}_t$ holds and using the upper bound of $\Delta(t)$ in Lemma~\ref{lem:multi-bound-delta}
\begin{align}
\sar^{\multi}(\textsc{MW-UCB},T) &\leq \sum_{t=1}^T \left(\Delta(t) - \frac{\varepsilon}{2}\right)^+\indic{\set{G}_t} + \sum_{t=1}\Delta(t)\indic{\set{G}_t^c} \nonumber\\
&\leq \sum_{t=1}^T \left(\Delta(t) - \frac{\varepsilon}{2}\right)^+\indic{\set{G}_t} + \Mmulti\sum_{t=1}\indic{\set{G}_t^c}. \label{eq:multi-sar-decomp}
\end{align}
We next give a sample path upper bound on $\sum_{t=1}^T \left(\Delta(t) - \frac{\varepsilon}{2}\right)^+\indic{\set{G}_t}$. The proof is motivated by the regret bound of UCB algorithms for combinatorial bandits \cite{chen2013combinatorial,kveton2015tight}. 
\begin{lemma}\label{lem:multi-sar-path-bound}
For horizon $T$ and under \textsc{MW-UCB}, 
$\sum_{t=1}^T \left(\Delta(t) - \frac{\varepsilon}{2}\right)^+\indic{\set{G}_t} \leq \frac{32K\Mmulti^2(\ln(T)+0.5)}{\varepsilon}.$
\end{lemma}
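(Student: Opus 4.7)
The plan is to adapt the combinatorial UCB regret proof technique, in the spirit of Chen et al.\ (2013) and Kveton et al.\ (2015): on the good event $\set{G}_t$, bound the schedule-level satisficing loss by a sum of per-server confidence half-widths $\Delta_k(t)=\sqrt{2\ln t/C_k(t)}$, reindex by each server's selection count, and evaluate the resulting sum via an integral estimate.

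First, I would translate the schedule-level bound into per-server quantities. Lemma~\ref{lem:multi-good-event} gives $\Delta(t)\le 2\sum_k \sigma_k(t)\Delta_k(t)$ on $\set{G}_t$, and combined with $\sum_k \sigma_k(t)\le \Mmulti$ this further implies $\Delta(t)\le 2\Mmulti\max_k\sigma_k(t)\Delta_k(t)$. Hence, whenever $(\Delta(t)-\varepsilon/2)^+>0$ on $\set{G}_t$, some selected $k$ must satisfy $\Delta_k(t)>\varepsilon/(4\Mmulti)$, equivalently $C_k(t)<32\Mmulti^2\ln T/\varepsilon^2$. Using $\varepsilon/2=2\Mmulti\cdot\varepsilon/(4\Mmulti)$ together with the positive-part inequality $(\sum_i a_i)^+\le\sum_i (a_i)^+$ then yields the key decomposition
\[
(\Delta(t)-\varepsilon/2)^+\indic{\set{G}_t}\le 2\Mmulti\sum_k \sigma_k(t)\bigl(\Delta_k(t)-\tfrac{\varepsilon}{4\Mmulti}\bigr)^+,
\]
where the extra $\Mmulti$ in front is exactly what produces the $\Mmulti^2$ in the stated constant.

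Next, I would sum over $t\le T$, swap the sums, and control each per-server tail. Indexing by the selection count $c$, the $c$-th pick of server $k$ contributes at most $\sqrt{2\ln T/(c-1)}$, which exceeds $\varepsilon/(4\Mmulti)$ only for $c-1<32\Mmulti^2\ln T/\varepsilon^2$. A standard integral bound then gives $\sum_{m=1}^{\lceil 32\Mmulti^2\ln T/\varepsilon^2\rceil}\sqrt{2\ln T/m}\le 2\sqrt{2\ln T\cdot 32\Mmulti^2\ln T/\varepsilon^2}=16\Mmulti\ln T/\varepsilon$ per server; multiplying by $2\Mmulti K$ from the previous step delivers the leading $32K\Mmulti^2\ln T/\varepsilon$ term.

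The main obstacle is the $c=1$ (i.e., $C_k(t)=0$) case, where $\Delta_k(t)=+\infty$ formally and the integral bound breaks down. I would handle this by invoking $\Delta(t)\le\Mmulti$ from Lemma~\ref{lem:multi-bound-delta} separately for the (single) first-selection period of each server and absorbing the resulting $O(K\Mmulti)$ contribution into the $+0.5$ additive correction appearing in the lemma. Beyond this, pinning down the precise constant requires the threshold $\varepsilon/(4\Mmulti)$ (so that the $1/\varepsilon^2$ from the count cap cancels against the $\sqrt{\ln T}$ from the integral), but the remaining steps are mechanical from Lemma~\ref{lem:multi-good-event} and the definition of $\Delta_k(t)$.
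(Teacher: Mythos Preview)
Your approach is correct and yields the stated bound, but it differs from the paper's route. The paper argues via a ``culprit server'' event: on $\set{G}_t$, by pigeonhole some selected $k$ satisfies $C_k(t)\le 8\Mmulti^2\ln t/\Delta(t)^2$. It then enumerates the finite support $\{\omega_1>\dots>\omega_C\}$ of $\{\Delta(t)\}_{t\le T}$, uses a reordering argument to bound, for each server, the sum $\sum_i \omega_i\sum_t\indic{\Delta(t)=\omega_i,\sigma_k(t)=1,C_k(t)\le x/\omega_i^2}$ by the worst non-increasing feasible list, and closes with the Kveton--Wen--Ashkan--Eydgahi--Eriksson identity $\frac{1}{\omega_1}+\sum_{i\ge 2}\omega_i(\omega_i^{-2}-\omega_{i-1}^{-2})\le 2/\omega_{\tilde C}$. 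Your argument instead establishes the pointwise bound $(\Delta(t)-\varepsilon/2)^+\indic{\set{G}_t}\le 2\Mmulti\sum_k\sigma_k(t)(\Delta_k(t)-\varepsilon/(4\Mmulti))^+$, reindexes by $C_k(t)$, and evaluates $\sum_{m=1}^{32\Mmulti^2\ln T/\varepsilon^2}\sqrt{2\ln T/m}$ by the elementary integral bound $\sum_{m\le M}m^{-1/2}\le 2\sqrt{M}$; the $C_k(t)=0$ periods (at most $K$ of them) are handled directly via $\Delta(t)\le\Mmulti$ and absorbed into the $+0.5$.

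Two minor remarks. First, your stated justification ``$(\sum_i a_i)^+\le\sum_i(a_i)^+$'' for the key decomposition is not quite the mechanism at work: the clean derivation is $(\Delta(t)-\varepsilon/2)^+\le 2\Mmulti(\max_{k:\sigma_k(t)=1}\Delta_k(t)-\varepsilon/(4\Mmulti))^+$ followed by $\max\le\text{sum}$ for nonnegatives. Second, your route is arguably more elementary (no layer-cake over the support of $\Delta(t)$, no external lemma), while the paper's approach is the more standard combinatorial-bandit template; both land on exactly $32K\Mmulti^2(\ln T+0.5)/\varepsilon$. In fact, had you distributed $\varepsilon/2$ as $\sum_{k:\sigma_k(t)=1}\varepsilon/(2m)$ with $m=\sum_k\sigma_k(t)$ before taking positive parts, you would obtain a leading factor of $2$ rather than $2\Mmulti$ and improve the bound by $\Mmulti$---which is precisely the improvement the paper notes is available via the sharper Kveton et al.\ technique.
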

\begin{proof}
For a period $t$, define $\set{E}_t = \left\{\exists k \in \set{K}\colon \sigma_k(t)=1, C_k(t) \leq \frac{8\Mmulti^2\ln(t)}{\Delta(t)^2}\right\}$. We next show that we have $\set{G}_t \subseteq \set{E}_t$. To see this, observe that, conditioned on $\set{G}_t$, we have $\Delta(t) \leq 2\sum_{k \in \set{K}}\sigma_k(t)\Delta_k(t)$. Recall that $\Mmulti$ is the maximum number of severs chosen in a period. As a result, there must exist a server $k$ with $\sigma_k(t)=1$, such that $2\Delta_k(t) \geq \frac{\Delta(t)}{\Mmulti}$, implying that $C_k(t) \leq \frac{8\Mmulti^2\ln(t)}{\Delta(t)^2}$, and thus event $\set{E}_t$.

Now, for a fixed horizon $T$, we find
\begin{align}
\sum_{t=1}^T \left(\Delta(t) - \frac{\varepsilon}{2}\right)^+\indic{\set{G}_t} &\leq \sum_{t=1}^T \left(\Delta(t) - \frac{\varepsilon}{2}\right)^+\indic{\set{E}_t} \nonumber\\
&\leq \sum_{t=1}^T \sum_{k \in \set{K}} \left(\Delta(t) - \frac{\varepsilon}{2}\right)^+ \indic{\sigma_k(t)=1, C_k(t) \leq \frac{8\Mmulti^2\ln(t)}{\Delta(t)^2}}.\label{eq:multi-sar-transform}
\end{align}
Recall that non-zero $\Delta(t) = \frac{W_{\bolds{\sigma}^{\MW}(t)}(t) - W_{\bolds{\sigma}(t)}(t)}{\|\bolds{Q}(t)\|_{\infty}}$.
In addition, the support of $\{W_{\bolds{\sigma}^{\MW}(t)}(t)\}_{t \leq T}$, $\{W_{\bolds{\sigma}(t)}(t)\}_{t \leq T}$ and $\{\|\bolds{Q}(t)\|_{\infty}\}_{t \leq T}$ are finite. As a result, the support of $\{\Delta(t)\}_{t \leq T}$ is finite and is bounded by $\Mmulti$ by Lemma~\eqref{lem:multi-bound-delta}.
Let $\Omega = \{\omega_1,\ldots,\omega_C\}$ denote the support of $\{\Delta(t)\}_{t \leq T}$ such that $\omega_1 > \ldots > \omega_C$ and define $\tilde{C}$ such that $\omega_{\tilde{C}} \geq \frac{\varepsilon}{2}$ but $\omega_{\tilde{C}+1} < \frac{\varepsilon}{2}$. We can then rewrite the right hand side of  \eqref{eq:multi-sar-transform} by enumerating all possible realizations of $\Delta(t)$
\begin{align}
 \sum_{t=1}^T \left(\Delta(t) - \frac{\varepsilon}{2}\right)^+\indic{\set{G}_t} &\leq \sum_{t=1}^T \sum_{k \in \set{K}} \sum_{i=1}^C \left(\omega_i - \frac{\varepsilon}{2}\right)^+ \indic{\Delta(t)=\omega_i,\sigma_k(t)=1, C_k(t) \leq \frac{8\Mmulti^2\ln(t)}{\omega_i^2}} \nonumber\\
 &= \sum_{k \in \set{K}}\sum_{i=1}^C \left(\omega_i - \frac{\varepsilon}{2}\right)^+  \sum_{t=1}^T \indic{\Delta(t)=\omega_i,\sigma_k(t)=1, C_k(t) \leq \frac{8\Mmulti^2\ln(t)}{\omega_i^2}} \nonumber\\
 &\leq \sum_{k \in \set{K}}\sum_{i=1}^{\tilde{C}} \omega_i \sum_{t=1}^T \indic{\Delta(t)=\omega_i,\sigma_k(t)=1, C_k(t) \leq \frac{8\Mmulti^2\ln(T)}{\omega_i^2}}. \label{eq:multi-sar-sequence}
 \end{align}
 Fix a server $k$. We next argue that for any $x$, 
 \begin{equation}\label{eq:multi-sar-reordering}
 \sum_{i=1}^{\tilde{C}} \omega_i \sum_{t=1}^T \indic{\Delta(t)=\omega_i,\sigma_k(t)=1, C_k(t) \leq \frac{x}{\omega_i^2}} \leq \left(\omega_1+\frac{x}{\omega_1} + \sum_{i=2}^{\tilde{C}} \omega_i\left(\frac{x}{\omega_i^2}-\frac{x}{\omega_{i-1}^2}\right)\right).
 \end{equation}
 To see this, consider the sequence of periods up to $T$ when $\indic{\Delta(t)=\omega_i,\sigma_k(t)=1, C_k(t) \leq \frac{x}{\omega_i^2}} = 1$ and the list of associated $\Delta(t)$. Denote this list by $\omega'_1,\ldots,\omega'_{\ell}$ where $C_k(T) = \ell$. We want to upper bound the sum of all such possible lists. For a list to be feasible, we must have $j-1 \leq \frac{x}{(\omega'_j)^2}$ for every position $j \leq \ell$. In addition, if we have $\omega'_j < \omega'_{j+1}$, the sequence would still be feasible after 
 swapping them and the sum would not change. As a result, we only need to look at lists that are non-increasing, i.e, $\omega'_1 \geq \ldots \geq \omega'_{\ell}$. The list with largest possible sum is exactly the sequence that follows the order $\omega_1$ to $\omega_C$, and the sum is at most  $\omega_1+\frac{\omega_1x}{\omega_1^2}+\sum_{i=2}^{\tilde{C}}\left(\frac{\omega_i x}{\omega_i^2} - \frac{\omega_i x}{\omega_{i-1}^2}\right)$, which thus shows \eqref{eq:multi-sar-reordering}. Since $\Delta(t) \leq \Mmulti$ by Lemma~\eqref{lem:multi-bound-delta}, we have $\omega_1 \leq \Mmulti$. Using \eqref{eq:multi-sar-reordering} in \eqref{eq:multi-sar-sequence} by setting $x = 8K\Mmulti^2\ln(T)$ gives 
 \begin{align*}
 \sum_{t=1}^T \left(\Delta(t) - \frac{\varepsilon}{2}\right)^+\indic{\set{G}_t}&\leq K\omega_1+8K\Mmulti^2\ln(T) \left(\frac{1}{\omega_1} + \sum_{i=2}^{\tilde{C}} \omega_i\left(\frac{1}{\omega_i^2}-\frac{1}{\omega_{i-1}^2}\right)\right) \\
 &\leq \frac{16K\Mmulti^2(\ln(T)+0.5)}{\omega_{\tilde{C}}} \leq \frac{32K\Mmulti^2(\ln(T)+0.5)}{\varepsilon}
\end{align*} 
where the third inequality follows from the definition of $\omega_{\tilde{C}}$ and the second inequality is by Fact~\ref{fact:bound-reciprocal}, restated from \cite[Lemma~3]{KvetonWAEE14}.
\end{proof}
\begin{proof}[Proof of Lemma~\ref{lem:sar-mw-ucb}]
Using \eqref{eq:multi-sar-decomp} and Lemma~\ref{lem:multi-sar-path-bound}, we have
\begin{align*}
\expect{\sar^{\multi}(\textsc{MW-UCB},T)} &\leq \frac{32K\Mmulti^2(\ln(T)+0.5)}{\varepsilon}+\Mmulti\sum_{t=1}^T \Pr\{\set{G}_t^c\} \\
&\leq \frac{32K\Mmulti^2(\ln(T)+0.5)}{\varepsilon}+2K\Mmulti\sum_{t=1}^T \frac{1}{t^3} \\
&\leq \frac{32K\Mmulti^2(\ln(T)+0.5)}{\varepsilon}+4K\Mmulti \leq \frac{32K\Mmulti^2(\ln(T)+1)}{\varepsilon}
\end{align*}
where the second inequality uses Lemma~\ref{lem:multi-good-event}. In addition, using \eqref{eq:multi-sar-decomp} again gives
\begin{align*}
\expect{\sar^{\multi}(\textsc{MW-UCB},T)^2} &\leq \expect{\left(\sum_{t=1}^T \left(\Delta(t) - \frac{\varepsilon}{2}\right)^+\indic{\set{G}_t} + \Mmulti\sum_{t=1}^T\indic{\set{G}_t^c}\right)^2} \\
&\leq 2\expect{\left(\sum_{t=1}^T \left(\Delta(t) - \frac{\varepsilon}{2}\right)^+\indic{\set{G}_t}\right)^2}+2\Mmulti^2\expect{\left(\sum_{t=1}^T\indic{\set{G}_t^c}\right)^2} \\
&\leq \frac{2^{11}K^2\Mmulti^4(\ln(T)+0.5)^2}{\varepsilon^2} + 4\Mmulti^2\sum_{t=1}^T t\Pr\{\set{G}_t^c\} \\
&\leq  \frac{2^{11}K^2\Mmulti^4(\ln(T)+0.5)^2}{\varepsilon^2} + 8K\Mmulti^2\sum_{t=1}^T t^{-2} \tag{Lemma~\ref{lem:multi-good-event}}\\
&\leq \frac{2^{11}K^2\Mmulti^4(\ln(T)+1)^2}{\varepsilon^2}
\end{align*}
where the second inequality is by the fact that $(a+b)^2 \leq 2(a^2+b^2)$; the third inequality uses the bound in Lemma~\ref{lem:multi-sar-path-bound} and that $\left(\sum_{t=1}^T \indic{\set{G}_t^c}\right)^2 \leq 2\sum_{t_1\leq t_2 \leq T}\indic{\set{G}_{t_1}^c}\indic{\set{G}_{t_2}^c} \leq 2\sum_{t\leq T}t\indic{\set{G}_t^c}$.
\end{proof}

It is possible to improve our bound by a factor of $\Mmulti$ using the method in \cite{kveton2015tight} but it involves a much more complicated analysis and is out of the scope of this paper.

\subsection{Extension to approximate max-weight scheduling (Remark \ref{remark:app-mw})}\label{app:ext-app}
Theorem~\ref{thm:mw-ucb} in Section~\ref{sec:optimal-multi} requires the scheduling algorithm to optimally solve a maximum selection problem \eqref{eq:mx-ucb-rule}. When the feasible set of schedules $\bolds{\Sigma}_t$ has a complicated structure, this selection problem may be computationally hard to solve. The literature has systematically addressed this challenge when all system parameters are known. For example, it is well-known that if an algorithm finds an $\alpha-$approximate solution to \eqref{eq:maxweight-schedule}, it can achieve bounded queue length when the system still has traffic slackness when the capacity region $\set{S}(\bolds{\mu},\bolds{\Sigma}, \set{B})$ shrinks by a factor of $\alpha$; see \cite[Proposition 3]{lin2006impact} and the references therein. In this section we show how our analysis in Section~\ref{sec:optimal-multi} naturally extends to such a setting when there is a need to learn system parameters. Our main result is that Theorem~\ref{thm:mw-ucb} remains almost identical except for a new definition of traffic slackness in the approximate setting. Moreover, our analysis here also applies to the network setting in Section~\ref{sec:optimal-network} when we can only approximately solve Line~\ref{line:bp-ucb-choice} of Algorithm~\ref{algo:bp-ucb}. For simplicity, we choose to focus on the multi-server setting.

Formally, we define an $(\alpha,\eta)$-approximate schedule, $\bolds{\sigma}(t) \in \bolds{\Sigma}_t$, in period $t$ as one that fulfills
\begin{equation}\label{eq:approx-def}
\sum_{n \in \set{N}} Q_n(t)\sum_{k \in \set{B}_n} \sigma_k(t)\bar{\mu}_k(t) \geq \alpha \arg\max_{\bolds{\sigma} \in \bolds{\Sigma}_t}\sum_{n \in \set{N}} Q_n(t)\sum_{k \in \set{B}_n} \sigma_k\bar{\mu}_k(t) - \eta.
\end{equation}
Moreover, we extend the notion of traffic slackness in Definition~\ref{def:multi-slackness} as follows: consider the set of feasible increases in arrival rates with an $\alpha-$scaled capacity region defined by $$\mathscr{E}_{\alpha} = \{\varepsilon \in (0,1] \colon \bolds{\lambda}(\bolds{\Lambda}) + \varepsilon \bm{1} \in \alpha \set{S}(\bolds{\mu}, \bolds{\Sigma}, \bolds{\set{B}})\}.$$ Abusing notation, we define $\varepsilon \coloneqq \max\{\varepsilon' \in \mathscr{E}_{\alpha}\}$ as the $\alpha-$traffic slackness of a multi-queue multi-server system. Our next result upper bounds, for a given $\alpha-$traffic slackness, the $\colq$ of an algorithm $\textsc{App-UCB}$ that finds in each period an $(\alpha,\eta)$-approximate rather than a max-weight schedule.

\begin{theorem}\label{thm:app-ucb}
Suppose algorithm $\textsc{App-UCB}$ operates like $\textsc{MW-UCB}$ except for that it identifies in each period $t$ an $(\alpha,\eta)$-approximate rather than a max-weight schedule $\bolds{\sigma}(t) \in \bolds{\Sigma}_t$; then, for any $\bolds{\Lambda}, \bolds{\mu}, \bolds{\Sigma}, \bolds{\set{B}}$ with $\alpha-$traffic slackness $\varepsilon \in (0,1]$, we have
\[
\colq^{\multi}(\bolds{\Lambda},\bolds{\mu},\bolds{\Sigma},\bolds{\set{B}}, \textsc{App-UCB}) \leq \frac{\sqrt{N}\left(4\eta + 16\Marr+2^{10}K\Mmulti^2(1+\ln(\Marr K\Mmulti/\varepsilon))\right)}{\varepsilon}.
\]
\end{theorem}
Comparing Theorem~\ref{thm:app-ucb} with Theorem~\ref{thm:mw-ucb}, we observe the following differences: (1) we adapted the statement to the new definition of traffic slackness to account for the approximate scheduling; (2) there is a new term $4\eta$ in the numerator that captures the incremental queue length that arises from the approximation. Finally, (3) though the bound has no dependence on $\bolds{\Lambda}$ and $\bolds{\mu}$, we incorporate them on the left-hand side and rely on the definition in \eqref{eq:def-CLQ-multi} to avoid defining a new $\colq$ notion.

\subsubsection{Proof of Theorem~\ref{thm:app-ucb}}
The proof of Theorem~\ref{thm:app-ucb} resembles that of Theorem~\ref{thm:mw-ucb}. We next outline elements that require adaptation for the approximate scheduling component. Recalling the weight of a schedule defined in \eqref{eq:weight-schedule}, we denote the (\emph{scaled}) loss of schedule $\bolds{\sigma}(t)$ in period $t$ by $\Delta^{(\alpha,\eta)}(t)$, which is defined by
\begin{equation}\label{eq:scaled-loss}
\Delta^{(\alpha,\eta)}(t) = \left\{
\begin{aligned}
&0,~\text{if }\|\bolds{Q}(t)\|_{\infty} = 0\\
&\frac{\alpha W_{\bolds{\sigma}^{\MW}(t)}(t) -\eta - W_{\bolds{\sigma}(t)}(t)}{\|\bolds{Q}(t)\|_{\infty}},~\text{otherwise.}
\end{aligned}
\right.
\end{equation}
The definition in \eqref{eq:scaled-loss} generalizes that in \eqref{eq:def-multi-loss} for the exact algorithm own the weight of max-weight matching accordingly. Similarly, we define the (scaled) satisficing regret by
\begin{equation}
\sar^{(\alpha,\eta)}(\pi, T) = \sum_{t=1}^T \left(\Delta^{(\alpha,\eta)}(t) - \frac{\varepsilon}{2}\right)^+.
\end{equation}
Throughout the proof, we assume the system has an $\alpha-$traffic slackness $\varepsilon \in [0,1]$. The following result resembles Lemmas~\ref{lem:multi-learning}.

\begin{lemma}\label{lem:appr-learning}
For any policy $\pi$ and horizon $T$, $\frac{\sum_{t=1}^T \expect{\|\bolds{Q}(t)\|_2}}{T} \leq \frac{16\Marr+20\Mmulti^2+4\eta}{\varepsilon} + \expect{\sar^{(\alpha, \eta)}(\pi, T)}$.
\end{lemma}
Compared to Lemma~\ref{lem:multi-learning}, Lemma~\ref{lem:appr-learning} uses the scaled satisficing regret, $\alpha-$ traffic slackness, and includes a new term $\eta$ to account for the approximation setting. Lemma~\ref{lem:multi-regenerate} similarly extends to this setting as follows.
\begin{lemma}\label{lem:appr-regenerate}
For any policy $\pi$ and horizon $T$, $\frac{\sum_{t=1}^T \expect{\|\bolds{Q}(t)\|_1}}{T} \leq \frac{2\Marr+6\Mmulti^2+4\eta}{\varepsilon} + \frac{16\Marr}{\varepsilon^2} \frac{\expect{\sar^{(\alpha,\eta)}(\pi, T)^2}}{T}$.
\end{lemma}
The below result bounds the scaled satisficing regret for $\textsc{App-UCB}$ like Lemma~\ref{lem:sar-mw-ucb} for $\textsc{MW-UCB}.$
\begin{lemma}\label{lem:sar-app-ucb}
For any horizon $T$, 
\begin{align*}
\expect{\sar^{(\alpha,\eta)}(\textsc{App-UCB},T)} &\leq \frac{32K\Mmulti^2(\ln T+1)}{\varepsilon}  \\
\expect{\sar^{(\alpha,\eta)}(\textsc{App-UCB},T)^2} &\leq \frac{2^{11}K^2\Mmulti^4(\ln T+1)^2}{\varepsilon^2}.
\end{align*}
\end{lemma}
\begin{proof}[Proof of Theorem~\ref{thm:app-ucb}]
The proof follows the exact same steps as that of Theorem~\ref{thm:mw-ucb} with Lemmas~\ref{lem:appr-learning}, \ref{lem:appr-regenerate} and \ref{lem:sar-app-ucb} replacing Lemmas~\ref{lem:multi-learning}, \ref{lem:multi-regenerate} and \ref{lem:sar-mw-ucb}. As Lemma~\ref{lem:sar-app-ucb} gives the exact same form of guarantee as Lemma~\ref{lem:sar-mw-ucb} (though $\varepsilon$ is the $\alpha-$traffic slackness), whereas Lemmas~\ref{lem:appr-learning},~\ref{lem:appr-regenerate} differ from their corresponding lemmas by a new term $\frac{4\eta}{\varepsilon}$, the bound in Theorem~\ref{thm:app-ucb} follows by including the new term $\frac{4\eta}{\varepsilon}$; we multiply this term by $\sqrt{N}$ because Lemma~\ref{lem:appr-learning} bounds the $2-$norm of queue lengths which we translate, using $\|\bolds{Q}(t)\|_1 \leq \sqrt{N} \|\bolds{Q}(t)\|_2$, into the $1-$norm of queue lengths as we did in the proof of Theorem \ref{thm:mw-ucb}. \end{proof}
In the following subsections we highlight the changes in the proofs of Lemmas~\ref{lem:appr-learning}, \ref{lem:appr-regenerate} and \ref{lem:sar-app-ucb} compared to Lemmas~\ref{lem:multi-learning}, Lemma~\ref{lem:multi-regenerate} and Lemma~\ref{lem:sar-mw-ucb}.

\subsubsection{Queue length bound in the learning state (Lemma~\ref{lem:appr-learning})}
The key change is extending Lemma~\ref{lem:multi-bound-weight} to the setting with $\alpha-$traffic slackness.
\begin{lemma}\label{lem:app-bound-weight}
Given $\alpha-$traffic slackness $\varepsilon$, for a period $t$, $$\sum_{n \in \set{N}} \lambda_n Q_n(t) - W_{\bolds{\sigma}(t)}(t) \leq -\varepsilon\sum_{n \in \set{N}} Q_n(t) + \Mmulti^2 + \eta + \Delta^{(\alpha,\eta)}(t)\|\bolds{Q}(t)\|_{\infty}.$$
\end{lemma}
\begin{proof}
The $\alpha-$traffic slackness ensures the existence of a distribution $\phi$ over schedules in $\Sigma$ such that
\begin{align*}
\sum_{n \in \set{N}} (\lambda_n + \varepsilon)Q_n(t) &\leq \alpha \sum_{\sigma \in \Sigma} \phi_{\sigma}\sum_{n \in \set{N}} Q_n(t) \sum_{k \in \set{B}_n} \sigma_k\mu_k \\
&= \alpha \sum_{\sigma \in \Sigma} \phi_{\sigma} W_{\sigma}(t) \\
&\leq \alpha \max_{\sigma \in \Sigma} W_{\sigma}(t) = \alpha W_{\tilde{\sigma}(t)}(t),
\end{align*}
where the schedule $\tilde{\sigma}(t)$, as in the proof of Lemma~\ref{lem:multi-bound-weight},  is the one in $\Sigma$ that has the largest weight in period~$t$. The same derivation in \eqref{eq:mw-feasible-bound} gives $W_{\sigma^{\MW}(t)}(t) \geq W_{\tilde{\sigma}(t)}(t) - \Mmulti^2.$ Therefore, the definition of $\Delta^{(\alpha,\eta)}(t)$ gives
\[
W_{\sigma(t)}(t) + \eta + \Delta^{(\alpha,\eta)}(t)\|\bolds{Q}(t)\|_{\infty} \geq \alpha W_{\sigma^{\MW}(t)}(t) \geq \alpha W_{\tilde{\sigma}(t)}(t)-\Mmulti^2 \geq \sum_{n \in \set{N}}(\lambda_n+\varepsilon) Q_n(t) - \Mmulti^2.
\]
Rearranging terms implies the result.
\end{proof}
Lemma~\ref{lem:app-bound-weight} implies the below drift bound which is similar to Lemma~\ref{lem:multi-drift-norm2}. Recall the Lyapunov function $\varphi(t) = \|\bolds{Q}(t)\|_{2}.$
\begin{lemma}\label{lem:app-drift-norm2}
Given $\alpha-$traffic slackness $\varepsilon$, for the process $\{\varphi(t)\}$, we have that for every period $t$, 
\begin{enumerate}
\item Bounded difference: $|\varphi(t+1) - \varphi(t)| \leq \sqrt{\Marr + \Mmulti^2}$
\item Drift bound: $\expect{\varphi(t+1) - \varphi(t) \mid \set{F}_t} \leq -\frac{\varepsilon}{4} + \left(\Delta^{(\alpha,\eta)}(t) - \frac{\varepsilon}{2}\right)^+
$ if $\varphi(t) \geq \frac{4(\Marr+2\Mmulti^2 + \eta)}{\varepsilon}$.
\end{enumerate}
\end{lemma}
\begin{proof}
We omit the proof on the bounded difference as the proof of bounded difference in Lemma~\ref{lem:multi-drift-norm2} applies to any schedule.
For the drift bound, condition on a filtration $\set{F}_t$ with $\varphi(t) \geq \frac{4(\Marr+2\Mmulti^2 + \eta)}{\varepsilon}$. 
Using, respectively, \eqref{eq:drift-simplify}, \Cref{lem:app-bound-weight}, the fact that $\phi(t)\leq \|\bolds{Q}(t)\|_{1}$ and $\phi(t)\geq \|\bolds{Q}(t)\|_{\infty}$, and the conditioning on $\varphi(t)\geq \frac{4(\Marr+2\Mmulti^2 + \eta)}{\varepsilon}$ we obtain
\begin{align*}
\expect{\varphi(t+1) - \varphi(t) \mid \set{F}_t} &\leq \frac{\sum_{n \in \set{N}}\lambda_nQ_n(t) - W_{\bolds{\sigma}(t)}(t)}{\varphi(t)} + \frac{\Marr+\Mmulti^2}{\varphi(t)}\\
&\leq \frac{-\varepsilon\sum_{n \in \set{N}} Q_n(t) + \Mmulti^2 + \eta + \Delta^{(\alpha,\eta)}(t)\|\bolds{Q}(t)\|_{\infty}}{\varphi(t)} + \frac{\Marr+\Mmulti^2}{\varphi(t)} \\
&\leq -\varepsilon + \Delta^{(\alpha,\eta)}(t) + \frac{\Marr+2\Mmulti^2+\eta}{\varphi(t)} \\
&\leq -\varepsilon + \Delta^{(\alpha,\eta)}(t)+\frac{\varepsilon}{4} \leq -\frac{\varepsilon}{4} + \left(\Delta^{(\alpha,\eta)}(t)-\frac{\varepsilon}{2}\right)^+.
\end{align*}
\end{proof}
\begin{proof}[Proof of Lemma~\ref{lem:appr-learning}]
The proof applies Lemma~\ref{lem:bound-from-drift}. Let $Z(t) = \left(\Delta^{(\alpha,\eta)}(t)-\frac{\varepsilon}{2}\right)^+, \kappa = \sqrt{\Marr + \Mmulti^2}$, $B = \frac{4(\Marr+2\Mmulti^2 + \eta)}{\varepsilon}$ and $\delta = \frac{\varepsilon}{4}$. Lemmas~\ref{lem:app-drift-norm2} and \ref{lem:bound-from-drift} show for $t \leq T$,
\begin{align*}
\expect{\varphi(t)} &\leq 4\sqrt{\Marr+\Mmulti^2} + \frac{4(\Marr+2\Mmulti^2 + \eta)}{\varepsilon} + \frac{8(\Marr + \Mmulti^2)}{\varepsilon} + \expect{\sar^{(\alpha,\eta)}(\pi, t)} \\
&\leq \frac{16\Marr + 20\Mmulti^2 + 4\eta}{\varepsilon} + \expect{\sar^{(\alpha,\eta)}(\pi, t)} \\
&\leq \frac{16\Marr + 20\Mmulti^2 + 4\eta}{\varepsilon} + \expect{\sar^{(\alpha,\eta)}(\pi, T)}.
\end{align*}
Summing across $1 \leq t \leq T$ and dividing the sum by $T$ gives the desired result.
\end{proof}

\subsubsection{Queue length bound in the regenerate stage (Lemma~\ref{lem:appr-regenerate})}
The proof is almost identical to that of Lemma~\ref{lem:multi-regenerate} except using Lemma~\ref{lem:app-bound-weight} in lieu of Lemma~\ref{lem:multi-bound-weight}.
\begin{proof}[Proof of Lemma~\ref{lem:appr-regenerate}]
Letting $V(t) = \sum_{n \in \set{N}} Q_n^2(t)$, its drift conditioned on the filtration $\set{F}_t$ is
\begin{align*}
\expect{V(t+1) - V(t) \mid \set{F}_t} &\leq \Marr + \Mmulti^2 + 2\sum_{n \in \set{N}} (\lambda_n - \sum_{k \in \set{B}_n} \sigma_k(t)\mu_k)Q_n(t) \tag{as in \eqref{eq:sqr-drift-simplify}} \\
&\leq \Marr + 3\Mmulti^2 + 2\eta - 2\varepsilon \sum_{n \in \set{N}} Q_n(t) + 2\Delta^{(\alpha,\eta)}(t)\|\bolds{Q}(t)\|_{\infty} \tag{By Lemma~\ref{lem:app-bound-weight}} \\
&\leq \Marr + 3\Mmulti^2 + 2\eta - \varepsilon \sum_{n \in \set{N}} Q_n(t) + 2\left(\Delta^{(\alpha,\eta)}(t)-\frac{\varepsilon}{2}\right)^+\|\bolds{Q}(t)\|_{1}.
\end{align*}
The same proof of Claim~\ref{claim:sample-path-bound} gives
\begin{equation}\label{eq:claim-path-bound}
-\frac{\varepsilon}{2}\expect{\sum_{t=1}^T \|\bolds{Q}(t)\|_1}+2\expect{\sum_{t=1}^T(\Delta^{(\alpha,\eta)}(t)-\frac{\varepsilon}{2})^+\|\bolds{Q}(t)\|_{1}} \leq \frac{8\Marr\sar^{(\alpha,\eta)}(\pi,T)^2}{\varepsilon}.
\end{equation}
For horizon $T$, telescoping the drift $\expect{V(t+1) - V(t)}$ from $t = 1$ to $T$ and applying \eqref{eq:claim-path-bound} implies
\[
0 \leq T(\Marr + 3\Mmulti^2 + 2\eta) - \frac{\varepsilon\expect{\sum_{t=1}^T \|\bolds{Q}(t)\|_1}}{2} + \frac{8\Marr\expect{\sar^{(\alpha,\eta)}(\pi,T)^2}}{\varepsilon},
\]
which gives the desired result by dividing both sides by $\frac{\varepsilon T}{2}$ and rearranging terms.
\end{proof}

\subsubsection{Satisficing regret of $\textsc{App-UCB}$ (Lemma~\ref{lem:sar-app-ucb})}
Recall $\Delta_k(t) = \sqrt{\frac{2\ln t}{C_k(t)}}$ is the confidence bound of server $k$ in period $t$ for $\textsc{App-UCB}$. Let $\set{G}^{(\alpha,\eta)}_t = \{\Delta^{(\alpha,\eta)}(t) \leq 2\sum_{k \in \set{K}} \sigma_k(t)\Delta_k(t)\}$ be the good event that the scaled loss of the chosen schedule under $\textsc{App-UCB}$ is upper bounded by the sum of confidence bounds of chosen servers. We establish the same result of Lemma~\ref{lem:multi-good-event} for $\textsc{App-UCB}$  that this good event happens with high probability.
\begin{lemma}\label{lem:app-good-event}
For every period $t$ and under $\textsc{App-UCB}$, we have $\Pr\left\{\set{G}^{(\alpha,\eta)}_t\right\} \geq 1 - 2Kt^{-3}.$
\end{lemma}
The remaining steps of the proof of Lemma~\ref{lem:sar-mw-ucb} in Appendix~\ref{app:lem-sar-mw-ucb} directly apply to show how Lemma~\ref{lem:app-good-event} implies Lemma~\ref{lem:sar-app-ucb} by replacing the loss $\Delta(t)$ with the scaled loss $\Delta^{(\alpha,\eta)}(t)$, the satisficing regret $\sar^{\multi}$ with $\sar^{(\alpha,\eta)}$ and the good event $\set{G}_t$ with $\set{G}^{(\alpha,\eta)}_t$, and thus we omit the full proof of Lemma~\ref{lem:sar-app-ucb}.

\begin{proof}[Proof of Lemma~\ref{lem:app-good-event}]
Fix a period $t$. Using Hoeffding's Inequality (Fact~\ref{fact:hoeffding}) and the union bound gives that with probability at least $1 - 2Kt^{-3}$, the confidence bound is correct for any server $k$, i.e., $\Pr\{\forall k \in \set{K}, |\mu_k - \hat{\mu}_k(t)| < \Delta_k(t)\} \geq 1 - 2Kt^{-3}.$ Denote this event by $\set{E}.$ We next show event $\set{E}$ implies event $\set{G}^{(\alpha,\eta)}_t$, which prove the desired result as then $\Pr\{\set{G}^{(\alpha,\eta)}_t\} \geq \Pr\{\set{E}\} \geq 1 - 2Kt^{-3}.$ 

Condition on $\set{E}$ and suppose that $\|\bolds{Q}(t)\|_{\infty} > 0$; otherwise $\set{G}^{(\alpha,\eta)}_t$ holds true because $\Delta^{(\alpha,\eta)}(t) = 0$ by its definition. Under event $\set{E}$, the upper confidence bounds satisfy $\bar{\mu}_k(t) - 2\Delta_k(t) \leq \mu_k \leq \bar{\mu}_k(t)$ for any server $k$. Moreover, the weight of the selected schedule $\sigma(t)$ satisfies
\begin{align*}
W_{\sigma(t)}(t) &= \sum_{n \in \set{N}} Q_n(t) \sum_{k \in \set{B}_n} \sigma_k(t)\mu_k \\
&\geq \sum_{n \in \set{N}} Q_n(t)\sum_{k \in \set{B}_n} \sigma_k(t)\left(\bar{\mu}_k(t) - 2\Delta_k(t)\right) \\
&= \sum_{n \in \set{N}} Q_n(t)\sum_{k \in \set{B}_n} \sigma_k(t)\bar{\mu}_k(t) - 2\sum_{n \in \set{N}} Q_n(t)\sum_{k \in \set{B}_n} \sigma_k(t)\Delta_k(t) \\
&\overset{(a)}{\geq} \alpha \sum_{n \in \set{N}} Q_n(t)\sum_{k \in \set{B}_n} \sigma_k^{\MW}(t)\bar{\mu}_k(t) - \eta - 2\sum_{n \in \set{N}} Q_n(t)\sum_{k \in \set{B}_n} \sigma_k(t)\Delta_k(t) \\
&\geq \alpha \sum_{n \in \set{N}} Q_n(t)\sum_{k \in \set{B}_n} \sigma_k^{\MW}(t)\mu_k(t) - \eta - 2\sum_{n \in \set{N}} Q_n(t)\sum_{k \in \set{B}_n} \sigma_k(t)\Delta_k(t) \\
&= \alpha W_{\sigma^{\MW}(t)}(t) - \eta - 2\sum_{n \in \set{N}} Q_n(t)\sum_{k \in \set{B}_n} \sigma_k(t)\Delta_k(t),
\end{align*}
where Inequality (a) is because $\textsc{App-UCB}$ finds an $(\alpha,\eta)-$approximate scheduling that satisfies~\eqref{eq:approx-def}.
Rewriting, we obtain $ \alpha W_{\sigma^{\MW}(t)}(t) - \eta - W_{\sigma(t)}(t)\leq 2\sum_{n \in \set{N}} Q_n(t)\sum_{k \in \set{B}_n} \sigma_k(t)\Delta_k(t)$. Plugging this into \eqref{eq:scaled-loss} we obtain 
\[ \Delta^{(\alpha,\eta)}(t) \leq \frac{2\sum_{n \in \set{N}} Q_n(t)\sum_{k \in \set{B}_n} \sigma_k(t)\Delta_k(t)}{\|\bolds{Q}(t)\|_{\infty}} \leq 2\sum_{n \in \set{N}}\sum_{k \in \set{B}_n} \sigma_k(t)\Delta_k(t) = 2\sum_{k \in \set{K}} \sigma_k(t)\Delta_k(t).
\]
Therefore, event $\set{E}$ implies event $\set{G}_t^{(\alpha,\eta)},$ which completes the proof.
\end{proof}

\section{Additional proofs for Section~\ref{sec:optimal-network}} \label{app:network}
% !TEX root = main.tex
\subsection{Queue length bound in the learning stage (Lemma~\ref{lem:network-learning})}\label{app:network-learning}
For every $t$, define $\tilde{\sigma}(t)$ as the schedule in $\Sigma$ with the largest weight, i.e., $\tilde{\sigma}(t) \in \arg\max_{\sigma \in \bolds{\Sigma}} W^{\net}_{\sigma}(t)$. Given that $\bolds{\Sigma}_t$ may be a strict subset of $\bolds{\Sigma}$, this schedule need not be feasible in period $t$ (it may schedule more jobs from a queue than the queue has).
We first lower bound the weight under schedule $\bolds{\sigma}(t)$ selected by $\textsc{BP-UCB}$ by a coupling with $\tilde{\bolds{\sigma}}(t)$; the proof resembles that of Lemma~\ref{lem:multi-bound-weight}. 
\begin{lemma}\label{lem:network-weight-lowerb}
For a period $t$, $\sum_{n \in \set{N}}\lambda_n Q_n(t)-W^{\net}_{\bolds{\sigma}(t)}(t) \leq -\varepsilon\sum_{n \in \set{N}}Q_n(t)+\Mmulti^2+\Delta^{\net}(t)\|\bolds{Q}(t)\|_{\infty}$.
\end{lemma}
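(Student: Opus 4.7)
The plan is to mirror the proof of Lemma~\ref{lem:multi-bound-weight} and then verify that the additional transition term in $W^{\net}$ works \emph{in our favor}. First, invoke the traffic slackness assumption: since $\bolds{\lambda}+\varepsilon\bm{1}\in\set{S}(\bolds{\mu},\bolds{\Sigma},\bolds{\set{B}},\bm{P})$, there exists $\bolds{\phi}\in\bolds{\Phi}$ with $\lambda_n+\varepsilon\leq\mu_n^{\net}(\bolds{\phi})$ for every $n$, so multiplying by $Q_n(t)$ and summing gives
\[
\sum_{n\in\set{N}}(\lambda_n+\varepsilon)Q_n(t)\ \leq\ \sum_{\bolds{\sigma}\in\bolds{\Sigma}}\phi_{\bolds{\sigma}}W^{\net}_{\bolds{\sigma}}(t)\ \leq\ W^{\net}_{\tilde{\bolds{\sigma}}(t)}(t),
\]
using that $\tilde{\bolds{\sigma}}(t)$ maximizes $W^{\net}_{\bolds{\sigma}}(t)$ over the unrestricted set $\bolds{\Sigma}$.

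Next, following Lemma~\ref{lem:multi-bound-weight}, build a feasible schedule $\bolds{\alpha}\in\bolds{\Sigma}_t$ from $\tilde{\bolds{\sigma}}(t)$: for every queue $n$ with $Q_n(t)\geq\sum_{k\in\set{B}_n}\tilde{\sigma}_k(t)$ keep $\alpha_k=\tilde{\sigma}_k(t)$ on $\set{B}_n$, and otherwise zero out $\alpha_k=0$ for all $k\in\set{B}_n$. The key step, which is also the main point where the network case differs from the multi-queue case, is to show $W^{\net}_{\bolds{\alpha}}(t)\geq W^{\net}_{\tilde{\bolds{\sigma}}(t)}(t)-\Mmulti^2$. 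Write the difference as
\[
W^{\net}_{\tilde{\bolds{\sigma}}(t)}(t)-W^{\net}_{\bolds{\alpha}}(t)\ =\ \underbrace{\sum_{n\in\set{N}}Q_n(t)\!\!\sum_{k\in\set{B}_n}(\tilde{\sigma}_k(t)-\alpha_k)\mu_k}_{\text{outflow loss}}\ -\ \underbrace{\sum_{n\in\set{N}}Q_n(t)\!\!\sum_{k'\in\set{K}}(\tilde{\sigma}_{k'}(t)-\alpha_{k'})\mu_{k'}p_{k',n}}_{\text{inflow relief}}.
\]
The inflow-relief term is non-negative, since $\tilde{\sigma}_{k'}(t)\geq\alpha_{k'}$ and all $\mu_{k'},p_{k',n}$ are non-negative, so it only decreases the difference; concretely, zeroing servers out removes their outflow \emph{and} removes the jobs they would have injected into other queues, and the latter is accounted for by the negative term in $W^{\net}$. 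The outflow-loss term is bounded exactly as in Lemma~\ref{lem:multi-bound-weight}: on the queues $n$ where $\alpha_k$ is zeroed out, $Q_n(t)\leq\sum_{k\in\set{B}_n}\tilde{\sigma}_k(t)$, yielding a contribution of at most $\sum_n (\sum_{k\in\set{B}_n}\tilde{\sigma}_k(t))^2\leq (\sum_n\sum_{k\in\set{B}_n}\tilde{\sigma}_k(t))^2\leq\Mmulti^2$.

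Finally, combine the two steps. Because $\bolds{\alpha}\in\bolds{\Sigma}_t$ and $\bolds{\sigma}^{\BP}(t)$ maximizes $W^{\net}_{\bolds{\sigma}}(t)$ over $\bolds{\Sigma}_t$,
\[
W^{\net}_{\bolds{\sigma}^{\BP}(t)}(t)\ \geq\ W^{\net}_{\bolds{\alpha}}(t)\ \geq\ W^{\net}_{\tilde{\bolds{\sigma}}(t)}(t)-\Mmulti^2\ \geq\ \sum_{n\in\set{N}}(\lambda_n+\varepsilon)Q_n(t)-\Mmulti^2,
\]
and by the definition of $\Delta^{\net}(t)$ in \eqref{eq:def-network-loss} we have $W^{\net}_{\bolds{\sigma}(t)}(t)=W^{\net}_{\bolds{\sigma}^{\BP}(t)}(t)-\Delta^{\net}(t)\|\bolds{Q}(t)\|_\infty$ whenever $\|\bolds{Q}(t)\|_\infty>0$ (and the claim is trivial otherwise, with both sides zero except the $\Mmulti^2$ slack on the right). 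Substituting and rearranging yields
\[
\sum_{n\in\set{N}}\lambda_n Q_n(t)-W^{\net}_{\bolds{\sigma}(t)}(t)\ \leq\ -\varepsilon\sum_{n\in\set{N}}Q_n(t)+\Mmulti^2+\Delta^{\net}(t)\|\bolds{Q}(t)\|_\infty,
\]
which is the desired inequality. The only non-routine part is the sign argument for the inflow term; once identified, the construction, slackness inequality, and rearrangement are all direct analogs of the multi-queue proof.
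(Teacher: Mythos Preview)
Your proof is correct and follows essentially the same route as the paper's: invoke slackness to bound by $W^{\net}_{\tilde{\bolds{\sigma}}(t)}(t)$, construct the same feasible $\bolds{\alpha}\in\bolds{\Sigma}_t$ by zeroing servers in over-subscribed queues, and use the definition of $\Delta^{\net}(t)$ to pass from $\bolds{\sigma}^{\BP}(t)$ to $\bolds{\sigma}(t)$. Your outflow/inflow decomposition and the observation that the inflow-relief term is non-negative (since $\alpha_{k'}\leq\tilde{\sigma}_{k'}(t)$ and $Q_n(t),\mu_{k'},p_{k',n}\geq 0$) is exactly the paper's argument, which writes the same thing with indicator variables $h_n=\indic{Q_n(t)<\sum_{k\in\set{B}_n}\tilde{\sigma}_k(t)}$ and then drops the non-negative inflow part.
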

\begin{proof}
Fix a period $t$. The positive traffic slackness, captured through $\bolds{\lambda} + \varepsilon \bm{1} \in \set{S}^{\net}(\bolds{\mu}, \bolds{\Sigma},\bolds{\set{B}}, \bm{P})$, guarantees that there exists a distribution $\phi$ over $\bolds{\Sigma}$ such that for every queue $n$, we have
\[
\lambda_n + \varepsilon \leq \sum_{\bolds{\sigma} \in \bolds{\Sigma}} \phi_{\bolds{\sigma}}\left(\sum_{k \in \set{B}_n} \sigma_k\mu_k - \sum_{k'\in \set{K}} \sigma_{k'}\mu_{k'}p_{k',n}\right).
\]
Multiplying both sides by $Q_n(t)$ and summing over all queues gives $\sum_{n \in \set{N}}(\lambda_n + \varepsilon)Q_n(t)$
\begin{align}
 &\leq \sum_{\bolds{\sigma} \in \bolds{\Sigma}} \phi_{\bolds{\sigma}}\sum_{n \in \set{N}} Q_n(t)\left(\sum_{k \in \set{B}_n} \sigma_k\mu_k - \sum_{k'\in \set{K}} \sigma_{k'}\mu_{k'}p_{k',n}\right) 
&= \sum_{\bolds{\sigma} \in \bolds{\Sigma}} \phi_{\bolds{\sigma}}W^{\net}_{\bolds{\sigma}}(t) \leq W^{\net}_{\tilde{\bolds{\sigma}}(t)}(t) \label{eq:bound-of-weight-network-optimal}
\end{align}
where the last inequality is because $\phi$ is a distribution and $\tilde{\bolds{\sigma}}(t)$ is the schedule in $\bolds{\Sigma}$ with the largest weight. As a result, $\sum_{n \in \set{N}}\lambda_n Q_n(t) - W^{\net}_{\tilde{\bolds{\sigma}}(t)}(t) \leq -\varepsilon\sum_{n \in \set{N}} Q_n(t)$. 

We next lower bound the weight of $\bolds{\sigma}^{\BP}(t) = \arg\max_{\bolds{\sigma} \in \bolds{\Sigma}_t} W^{\net}_{\bolds{\sigma}}(t)$, the schedule with the largest weight over schedules in $\bolds{\Sigma}_t$. We construct a schedule $\bolds{\alpha} \in \bolds{\Sigma}_t$ from $\tilde{\bolds{\sigma}}(t)$. For a queue $n$, if $Q_n(t) \geq \sum_{k \in \set{B}_n} \tilde{\bolds{\sigma}}_k(t)$, we set $\alpha_k = \tilde{\bolds{\sigma}}_k(t)$ for all $k \in \set{B}_n$; otherwise, we set $\alpha_k = 0$ for $k \in \set{B}_n$. By construction we have $\bolds{\alpha} \in \bolds{\Sigma}_t$, and thus $W^{\net}_{\bolds{\sigma}^{\BP}(t)}(t) \geq W^{\net}_{\bolds{\alpha}}(t)$. For a server $k$, let $i_k$ be the queue it belongs to and let $h_n = \indic{Q_n(t) < \sum_{k \in \set{B}_n} \tilde{\sigma}_k(t)}$ for a queue $n$. Then 
\begin{align}
W^{\net}_{\bolds{\sigma}^{\BP}(t)}(t) \geq W^{\net}_{\bolds{\alpha}}(t) &= W^{\net}_{\tilde{\bolds{\sigma}}(t)}(t) - \left(\sum_{n \in \set{N}} h_n Q_n(t)\sum_{k \in \set{B}_n} \tilde{\sigma}_k(t)\mu_k - \sum_{k'\in \set{K}} h_{i_{k'}} \tilde{\sigma}_{k'}(t)\mu_{k'}\sum_{n \in \set{N}}p_{k',n}Q_n(t)\right) \nonumber\\
&\geq W^{\net}_{\tilde{\bolds{\sigma}}(t)}(t) - \sum_{n \in \set{N}} h_n Q_n(t)\sum_{k \in \set{B}_n} \tilde{\sigma}_k(t)\mu_k  \nonumber\\
&\geq W^{\net}_{\tilde{\bolds{\sigma}}(t)}(t) - \sum_{n \in \set{N}} \left(\sum_{k \in \set{B}_n} \tilde{\sigma}_k(t)\mu_k\right)^2 \geq W^{\net}_{\tilde{\bolds{\sigma}}(t)}(t) - \Mmulti^2.\label{eq:bound-of-bp-weight}
\end{align}
We then finish the proof by recalling the definition of $\Delta^{\net}(t)$ in \eqref{eq:def-network-loss} 
\begin{align*}
\sum_{n \in \set{N}} \lambda_n Q_n(t) - W_{\bolds{\sigma}(t)}^{\net}(t) &\leq \sum_{n \in \set{N}} \lambda_n Q_n(t) - W_{\bolds{\sigma}^{\BP}(t)}^{\net}(t) + \Delta^{\net}(t)\|\bolds{Q}(t)\|_{\infty} \tag{Definition of $\Delta^{\net}(t)$} \\
&\leq \sum_{n \in \set{N}} \lambda_n Q_n(t) - W_{\tilde{\bolds{\sigma}}(t)}^{\net}(t) + \Mmulti^2 + \Delta^{\net}(t)\|\bolds{Q}(t)\|_{\infty} \tag{\ref{eq:bound-of-bp-weight}} \\
&\leq -\varepsilon \sum_{n \in \set{N}} Q_n(t) + \Mmulti^2 + \Delta^{\net}(t)\|\bolds{Q}(t)\|_{\infty} \tag{\ref{eq:bound-of-weight-network-optimal}},
\end{align*}
which finishes the proof.
\end{proof}
Let $\set{F}_t$ be the $\sigma-$field generated by the sample path before period $t$ (including arrivals $A_n(\tau)$, services $S_k(\tau)$ and transitions $L_{k,n}(\tau)$ for $\tau < t$). Then $\{\varphi(t)\},\{\Delta^{\net}(t)\}$ and $\{\bolds{\sigma}(t)\}$ are $\{\set{F}_t\}$-adapted process. We bound the maximum increase (decrease) of $\varphi(t)$ and its drift in the following lemma which is similar to Lemma~\ref{lem:multi-drift-norm2}.
\begin{lemma}\label{lem:network-drift-norm2}
For the process $\{\varphi(t)\}$, we have that for every period $t$,
\begin{enumerate}
    \item Bounded difference: $|\varphi(t+1)-\varphi(t)| \leq \sqrt{2\Marr+3\Mmulti^2}$; 
    \item Drift bound: $\expect{\varphi(t+1) - \varphi(t) \mid \set{F}_t} \leq -\frac{\varepsilon}{4} + \left(\Delta^{\net}(t)-\frac{\varepsilon}{2}\right)^+$ if $\varphi(t) \geq \frac{8(\Marr+2\Mmulti^2)}{\varepsilon}$.
\end{enumerate}
\end{lemma}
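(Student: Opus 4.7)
The plan mirrors the proof of Lemma~\ref{lem:multi-drift-norm2} but must carefully absorb the additional incoming-transition term in the queueing dynamic \eqref{eq:dynamic-network}. For each $n$, write $\Delta Q_n(t) \coloneqq Q_n(t+1)-Q_n(t) = A_n(t)+C_n(t)-B_n(t)$ where $B_n(t) = \sum_{k\in\set{B}_n}\sigma_k(t)S_k(t)$ collects the outgoing services and $C_n(t)=\sum_{k'\in\set{K}}\sigma_{k'}(t)S_{k'}(t)L_{k',n}(t)$ collects the incoming transitions. Since $A_n(t), B_n(t), C_n(t)\ge 0$, the bound $(a+c-b)^2\le (a+c)^2+b^2$ for non-negative reals gives $(\Delta Q_n(t))^2\le (A_n(t)+C_n(t))^2+B_n(t)^2$, hence by $(x+y)^2\le 2(x^2+y^2)$ one gets $\sum_n(\Delta Q_n(t))^2\le 2\sum_n A_n^2(t)+2\sum_n C_n^2(t)+\sum_n B_n^2(t)$. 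The key observation for the two service-related terms is that each successful service produces at most one outgoing and one incoming job: $\sum_n B_n(t)\le \Mmulti$ and, since $\sum_n L_{k',n}(t)\le 1$ for every $k'$, $\sum_n C_n(t)\le \Mmulti$ as well. Combining with $A_n(t)\in\{0,1\}$ and $\sum_n A_n(t)\le \Marr$ gives $\sum_n(\Delta Q_n(t))^2\le 2\Marr+3\Mmulti^2$. Triangle inequality then yields the stated bounded-difference property.

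For the drift bound, I will condition on $\set{F}_t$ and apply Fact~\ref{fact:diff-norm2} (the concavity inequality for the Euclidean norm already used in the proof of Lemma~\ref{lem:multi-drift-norm2}) to obtain
\[
\expect{\varphi(t+1)-\varphi(t)\mid\set{F}_t}\le \frac{\expect{\bolds{Q}(t)\cdot(\bolds{Q}(t+1)-\bolds{Q}(t))\mid\set{F}_t}}{\|\bolds{Q}(t)\|_2}+\frac{2\Marr+3\Mmulti^2}{\varphi(t)}.
\]
Because $\bolds{\sigma}(t)$ and $\bolds{Q}(t)$ are $\set{F}_t$-measurable while $A_n(t), S_k(t), L_{k,n}(t)$ are independent of $\set{F}_t$, the conditional expectation in the numerator is exactly $\sum_n\lambda_n Q_n(t)-W^{\net}_{\bolds{\sigma}(t)}(t)$ after plugging in $\expect{L_{k',n}(t)}=p_{k',n}$; this is where the transition terms in $W^{\net}_{\bolds{\sigma}}$ are essential. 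Invoking Lemma~\ref{lem:network-weight-lowerb} gives an upper bound of $-\varepsilon\|\bolds{Q}(t)\|_1+\Mmulti^2+\Delta^{\net}(t)\|\bolds{Q}(t)\|_\infty$ on the numerator.

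To close the estimate, divide by $\|\bolds{Q}(t)\|_2$ and use the standard norm comparisons $\|\bolds{Q}(t)\|_\infty\le\|\bolds{Q}(t)\|_2\le\|\bolds{Q}(t)\|_1$, yielding
\[
\expect{\varphi(t+1)-\varphi(t)\mid\set{F}_t}\le -\varepsilon+\Delta^{\net}(t)+\frac{2\Marr+4\Mmulti^2}{\varphi(t)}.
\]
Under the hypothesis $\varphi(t)\ge \tfrac{8(\Marr+2\Mmulti^2)}{\varepsilon}$, the last term is at most $\varepsilon/4$, so the right-hand side is bounded by $-\tfrac{3\varepsilon}{4}+\Delta^{\net}(t)$. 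A short case split on whether $\Delta^{\net}(t)$ exceeds $\varepsilon/2$ (splitting the $-3\varepsilon/4$ as $-\varepsilon/4-\varepsilon/2$ in the large regime and trivially absorbing the negative contribution in the small regime) shows $-\tfrac{3\varepsilon}{4}+\Delta^{\net}(t)\le -\tfrac{\varepsilon}{4}+(\Delta^{\net}(t)-\tfrac{\varepsilon}{2})^+$, which is the desired drift bound.

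The main subtlety I expect is bookkeeping rather than conceptual: ensuring the incoming-transition contributions to both the quadratic error term and the conditional mean line up consistently with the definition of $W^{\net}_{\bolds{\sigma}}$, and verifying that the doubled constants in $2\Marr+3\Mmulti^2$ (versus $\Marr+\Mmulti^2$ in Lemma~\ref{lem:multi-drift-norm2}) propagate correctly into the threshold $8(\Marr+2\Mmulti^2)/\varepsilon$. The use of Lemma~\ref{lem:network-weight-lowerb} should be essentially plug-and-play once the drift decomposition is carried out.
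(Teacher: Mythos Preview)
Your proposal is correct and follows essentially the same approach as the paper: the same triangle-inequality bound on $\|\bolds{Q}(t+1)-\bolds{Q}(t)\|_2$ via the decomposition $(A_n+C_n-B_n)^2\le (A_n+C_n)^2+B_n^2$ and $(x+y)^2\le 2(x^2+y^2)$, the same application of Fact~\ref{fact:diff-norm2} followed by Lemma~\ref{lem:network-weight-lowerb} for the drift, and the same norm comparisons and threshold computation to reach $-\tfrac{\varepsilon}{4}+(\Delta^{\net}(t)-\tfrac{\varepsilon}{2})^+$. The only cosmetic difference is that the paper writes the last step as $-\varepsilon+\Delta^{\net}(t)+\tfrac{\varepsilon}{4}\le -\tfrac{\varepsilon}{2}+(\Delta^{\net}(t)-\tfrac{\varepsilon}{2})^++\tfrac{\varepsilon}{4}$ rather than an explicit case split, but this is the same inequality.
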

\begin{proof}
Fix a period $t$. By triangle inequality, $|\varphi(t+1)-\varphi(t)|=|\|\bolds{Q}(t+1)\|_2 - \|\bolds{Q}(t)\|_2| \leq \|\bolds{Q}(t+1)-\bolds{Q}(t)\|_2$ and
\begin{align}
\|\bolds{Q}(t+1)-\bolds{Q}(t)\|_2 &= 
\sqrt{\sum_{n\in\set{N}} \left(-\sum_{k \in \set{B}_n}\sigma_k(t)S_k(t)+A_n(t)+\sum_{k'\in\set{K}}\sigma_{k'}(t)S_{k'}(t)L_{k',n}(t)\right)^2} \tag{by the queueing dynamic \eqref{eq:dynamic-network}} \\
&\leq \sqrt{\sum_{n \in \set{N}} \left(\left(A_n(t)+\sum_{k'\in\set{K}}\sigma_{k'}(t)S_{k'}(t)L_{k',n}(t)\right)^2+\left(\sum_{k\in \set{B}_n} \sigma_k(t)S_k(t)\right)^2\right)} \nonumber\\
&\leq \sqrt{2\sum_{n \in \set{N}} A_n(t) + 2\left(\sum_{n \in \set{N}}\sum_{k' \in \set{K}}\sigma_{k'}(t)S_{k'}(t)L_{k',n}(t)\right)^2 + \Mmulti^2} \tag{$A_n(t) \leq 1$ and $(x+y)^2 \leq 2(x^2+y^2)$ for any $x,y$} \nonumber\\
&\leq \sqrt{2\Marr + 2\left(\sum_{k' \in \set{K}} \sigma_{k'}(t)S_{k'}(t)\right)^2 + \Mmulti^2} \tag{$\sum_{n \in \set{N}} L_{k',n}(t) \leq 1$ for $k' \in \set{K}$} \nonumber\\
&\leq \sqrt{2\Marr + 2\Mmulti^2 + \Mmulti^2} = \sqrt{2\Marr + 3\Mmulti^2}. \label{eq:net-bounded-drift}
\end{align}
To bound the drift condition on $\set{F}_t$, we again use Fact~\ref{fact:diff-norm2} that for $\|\bolds{Q}(t)\|_2 > 0$, we have
\begin{equation} \label{eq:drift-bound-decompose-restated}
\|\bolds{Q}(t+1)\|_2 \leq \|\bolds{Q}(t)\|_2 + \frac{\bolds{Q}(t)\cdot (\bolds{Q}(t+1) - \bolds{Q}(t)) + \|\bolds{Q}(t+1)-\bolds{Q}(t)\|_2^2}{\|\bolds{Q}(t)\|_2}.
\end{equation}
Condition on $\set{F}_t$ and suppose $\varphi(t) \geq \frac{8(\Marr + 2\Mmulti^2)}{\varepsilon}$. We first  bound 
\begin{align}
&\hspace{0.2in}\expect{\bolds{Q}(t)\cdot (\bolds{Q}(t+1) - \bolds{Q}(t)) \mid \set{F}_t} \nonumber\\
&= \expect{\sum_{n \in \set{N}} Q_n(t)\left(A_n(t) - \sum_{k\in\set{B}_n}\sigma_k(t)S_k(t)+\sum_{k'\in\set{K}}\sigma_{k'}(t)S_{k'}(t)L_{k',n}(t)\right) \mid \set{F}_t} \tag{by \eqref{eq:dynamic-network}} \\
&= \sum_{n \in \set{N}} Q_n(t)\left(\lambda_n - \sum_{k\in\set{B}_n}\sigma_k(t)\mu_k+\sum_{k'\in\set{K}}\sigma_{k'}(t)\mu_{k'}p_{k',n}\right) \tag{arrivals, services and transitions in a period are independent of the history} \\
&=\sum_{n \in \set{N}} \lambda_n Q_n(t) - W^{\net}_{\bolds{\sigma}(t)}(t) \nonumber\\
&\leq -\varepsilon\sum_{n \in \set{N}} Q_n(t) + \Mmulti^2 + \Delta^{\net}(t)\|\bolds{Q}(t)\|_{\infty} \label{eq:network-bound-cross-dot}
\end{align}
where the last inequality is by Lemma~\ref{lem:network-weight-lowerb}. We then have
\begin{align*}
\expect{\varphi(t+1) - \varphi(t) \mid \set{F}_t} &\leq  \expect{\frac{\bolds{Q}(t)\cdot (\bolds{Q}(t+1) - \bolds{Q}(t)) + \|\bolds{Q}(t+1)-\bolds{Q}(t)\|_2^2}{\|\bolds{Q}(t)\|_2} \mid \set{F}_t} \tag{by \eqref{eq:drift-bound-decompose-restated}} \\
&\leq \expect{\frac{\bolds{Q}(t)\cdot (\bolds{Q}(t+1) - \bolds{Q}(t))}{\|\bolds{Q}(t)\|_2} \mid \set{F}_t} + \frac{2\Marr+3\Mmulti^2}{\|\bolds{Q}(t)\|_2} \tag{by \eqref{eq:net-bounded-drift}} \\
&\leq \frac{-\varepsilon\sum_{n \in \set{N}} Q_n(t) + \Mmulti^2 + \Delta^{\net}(t)\|\bolds{Q}(t)\|_{\infty}}{\|\bolds{Q}(t)\|_2} +\frac{2\Marr+3\Mmulti^2}{\|\bolds{Q}(t)\|_2}  \tag{by \eqref{eq:network-bound-cross-dot} and that $\bolds{Q}(t) \in \set{F}_t$} \\
&\leq -\varepsilon + \Delta^{\net}(t) + \frac{2\Marr+4\Mmulti^2}{\|\bolds{Q}(t)\|_2} \tag{$\|\bolds{Q}(t)\|_1 \geq \|\bolds{Q}(t)\|_2 \geq \|\bolds{Q}(t)\|_{\infty}$} \\
&\leq -\frac{\varepsilon}{2}+(\Delta^{\net}(t)-\frac{\varepsilon}{2})^+ + \frac{\varepsilon}{4} \tag{assumption that $\varphi(t) \geq \frac{8(\Marr+2\Mmulti^2)}{\varepsilon}$} \\
&= -\frac{\varepsilon}{4}+(\Delta^{\net}(t)-\frac{\varepsilon}{2})^+,
\end{align*}
which finishes the proof.
\end{proof}
We next show an upper bound on $\Delta^{\net}(t)$.
\begin{lemma}\label{lem:net-bound-delta}
For every period $t$, $\Delta^{\net}(t) \leq 2\Mmulti$.
\end{lemma}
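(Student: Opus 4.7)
The plan is to argue directly from the definition of $W^{\net}_{\bolds{\sigma}}(t)$ by bounding its positive and negative contributions separately. The case $\|\bolds{Q}(t)\|_{\infty}=0$ is immediate from the definition of $\Delta^{\net}(t)$, so the work is to handle the case $\|\bolds{Q}(t)\|_{\infty}>0$ and show
$W^{\net}_{\bolds{\sigma}^{\BP}(t)}(t) - W^{\net}_{\bolds{\sigma}(t)}(t) \leq 2\Mmulti \,\|\bolds{Q}(t)\|_{\infty}$.

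First, I would upper bound $W^{\net}_{\bolds{\sigma}^{\BP}(t)}(t)$. Splitting $W^{\net}_{\bolds{\sigma}}$ as the difference of a service term and a transition term, both terms are non-negative for any schedule (since $Q_n(t)\geq 0$, $\mu_k \in [0,1]$, and $p_{k',n}\geq 0$). Dropping the (non-positive) contribution of the transition term, I would bound
$W^{\net}_{\bolds{\sigma}^{\BP}(t)}(t) \leq \sum_{n\in\set{N}} Q_n(t)\sum_{k\in\set{B}_n}\sigma^{\BP}_k\mu_k \leq \|\bolds{Q}(t)\|_{\infty}\sum_{n\in\set{N}}\sum_{k\in\set{B}_n}\sigma^{\BP}_k \leq \Mmulti \,\|\bolds{Q}(t)\|_{\infty}$, where the last step uses $\mu_k\leq 1$ and the definition of $\Mmulti$.

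Next, I would lower bound $W^{\net}_{\bolds{\sigma}(t)}(t)$. Dropping the non-negative service term this time and using $\sum_{n\in\set{N}} p_{k',n}\leq 1$ (since $\sum_{n\in\set{N}\cup\{\perp\}} p_{k',n}=1$) together with $\mu_{k'}\leq 1$, I would get
$W^{\net}_{\bolds{\sigma}(t)}(t) \geq -\sum_{n\in\set{N}}Q_n(t)\sum_{k'\in\set{K}}\sigma_{k'}\mu_{k'}p_{k',n} \geq -\|\bolds{Q}(t)\|_{\infty}\sum_{k'\in\set{K}}\sigma_{k'} \geq -\Mmulti\,\|\bolds{Q}(t)\|_{\infty}$.

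Combining, $W^{\net}_{\bolds{\sigma}^{\BP}(t)}(t) - W^{\net}_{\bolds{\sigma}(t)}(t) \leq 2\Mmulti\,\|\bolds{Q}(t)\|_{\infty}$, and dividing by $\|\bolds{Q}(t)\|_{\infty}$ yields $\Delta^{\net}(t)\leq 2\Mmulti$. There is essentially no obstacle here; the whole argument is just a term-by-term norm bound, and the factor of $2$ (compared to the $\Mmulti$ bound in Lemma~\ref{lem:multi-bound-delta}) arises because the network weight can now be negative due to the transition term, whereas in the multi-queue multi-server case $W_{\bolds{\sigma}}(t)\geq 0$.
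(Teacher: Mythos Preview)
Your proof is correct and essentially the same as the paper's: both bound the service contribution of $\bolds{\sigma}^{\BP}(t)$ and the transition contribution of $\bolds{\sigma}(t)$ separately by $\Mmulti\,\|\bolds{Q}(t)\|_{\infty}$ using $Q_n(t)\leq \|\bolds{Q}(t)\|_{\infty}$, $\mu_k\leq 1$, and $\sum_{n\in\set{N}} p_{k',n}\leq 1$. The only cosmetic difference is that the paper writes out the difference $W^{\net}_{\bolds{\sigma}^{\BP}(t)}-W^{\net}_{\bolds{\sigma}(t)}$ and drops the negative terms in one shot, whereas you bound the two weights individually.
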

\begin{proof}
If $\|\bolds{Q}(t)\|_{\infty} = 0$, then $\Delta^{\net}(t) = 0$; otherwise,
\begin{align*}
\Delta^{\net}(t) &= \frac{W^{\net}_{\bolds{\sigma}^{\BP}(t)}(t)-W^{\net}_{\bolds{\sigma}(t)}(t)}{\|\bolds{Q}(t)\|_{\infty}} \nonumber\\
&= \frac{\sum_{n \in \set{N}} Q_n(t)\left(\sum_{k \in \set{B}_n} \left(\sigma^{\BP}_k(t)-\sigma_k(t)\right)\mu_k - \sum_{k' \in \set{K}} \left(\sigma^{\BP}_{k'}(t)-\sigma_{k'}(t)\right)\mu_{k'}p_{k',n}\right)}{\|\bolds{Q}(t)\|_{\infty}} \nonumber\\
&\leq \frac{\sum_{n \in \set{N}} Q_n(t)\left(\sum_{k \in \set{B}_n} \sigma^{\BP}_k(t)\mu_k + \sum_{k' \in \set{K}} \sigma_{k'}(t)\mu_{k'}p_{k',n}\right)}{\|\bolds{Q}(t)\|_{\infty}} \nonumber \\
&\leq \sum_{n \in \set{N}} \left(\sum_{k \in \set{B}_n} \sigma^{\BP}_k(t)\mu_k + \sum_{k' \in \set{K}} \sigma_{k'}(t)\mu_{k'}p_{k',n}\right) \nonumber \\
&\leq \sum_{k \in \set{K}} \sigma_k^{\BP}(t) + \sum_{k' \in \set{K}}\sigma_{k'}(t)\sum_{n \in \set{N}}p_{k',n} \nonumber\\
&\leq \sum_{k \in \set{K}} \sigma_k^{\BP}(t) + \sum_{k' \in \set{K}}\sigma_{k'}(t) \tag{since $p_{k',\cdot}$ is a probability distribution over $\set{N} \cup \{\perp\}$} \\
&\leq 2\Mmulti. 
\end{align*}
\end{proof}
The proof of Lemma~\ref{lem:network-learning} combines Lemmas~\ref{lem:network-drift-norm2},~\ref{lem:net-bound-delta} and ~\ref{lem:bound-from-drift}.
\begin{proof}[Proof of Lemma~\ref{lem:network-learning}]
To apply Lemma~\ref{lem:bound-from-drift}, wet set $\Psi(t) = \varphi(t), Z(t) = \left(\Delta^{\net}(t)-\frac{\varepsilon}{2}\right)^+$. To verify the condition of Lemma~\ref{lem:bound-from-drift}, take $\kappa = \sqrt{2\Marr+4\Mmulti^2}, \delta = \varepsilon/4, B = \frac{8(\Marr+2\Mmulti^2)}{\varepsilon}$. By Lemma~\ref{lem:network-drift-norm2}, we have $|\Psi(t+1)-\Psi(t)| \leq \kappa$ with probability one and $\expect{\Psi(t+1)-\Psi(t) \mid \set{F}_t} \leq -\delta + Z(t)$ if $\Psi(t) \geq B$. Lemma~\ref{lem:net-bound-delta} shows $Z(t) \leq \Delta^{\net}(t) \leq 2\Mmulti \leq \kappa$. For a horizon $T$, using Lemma~\ref{lem:bound-from-drift} to $\varphi(t)$ for $t \leq T$ and taking average,
\begin{align*}
\frac{\sum_{t=1}^T \expect{\|\bolds{Q}(t)\|_2}}{T} &= \frac{\sum_{t=1}^T \expect{\Psi(t)}}{T} \tag{Definition of $\Psi(t) = \varphi(t) = \|\bolds{Q}(t)\|_2$} \\
&\leq 4\kappa + B + \frac{2\kappa^2}{\delta}+\expect{\sum_{t=1}^{T-1} Z(t)} \tag{Lemma~\ref{lem:bound-from-drift}} \\
&\hspace{-0.5in}= 4\sqrt{2\Marr+4\Mmulti^2}+\frac{8(\Marr+2\Mmulti^2)}{\varepsilon} + +\frac{16\Marr+32\Mmulti^2}{\varepsilon} + \expect{\sum_{t=1}^{T-1} \left(\Delta^{\net}(t)-\varepsilon/2\right)^+} \\
&\leq \frac{32\Marr+64\Mmulti^2}{\varepsilon} + \sar^{\net}(\pi, T),
\end{align*}
which finishes the proof.
\end{proof}
\subsection{Queue length bound in the regenerate stage (Lemma~\ref{lem:network-regenerate})}\label{app:network-regenerate}
We first show that Lemma~\ref{lem:multi-max-total} will still hold for a queueing network.
\begin{lemma}\label{lem:network-max-total}
For any horizon $T$ and every sample path, we have $\sum_{t=1}^T \|\bolds{Q}(t)\|_1 \geq \frac{(\max_{t=1}^T \|\bolds{Q}(t)\|_1)^2}{4\Marr}$.    
\end{lemma}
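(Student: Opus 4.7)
The plan is to reduce this to the same geometric argument used in Lemma~\ref{lem:multi-max-total}. The only thing that needs to be re-verified in the queueing network setting is that the total queue length $\|\bolds{Q}(t)\|_1$ still increases by at most $\Marr$ per period. Everything else in the proof of Lemma~\ref{lem:multi-max-total} is a deterministic statement about sequences whose values change by at most $\Marr$ between consecutive indices, so it will carry over verbatim once this single fact is established.

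To verify the bounded per-period increase, I would sum the queueing network dynamic \eqref{eq:dynamic-network} over $n\in\set{N}$:
\begin{equation*}
\|\bolds{Q}(t+1)\|_1 - \|\bolds{Q}(t)\|_1 = \sum_{n\in\set{N}}A_n(t) - \sum_{k\in\set{K}}\sigma_k(t)S_k(t) + \sum_{k'\in\set{K}}\sigma_{k'}(t)S_{k'}(t)\sum_{n\in\set{N}}L_{k',n}(t),
\end{equation*}
where I used that each server belongs to exactly one queue so the service-outflow term collapses to $\sum_{k}\sigma_k(t)S_k(t)$. The definition of $L_{k'}(t)$ (summing to $1$ over $\set{N}\cup\{\perp\}$) gives $\sum_{n\in\set{N}}L_{k',n}(t)\leq 1$, so the job-transition inflow cancels at most the service outflow. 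Therefore $\|\bolds{Q}(t+1)\|_1-\|\bolds{Q}(t)\|_1\leq\sum_{n\in\set{N}}A_n(t)\leq \Marr$, which is exactly the increment bound that drove the proof of Lemma~\ref{lem:multi-max-total}.

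With this bound in hand, I would then copy the argument of Lemma~\ref{lem:multi-max-total} verbatim: let $t_{\max}\in\arg\max_{t\leq T}\|\bolds{Q}(t)\|_1$ and $Q_{\max}=\|\bolds{Q}(t_{\max})\|_1$. Since $\|\bolds{Q}(\cdot)\|_1$ grows by at most $\Marr$ per step, for any $i\in\{0,1,\ldots,\lfloor Q_{\max}/\Marr\rfloor-1\}$ one has $\|\bolds{Q}(t_{\max}-i)\|_1\geq Q_{\max}-i\Marr$. Sum this geometric lower bound over $i$ (splitting into the easy case $Q_{\max}\leq 2\Marr$, where the single term $Q(t_{\max})$ already dominates $Q_{\max}^2/(4\Marr)$, and the generic case, where the arithmetic progression sums to at least $Q_{\max}^2/(4\Marr)$ after the algebra already performed in Lemma~\ref{lem:multi-max-total}). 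No step is expected to be an obstacle; the only subtlety is the cancellation between service outflows and transition inflows, and that is immediate from $\sum_{n\in\set{N}}L_{k',n}(t)\leq 1$.
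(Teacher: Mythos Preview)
Your proposal is correct and follows essentially the same approach as the paper: both establish $\|\bolds{Q}(t+1)\|_1 \leq \|\bolds{Q}(t)\|_1 + \Marr$ by summing the dynamic \eqref{eq:dynamic-network} over $n\in\set{N}$ and using $\sum_{n\in\set{N}}L_{k,n}(t)\leq 1$ to cancel the transition inflow against the service outflow, then defer to the geometric argument of Lemma~\ref{lem:multi-max-total}. The paper's write-up groups the service and transition terms as $\sum_{k}\sigma_k(t)S_k(t)\bigl(1-\sum_{n}L_{k,n}(t)\bigr)\geq 0$, but this is identical to your cancellation.
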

\begin{proof}
It suffices to prove that for every period $t$, we have $\|\bolds{Q}(t+1)\|_1 \leq \|\bolds{Q}(t)\|_1 + \Marr$, as we can then follow the proof of Lemma~\ref{lem:multi-max-total} to finish the proof. Now to show the bound on the increase, by the queueing dynamic \eqref{eq:dynamic-network}, we bound in period $t$,
\begin{align*}
\|\bolds{Q}(t+1)\|_1 &= \sum_{n \in \set{N}} \left(Q_n(t)-\sum_{k \in \set{B}_n}\sigma_k(t)S_k(t)+A_n(t)+\sum_{k'\in\set{K}}\sigma_{k'}(t)S_{k'}(t)L_{k',n}(t)\right) \tag{by \eqref{eq:dynamic-network} and summing over all queues} \\
&= \sum_{n \in \set{N}} Q_n(t) + \sum_{n \in \set{N}} A_n(t) + \sum_{k \in \set{K}} \sigma_k(t)S_{k}(t)\left(1 - \sum_{n \in \set{N}} L_{k,n}(t)\right) \\
&\leq \sum_{n \in \set{N}} Q_n(t) + \sum_{n \in \set{N}} A_n(t) \tag{$\sum_{n\in \set{N}}L_{k,n} \leq 1$ since a job transits to either a queue or leaves the system} \\
&\leq \|\bolds{Q}(t)\|_1 + \Marr,
\end{align*}
which completes the proof by following arguments in the second paragraph of the proof of Lemma~\ref{lem:multi-max-total}.
\end{proof}
\begin{proof}[Proof of Lemma~\ref{lem:network-regenerate}]
Fix a period $t$. First, we use equation \eqref{eq:drift-bound-decompose-proof} from the proof of Fact~\ref{fact:diff-norm2} \[\|\bolds{Q}(t)\|_2^2 + 2\bolds{Q}(t)\cdot\left(\bolds{Q}(t+1)-\bolds{Q}(t)\right) + \|\bolds{Q}(t+1)-\bolds{Q}(t)\|_2^2 = \|\bolds{Q}(t+1)\|_2^2.\] As a result,
\begin{align*}
\expect{V(t+1)-V(t) \mid \set{F}_t} &= \expect{\|\bolds{Q}(t+1)\|_2^2 - \|\bolds{Q}(t)\|_2^2 \mid \set{F}_t} \\
&= \expect{2\bolds{Q}(t)\cdot\left(\bolds{Q}(t+1)-\bolds{Q}(t)\right) + \|\bolds{Q}(t+1)-\bolds{Q}(t)\|_2^2 \mid \set{F}_t} \\
&\leq \expect{2\bolds{Q}(t)\cdot\left(\bolds{Q}(t+1)-\bolds{Q}(t)\right) \mid \set{F}_t} + 2\Marr+3\Mmulti^2 \tag{by \eqref{eq:net-bounded-drift}} \\
&\leq 2\left(-\varepsilon\sum_{n \in \set{N}} Q_n(t) + \Mmulti^2 + \Delta^{\net}(t)\|\bolds{Q}(t)\|_{\infty}\right) + 2\Marr+3\Mmulti^2 \tag{by \eqref{eq:network-bound-cross-dot}}.
\end{align*}
Take expectation on both sides gives
\begin{equation}\label{eq:net-one-step-drift}
\expect{V(t+1)}-\expect{V(t)} \leq \expect{-2\varepsilon\|\bolds{Q}(t)\|_1 + 2\Delta^{\net}(t)\|\bolds{Q}(t)\|_{\infty}} + 2\Marr + 5\Mmulti^2.
\end{equation}
Then for a fixed horizon $T$, we have
\begin{align}
0 &\leq \expect{V(T+1)}-\expect{V(1)} \tag{since $V(T+1) \geq 0$ and $V(1) = 0$} \\
&= \sum_{t=1}^T \expect{V(t+1)-V(t)} \nonumber\\
&\leq \expect{-2\varepsilon\sum_{t=1}^T \|\bolds{Q}(t)\|_1 + 2\sum_{t=1}^T \Delta^{\net}(t)\|\bolds{Q}(t)\|_{\infty}} + T(2\Marr+5\Mmulti^2) \nonumber\\
&\leq -\varepsilon\expect{\sum_{t=1}^T \|\bolds{Q}(t)\|_1} + 2\expect{\sum_{t=1}^T \left(\Delta^{\net}(t)-\frac{\varepsilon}{2}\right)^+\|\bolds{Q}(t)\|_1} + T(2\Marr+5\Mmulti^2)  \tag{since $\|\bolds{Q}(t)\|_1 \geq \|\bolds{Q}(t)\|_{\infty}$} \nonumber\\
&\hspace{-0.2in}= -\frac{\varepsilon}{2}\expect{\sum_{t=1}^T \|\bolds{Q}(t)\|_1}+\expect{-\frac{\varepsilon}{2}\sum_{t=1}^T \|\bolds{Q}(t)\|_1+2\sum_{t=1}^T \left(\Delta^{\net}(t)-\frac{\varepsilon}{2}\right)^+\|\bolds{Q}(t)\|_1}+T(2\Marr+5\Mmulti^2) \label{eq:network-sqr-norm-telescope}
\end{align}
We show a sample path upper bound of the second term by seeing that
\begin{align}
&\hspace{0.2in}-\frac{\varepsilon}{2}\sum_{t=1}^T \|\bolds{Q}(t)\|_1+2\sum_{t=1}^T \left(\Delta^{\net}(t)-\frac{\varepsilon}{2}\right)^+\|\bolds{Q}(t)\|_1 \nonumber\\
&\leq -\frac{\varepsilon}{2}\sum_{t=1}^T \|\bolds{Q}(t)\|_1+2\left(\max_{t\leq T} \|\bolds{Q}(t)\|_1\right)\sum_{t=1}^T \left(\Delta^{\net}(t)-\frac{\varepsilon}{2}\right)^+   \nonumber\\
&\leq -\frac{\varepsilon\left(\max_{t\leq T} \|\bolds{Q}(t)\|_1\right)^2}{8\Marr} + 2\left(\max_{t\leq T} \|\bolds{Q}(t)\|_1\right)\sar^{\net}(\pi, T) \tag{by Lemma~\ref{lem:network-max-total} and the definition of satisficing regret} \\
&\leq \frac{8\Marr\sar^{\net}(\pi,T)^2}{\varepsilon} \label{eq:network-bound-sar-drift}
\end{align}
where the last inequality uses the fact that $-ax^2+bx\leq \frac{b^2}{4a}$ for any $a<0$ and takes $x = \max_{t\leq T} \|\bolds{Q}(t)\|_1$. Replacing the second term in \eqref{eq:network-sqr-norm-telescope} by the upper bound in \eqref{eq:network-bound-sar-drift} gives
\[
0 \leq -\frac{\varepsilon}{2}\expect{\sum_{t=1}^T \|\bolds{Q}(t)\|_1} + \frac{8\Marr\expect{\sar^{\net}(\pi,T)^2}}{\varepsilon} + T(2\Marr+5\Mmulti^2).
\]
Dividing both sides by $T$ and reorganizing terms show
\[
\frac{\expect{\sum_{t=1}^T \|\bolds{Q}(t)\|_1}}{T} \leq \frac{4\Marr+10\Mmulti^2}{\varepsilon}+\frac{16\Marr}{\varepsilon^2}\frac{\expect{\sar^{\net}(\pi,T)^2}}{T}.
\]
\end{proof}
\subsection{Satisficing regret of \textsc{BP-UCB} (Lemma~\ref{lem:sar-bp-ucb})}\label{app:sar-bp-ucb}
The proof of Lemma~\ref{lem:sar-bp-ucb} follows a similar structure as that of Lemma~\ref{lem:sar-mw-ucb}. 
The additional complexity comes from estimating $r_{k,n} = \mu_k p_{k,n}$, the probability a job transitions to $n$ given that server $k$ is selected. We define $\Delta_k(t) = \sqrt{\frac{2\ln t}{C_k(t)}}$ as the confidence bound of server $k$ in period $t$ as in \textsc{BP-UCB} (Algorithm~\ref{algo:bp-ucb}) and let $\set{G}_t$ be the good event that $\left\{\Delta^{\net}(t) \leq 4\sum_{k \in \set{K}}\sigma_k(t)\Delta_k(t)|\set{D}_k|\right\}$. Extending Lemma~\ref{lem:multi-good-event}, we show the following lower bound on $\Pr\{\set{G}_t\}$.
\begin{lemma}\label{lem:network-good-event}
For every period $t$ and under $\textsc{BP-UCB}$, we have $\Pr\{\set{G}_t\} \geq 1 - 4\sum_{k \in \set{K}} |\set{D}_k|t^{-3}$.    
\end{lemma}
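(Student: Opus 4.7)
The plan is to follow the same template as Lemma~\ref{lem:multi-good-event}, with the added complication that \textsc{BP-UCB} maintains two families of confidence estimators: UCBs $\bar{\mu}_k(t)$ for service rates $\mu_k$ and LCBs $\ubar{r}_{k,n}(t)$ for transition-weighted service rates $r_{k,n} = \mu_k p_{k,n}$. First, for each server $k \in \set{K}$ I would apply Hoeffding's inequality together with a union bound over $C_k(t) \in \{0,\ldots,t-1\}$ to show $\Pr\{|\hat{\mu}_k(t) - \mu_k| \geq \Delta_k(t)\} \leq 2t^{-3}$, and analogously for each pair $(k,n)$ with $n \in \set{D}_k$ to show $\Pr\{|\hat{r}_{k,n}(t) - r_{k,n}| \geq \Delta_k(t)\} \leq 2t^{-3}$ (for $n \notin \set{D}_k$ both estimator and target are $0$, so no concentration step is needed). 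A union bound over $K + \sum_k|\set{D}_k| \leq 2\sum_k|\set{D}_k|$ events then yields a ``good concentration event'' $\set{E}$ of probability at least $1 - 4\sum_{k\in\set{K}}|\set{D}_k|\,t^{-3}$, on which $\bar{\mu}_k(t) - 2\Delta_k(t) \leq \mu_k \leq \bar{\mu}_k(t)$ and $\ubar{r}_{k,n}(t) \leq r_{k,n} \leq \ubar{r}_{k,n}(t) + 2\Delta_k(t)$ simultaneously for every $k,n$.

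Next, I would introduce the UCB/LCB weight used by \textsc{BP-UCB} at line~\ref{line:bp-ucb-choice}, namely $\bar{W}_{\bolds{\sigma}}(t) = \sum_{n} Q_n(t)\big(\sum_{k\in\set{B}_n}\sigma_k \bar{\mu}_k(t) - \sum_{k'\in\set{K}}\sigma_{k'}\ubar{r}_{k',n}(t)\big)$, and sandwich the true weights against it. Under $\set{E}$, the direction of the confidence bounds gives $W^{\net}_{\bolds{\sigma}^{\BP}(t)}(t) \leq \bar{W}_{\bolds{\sigma}^{\BP}(t)}(t)$ term-by-term, since $\mu_k \leq \bar{\mu}_k(t)$ and $\ubar{r}_{k',n}(t) \leq r_{k',n}$. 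On the other side,
\[
\bar{W}_{\bolds{\sigma}(t)}(t) - W^{\net}_{\bolds{\sigma}(t)}(t) \leq 2\sum_{n}Q_n(t)\!\sum_{k\in\set{B}_n}\sigma_k(t)\Delta_k(t) + 2\sum_{k'\in\set{K}}\sigma_{k'}(t)\Delta_{k'}(t)\!\!\sum_{n\in\set{D}_{k'}}\!\!Q_n(t),
\]
using $\bar{\mu}_k(t)-\mu_k \leq 2\Delta_k(t)$ and $r_{k',n}-\ubar{r}_{k',n}(t) \leq 2\Delta_{k'}(t)$ (the latter only contributes for $n \in \set{D}_{k'}$). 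Bounding each $\sum_{n\in\set{B}_n}$ or $\sum_{n\in\set{D}_{k'}}$ of $Q_n(t)$ by $|\set{D}_k|\|\bolds{Q}(t)\|_\infty$ (and using $|\set{D}_k|\geq 1$ to absorb the first term into the second) yields $\bar{W}_{\bolds{\sigma}(t)}(t) - W^{\net}_{\bolds{\sigma}(t)}(t) \leq 4\|\bolds{Q}(t)\|_\infty\sum_{k\in\set{K}}\sigma_k(t)\Delta_k(t)|\set{D}_k|$.

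Chaining everything together, the \textsc{BP-UCB} selection rule $\bar{W}_{\bolds{\sigma}(t)}(t) \geq \bar{W}_{\bolds{\sigma}^{\BP}(t)}(t)$ (both schedules lie in $\bolds{\Sigma}_t$) gives
\[
W^{\net}_{\bolds{\sigma}^{\BP}(t)}(t) - W^{\net}_{\bolds{\sigma}(t)}(t) \leq \bar{W}_{\bolds{\sigma}(t)}(t) - W^{\net}_{\bolds{\sigma}(t)}(t) \leq 4\|\bolds{Q}(t)\|_\infty \sum_{k \in \set{K}} \sigma_k(t)\Delta_k(t)|\set{D}_k|.
\]
Dividing by $\|\bolds{Q}(t)\|_\infty$ when it is positive (and noting $\Delta^{\net}(t)=0$ trivially when $\|\bolds{Q}(t)\|_\infty = 0$) shows that $\set{E} \subseteq \set{G}_t$, and the probability bound follows. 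The main obstacle is bookkeeping: one has to split the destination-indexed $r_{k,n}$ sum carefully so that only the $|\set{D}_{k'}|$ many indices with possibly-nonzero $\ubar{r}_{k',n}(t)-r_{k',n}$ contribute, and combine the two $2\Delta_k(t)$-bounds without double counting, so that both the multiplicative constant $4$ and the $|\set{D}_k|$ factor land where Lemma~\ref{lem:network-good-event} states.
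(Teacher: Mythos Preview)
Your proposal is correct and essentially identical to the paper's proof: the paper defines the same concentration event (called $\set{C}_t$), obtains the same probability bound via Hoeffding plus a union bound over $k$, $C_k(t)$, and $n\in\set{D}_k$ (using $1+|\set{D}_k|\leq 2|\set{D}_k|$ just as you do), and then sandwiches the true weights against the UCB/LCB weight to obtain the bound $2\|\bolds{Q}(t)\|_\infty\sum_k\sigma_k(t)\Delta_k(t)(1+|\set{D}_k|)\leq 4\|\bolds{Q}(t)\|_\infty\sum_k\sigma_k(t)\Delta_k(t)|\set{D}_k|$ before dividing through. The only cosmetic difference is that the paper writes the chain as a single sequence of inequalities starting from $W^{\net}_{\bolds{\sigma}(t)}(t)$ rather than separating the two sandwich directions as you do.
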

\begin{proof}
Fix a period $t$. Define the event $\set{E}_t$ as the case that all sample averages are within a confidence bound of their true values:
\[
\set{C}_t = \left\{\forall_{n\in \set{N},k\in \set{K}}, |\hat{\mu}_k(t)-\mu_k| \leq \Delta_k(t), |\hat{r}_{k,n}(t) - r_{k,n}(t)| \leq \Delta_k(t)\right\}.
\]
We first lower bound the probability of $\set{C}_t$. For a server $k$, condition on $C_k(t) = c \in \{0,1,\ldots,t-1\}$. By Hoeffding's inequality (Fact~\ref{fact:hoeffding}), we have
\[
\Pr\left\{|\hat{\mu}_k(t)-\mu_k| \leq \sqrt{\frac{2\ln t}{c}}\right\} \leq 2t^{-3},~\Pr\left\{|\hat{r}_{k,n}(t)-r_{k,n}| \leq \sqrt{\frac{2\ln t}{c}}\right\} \leq 2t^{-3}.
\]
for every queue $n$. A union bound over $k \in \set{K},C_k(t) < t, n \in \set{D}_k$ then gives (with $|\set{D}_k|\geq 1\;\forall k$)
\begin{equation}\label{eq:net-prb-bound-goodevent}
\Pr\left\{\set{C}_t\right\} \geq 1 - 2\sum_{k \in \set{K}} (1 + |\set{D}_k|)t^{-3} \geq 1 - 4\sum_{k \in \set{K}} |\set{D}_k|t^{-3}.
\end{equation}
We next show that $\set{C}_t \subseteq \set{G}_t$. Recall the definitions  $\bar{\mu}_k(t) = \hat{\mu}_k(t) + \Delta_k(t), \ubar{r}_{k,n}(t) = \hat{r}_{k,n}(t) - \Delta_k(t)$ for $k \in \set{K}, n \in \set{D}_k$.  
Condition on $\set{C}_t$, we have for every $k \in \set{K}, n \in \set{D}_k$,
\begin{equation}\label{eq:net-dis-ucb-mean}
\mu_k \leq \bar{\mu}_k(t) \leq \mu_k + 2\Delta_k(t),~\ubar{r}_{k,n}(t) \leq r_{k,n} \leq \ubar{r}_{k,n}(t) + 2\Delta_k(t).
\end{equation}
As a result, the weight of a chosen schedule $\bolds{\sigma}(t)$ is lower bounded by
\begin{align*}
W^{\net}_{\bolds{\sigma}(t)}(t) &= \sum_{n \in \set{N}} Q_n(t)\left(\sum_{k \in \set{B}_n} \sigma_k(t)\mu_k - \sum_{k'\in\set{K}} \sigma_{k'}\mu_{k'}p_{k',n}\right) \\
&= \sum_{n \in \set{N}} Q_n(t)\left(\sum_{k \in \set{B}_n} \sigma_k(t)\mu_k - \sum_{k'\in\set{K}\colon n \in \set{D}_{k'}} \sigma_{k'}r_{k',n}\right) \tag{Definition $r_{k',n} = \mu_{k'}p_{k',n}$} \\
&\hspace{-0.2in}\geq \sum_{n \in \set{N}} Q_n(t)\left(\sum_{k \in \set{B}_n} \sigma_k(t)\left(\bar{\mu}_k(t)-2\Delta_k(t)\right) - \sum_{k'\in\set{K}\colon n \in \set{D}_{k'}} \sigma_{k'}(\ubar{r}_{k',n}(t)+2\Delta_{k'}(t))\right) \tag{By \eqref{eq:net-dis-ucb-mean}} \\
&\hspace{-0.2in}\geq  \sum_{n \in \set{N}} Q_n(t)\left(\sum_{k \in \set{B}_n}\sigma_k(t)\bar{\mu}_k(t) - \sum_{k'\in \set{K}} \sigma_{k'}(t)\ubar{r}_{k',n}(t)\right) - 2\|\bolds{Q}(t)\|_{\infty}\sum_{k \in \set{K}}\sigma_{k}(t)\Delta_{k}(t)(1 + |\set{D}_{k}|) \tag{ reorganizing terms and  $\ubar{r}_{k',n} = 0$ for $n \not \in \set{D}_k$} \\
&\hspace{-0.2in}\geq \sum_{n \in \set{N}} Q_n(t)\left(\sum_{k \in \set{B}_n}\sigma^{\BP}_k(t)\bar{\mu}_k(t) - \sum_{k'\in \set{K}} \sigma^{\BP}_{k'}(t)\ubar{r}_{k',n}(t)\right) - 4\|\bolds{Q}(t)\|_{\infty}\sum_{k \in \set{K}}\sigma_{k}(t)\Delta_{k}(t)|\set{D}_{k}| \tag{Choice of $\bolds{\sigma}(t)$ in \textsc{BP-UCB} (Line~\ref{line:bp-ucb-choice})} \\
&\hspace{-0.2in}\geq \sum_{n \in \set{N}} Q_n(t)\left(\sum_{k \in \set{B}_n}\sigma^{\BP}_k(t)\mu_k - \sum_{k'\in \set{K}} \sigma^{\BP}_{k'}(t)r_{k',n}\right) - 4\|\bolds{Q}(t)\|_{\infty}\sum_{k \in \set{K}}\sigma_{k}(t)\Delta_{k}(t)|\set{D}_{k}| \tag{By~\eqref{eq:net-dis-ucb-mean}} \\
&= W^{\net}_{\bolds{\sigma}^{\BP}(t)}(t) - 4\|\bolds{Q}(t)\|_{\infty}\sum_{k \in \set{K}}\sigma_{k}(t)\Delta_{k}(t)|\set{D}_{k}|. 
\end{align*}
Conditioned on $\set{C}_t$, we then have either $\|\bolds{Q}(t)\|_{\infty} = 0$ and $\Delta^{\net}(t) = 0 \leq 4\sum_{k \in \set{K}}\sigma_{k}(t)\Delta_{k}(t)|\set{D}_{k}|$, or 
\[
\Delta^{\net}(t) = \frac{W^{\net}_{\bolds{\sigma}^{\BP}(t)}(t) - W^{\net}_{\bolds{\sigma}(t)}(t)}{\|\bolds{Q}(t)\|_{\infty}} \leq 4\sum_{k \in \set{K}}\sigma_{k}(t)\Delta_{k}(t)|\set{D}_{k}|.
\]
As a result, $\set{C}_t \subseteq \set{G}_t$, which concludes the proof by $\Pr\{\set{G}_t\} \geq \Pr\{\set{C}_t\} \geq 1 - 4\sum_{k \in \set{K}}|\set{D}_k|t^{-3}$ by \eqref{eq:net-prb-bound-goodevent}.
\end{proof}
We rely on the following decomposition of satisficing regret
\begin{equation}\label{eq:net-satisficing-decomp}
\sar^{\net}(\textsc{BP-UCB},T) = \sum_{t=1}^T \left(\Delta^{\net}(t)-\frac{\varepsilon}{2}\right)^+\indic{\set{G}_t} + \sum_{t=1}^T \left(\Delta^{\net}(t)-\frac{\varepsilon}{2}\right)^+\indic{\set{G}_t^c}.
\end{equation}
The second term is small by Lemma~\ref{lem:network-good-event} and the upper bound on $\Delta^{\net}(t)$ in Lemma~\ref{lem:net-bound-delta}. It remains to upper bound the first term in \eqref{eq:net-satisficing-decomp}, for which we provide a sample-path bound.
\begin{lemma}\label{lem:net-path-sar}
For every horizon $T$ and sample path, we have
\[
\sum_{t=1}^T \left(\Delta^{\net}(t)-\frac{\varepsilon}{2}\right)^+\indic{\set{G}_t} \leq \frac{128\Mdep\Mmulti^2(\ln T+0.5)}{\varepsilon}.
\]
\end{lemma}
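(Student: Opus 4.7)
The plan is to mimic the proof of Lemma~\ref{lem:multi-sar-path-bound}, but adapted to the network setting where the defining inequality of the good event $\set{G}_t$ now has an extra $|\set{D}_k|$ factor. Concretely, I would first define, for each period $t$, the event
\[
\set{E}_t = \left\{\exists k \in \set{K}\colon \sigma_k(t)=1,\; C_k(t) \leq \frac{32|\set{D}_k|^2\Mmulti^2\ln(t)}{\Delta^{\net}(t)^2}\right\}
\]
and show $\set{G}_t \subseteq \set{E}_t$. Indeed, on $\set{G}_t$ we have $\Delta^{\net}(t) \leq 4\sum_{k\in\set{K}} \sigma_k(t)\Delta_k(t)|\set{D}_k|$; since the schedule activates at most $\Mmulti$ servers, pigeonhole yields some selected $k$ with $\Delta_k(t)|\set{D}_k| \geq \Delta^{\net}(t)/(4\Mmulti)$. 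Plugging in $\Delta_k(t)=\sqrt{2\ln(t)/C_k(t)}$ and rearranging gives exactly the bound on $C_k(t)$ in $\set{E}_t$.

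Next I would fix the sample path and, as in the proof of Lemma~\ref{lem:multi-sar-path-bound}, enumerate the finite support $\omega_1 > \omega_2 > \cdots > \omega_C$ of $\{\Delta^{\net}(t)\}_{t\leq T}$ (here $\omega_1 \leq 2\Mmulti$ by Lemma~\ref{lem:net-bound-delta}), with threshold index $\tilde{C}$ determined by $\omega_{\tilde{C}} \geq \varepsilon/2 > \omega_{\tilde{C}+1}$. Decomposing and swapping summations,
\[
\sum_{t=1}^T \left(\Delta^{\net}(t)-\tfrac{\varepsilon}{2}\right)^+\indic{\set{G}_t} \leq \sum_{k\in\set{K}} \sum_{i=1}^{\tilde{C}} \omega_i \sum_{t=1}^T \indic{\Delta^{\net}(t)=\omega_i,\; \sigma_k(t)=1,\; C_k(t)\leq 32|\set{D}_k|^2\Mmulti^2\ln(T)/\omega_i^2}.
\]
For each fixed $k$, the inner double sum is maximized by a non-increasing sequence of $\Delta^{\net}$ realizations (the same reordering argument used in Lemma~\ref{lem:multi-sar-path-bound}), yielding the bound
\[
\omega_1 + \frac{32|\set{D}_k|^2\Mmulti^2\ln(T)}{\omega_1} + \sum_{i=2}^{\tilde{C}}\omega_i\cdot 32|\set{D}_k|^2\Mmulti^2\ln(T)\left(\frac{1}{\omega_i^2}-\frac{1}{\omega_{i-1}^2}\right).
\]
Applying Fact~\ref{fact:bound-reciprocal} to telescope the latter sum and using $\omega_{\tilde{C}} \geq \varepsilon/2$ collapses this to $\omega_1 + 64|\set{D}_k|^2\Mmulti^2\ln(T)/\varepsilon \leq 2\Mmulti + 64|\set{D}_k|^2\Mmulti^2\ln(T)/\varepsilon$.

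Finally I would sum the per-server bound over $k \in \set{K}$, using $\sum_{k\in\set{K}} |\set{D}_k|^2 = \Mdep$ and $|\set{D}_k|\geq 1$ so that $\sum_k 2\Mmulti \leq 2\Mmulti\Mdep \leq 64\Mdep\Mmulti^2 \cdot (0.5)/\varepsilon$ (since $\varepsilon \leq 1$ and $\Mmulti \geq 1$), to obtain the claimed bound $\frac{128\Mdep\Mmulti^2(\ln T + 0.5)}{\varepsilon}$. The main obstacle here is not a conceptual one but careful bookkeeping of the $|\set{D}_k|^2$ factors through the reordering argument, and making sure the constants collapse correctly when summing the per-server contributions; the $|\set{D}_k|$ weighting in $\set{G}_t$ (which is what makes the estimation of $r_{k,n}=\mu_k p_{k,n}$ cost an additional factor of $|\set{D}_k|$) is precisely what produces the $\Mdep$ dependence in the final bound, replacing the $K$ appearing in Lemma~\ref{lem:multi-sar-path-bound}.
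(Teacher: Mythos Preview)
Your proposal is correct and follows essentially the same route as the paper: define the same event $\set{E}_t$, show $\set{G}_t\subseteq\set{E}_t$ via pigeonhole, enumerate the finite support of $\{\Delta^{\net}(t)\}$, apply the reordering inequality \eqref{eq:multi-sar-reordering} with $x=32|\set{D}_k|^2\Mmulti^2\ln T$ per server, and then use Fact~\ref{fact:bound-reciprocal} together with $\omega_1\le 2\Mmulti$, $\omega_{\tilde C}\ge\varepsilon/2$, and $\sum_k|\set{D}_k|^2=\Mdep$. One small arithmetic slip: after applying Fact~\ref{fact:bound-reciprocal} and $\omega_{\tilde C}\ge\varepsilon/2$, the per-server log term should be $128|\set{D}_k|^2\Mmulti^2\ln(T)/\varepsilon$ (not $64$), but since $K\omega_1\le 2\Mdep\Mmulti\le 64\Mdep\Mmulti^2/\varepsilon$ the final bound $\frac{128\Mdep\Mmulti^2(\ln T+0.5)}{\varepsilon}$ still holds.
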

\begin{proof}
For a period $t$, define event $\set{E}_t = \left\{\exists k \in \set{K}\colon\sigma_k(t)=1, C_k(t) \leq \frac{32|\set{D}_k|^2\Mmulti^2\ln(t)}{(\Delta^{\net}(t))^2}\right\}$. We first show $\set{G}_t \subseteq \set{E}_t$. Condition on $\set{G}_t$, we have $\Delta^{\net}(t) \leq 4\sum_{k \in \set{K}} \sigma_k(t)\Delta_k(t)|\set{D}_k|$ by definition of $\set{G}_t$. Since the maximum of a sequence is at least the average of a sequence, there must exist a server $k$ with $\sigma_k(t) = 1$, such that
$
4\Delta_k(t)|\set{D}_k| \geq \frac{\Delta^{\net}(t)}{\sum_{k'\in \set{K}}\sigma_{k'}(t)} \geq \frac{\Delta^{\net}(t)}{\Mmulti}.$
Recall that $\Delta_k(t) = \sqrt{\frac{2\ln t}{C_k(t)}}$. Therefore, $4\sqrt{\frac{2\ln t}{C_k(t)}}|\set{D}_k| \geq \frac{\Delta^{\net}(t)}{\Mmulti}$ which then gives $C_k(t) \leq \frac{32|\set{D}_k|^2\Mmulti^2\ln(t)}{(\Delta^{\net}(t))^2}$ and $\set{G}_t \subseteq \set{E}_t$. We then have
\begin{align}
\sum_{t=1}^T \left(\Delta^{\net}(t)-\frac{\varepsilon}{2}\right)^+\indic{\set{G}_t} &\leq  \sum_{t=1}^T \left(\Delta^{\net}(t)-\frac{\varepsilon}{2}\right)^+\indic{\set{E}_t} \nonumber \\
&\leq \sum_{t=1}^T \sum_{k \in \set{K}} \left(\Delta^{\net}(t)-\frac{\varepsilon}{2}\right)^+\indic{\sigma_k(t)=1,C_k(t)\leq \frac{32|\set{D}_k|^2\Mmulti^2\ln(t)}{(\Delta^{\net}(t))^2}}. \label{eq:network-sar-transform}
\end{align}
Following the same argument after \eqref{eq:multi-sar-transform} in the proof of Lemma~\ref{lem:multi-sar-path-bound}, the support of $\{\Delta^{\net}(t)\}_{t \leq T}$ is finite and we define $\Omega = \{\omega_1,\ldots,\omega_C\}$ by the support with $\omega_1 > \cdots > \omega_C$ and $\omega_{\tilde{C}}$ is the smallest value that is at least $\frac{\varepsilon}{2}$. We then rewrite \eqref{eq:network-sar-transform} by
\begin{align*}
&\hspace{0.2in}\sum_{t=1}^T \left(\Delta^{\net}(t)-\frac{\varepsilon}{2}\right)^+\indic{\set{G}_t} \\
&\leq \sum_{t=1}^T \sum_{k \in \set{K}}\sum_{i=1}^C \left(\omega_i-\frac{\varepsilon}{2}\right)^+\indic{\Delta^{\net}(t)=\omega_i,\sigma_k(t)=1,C_k(t)\leq \frac{32|\set{D}_k|^2\Mmulti^2\ln(t)}{(\Delta^{\net}(t))^2}} \\
&= \sum_{k \in \set{K}}\sum_{i=1}^C \left(\omega_i-\frac{\varepsilon}{2}\right)^+\sum_{t=1}^T \indic{\Delta^{\net}(t)=\omega_i,\sigma_k(t)=1,C_k(t)\leq \frac{32|\set{D}_k|^2\Mmulti^2\ln(t)}{(\Delta^{\net}(t))^2}} \\
&\leq \sum_{k \in \set{K}}\sum_{i=1}^{\tilde{C}} \omega_i\sum_{t=1}^T \indic{\Delta^{\net}(t)=\omega_i,\sigma_k(t)=1,C_k(t)\leq \frac{32|\set{D}_k|^2\Mmulti^2\ln(t)}{(\Delta^{\net}(t))^2}} \\
&\hspace{-0.2in}\leq K\omega_1+\sum_{k \in \set{K}}32|\set{D}_k|^2\Mmulti^2\ln(T)\left(\frac{1}{\omega_1}+\sum_{i=2}^{\tilde{C}} \omega_i\left(\frac{1}{\omega_i^2}-\frac{1}{\omega_{i-1}^2}\right)\right) \tag{\eqref{eq:multi-sar-reordering} with $x = 32|\set{D}_k|^2\Mmulti^2\ln(T)$} \\
&\hspace{-0.2in}\leq K\omega_1+32\Mdep\Mmulti^2\ln(T)\left(\frac{1}{\omega_1}+\sum_{i=2}^{\tilde{C}} \omega_i\left(\frac{1}{\omega_i^2}-\frac{1}{\omega_{i-1}^2}\right)\right) \tag{Definition $\Mdep = \sum_{k \in \set{K}} |\set{D}_k|^2$} \\
&\leq K\omega_1+\frac{64\Mdep\Mmulti^2\ln(T)}{\omega_{\tilde{C}}} \leq \frac{128\Mdep\Mmulti^2(\ln(T)+0.5)}{\varepsilon},
\end{align*}
where the second to last inequality uses Fact~\ref{fact:bound-reciprocal}; the last inequality uses the fact that $\Delta^{\net}(t) \leq 2\Mmulti$ by Lemma~\ref{lem:net-bound-delta} so $\omega_1 \leq 2\Mmulti$ and the fact that $\omega_{\tilde{C}} \geq \varepsilon / 2$. We thus finish the proof.
\end{proof}
We are ready to prove Lemma~\ref{lem:sar-bp-ucb}.
\begin{proof}[Proof of Lemma~\ref{lem:sar-bp-ucb}]
For every period $t$, The decomposition \eqref{eq:net-satisficing-decomp} gives
\begin{align*}
\expect{\sar^{\net}(\textsc{BP-UCB}, T)} &= \expect{\sum_{t=1}^T \left(\Delta^{\net}(t)-\frac{\varepsilon}{2}\right)^+\indic{\set{G}_t}} + \expect{\sum_{t=1}^T \left(\Delta^{\net}(t)-\frac{\varepsilon}{2}\right)^+\indic{\set{G}_t^c}} \tag{by \eqref{eq:net-satisficing-decomp}} \\
&\leq \frac{128\Mdep\Mmulti^2(\ln T+0.5)}{\varepsilon} + 2\Mmulti\sum_{t=1}^T \Pr\{\set{G}_t^c\} \tag{by Lemma~\ref{lem:net-path-sar} and the upper bound of $\Delta^{\net}(t)$ in Lemma~\ref{lem:net-bound-delta}} \\
&\leq \frac{128\Mdep\Mmulti^2(\ln T+0.5)}{\varepsilon} + 8\Mmulti\sum_{t=1}^T \sum_{k \in \set{K}} |\set{D}_k|t^{-3} \tag{by Lemma~\ref{lem:network-good-event}} \\
&\leq \frac{128\Mdep\Mmulti^2(\ln T+0.5)}{\varepsilon} + 16\Mdep \leq \frac{128\Mdep\Mmulti^2(\ln T+1)}{\varepsilon}.
\end{align*}
In addition,
\begin{align*}
\expect{\sar^{\net}(\textsc{BP-UCB}, T)^2} &= \expect{\left(\sum_{t=1}^T \left(\Delta^{\net}(t)-\frac{\varepsilon}{2}\right)^+\indic{\set{G}_t} + \sum_{t=1}^T \left(\Delta^{\net}(t)-\frac{\varepsilon}{2}\right)^+\indic{\set{G}_t^c}\right)^2} \tag{by \eqref{eq:net-satisficing-decomp}}    \\
&\hspace{-0.5in}\leq 2\expect{\left(\sum_{t=1}^T \left(\Delta^{\net}(t)-\frac{\varepsilon}{2}\right)^+\indic{\set{G}_t}\right)^2} + 2\expect{\left(\sum_{t=1}^T \left(\Delta^{\net}(t)-\frac{\varepsilon}{2}\right)^+\indic{\set{G}_t^c}\right)^2} \tag{$(x+y)^2 \leq 2(x^2+y^2)$ for any $x,y$} \\
&\leq 2\left(\frac{128\Mdep\Mmulti^2(\ln T+0.5)}{\varepsilon}\right)^2 + 8\Mmulti^2\expect{\left(\sum_{t=1}^T \indic{\set{G}_t^c}\right)^2} \tag{by Lemma~\ref{lem:net-path-sar} and Lemma~\ref{lem:net-bound-delta}}
\end{align*}
We have 
\begin{align*}
\expect{\left(\sum_{t=1}^T \indic{\set{G}_t^c}\right)^2} \leq 2\sum_{1\leq t_1\leq t_2 \leq T} \expect{\indic{\set{G}_{t_1}^c}\indic{\set{G}_{t_2}^c}}\leq 2\sum_{t \leq T} t\Pr\{\set{G}_t^c\} \leq 8\sum_{t\leq T} \sum_{k \in \set{K}} |\set{D}_k|t^{-2} \leq 16\Mdep.
\end{align*}
Therefore,
\begin{align*}
\expect{\sar^{\net}(\textsc{BP-UCB}, T)^2} &\leq 2\left(\frac{128\Mdep\Mmulti^2(\ln T+0.5)}{\varepsilon}\right)^2 + 8\Mmulti^2\expect{\left(\sum_{t=1}^T \indic{\set{G}_t^c}\right)^2}  \\
&\hspace{-1.5in}\leq \frac{2^{15}\Mdep^2\Mmulti^4(\ln T+0.5)^2}{\varepsilon^2}+2^7\Mdep\Mmulti^2 \leq \frac{2^{15}\Mdep^2\Mmulti^4(\ln T+1)^2}{\varepsilon^2},
\end{align*}
which completes the proof.
\end{proof}

\section{Some useful known results}\label{app:ineq}
% !TEX root = main.tex
\subsection{Properties of KL divergence}
Recall that for any two distributions $G,P$ over the same probability space, we use $\tv{G}{P}$ to denote the total variation between $G,P$ and $\kl{G}{P}$ is their KL divergence. Pinsker's Inueqality (Lemma 2.5 in \cite{Tsybakov08}) upper bounds $\tv{G}{P}$ by $\kl{G}{P}$. 
\begin{fact}\label{prop:pinsker}
For any $G,P$, we have $\tv{G}{P} \leq \sqrt{\kl{G}{P} / 2}$.
\end{fact}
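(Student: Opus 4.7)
The plan is to prove Pinsker's inequality by reducing it to the Bernoulli case via the data processing inequality for KL divergence, and then verifying the two-point bound by elementary calculus. The underlying observation is that $\tv{G}{P}$ is a supremum over events, so it suffices to control $|G(A) - P(A)|$ uniformly in the measurable event $A$.

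First I would recall the variational characterization $\tv{G}{P} = \sup_A |G(A) - P(A)|$, and fix an arbitrary event $A$. Pushing forward both $G$ and $P$ through the indicator $X = \mathbf{1}_A$ yields Bernoulli laws with parameters $p = G(A)$ and $q = P(A)$. By the data processing inequality for KL divergence applied to this deterministic map, we get $\kl{G}{P} \ge \kl{\mathrm{Ber}(p)}{\mathrm{Ber}(q)}$. (I would either quote the data processing inequality or derive it briefly from the joint convexity of $(x,y)\mapsto x\ln(x/y)$, which itself follows from the log-sum inequality.) The problem therefore reduces to showing the scalar bound $\kl{\mathrm{Ber}(p)}{\mathrm{Ber}(q)} \ge 2(p-q)^2$ for all $p,q \in [0,1]$.

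Second, I would establish the scalar bound by calculus. Define $f(p) = p\ln(p/q) + (1-p)\ln((1-p)/(1-q)) - 2(p-q)^2$ for fixed $q \in (0,1)$. A direct computation gives $f(q) = 0$, $f'(q) = 0$, and
\begin{equation*}
f''(p) = \frac{1}{p(1-p)} - 4 \;\ge\; 0,
\end{equation*}
since $p(1-p) \le 1/4$ on $[0,1]$. Hence $f$ is convex with a global minimum at $p=q$, so $f(p) \ge 0$ everywhere; the boundary cases $q \in \{0,1\}$ are handled by taking limits (both sides become $+\infty$ or $0$ consistently). Combining with the reduction step, $2|G(A)-P(A)|^2 \le \kl{G}{P}$ for every event $A$; taking the supremum in $A$ and rearranging yields $\tv{G}{P} \le \sqrt{\kl{G}{P}/2}$.

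The main obstacle is the scalar inequality in the Bernoulli case: it is where all of the constants come from, and a careless Taylor expansion gives a weaker bound. The data processing step is essentially bookkeeping once convexity of the KL kernel is in hand, so the real work is the two lines of calculus verifying $f''\ge 0$ and $f(q)=f'(q)=0$. An alternative route I considered is the direct pointwise inequality $(g-p)^2 \le \tfrac{2}{3}(g+2p)(g\ln(g/p) - g + p)$ integrated against a dominating measure, but this is messier than the reduction above and obscures where the factor of $2$ comes from.
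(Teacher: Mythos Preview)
Your proof is correct and follows the standard textbook route (reduction to the Bernoulli case via data processing, then a convexity argument for the two-point inequality). The paper itself does not prove this fact at all: it simply records Pinsker's inequality as a known result and cites Lemma~2.5 of Tsybakov, so there is no ``paper's own proof'' to compare against. Your argument is exactly the kind of proof one finds in that reference, and the calculus verifying $f''(p)=1/(p(1-p))-4\ge 0$ together with $f(q)=f'(q)=0$ is the sharp step that pins down the constant $2$.
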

If $G(x,y),P(x,y)$ are distributions of two random variables $X,Y$, the conditional KL divergence $\kl{G(y|x)}{P(y|x)}$ is defined by $\sum_{x,y} G(x,y)\ln\frac{G(y|x)}{P(y|x)}$. we have the following chain rule of their KL divergence (Theorem 2.5.3 \cite{CoverThomas06}).
\begin{fact}\label{prop:chain}
For any $G(x,y),P(x,y)$, we have \[\kl{G(x,y)}{P(x,y)} = \kl{G(x)}{P(x)}+\kl{G(y|x)}{P(y|x)}\] where $G(x),P(x)$ are the marginal distributions of $X$.
\end{fact}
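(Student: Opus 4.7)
The plan is to prove the chain rule by unfolding the definition of KL divergence and applying the factorization of joint distributions into marginals times conditionals. Specifically, I would start from the standard definition $\kl{G(x,y)}{P(x,y)} = \sum_{x,y} G(x,y)\ln\frac{G(x,y)}{P(x,y)}$, then substitute $G(x,y) = G(x)G(y|x)$ and $P(x,y) = P(x)P(y|x)$ inside the logarithm. This produces $\ln\frac{G(x)}{P(x)} + \ln\frac{G(y|x)}{P(y|x)}$, which is the key algebraic step that makes the two desired terms appear.

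Next, I would split the resulting double sum by linearity of the summation. The first piece, $\sum_{x,y} G(x,y)\ln\frac{G(x)}{P(x)}$, only depends on $x$ inside the log, so summing out $y$ using $\sum_y G(x,y) = G(x)$ reduces it exactly to $\kl{G(x)}{P(x)}$. The second piece, $\sum_{x,y} G(x,y)\ln\frac{G(y|x)}{P(y|x)}$, matches the definition of conditional KL divergence $\kl{G(y|x)}{P(y|x)} = \sum_{x,y} G(x,y)\ln\frac{G(y|x)}{P(y|x)}$ given in the paragraph preceding the fact. Combining these two identifications yields the claimed equality.

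I do not expect a significant obstacle here: this is the textbook chain rule for KL divergence (already cited to \cite[Theorem 2.5.3]{CoverThomas06}), and the only subtle point is handling terms where $P(x) = 0$ or $P(y|x) = 0$, which one addresses by the usual convention $0\ln\frac{0}{0} = 0$ and the assumption that $G \ll P$ wherever the divergences are finite. The main care needed is simply consistent notation between joints, marginals, and conditionals so that the two sums split cleanly.
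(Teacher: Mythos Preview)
Your proposal is correct and is precisely the standard textbook derivation; the paper itself does not give a proof but simply cites \cite[Theorem~2.5.3]{CoverThomas06}, so there is nothing further to compare.
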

Finally, we have that kl divergence is upper bounded by the $\chi^2$ divergence and it induces the following useful inequality.
\begin{fact}\label{prop:kl-bound}
The KL divergence between two Bernoulli distributions with mean $g,q$ respectively can be bounded by $\frac{(g-q)^2}{q(1-q)}$.
\end{fact}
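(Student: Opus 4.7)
The plan is to prove this as a two-step consequence of the hint given in the text that ``kl divergence is upper bounded by the $\chi^2$ divergence.'' First I would establish the general inequality $D_{\mathrm{KL}}(P\|Q) \leq \chi^2(P\|Q) := \sum_\omega (P(\omega)-Q(\omega))^2/Q(\omega)$ for any distributions $P,Q$ on a common space, and then specialize to Bernoulli distributions and compute the $\chi^2$ divergence in closed form.

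For the first step, I would rewrite the KL divergence as an expectation under $Q$ by letting $r(\omega) := P(\omega)/Q(\omega)$, giving $D_{\mathrm{KL}}(P\|Q) = \sum_\omega Q(\omega) r(\omega) \ln r(\omega)$. Applying the elementary inequality $\ln x \leq x - 1$ (valid for all $x>0$) pointwise to each term yields
\[
D_{\mathrm{KL}}(P\|Q) \leq \sum_\omega Q(\omega)\, r(\omega)\bigl(r(\omega) - 1\bigr) = \sum_\omega \frac{(P(\omega)-Q(\omega))^2}{Q(\omega)} = \chi^2(P\|Q),
\]
where the middle equality uses $\sum_\omega(P(\omega)-Q(\omega))=0$ to combine the cross term into a clean sum of squares.

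For the second step, I would specialize to $P = \mathrm{Ber}(g)$ and $Q = \mathrm{Ber}(q)$, which have atoms only at $\{0,1\}$. Noting that $(1-g)-(1-q) = -(g-q)$, the $\chi^2$ divergence telescopes to
\[
\chi^2(P\|Q) = \frac{(g-q)^2}{q} + \frac{(g-q)^2}{1-q} = (g-q)^2 \left(\frac{1}{q} + \frac{1}{1-q}\right) = \frac{(g-q)^2}{q(1-q)}.
\]
Combining the two steps yields the claimed bound.

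There is essentially no technical obstacle here; both steps are classical one-line arguments. The only thing to watch is the degenerate case where $q \in \{0,1\}$, in which the right-hand side of the claimed inequality is $+\infty$ and the bound holds trivially, so the interesting regime is $q \in (0,1)$ where the computation above applies without issue.
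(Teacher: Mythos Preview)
Your proposal is correct and follows essentially the same approach as the paper: bound KL by the $\chi^2$ divergence and then compute $\chi^2$ for Bernoulli distributions explicitly. The only difference is that the paper cites Lemma~2.7 of Tsybakov for the inequality $\kl{G}{Q}\leq \chi^2(G\parallel Q)$, whereas you supply a direct one-line proof via $\ln x\leq x-1$.
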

\begin{proof}
Let $G,Q$ denote the two Bernoulli distributions and $\chi^2(G \parallel Q)$ be their $\chi^2$ divergence defined by $q\left(\frac{p}{q}-1\right)^2+(1-q)\left(\frac{1-p}{1-q}-1\right)^2 = \frac{(g-q)^2}{q(1-q)}$. Then by Lemma 2.7 of \cite{Tsybakov08}, we have $\kl{G}{Q} \leq \chi^2(G \parallel Q) =  \frac{(g-q)^2}{q(1-q)}$.
\end{proof}
\subsection{Concentration Inequality}
we use the following version of Hoeffding's Inequality from \cite{boucheron2013concentration}.
\begin{fact}[Hoeffding's Inequality]\label{fact:hoeffding}
Given $N$ independent random variables $X_i$ taking values in $[a,b]$ almost surely. Let $X = \sum_{n=1}^N X_n$. Then for any $x > 0$,
    $\Pr\{|X-\expect{X}| > x\} \leq 2e^{-2x^2/N(b-a)^2}.$
\end{fact}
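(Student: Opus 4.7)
The plan is to establish Hoeffding's inequality by the standard Chernoff bounding technique combined with a bound on the moment generating function of each bounded summand. First I would write, for any $s > 0$, the exponential Markov inequality
\[
\Pr\{X - \expect{X} > x\} = \Pr\{e^{s(X-\expect{X})} > e^{sx}\} \leq e^{-sx}\expect{e^{s(X-\expect{X})}},
\]
and then exploit independence of the $X_i$'s to factor the moment generating function as $\expect{e^{s(X-\expect{X})}} = \prod_{i=1}^N \expect{e^{s(X_i-\expect{X_i})}}$.

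Next I would establish the key auxiliary estimate (Hoeffding's lemma): if $Y$ is a random variable with $\expect{Y}=0$ and $Y \in [a,b]$ almost surely, then $\expect{e^{sY}} \leq e^{s^2(b-a)^2/8}$. The main idea here is to use convexity of $y \mapsto e^{sy}$ to write $e^{sy} \leq \frac{b-y}{b-a}e^{sa}+\frac{y-a}{b-a}e^{sb}$ for $y\in[a,b]$, take expectation to remove $Y$, and then analyze the resulting function $\varphi(u)=-u p+\ln(1-p+pe^u)$ (with $p=-a/(b-a)$ and $u=s(b-a)$) by Taylor expansion around $u=0$, showing $\varphi''(u)\leq 1/4$ to conclude $\varphi(u)\leq u^2/8$. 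Applied to $Y_i = X_i - \expect{X_i}$, which lies in an interval of length at most $1$, this gives $\expect{e^{sY_i}}\leq e^{s^2/8}$.

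Combining the two ingredients yields $\Pr\{X-\expect{X}>x\} \leq e^{-sx+Ns^2/8}$ for every $s>0$. Optimizing the right-hand side in $s$ (the minimizer is $s^\star = 4x/N$) gives the one-sided tail bound $\Pr\{X-\expect{X}>x\} \leq e^{-2x^2/N}$. Repeating the argument with $-X_i$ in place of $X_i$ gives the matching lower-tail bound, and a union bound over the two tails produces the stated factor of $2$.

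The main technical obstacle is Hoeffding's lemma itself; everything else is a routine Chernoff-plus-optimization computation. The delicate part is the bound $\varphi''(u)\le 1/4$, which boils down to showing that $q(1-q)\le 1/4$ for the probability $q=pe^u/(1-p+pe^u)$ --- a straightforward AM-GM estimate, but the step that ultimately produces the sharp constant $2$ in the exponent $-2x^2/N$.
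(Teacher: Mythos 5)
Your proposal is correct: the Chernoff bound, Hoeffding's lemma via convexity and the bound $\varphi''(u)\leq 1/4$, optimization at $s^\star=4x/N$, and a union bound over the two tails indeed yield $\Pr\{|X-\expect{X}|>x\}\leq 2e^{-2x^2/N}$. The paper does not prove this fact but simply cites it from the literature, and your argument is exactly the standard proof given in the cited source, so there is nothing to reconcile.
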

\subsection{Analytical facts}
The following result compares $\ln(t)$ and a polynomial of $t$.
\begin{fact}\label{fact:lnt-sqrt-prop}
(i) For $t \geq 50000$, we have $(\ln t+2)^2 \leq \sqrt{t}$; (ii) for $t \geq 12^{19}$, we have $\ln^6 t \leq \sqrt{t}$; (iii) for $t \geq 100$, we have $\ln t \leq t^{1/3}$.
\end{fact}
\begin{proof}
(i): Let $f(t) = t^{1/4} - \ln t - 2$. We have $f'(t) = \frac{1}{4}t^{-3/4}-\frac{1}{t} \geq 0$ for $t \geq 256$ and thus $f(t)$ is non-decreasing for $t \geq 256$. Then since $f(50000) \geq 0$, we have $f(t) \geq 0$ for all $t \geq 50000$. 

(ii): Similarly, let $g(t) = t^{1/12} - \ln t$. We have $g'(t) = \frac{1}{12}t^{-11/12}-\frac{1}{t} \geq 0$ when $t \geq 12^{12}$. Since $g(12^{19}) \geq 0$, we have $g(t) \geq 0$ for any $t \geq 12^{19}.$

(iii): let $h(t) = t^{1/3} - \ln t$. Then $h'(t) = \frac{1}{3}t^{-2/3} - \frac{1}{t} \geq 0$ when $t \geq 27.$ Since $h(100) \geq 0$, we prove $\ln t \leq t^{1/3}$ for $t \geq 100$ since $h(x)$ is non-decreasing for $x \geq 100.$
\end{proof}
This fact is from Equation 4 in \cite{ShahTZ10} and we include the proof for completeness.
\begin{fact}\label{fact:diff-norm2}
For any non-zero queue-length vector $\bolds{Q}(t)$, the 2-norm of $\bolds{Q}(t+1)$ can be bounded by
\[
\|\bolds{Q}(t+1)\|_2 \leq \|\bolds{Q}(t)\|_2+ \frac{\bolds{Q}(t) \cdot (\bolds{Q}(t+1) - \bolds{Q}(t)) +  \|\bolds{Q}(t+1)-\bolds{Q}(t)\|_2^2}{\|\bolds{Q}(t)\|_2}.
\]
\end{fact}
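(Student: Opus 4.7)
The plan is to obtain the stated bound as an immediate consequence of the concavity of $\sqrt{\cdot}$ on $[0,\infty)$. Concretely, for any $a > 0$ and $b \ge 0$, the tangent-line estimate $\sqrt{b} \le \sqrt{a} + (b-a)/(2\sqrt{a})$ holds; I would apply it with $a = \|\bolds{Q}(t)\|_2^2$, which is strictly positive by the non-zero hypothesis on $\bolds{Q}(t)$, and $b = \|\bolds{Q}(t+1)\|_2^2$ to obtain
\[
\|\bolds{Q}(t+1)\|_2 \;\le\; \|\bolds{Q}(t)\|_2 + \frac{\|\bolds{Q}(t+1)\|_2^2 - \|\bolds{Q}(t)\|_2^2}{2\,\|\bolds{Q}(t)\|_2}.
\]

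The next step is a purely algebraic expansion of the numerator: writing $\bolds{Q}(t+1) = \bolds{Q}(t) + (\bolds{Q}(t+1)-\bolds{Q}(t))$ and using bilinearity of the inner product yields the ``drift decomposition'' identity
\[
\|\bolds{Q}(t+1)\|_2^2 - \|\bolds{Q}(t)\|_2^2 \;=\; 2\,\bolds{Q}(t)\cdot\bigl(\bolds{Q}(t+1)-\bolds{Q}(t)\bigr) + \|\bolds{Q}(t+1)-\bolds{Q}(t)\|_2^2,
\]
which is precisely the identity that the later proofs of Lemmas~\ref{lem:multi-drift-norm2} and~\ref{lem:network-regenerate} reuse under the label \eqref{eq:drift-bound-decompose-proof}. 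Substituting this into the previous display and using $\tfrac{1}{2\|\bolds{Q}(t)\|_2} \le \tfrac{1}{\|\bolds{Q}(t)\|_2}$ to loosen the quadratic correction term delivers exactly the stated bound.

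There is no substantive obstacle here: the fact is a deliberately loose (by a factor of two on the quadratic term) consequence of the first-order over-estimate given by concavity of $\sqrt{\cdot}$, and the whole argument is three lines of algebra. A fully equivalent alternative that sidesteps the concavity step would be to square both sides directly and verify that the squared right-hand side exceeds $\|\bolds{Q}(t+1)\|_2^2$ by the non-negative quantity $\|\bolds{Q}(t+1)-\bolds{Q}(t)\|_2^2 + B^2/\|\bolds{Q}(t)\|_2^2$, where $B$ denotes the numerator of the correction term; that route would additionally require checking that the right-hand side is itself non-negative before squaring, which follows from $\|\bolds{Q}(t)\|_2^2 + B = \bolds{Q}(t)\cdot\bolds{Q}(t+1) + \|\bolds{Q}(t+1)-\bolds{Q}(t)\|_2^2 \ge 0$ since queue-length vectors have non-negative entries. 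I would favor the concavity presentation because it makes the drift decomposition identity visible as a byproduct, matching how the fact is cited later in the paper.
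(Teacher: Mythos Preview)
Your proposal is correct. The paper itself takes precisely your ``alternative'' route: it denotes the right-hand side by $\mathrm{RHS}$, squares it to obtain $\mathrm{RHS}^2 \ge \|\bolds{Q}(t+1)\|_2^2$, verifies $\mathrm{RHS} \ge 0$ via $\mathrm{RHS}\cdot\|\bolds{Q}(t)\|_2 \ge 0$, and then takes the square root. Your primary concavity argument is a genuinely different (and arguably cleaner) presentation: the tangent-line bound $\sqrt{b} \le \sqrt{a} + (b-a)/(2\sqrt{a})$ delivers the inequality immediately without any need to separately verify non-negativity of the right-hand side, and the factor-of-two slack on the quadratic term becomes transparent rather than buried inside the squaring computation. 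The paper's squaring route, on the other hand, is slightly more self-contained in that it does not invoke concavity as a black box and makes the exact identity $\|\bolds{Q}(t+1)\|_2^2 = \|\bolds{Q}(t)\|_2^2 + 2\bolds{Q}(t)\cdot(\bolds{Q}(t+1)-\bolds{Q}(t)) + \|\bolds{Q}(t+1)-\bolds{Q}(t)\|_2^2$ the centerpiece, which it then reuses verbatim in the proof of Lemma~\ref{lem:network-regenerate}. One minor note: the non-negativity check in your alternative does not actually require the queue-length vectors to have non-negative entries, since $\bolds{Q}(t)\cdot\bolds{Q}(t+1) + \|\bolds{Q}(t+1)-\bolds{Q}(t)\|_2^2 = \|\bolds{Q}(t)\|_2^2 - \bolds{Q}(t)\cdot\bolds{Q}(t+1) + \|\bolds{Q}(t+1)\|_2^2 \ge 0$ holds for arbitrary vectors; the paper's proof implicitly relies on this.
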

\begin{proof}
To simplify the notation, we denote the right hand side of the inequality by $\mathrm{RHS}$. To prove the fact, we take the square of $\mathrm{RHS}$ which gives
\begin{equation}\label{eq:drift-bound-decompose-proof}
\mathrm{RHS}^2 \geq \|\bolds{Q}(t)\|_2^2+2\bolds{Q}(t) \cdot (\bolds{Q}(t+1) - \bolds{Q}(t)) +  \|\bolds{Q}(t+1)-\bolds{Q}(t)\|_2^2 = \|\bolds{Q}(t+1)\|_2^2.
\end{equation}
Note that
\[
\mathrm{RHS}\|\bolds{Q}(t)\|_2 =\|\bolds{Q}(t)\|_2^2 + \bolds{Q}(t) \cdot (\bolds{Q}(t+1) - \bolds{Q}(t)) +  \|\bolds{Q}(t+1)-\bolds{Q}(t)\|_2^2 \geq 0,
\]
and thus $\mathrm{RHS} \geq 0$. Taking the square root for \eqref{eq:drift-bound-decompose-proof} shows the desired claim.
\end{proof}

The following fact is restated from \cite[Lemma~3]{KvetonWAEE14}.
\begin{fact}\label{fact:bound-reciprocal}
Let $\omega_1 \geq \cdots \geq \omega_{\tilde{C}}$ be a sequence of $\tilde{C}$ positive numbers. Then 
\[
\frac{1}{\omega_1} + \sum_{i=2}^{\tilde{C}} \omega_i\left(\frac{1}{\omega_i^2}-\frac{1}{\omega_{i-1}^2}\right) \leq \frac{2}{\omega_{\tilde{C}}}.
\]
\end{fact}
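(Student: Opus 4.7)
The plan is to reduce the statement to a telescoping bound via the substitution $a_i := 1/\omega_i^2$, which turns the monotonicity $\omega_1 \geq \cdots \geq \omega_{\tilde{C}} > 0$ into $0 < a_1 \leq \cdots \leq a_{\tilde{C}}$. Under this substitution, a generic summand becomes $\omega_i(1/\omega_i^2 - 1/\omega_{i-1}^2) = (a_i - a_{i-1})/\sqrt{a_i}$, and the leading term is $1/\omega_1 = \sqrt{a_1}$; the target right-hand side is $2\sqrt{a_{\tilde{C}}}$. So the whole inequality is equivalent to
\[
\sqrt{a_1} + \sum_{i=2}^{\tilde{C}} \frac{a_i - a_{i-1}}{\sqrt{a_i}} \;\leq\; 2\sqrt{a_{\tilde{C}}}.
\]

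The workhorse inequality is the pointwise bound $(a_i - a_{i-1})/\sqrt{a_i} \leq 2(\sqrt{a_i} - \sqrt{a_{i-1}})$, which I would verify by factoring the difference of squares $a_i - a_{i-1} = (\sqrt{a_i} - \sqrt{a_{i-1}})(\sqrt{a_i} + \sqrt{a_{i-1}})$ and then using $\sqrt{a_{i-1}} \leq \sqrt{a_i}$ to conclude $\sqrt{a_i} + \sqrt{a_{i-1}} \leq 2\sqrt{a_i}$. This is the step that performs the real reduction: it trades the $\omega$-weighted reciprocal differences for square-root differences amenable to telescoping.

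Summing this pointwise bound over $i = 2, \ldots, \tilde{C}$ telescopes to $2(\sqrt{a_{\tilde{C}}} - \sqrt{a_1})$, so combining with the leading $\sqrt{a_1}$ yields $\sqrt{a_1} + 2(\sqrt{a_{\tilde{C}}} - \sqrt{a_1}) = 2\sqrt{a_{\tilde{C}}} - \sqrt{a_1} \leq 2\sqrt{a_{\tilde{C}}} = 2/\omega_{\tilde{C}}$, as required.

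I do not anticipate any serious obstacle: once the substitution $a_i = 1/\omega_i^2$ is adopted the argument is essentially a one-line telescoping via difference-of-squares. The only genuine choice is spotting the substitution in the first place; without it, the expression $\omega_i(1/\omega_i^2 - 1/\omega_{i-1}^2)$ mixes a decreasing-in-$i$ factor $\omega_i$ with an increasing-in-$i$ factor $1/\omega_i^2 - 1/\omega_{i-1}^2$, which obscures the telescoping structure and invites less clean bookkeeping (e.g., splitting the sum into cases based on the ratio $\omega_{i-1}/\omega_i$).
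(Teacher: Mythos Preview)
Your proof is correct. The paper does not actually supply a proof of this fact; it merely restates it from \cite[Lemma~3]{KvetonWAEE14}. Your substitution $a_i = 1/\omega_i^2$ cleanly converts the sum into one amenable to the difference-of-squares factorization $a_i - a_{i-1} = (\sqrt{a_i}-\sqrt{a_{i-1}})(\sqrt{a_i}+\sqrt{a_{i-1}}) \leq 2\sqrt{a_i}(\sqrt{a_i}-\sqrt{a_{i-1}})$, after which telescoping gives the bound with room to spare (indeed $2\sqrt{a_{\tilde C}} - \sqrt{a_1}$ rather than $2\sqrt{a_{\tilde C}}$). This is a fully self-contained elementary argument, so in that sense it goes beyond what the paper offers for this particular statement.
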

\begin{fact}\label{fact:generalized-mean}
For any $a,b > 0$ and positive integer $n$, we have $(a+b)^n \leq 2^{n-1}(a^n+b^n)$.
\end{fact}
\begin{proof}
It is equivalent to show $\left(\frac{a+b}{2}\right)^n \leq \frac{a^n + b^n}{2}$, which is true because $x^n$ is a convex function.
\end{proof}

\end{document}